\newtheorem{theorem}{Theorem}[section]
\newtheorem{lemma}[theorem]{Lemma}
\newtheorem{claim}[theorem]{Claim}
\newtheorem{problem}{Problem}
\newtheorem{corollary}[theorem]{Corollary}
\newtheorem{fact}[theorem]{Fact}
\def\reals{\mathbb{R}}
\def\R{\reals}
\def\N{\mathbb{N}}
\def\eps{\epsilon}
\def\poly{\mathrm{poly}}
\def\B{\mathbb{B}}
\def\C{\mathbb{C}}
\def\Z{\mathbb{Z}}
\def\argmax{\mathrm{argmax}}
\def\cum{\mathrm{cum}}
\def\Var{\mathrm{Var}}
\newcommand{\abs}[1]{\left|#1\right|}
\newcommand{\norm}[1]{\left\|#1\right\|}
\newcommand{\prob}[1]{{\sf Pr}\left(#1\right)}
\newcommand{\probsub}[2]{{\sf Pr}_{#1}\left(#2\right)}
\newcommand{\E}[1]{\mathbb{E}\left(#1\right)}
\newcommand{\ip}[2]{\left<{ #1},{#2}\right>}
\newcommand{\pard}[2]{\frac{\partial #1 }{\partial #2 }}
\newcommand{\EE}[2]{\mathbb{E}_{#1}\left(#2\right)}
\newcommand{\diag}[1]{\mathrm{diag}\left( #1 \right)}
\renewcommand{\det}[1]{\mathrm{det}\left( #1 \right)}
\renewcommand{\vec}[1]{\mathrm{vec}\left( #1 \right)}
\newcommand{\re}[1]{\operatorname{Re}\left( #1 \right)}
\newcommand{\im}[1]{\operatorname{Im}\left( #1 \right)}
\newcommand{\colspan}[1]{\operatorname{\mathsf{colspan}}\left( #1 \right)}
\newcommand{\set}[1]{\left\{ #1 \right\}}
\newcommand{\bE}[1]{\bar{\mathbb{E}}\left(#1\right)}
\newcommand{\bMm}{\bar{M}_{\mu}}
\newcommand{\bMl}{\bar{M}_{\lambda}}
\newcommand{\bP}{\bar{P}}
\newcommand{\placeholder}{K}
\newcommand{\Vr}[1]{\mathrm{Var}\left(#1\right)}
\newcommand{\Conjug}[1]{}
\newcommand{\suchthat}{\;\ifnum\currentgrouptype=16 \middle\fi|\;}
\newcommand{\angles}[1]{\left\langle #1 \right\rangle}
\newcommand{\expn}[1]{\mathrm{exp}\left(#1\right)}
\newcommand{\mytilde}{\raise.17ex\hbox{$\scriptstyle\mathtt{\sim}$}}
\newcommand{\nE}{\mathbb{E}}
\title{\LARGE\bf Fourier PCA and Robust Tensor Decomposition}
\author{Navin Goyal
\thanks{Microsoft Research India}
\and 
Santosh Vempala
\thanks{Schools of Computer Science, Georgia Tech and CMU}
\and 
Ying Xiao
\thanks{School of Computer Science, Georgia Tech}
}
\begin{document}
\maketitle

\begin{abstract}
  Fourier PCA is Principal Component Analysis of a matrix obtained from
higher order derivatives of the logarithm of the Fourier transform of a 
distribution.  We make this method algorithmic by 
% PCA applied to the Hessian
 %matrix of functions of the characteristic function of the underlying
 %distribution. Extending this technique to higher derivative tensors
 developing a tensor decomposition method for a pair of tensors sharing the 
same vectors in rank-$1$ decompositions. Our main application is 
the first provably polynomial-time algorithm for underdetermined ICA, i.e., learning an $n \times m$  matrix $A$ from
observations $y=Ax$ where $x$ is drawn from an unknown product distribution with arbitrary non-Gaussian components. The number of component distributions $m$ can be arbitrarily higher than the dimension $n$ and the columns of $A$ only need to satisfy a natural and efficiently verifiable nondegeneracy condition.  As a second application, we give 
an alternative algorithm for learning mixtures of spherical Gaussians with linearly independent means. These results also hold in the presence of Gaussian noise.
\end{abstract}

\thispagestyle{empty} 

\newpage

\setcounter{page}{1}
% ... is editing
\section{Introduction}

Principal Component Analysis~\cite{pearson1901liii} is often an
``unreasonably effective'' heuristic in practice, and some of its effectiveness can be explained rigorously as well (see, e.g.,
\cite{kannan2009spectral}). It consists of computing the eigenvectors
of the empirical covariance matrix formed from the data; the
eigenvectors turn out to be directions that locally maximize second
moments. The following example illustrates the power and limitations of PCA: given random independent points from a rotated cuboid in $\R^n$ with distinct axis lengths, PCA will identify the axes of the cuboid and their lengths as the eigenvectors and eigenvalues of the covariance matrix. However, if instead of a rotation, points came from a linear transformation of a cuboid, then PCA does not work. 

To handle this and similar hurdles,  higher moment extensions of PCA
have been developed in the literature e.g., \cite{AnandkumarTensorDecomp,hsu2008spectral,MosselR05,
  anandkumar2012method,anandkumar2012spectral,HsuK13,vx} and shown to be provably
effective for a wider range of unsupervised learning problems, including  
%(although it is not entirely clear if these extensions are practical).
special cases of Independent Component Analysis (ICA), Gaussian mixture models, learning latent topic models etc. 
ICA is the classic signal recovery problem of learning a linear transformation $A$ from i.i.d. samples $x=As$ where $s \in \R^n$ has an unknown product distribution. The example above, namely learning a linearly transformed cuboid, is a special case of this problem. Although PCA fails, one can use it to first apply a transformation (to a sample) that makes the distribution isotropic, i.e., effectively making the distribution a rotation of a cube. At this point, eigenvectors give no further information, but as observed in the signal processing literature \cite{ComonJutten, ICAbook}, directions that locally maximize fourth moments reveal the axes of the cube, and undoing the isotropic transformation yields the axes of the original cuboid. Using this basic idea, Frieze et al.~\cite{FriezeJK96} and
subsequent papers give provably efficient algorithms assuming that the linear
transformation $A$ is full-dimensional and the components of the product distribution
differ from one-dimensional Gaussians in their fourth moment. This leaves open the important general case of {\em underdetermined} ICA, namely where $A$ is not necessarily square or full-dimensional, i.e., the observations $x$ are projections to a lower-dimensional space;  in the case of the cuboid example, we only see samples from an (unknown) projection of a transformed cuboid. 

In this paper, we
give a polynomial-time algorithm that (a) works for any transformation $A$ provided
the columns of the linear transformation satisfy a natural extension
of linear independence, (b) does not need the fourth moment assumption, and (c) is robust to Gaussian noise. 
As far as we know, this is the first
polynomial-time algorithm for underdetermined ICA. The central object of our study is a higher
derivative tensor of suitable functions of the Fourier transform of the
distribution. Our main algorithmic technique is an 
efficient tensor decomposition method for pairs of tensors that share the
same vectors in their respective rank-$1$ decompositions. We call our
general technique \emph{Fourier PCA}. This is motivated by the fact that in the base case of second derivatives, 
it reduces to PCA of a reweighted covariance matrix.

As a second application, Fourier PCA gives an alternative algorithm for learning a mixture of spherical Gaussians, 
under the assumption that the means of the component Gaussians are
linearly independent. Hsu and Kakade \cite{HsuK13} already gave an algorithm for this problem based on third moments; our algorithm has the benefit of being robust to Gaussian noise.

We now discuss these problems and prior work, then present our results in more detail.

\paragraph{ICA}
Blind source separation is a fundamental problem in diverse areas
ranging from signal processing to neuroscience to machine learning. In
this problem, a set of source signals are mixed in an unknown way, and
one would like to recover the original signals or understand how they
were mixed given the observations of the mixed signals. Perhaps the most influential formalization of this problem
is ICA (see the books \cite{ComonJutten, ICAbook} for comprehensive
introductions).  In the basic formulation of ICA, one has a random
vector $s \in \R^n$ (the source signal) whose components are
independent random variables with unknown distributions. Let $s^{(1)},
s^{(2)}, \ldots$ be independent samples of $s$.  One observes mixed
samples $As^{(1)}, As^{(2)}, \ldots $ obtained by mixing the
components of $s$ by an unknown invertible $n \times n$ mixing matrix
A. The goal is to recover $A$ to the extent possible, which would then
also give us $s^{(1)}, s^{(2)}, \ldots,$ or some approximations
thereof. One cannot hope to recover $A$ in
case more than one of the $s_i$ are Gaussian; in this case any set of 
orthogonal directions in this subspace would also be consistent with
the model. Necessarily, then, all ICA algorithms must require that the
component distributions differ from being Gaussian in some fashion.

A number of algorithms have been devised in the ICA
community for this problem. The literature is vast and we refer to \cite{ComonJutten} for
a comprehensive account. 
%Most of these algorithms either lack analysis or the analysis is not rigorous or only proves
%the statistical consistency of the algorithm (as opposed to finite
%sample and complexity analysis). We confine our discussion of previous work to
%algorithms with rigorous guarantees and to algorithms that we build upon.
The ICA problem has been studied rigorously in theoretical computer
science in several previous papers \cite{FriezeJK96, NguyenR09,
  AroraGMS12, BelkinRV12, AnandkumarTensorDecomp}.  All of these
algorithms either assume that the component distribution is a very
specific one \cite{NguyenR09, AroraGMS12}, or assume that its kurtosis
(fourth cumulant) is bounded away from $0$, in effect 
assuming that its fourth moment is bounded away from that of a
Gaussian. The application of tensor decomposition to ICA 
has its
origins in work by Cardoso~\cite{Cardoso89}, and similar ideas were
later discovered by Chang \cite{chang1996full} in the context of
phylogenetic reconstruction and developed further in several works,
e.g. Mossel and Roch~\cite{MosselR05}, Anandkumar et
al.~\cite{anandkumar2012spectral}, Hsu and Kakade~\cite{HsuK13} for
various latent variable models. 
%For distributions with small kurtosis these algorithms fail. Our algorithm works for a much more general
%class of distributions and its complexity depends only on the distance
%of the component distribution from being a Gaussian, a dependence that is unavoidable. 
Arora et al.~\cite{AroraGMS12} and Belkin et
al.~\cite{BelkinRV12} show how to make the
algorithm resistant to unknown Gaussian noise. 
%Our algorithm is able to naturally handle unknown Gaussian noise, again for general distributions.

Underdetermined ICA, 
where the transformation matrix $A$ is not square
or invertible (i.e., it includes a projection),  is an important general problem and there are a number of algorithms proposed for it in the signal processing literature, many of them quite sophisticated. However, none of them is known to have rigourous guarantees on the sample or time complexity, even for special distributions.
%This is a well-studied problem and conditions under which
%identifiability of the mixing matrix is possible are well-understood.
See e.g. Chapter 9 of \cite{ComonJutten} for a review of 
existing algorithms and identifiability conditions for underdetermined ICA.  
%However, there were no prior rigorous algorithms even for special distributions. 
For example, 
FOOBI \cite{cardoso91, dcc07} uses fourth-order correlations,
and its analysis is done only for the {\em exact} setting without
analyzing the robustness of the algorithm when applied to a sample,
and bounding the sample and time complexity for a desired level of
error. In addition, the known sufficient condition for the success of
FOOBI is stronger than ours (and more elaborate). 
We mention two other related papers \cite{ComonRajih, BIOME}.

\paragraph{Gaussian mixtures}

Gaussian mixtures are a popular model in statistics. A
distribution $F$ in $\R^n$ is modeled as a convex combination of
unknown Gaussian components. Given i.i.d. samples from $F$, the goal is
to learn all its parameters, i.e., the means, covariances and mixing
weights of the components. A classical result in statistics says that
Gaussian mixtures with distinct parameters are uniquely identifiable, i.e., as the number
of samples goes to infinity, there is unique decomposition of $F$ into
Gaussian components. It has been established that the sample
complexity grows exponentially in $m$, the number of components
\cite{BelkinCOLT10, BelkinFOCS10,
  kalai2010efficiently,moitra2010settling}. In a different direction,
under assumptions of separable components, a mixture is learnable in
time polynomial in all parameters
\cite{vempala2004spectral,dasgupta1999learning,
  sanjeev2001learning,dasgupta2007probabilistic,
  chaudhuri2008learning,brubaker2008isotropic}.  Our work here is
motivated by Hsu and Kakade's algorithm \cite{HsuK13}, which uses a
tensor constructed from the first three moments of the distribution and works 
for a mixture of spherical Gaussians with linearly independent means.

\paragraph{Robust tensor decomposition}
As a core subroutine for all problems above, we develop a general theory of efficient tensor decompositions for pairs
of tensors, which allows us to
recover a rank-$1$ tensor decomposition from two homogeneous tensor
relations. As noted in the literature, such a pair of tensor equations can be obtained from one tensor equation by applying two random vectors to the original equation, losing one in the order of the tensor. Our tensor decomposition
``flattens'' these tensors to matrices and performs an eigenvalue
decomposition. The matrices in question are not
Hermitian or even normal, and hence we use more general methods 
for eigendecomposition (in particular, tensor power
iterations cannot be used to find the desired decompositions). The algorithm for tensor decomposition via simultaneous tensor diagonalization is essentially due to Leurgans et al \cite{Leur1993}; to the best of our knowledge, ours is the first robust analysis. In subsequent work, Bhaskara et al. \cite{Bhaskara2013} have outlined a similar robustness analysis with a different application.

\subsection{Results}
 We begin with fully determined ICA.  Unlike
most of the literature on ICA, which employs moments, we do not
require that our underlying random variables $s_i$ differ from a
Gaussian at the fourth moment. In fact, our algorithm can deal with
differences from being Gaussian at any moment, though the
computational and sample complexities are higher when the differences
are at higher moments. We will use \emph{cumulants} as a notion of
difference from being a Gaussian. The cumulant of random variable $x$
at order $r$, denoted by $\cum_r(x)$, is the $r^{th}$ moment with some
additional subtractions of polynomials of lower moments. The following
is a short statement of our result for fully-determined ICA (i.e. the
mixing matrix $A$ is invertible); the full statement appears later as
Theorem~\ref{thm:ICA}. The algorithm takes as input the
samples generated according to the ICA model and parameters $\epsilon,
\Delta, M, k$ and an estimate of $\sigma_n(A)$.
\begin{theorem}
  Let $x \in \R^n$ be given by an ICA model $x=As$ where $A \in \R^{ n
    \times n}$ columns of $A$ have unit norm and let $\sigma_n(A) >
  0$. Suppose that for each $s_i$, there exists a $k_i \le k$ such
  that $\abs{\cum_{k_i}(s_i)} \ge \Delta > 0$ and $\E{\abs{s_i}^k} \le
  M$. Then, one can recover the columns of $A$ up to signs to $\eps$
  accuracy in polynomial time using $\poly( n^{k^2}, M^k, 1/\Delta^k,
  1/\sigma_n(A)^k, 1/\eps)$\\ samples with high probability.
\end{theorem}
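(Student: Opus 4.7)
The plan is to base recovery on the \emph{second characteristic function} $\psi(u) := \log \E{e^{i\ip{u}{x}}}$. Using independence of the $s_j$ together with $x = As$, one gets the separable form
\[ \psi(u) \;=\; \sum_{j=1}^n \psi_{s_j}\!\left((A^T u)_j\right), \]
where $\psi_{s_j}$ is the second characteristic function of the scalar $s_j$. Differentiating $r$ times in $u$ produces the key identity
\[ D^r \psi(u) \;=\; \sum_{j=1}^n \psi_{s_j}^{(r)}\!\left((A^T u)_j\right) A_j^{\otimes r}, \]
whose diagonal coefficient at $u=0$ is (up to $i^r$) the cumulant $\cum_r(s_j)$. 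Thus $D^r\psi(u)$ is a symmetric rank-$n$ tensor whose rank-$1$ factors are exactly the unknown columns $A_j$.

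For each target order $r \in \{2,\dots,k\}$, I would sample two independent points $u,v$ uniformly in a ball of radius $\rho$ around the origin (with $\rho$ small enough that $\phi$ stays close to $1$, so that $\log \phi$ is well defined) together with $r-2$ auxiliary random unit vectors $\alpha_1,\dots,\alpha_{r-2}$. Contracting $D^r\psi(u)$ and $D^r\psi(v)$ against the $\alpha_l$ in all but two slots yields two $n\times n$ matrices
\[ M_u \;=\; \sum_j \psi_{s_j}^{(r)}\!\left((A^Tu)_j\right)\!\prod_{l=1}^{r-2}\!\ip{A_j}{\alpha_l}\, A_j A_j^{T} \;=\; A\, D_u\, A^T, \]
and analogously $M_v = A D_v A^T$, that share the same rank-$1$ components $A_j A_j^T$. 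Since $A$ is invertible, $A^{-T}$ diagonalizes $M_u M_v^{-1}$; equivalently, the generalized eigenvalue problem $M_u y = \lambda M_v y$ returns the columns of $A$, with eigenvalues $(D_u)_{jj}/(D_v)_{jj}$. This is exactly the paired-tensors primitive advertised in the paper. I would run the procedure for each $r$ from $2$ to $k$ and keep only those recovered vectors whose associated diagonal has magnitude $\gtrsim \Delta$; the hypothesis that every $s_j$ has some large cumulant of order at most $k$ guarantees every column is captured in at least one pass, up to the inherent sign ambiguity.

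The bulk of the work is to make the above stable under sampling. I would chain together: (i) concentration of the empirical characteristic function $\hat\phi$ and its partial derivatives through order $r$, uniformly on the ball of radius $\rho$, using $\E{\abs{s_i}^k} \leq M$; (ii) transfer of these errors from $\phi$ to $\psi = \log\phi$ and onwards to $D^r\psi$ via the chain/quotient rule, which stays controlled as long as $\abs{\phi}$ remains bounded away from $0$; (iii) a Taylor expansion of each diagonal entry around the origin showing $\psi_{s_j}^{(r)}((A^Tu)_j) = i^r\cum_r(s_j) + O(\rho)$, so that magnitudes stay $\gtrsim \Delta$ while genuine randomness across $j$ is injected by $u,v$; and (iv) an anticoncentration argument showing that for random $u,v$ the ratios $(D_u)_{jj}/(D_v)_{jj}$ are pairwise separated by an inverse-polynomial eigengap with high probability.

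The hard part will be combining (iv) with a quantitative eigendecomposition bound: the matrices $M_u, M_v$ are neither Hermitian nor normal (they are complex and $A$ is not orthogonal), so Davis--Kahan does not apply. Instead I would invoke a Bauer--Fike style perturbation inequality for diagonalizable matrices, whose condition-number factor for the eigenbasis is governed by $\sigma_n(A)$, which is precisely the source of the $1/\sigma_n(A)^k$ dependence in the sample bound. Combining the polynomial eigengap from (iv), the condition-number factor for $A$, and the propagated error from (i)--(ii) gives an $\eps$-accurate recovery of each column up to sign. Tracking the $O(n^r)$ tensor entries being estimated, the $k$-fold chain rule applied to $\log\phi$, and the combined stability losses through the eigendecomposition yields the stated $\poly(n^{k^2}, M^k, 1/\Delta^k, 1/\sigma_n(A)^k, 1/\eps)$ sample complexity.
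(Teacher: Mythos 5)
Your overall toolkit (second characteristic function, derivative tensors that become diagonal under $A$, polynomial anti-concentration for eigenvalue gaps, Bauer--Fike-type perturbation for non-normal matrices) is the right one, but the ``run one pass for each order $r=2,\dots,k$ and filter by diagonal magnitude'' design has a genuine gap. At a pass of order $r$, the diagonal entry for component $j$ is $\psi_{s_j}^{(r)}((A^Tv)_j)\prod_l\ip{A_j}{\alpha_l}$, and the Taylor expansion of $\psi_{s_j}^{(r)}$ around $0$ involves only cumulants of $s_j$ of order at least $r$ (differentiating $r$ times annihilates all lower-order cumulant terms). The hypothesis guarantees one large cumulant at some order $k_j\le k$; if $k_j<r$, every cumulant of $s_j$ of order at least $r$ may be arbitrarily small, so $\abs{(D_v)_{jj}}$ has no controllable lower bound and no anti-concentration is available for its ratio. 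Discarding the corresponding output vector afterwards does not repair this: a single uncontrollably small diagonal entry makes $\sigma_{\min}(M_v)$ uncontrollably small, so the propagated sampling error in $\hat{M}_v^{-1}$ (which scales like $\norm{E}/\sigma_{\min}(M_v)^2$) and the gap between a ``good'' ratio and an uncontrolled ``bad'' ratio both fail, corrupting the recovery of even the well-behaved columns at that pass. The fix is to use a single derivative order $d$ strictly below all the $k_j$ --- after centering and isotropy one has $k_j\ge 3$, so $d=2$ always suffices --- which is exactly the constraint $d<k_i<k$ under which the paper's ratio anti-concentration (Theorem~\ref{thm:anti-concentration}) and the uniform lower bound $K_L$ (Corollary~\ref{cor:K_L}) are established. (A smaller slip: the generalized eigenproblem $M_u y=\lambda M_v y$ returns the columns of $A^{-T}$, not of $A$; it is the right eigenvectors of $M_u M_v^{-1}$ that give the columns of $A$. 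Either is usable, but they differ when $A$ is not orthogonal.)

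For comparison, the paper proves this particular theorem by a simpler one-matrix route: it first places the sample in isotropic position so that $A$ becomes (approximately) unitary, and then directly diagonalizes the single reweighted covariance $\Sigma_u=D^2\psi(u)=A\,\diag{\psi_j''((A^Tu)_j)}\,A^T=A\,\diag{\psi_j''((A^Tu)_j)}\,A^{-1}$ at one Gaussian point $u$ --- no second matrix, no inversion, no ratios, and hence no need for a lower bound on the diagonal entries. The eigenvalue spacing then comes from anti-concentration of the values $\psi_j''((A^Tu)_j)$ themselves, exploiting that the $(A^Tu)_j$ are independent Gaussians because the columns of a unitary $A$ are orthogonal. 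Your paired-matrix route with $d=2$ fixed is essentially the paper's underdetermined algorithm specialized to $m=n$; it avoids the isotropy step at the cost of the heavier ratio anti-concentration and the $K_L$ bound, and with that repair it would also yield the stated sample complexity.
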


 In the simplest setting, roughly speaking,
our algorithm computes the covariance matrix of the data reweighted
according to a random Fourier coefficient $e^{i u^T x}$ where $u \in
\R^n$ is picked according to a Gaussian distribution.  Our ideas are
inspired by the work of Yeredor \cite{DBLP:journals/sigpro/Yeredor00}, who presented such an algorithm for fully determined ICA (without finite sample guarantees).  

The reweighted covariance matrix can also be viewed as the Hessian of the logarithm of the Fourier transform of the distribution.
Using this perspective, we extend the method to underdetermined 
instances---problems where the apparent number of degrees of freedom seems
higher than the measurement system can uniquely fix, by studying 
higher derivative tensors of the Fourier transform.  The use of Fourier weights has the added advantage that the
resulting quantities always exist (this is the same phenomenon that
for a probability distribution the characteristic function always
exists, but the moment generating function may not) and thus works
for all random variables and not just in the case of having all
finite moments.

We then extend this to the setting where the source signal $s$ has
more components than the number of measurements (Section
\ref{sec:underdetermined}); recall that in this case, the transformation $A$ is a
map to a lower-dimensional space. 
%\tocheck{
%One illustrative case for standard ICA is when the $s_i$ are uniform in interval $[0,1]$, then 
%the problem of learning $A$ is equivalent to learning a parallelopiped given uniformly random samples.
%For underdetermined ICA, the problem becomes that of lerning a zonotope given random samples from it 
%(but the distribution is not uniform now).} 
%% One illustrative case is that of
%% learning a zonotope given uniform samples from it. Standard ICA is the
%% special case when the zonotope is a sum of exactly $n$ intervals,
%% i.e., a parallelopiped. 
Finding provably efficient algorithms for underdetermined ICA has been
an open problem. Tensor decomposition techniques, such as power iteration, which are known to work in the
fully determined case cannot possibly generalize to the
underdetermined case~\cite{aghkt12}, as they require linear
independence of the columns of $A$, which means that they can handle
at most $n$ source variables.

Our approach is based on tensor decomposition, usually defined as follows: given a tensor $T \in
\R^{n \times \cdots \times n}$ which has the following rank-$1$
expansion:
\begin{align*}
  T = \sum_{i=1}^m \mu_i A_i \otimes \cdots \otimes A_i,
\end{align*}
compute the vectors $A_i \in \R^n$. Here $\otimes$ denotes the usual
outer product so that entry-wise $[v \otimes \cdots \otimes
v]_{i_1,\ldots,i_r} = v_{i_1} \cdots v_{i_r}$). Our main idea here is
that we do not attempt to decompose a single tensor into its rank-$1$
components. This is an NP-hard problem in general, and to make it tractable, previous work uses additional informaton 
and structural assumptions, which do not hold in the underdetermined setting or place strong
restrictions on how large $m$ can be as a function of $n$. Instead, we consider 
\emph{two} tensors which share the same rank-$1$ components and
compose the tensors in a specific way, thereby extracting the desired
rank-$1$ components. In the following $\vec{ A_i^{\otimes d/2}}$
denotes the tensor $A_i^{\otimes d/2}$ flattened into a vector.
The algorithm's input consists of: tensors $T_\mu, T_\lambda$,  and parameters $n, m , d, \Delta, \epsilon$
as explained in the following theorem. 
\begin{theorem}[Tensor decomposition]\label{thm:decomposition}
  Let $A$ be an $n \times m$ matrix with $m > n$ and columns with unit
  norm, and let $T_{\mu},T_{\lambda} \in \R^{n \times \cdots \times
    n}$ be order $d$ tensors such that $d \in 2 \N$ and
  \begin{align*}
  T_{\mu} = \sum_{i=1}^m \mu_i A_i^{\otimes d} \qquad T_{\lambda}
  = \sum_{i=1}^m \lambda_i A_i^{\otimes d},
  \end{align*}
  where $\vec{ A_i^{\otimes d/2}}$ are linearly independent, $\mu_i,
  \lambda_i \ne 0$ and $\abs{\frac{\mu_i}{\lambda_i} -
    \frac{\mu_j}{\lambda_j}} > \Delta$ for all $i,j$ and $\Delta >
  0$. Then, algorithm TensorDecomposition$(T_\mu, T_\lambda)$ outputs
  vectors $A'_1, \ldots, A'_m$ with the following property. There is a
  permutation $\pi: [m] \to [m]$ and signs $\alpha: [m] \to
  \set{-1,1}$ such that for $i \in [m]$ we have
\begin{align*}
\norm{\alpha_i A'_{\pi(i)}-A_i}_2 \leq \epsilon.
\end{align*}
The running time  is $\mathrm{poly}\left(n^d, \frac{1}{\epsilon}, \frac{1}{\Delta}, \frac{1}{\sigma_{\min}(A^{\odot d/2})}\right)$. 
\end{theorem}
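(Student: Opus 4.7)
My plan is to flatten each order-$d$ tensor into an $n^{d/2}\times n^{d/2}$ matrix by grouping the first $d/2$ modes as rows and the last $d/2$ modes as columns. Since $d$ is even and $A_i^{\otimes d}$ splits as $A_i^{\otimes d/2}\otimes A_i^{\otimes d/2}$, the flattenings take the form
\begin{align*}
M_\mu = B D_\mu B^\top, \qquad M_\lambda = B D_\lambda B^\top,
\end{align*}
where $B = A^{\odot d/2}$ is the $n^{d/2}\times m$ matrix whose $i$th column is $\vec{A_i^{\otimes d/2}}$, and $D_\mu, D_\lambda$ are the $m\times m$ diagonal matrices of coefficients. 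The hypothesis that the vectors $\vec{A_i^{\otimes d/2}}$ are linearly independent says exactly that $B$ has full column rank, with quantitative strength $\sigma_{\min}(B) = \sigma_{\min}(A^{\odot d/2}) > 0$. I will recover the columns $b_i = \vec{A_i^{\otimes d/2}}$ of $B$ by simultaneous diagonalization of the pair $(M_\mu, M_\lambda)$ in the spirit of Leurgans, and then read off each $A_i$ from $b_i$.

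\textbf{Exact decomposition.} First compute the top-$m$ left singular vectors of $M_\lambda$ and let $U\in \R^{n^{d/2}\times m}$ be the resulting orthonormal basis of $\colspan{B}$. Setting $\tilde B = U^\top B \in \R^{m\times m}$, which is invertible since $\colspan{U}=\colspan{B}$, the compressions $\tilde M_\mu = U^\top M_\mu U$ and $\tilde M_\lambda = U^\top M_\lambda U$ satisfy $\tilde M_\mu = \tilde B D_\mu \tilde B^\top$ and $\tilde M_\lambda = \tilde B D_\lambda \tilde B^\top$. Because all $\lambda_i \ne 0$, the matrix $\tilde M_\lambda$ is invertible, so the Leurgans matrix
\begin{align*}
N := \tilde M_\mu \tilde M_\lambda^{-1} = \tilde B \,(D_\mu D_\lambda^{-1})\, \tilde B^{-1}
\end{align*}
has an eigendecomposition with right eigenvectors the columns of $\tilde B$ and pairwise $\Delta$-separated real eigenvalues $\mu_i/\lambda_i$. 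Diagonalizing $N$ therefore yields $\tilde B$ up to permutation and nonzero column scaling; lifting back via $b_i = U\tilde B_i$ and normalizing using $\|A_i\|=1$ (so $\|b_i\|=1$) recovers each $b_i$ up to sign. Finally, reshaping $b_i$ as an $n\times n^{d/2-1}$ matrix gives a rank-one matrix whose top left singular vector is $\pm A_i$, resolving the residual sign/scalar ambiguity; when $d/2$ is even, $b_i$ is genuinely insensitive to the sign of $A_i$, matching the sign ambiguity in the theorem statement.

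\textbf{Robust analysis and the main obstacle.} Suppose the inputs are perturbed by $\|\hat T_\mu - T_\mu\|, \|\hat T_\lambda - T_\lambda\| \le \eta$. Each step must propagate error polynomially: Wedin's theorem bounds the principal-angle perturbation $\sin\Theta(\hat U, U)$ in terms of $\eta$ and the $m$th singular value of $M_\lambda$, which is at least $\sigma_{\min}(A^{\odot d/2})^2 \min_i |\lambda_i|$; the compressed matrices $\hat{\tilde M}_\mu, \hat{\tilde M}_\lambda$ are then perturbed by $O(\eta + \|M\|\sin\Theta)$; taking the inverse and product is stable since $\sigma_{\min}(\tilde M_\lambda)$ is bounded below by the same quantities; and the final rank-one SVD step is a standard matrix perturbation. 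The main obstacle is the eigendecomposition of the Leurgans matrix $N$, which is \emph{non-Hermitian}, so Davis-Kahan does not apply. I would instead invoke the Bauer-Fike theorem for the eigenvalues, combined with a Stewart-type non-Hermitian eigenvector perturbation bound, yielding an eigenvector error of order $\eta\,\kappa(\tilde B)/\Delta$, where $\kappa(\tilde B)$ is controlled via $\|B\|$ and $1/\sigma_{\min}(A^{\odot d/2})$. Composing the resulting error budget and choosing $\eta$ small enough to drive the final error below $\epsilon$ gives the claimed polynomial running time and sample complexity.
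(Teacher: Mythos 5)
Your proposal is correct and follows essentially the same route as the paper: flatten to $BD_\mu B^\top$ and $BD_\lambda B^\top$, project onto the top-$m$ singular subspace, diagonalize the (non-normal) quotient matrix to recover the columns of $B=A^{\odot d/2}$ up to permutation and scaling, then extract $A_i$ from a rank-one reshaping; the robustness toolkit (Wedin, Bauer--Fike, a non-Hermitian $\sin\theta$-type bound) is likewise the one the paper uses. The only caveats are that the paper allows $\mu_i,\lambda_i\in\C$ (so the eigenvalues $\mu_i/\lambda_i$ need not be real and a phase-correction step is required), and that one must explicitly note, as the paper does via Bauer--Fike plus the $\Delta$-separation, that the perturbed quotient matrix still has distinct eigenvalues and hence remains diagonalizable.
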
 
The polynomial in the running time above can be made explicit. It
basically comes from the time complexity of SVD and eigenvector
decomposition of diagonalizable matrices. 
%% In Section \ref{sec:decompositions}, we give a robust version of this
%% theorem along with an efficient algorithm. 
We note that 
in contrast to previous work on tensor decompositions ~\cite{harshman1970foundations,de2000best,carroll1970analysis,smilde2005multi}, our method has
provable finite sample guarantees. We give a robust version of the above, stated as Theorem \ref{thm:robustdecomposition}.

To apply this to underdetermined ICA, we 
form tensors associated with the ICA distribution as inputs to our
pairwise tensor decomposition algorithm. The particular tensors that
we use are the derivative tensors of the second characteristic
function evaluated at random points. 
%We note that our techniques
%are quite different from those in the literature; in particular, we
%our pair-wise tensor decomposition appears to be the the
%first satisfactory work-around to the NP-hardness of decomposing
%single tensors into rank $1$ components. 

%Some of the tools we use in our algorithm (second characteristic function, flattened
%derivative tensors, joint diagonalization) have been used in  previous work. However, to our knowledge, all previous algorithms lack %finite sample analysis, and are also less general and considerably more complex.

Our algorithm can handle  
unknown Gaussian noise. The ICA model with Gaussian noise is given by
\begin{align*}
x = As + \eta,
\end{align*}
where $\eta \sim N(0, \Sigma)$ is independent Gaussian noise with
unknown general covariance matrix $\Sigma \in \R^{n \times n}$.  Also,
our result does not need full independence of the $s_i$, it is
sufficient to have $d$-wise independence.  The following is a short
statement of our result for underdetermined ICA; the full statement
appears later as Theorem~\ref{thm:main} (but without noise).  Its extension to handling
Gaussian noise is in Sec.~\ref{subsec:noise}.
%\begin{theorem}
% Fix $n,m \in \N$ such that $n \le m$.  Let $x \in \R^n$ be given by
 %an underdetermined ICA model $x= As$, where $A \in \R^{n \times m}$ with
  %unit norm columns and the covariance matrix $\Sigma \in \R^{n \times n}$
%are unknown. Let $d \in 2 \N$ be such that $\sigma_m(A^{\odot d/2}) > 0$. Let $k > d$ be such that
 %for each $s_i$ and $\E{ \abs{s_i}^k} \le M$ and there is a $k_i$ satisfying $d < k_i \le k$ and $\abs{\cum_{k_i}(s_i)} \ge  \Delta$.
%Then one can recover the columns of $A$ up to $\eps$ accuracy in norm and up to the sign 
%using $\text{poly}\left( n^{d+k}, m^{k^2}, M^k, 1/ \Delta^k, 1/\sigma_m(A^{\odot d/2})^k, 1/\eps, 1/\delta \right)$ samples and with similar
%polynomial time complexity with probability at least $1-\delta$.
%\end{theorem}
The input to the algorithms, apart from the samples generated according to the unknown noisy underdetermined ICA model, consists of several parameters whose meaning will be clear in the theorem statement below: A tensor order parameter $d$, number of signals $m$, accuracy parameter $\epsilon$, confidence parameter $\delta$, bounds on moments and cumulants $M$ and $\Delta$, an estimate of the conditioning parameter $\sigma_m$, and moment order $k$. The notation 
$A^{\odot d}$ used below is explained in the preliminaries section; briefly, it's a $n^d \times m$ matrix
with each column obtained by flattening $A_i^{\otimes d}$ into a vector.

\begin{theorem}\label{thm:UICA_noisy}
Let $x \in \R^n$ be given by
  an underdetermined ICA model with unknown Gaussian noise $x= As + \eta$ where $A \in \R^{n \times m}$
  with unit norm columns and the covariance matrix $\Sigma \in \R^{n \times n}$
are unknown. Let $d \in 2 \N$ be such that $\sigma_m(A^{\odot d/2}) > 0$. Let $M_k, M_d, M_{2d}$ and 
$k > d$ be such that
 for each $s_i$, there is a $k_i$ satisfying $d < k_i < k$ and $\abs{\cum_{k_i}(s_i)} \ge
  \Delta$, and $\E{ \abs{s_i}^{k_i}} , \E{\sigma_1(\Sigma)^k}\le M_k$, $\E{\abs{s_i}^d} \leq M_d$, 
and $\E{\abs{s_i}^{2d}} \leq M_{2d}$.  
%Moreover, suppose that the noise also satisfies the same moment condition: $\E{\abs{\angles{u, \eta_i}}^k} \le M$ for any unit
%vector $u \in \R^n$ (this is satisfied if we have $k! \sigma^k \leq M$ where $\sigma$ is the maximum eigenvalue of 
%$\Sigma$).  
Then one can recover the columns of $A$ up to $\eps$ accuracy in 2-norm and up to the sign 
using $\text{poly}\left( m^{k}, M_d^k, M_{2d}, 1/ \Delta, 1/\sigma_m(A^{\odot d/2})^{k}, 1/\eps,  1/\sigma^k \right)$ samples and with similar 
polynomial time complexity with probability at least $3/4$, where 
$0 < \sigma < \frac{\Delta}{M_k} \text{poly}(\sigma_m(m^k, A^{\odot d/2})^k,1/k^k)$.

%*** Shouldn't the depedendence above on \delta be of the form polylog(1/\delta) rather than \log(1/\delta)?  Yes. -SV.

%When does it make sense to make $k$ larger than $\max k_i$? add explicit confidence paramenter
%why is $d$ even but then you immediately use $d/2$?
\end{theorem}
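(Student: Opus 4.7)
The plan is to reduce the noisy underdetermined ICA instance to a pair-tensor decomposition problem in the form of Theorem~\ref{thm:decomposition} and then propagate sampling error through a robust analogue of that decomposition.

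First I would work with the second characteristic function $\psi_x(u) := \log \E{e^{i \ip{u}{x}}}$. Using $d$-wise independence of the $s_i$ and independence of $\eta$ from $s$, one has, modulo terms that vanish under the $d$-th derivative,
\begin{align*}
\psi_x(u) \;=\; \sum_{i=1}^m \psi_{s_i}(A_i^T u) \;-\; \tfrac{1}{2}\, u^T \Sigma u.
\end{align*}
Taking $d$ total derivatives in $u$, the Gaussian quadratic is annihilated (for $d \ge 3$; when $d=2$ one instead subtracts the values at a second evaluation point to cancel the unknown $\Sigma$), producing the symmetric tensor
\begin{align*}
T(u) \;:=\; D^d \psi_x(u) \;=\; i^d \sum_{i=1}^m \psi_{s_i}^{(d)}(A_i^T u)\, A_i^{\otimes d},
\end{align*}
which is exactly the rank-$m$ form demanded by Theorem~\ref{thm:decomposition}, with atoms $A_i^{\otimes d}$.

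Next I would draw $u, v \sim N(0, \sigma^2 I_n)$ independently and set $T_\mu := T(u)$, $T_\lambda := T(v)$, so that $\mu_i = i^d \psi_{s_i}^{(d)}(A_i^T u)$ and $\lambda_i = i^d \psi_{s_i}^{(d)}(A_i^T v)$. To verify the ratio-gap hypothesis $|\mu_i/\lambda_i - \mu_j/\lambda_j| > \Delta'$ needed by Theorem~\ref{thm:decomposition}, I would Taylor-expand
\begin{align*}
\psi_{s_i}^{(d)}(t) \;=\; \sum_{j \ge 0} \frac{i^{d+j} \cum_{d+j}(s_i)}{j!}\, t^j,
\end{align*}
note that by hypothesis some $\cum_{k_i}(s_i)$ with $k_i \in (d,k)$ has magnitude at least $\Delta$, and apply a polynomial anti-concentration (smallball) bound to the bivariate polynomial $\mu_i \lambda_j - \mu_j \lambda_i$ in $(u,v)$, truncated at degree $k$, to obtain a uniform ratio-gap with high probability provided $\sigma$ lies in the window stated in the theorem. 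The sampling step estimates $T(u), T(v)$ by differentiating the empirical characteristic function $\hat\phi(u) := \frac{1}{N}\sum_j e^{i u^T x^{(j)}}$: each $d$-th partial is an empirical average of terms of the form $(i x^{(j)})^{\otimes d} e^{i u^T x^{(j)}}$, whose variance is controlled by $M_{2d}$, while converting derivatives of $\phi$ into derivatives of $\log \phi$ via the Fa\`a di Bruno formula introduces inverse powers of $|\phi(u)|$. Since $\sigma$ is small, $|\phi(u)|$ stays bounded away from zero with high probability over $u$, yielding $\epsilon'$-accurate tensors $\hat T_\mu, \hat T_\lambda$ at the stated sample count. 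A robust version of Theorem~\ref{thm:decomposition} (to be stated as Theorem~\ref{thm:robustdecomposition}) then turns these into $\epsilon$-accurate estimates of the $A_i$ up to signs and permutation, and a union bound over the anti-concentration, estimation, and decomposition failures controls the overall error probability below $1/4$.

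The main obstacle, and the step that sets the polynomial exponents in the sample complexity, is the joint calibration of $\sigma$ in the previous two steps. Anti-concentration wants $\sigma$ large enough that the leading cumulant $\cum_{k_i}(s_i)$ dominates the variation of $\psi_{s_i}^{(d)}(A_i^T u)$ through its Taylor tail, while sample estimation wants $\sigma$ small, because $|\phi(u)|$ decays like $e^{-\sigma^2 \sigma_1(\Sigma)/2}$ along the noisy directions and its inverse powers appear after Fa\`a di Bruno. The window $\sigma < (\Delta/M_k)\,\poly(\sigma_m(A^{\odot d/2})^k, 1/k^k)$ stated in the theorem is precisely what makes the Taylor-truncation error in $\psi_{s_i}^{(d)}$ dominated by the cumulant signal while keeping $|\phi(u)|$ from collapsing; executing this balance using only the given moment and cumulant bounds is the heart of the argument.
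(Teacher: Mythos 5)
Your overall architecture coincides with the paper's: derivative tensors of $\psi_x$ at two Gaussian points $u,v\sim N(0,\sigma^2 I_n)$, annihilation of the unknown Gaussian noise because the $-\frac12 u^T\Sigma u$ term is quadratic (with subtraction at two points in the special case $d=2$), reduction to the pair--tensor decomposition of Theorem~\ref{thm:decomposition} via its robust version Theorem~\ref{thm:robustdecomposition}, and a Chebyshev-inequality sample-complexity analysis of the empirical derivative tensors with $\abs{\phi}$ kept bounded away from $0$. That much is a faithful reconstruction of the paper's proof of Theorem~\ref{thm:main} and Section~\ref{sec:underdetermined-noise}.

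The gap is in the anti-concentration step, which is where the paper does its real work. You propose to apply a small-ball bound to the bivariate polynomial $\mu_i\lambda_j-\mu_j\lambda_i$ in $(u,v)$. The only off-the-shelf multivariate small-ball inequality of this type (Carbery--Wright) requires a lower bound on the \emph{variance} of the polynomial, and the paper explicitly notes it could not use that route: the cumulant hypothesis only gives a lower bound on a single coefficient (a ``monic-like'' condition), not on the variance. The paper instead conditions on $A_j^Tu=z$ and on $v$, writes $A_i^Tu=\ip{A_i}{A_j}A_j^Tu+r^Tu$ with $r\perp A_j$ and $\norm{r}^2\ge L^2$ (where $L$ is lower-bounded through $\sigma_m(A^{\odot d/2})$ via Lemma~\ref{lemma:identifiable}), and applies a bespoke \emph{univariate monic} anti-concentration inequality (Theorem~\ref{thm:anti}, proved through the extremal property of Chebyshev polynomials) to the truncated Taylor polynomial of $g_i$ in the independent Gaussian $r^Tu$. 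Your sketch neither supplies a variance lower bound for the bivariate determinant polynomial nor reduces to a univariate monic one, so the step ``apply a polynomial anti-concentration bound'' is exactly the missing ingredient rather than a routine application. Relatedly, to pass from a lower bound on $\abs{\mu_i\lambda_j-\mu_j\lambda_i}$ to the ratio gap $\abs{\mu_i/\lambda_i-\mu_j/\lambda_j}$, and to satisfy condition~4 of Theorem~\ref{thm:robustdecomposition} ($0<K_L\le\abs{\mu_i},\abs{\lambda_i}\le K_U$), you also need explicit upper \emph{and lower} bounds on $\abs{\lambda_i}=\abs{\psi_{s_i}^{(d)}(A_i^Tv)}$; the paper obtains the lower bound as a separate anti-concentration statement (Corollary~\ref{cor:K_L}) and the upper bound from the derivative estimates of Lemma~\ref{lemma:errorestimate}. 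Your proposal omits both, and without the lower bound on $\abs{\lambda_i}$ the inversion of $M_\lambda$ in the decomposition algorithm is not controlled.
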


The probability of success of the algorithm can be boosted from $3/4$ to $1-\delta$ for any $\delta > 0$ by taking $O(\log (1/\delta))$  independent runs of the algorithm and using an adaptation of the ``median" trick (see e.g., Thm 2.8 in\cite{LV07}).
To our knowledge, this is the first polynomial-time
algorithm for underdetermined ICA with provable finite sample
guarantees. It works under mild assumptions on the input
distribution and nondegeneracy assumptions on the mixing matrix
$A$. The assumption says that the columns of the matrix when tensored up individually are linearly independent. 
For example, with $d=4$, suppose that every $s_i$ differs from a Gaussian in the fifth or higher moment
by $\Delta$, then we can recover all the components as
long as $\vec{ A_i A_i^T}$ are linearly independent. Thus, the number of components that can be recovered can be as high as 
$m = n(n+1)/2$. Clearly, this is a weakening of the standard assumption that the columns of $A$ are linearly independent. 
%\tocheck{
This assumption can be regarded as a certain incoherence type
assumption. Moreover, in a sense it's a necessary and sufficient condition: the ICA problem is solvable for matrix
$A$ if and only if any two columns are linear independent~\cite{ComonJutten}, and this turns out to be equivalent 
to the existence of a
finite $d$ such that $A^{\odot d}$ has full column rank. 
A well-known condition in the literature on tensor
decomposition is Kruskal's condition \cite{kruskal1977three}. Unlike
that condition it is easy to check if a matrix satisfies our
assumption (for a fixed $d$).  Our assumption is true {\em
  generically}: For a randomly chosen matrix $A \in \R^{n \times {n
    \choose d}}$ (e.g. each entry being i.i.d. standard Gaussian), we
have $\sigma_{min}(A^{\odot d}) > 0$ with probability $1$.  In a
similar vein, for a randomly chosen matrix $A \in \R^{n \times {n
    \choose d}}$ its condition number is close to $1$ with high
probability; see Theorem~\ref{thm:vershynin_condition} for a precise statement and proof.
Moreover, our assumption is robust also in the smoothed sense~\cite{AndersonGMM}: If we
start with an arbitrary matrix $M \in \R^{n \times {n \choose 2}}$ and perturb it with a noise matrix
$N \in \R^{n \times {n \choose 2}}$ with each entry independently chosen from $N(0, \sigma^2)$, then we have
$\sigma_{min}((M+N)^{\odot 2}) = \sigma^2/n^{O(1)}$ with probability
at least $1-1/n^{\Omega(1)}$, and a similar generalization holds for higher powers. 
This follows from a simple application
of the anti-concentration properties of polynomials in independent
random variables; see \cite{AndersonGMM} for a proof. See also~\cite{Bhaskara2013}.
%} 

As in the fully-determined ICA setting, we require that our random
variables have some cumulant different from a Gaussian. 
One aspect of our result is that using the $d^{th}$ derivative,
one loses the ability to detect non-Gaussian cumulants at order $d$ and
lower; on the other hand, a theorem of Marcinkiewicz
\cite{marcinkiewicz1939propriete} implies that this does not cause
a problem.
\begin{theorem}[Marcinkiewicz]
  Suppose that random variable $x \in \R$ has only a finite number of
  non-zero cumulants, then $x$ is distributed according to a Gaussian,
  and every cumulant of order greater than 2 vanishes.
\end{theorem}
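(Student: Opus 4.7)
The plan is to exploit the analytic structure of the characteristic function and derive a contradiction via a polynomial growth argument.

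Let $\phi(t) = \E{e^{itx}}$ denote the characteristic function. Cumulants are defined by $\log\phi(t) = \sum_{r\ge 1}\kappa_r (it)^r/r!$, so if only finitely many cumulants are nonzero, $P(t) := \log\phi(t)$ is a polynomial of some degree $n$, and $\phi(z) = e^{P(z)}$ extends to an entire function on all of $\C$. In particular, all exponential moments of $x$ are finite and $\phi(2iv) = \E{e^{-2vx}}$ is a positive real number for every real $v$. The goal is to show $n \le 2$.

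The key inequality is Cauchy--Schwarz:
\begin{align*}
|\phi(u+iv)|^2 = \bigl|\E{e^{i(u+iv)x}}\bigr|^2 \le \E{|e^{i(u+iv)x}|^2} = \E{e^{-2vx}} = \phi(2iv),
\end{align*}
valid for all real $u, v$. Taking logarithms (using $\phi(2iv) > 0$) produces the polynomial inequality
\begin{align*}
2 \re{P(u+iv)} \le P(2iv) \qquad \text{for all } u, v \in \R,
\end{align*}
whose right-hand side is independent of $u$.

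The strategy is to assume $n \ge 3$ and show that this constraint forces $\kappa_n = 0$, contradicting $n = \deg P$. Writing $P(z) = \sum_{r=1}^n c_r z^r$ with $c_r = \kappa_r i^r / r!$, the left-hand side is a polynomial in $u$ of degree at most $n$ whose coefficients are polynomials in $v$; boundedness in $u$ for each fixed $v$ imposes sign constraints on these coefficients. When $n$ is odd, the coefficient of $u^n$ vanishes automatically because $c_n$ is purely imaginary, and the coefficient of $u^{n-1}$ is an affine function $\alpha v + \beta$ in $v$ with $\alpha$ a nonzero real multiple of $\kappa_n$. Since an affine function can be nonpositive on all of $\R$ only if $\alpha = 0$, we conclude $\kappa_n = 0$. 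When $n$ is even, one examines the full bivariate polynomial $L(u,v) := P(2iv) - 2 \re{P(u+iv)}$, which is nonnegative on $\R^2$; a direct expansion shows that its top-degree homogeneous part, viewed as a polynomial in $(u^2, v^2)$, is proportional to $\kappa_n$ and takes both positive and negative values on the positive quadrant (for instance, by evaluating along rays $v^2 = k u^2$ for suitably chosen $k > 0$), forcing $\kappa_n = 0$. In either case, iterating the argument yields $\kappa_r = 0$ for all $r \ge 3$, so $\deg P \le 2$ and $x$ is Gaussian.

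The main obstacle is the even-$n$ case: a single leading-coefficient inspection gives only a one-sided sign constraint on $\kappa_n$, and one must exploit the full bivariate structure of $L$ (nonnegativity along every ray in $\R^2$, not just the coordinate axes) together with an elementary trigonometric identification to pin down $\kappa_n = 0$.
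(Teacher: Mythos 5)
The paper offers no proof of this statement: it is quoted as a classical theorem with a citation to Marcinkiewicz, so there is nothing in-paper to compare against, and your argument must stand on its own. What you propose is the standard ridge-property proof: show $\phi$ is an entire characteristic function, derive $2\re{P(u+iv)} \le \re{P(2iv)}$ on all of $\R^2$, and kill the top coefficient $c_n = i^n\kappa_n/n!$ by a leading-term analysis. The skeleton is correct and both cases do close. In the odd case the coefficient of $u^{n-1}$ in the nonnegative polynomial is $-(\alpha v+\beta)$ with $\alpha$ a nonzero real multiple of $\kappa_n$, and since $n-1\ge 2$ is even this forces $\alpha=0$. In the even case, writing $u+iv=\rho e^{i\theta}$, the degree-$n$ form of the nonnegative polynomial is $c_n\rho^n\left(2^n(-1)^{n/2}\sin^n\theta-2\cos n\theta\right)$; the bracket is $-2$ at $\theta=0$ and is positive at $\theta=\pi/2$ when $n\equiv 0\pmod 4$ and at $\theta=\pi/n$ when $n\equiv 2\pmod 4$ (using $(2\sin(\pi/n))^n<2$), so $c_n=0$. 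So your claimed sign change is true, but note that it is not a one-line "direct expansion": the residue of $n$ modulo $4$ matters and the $n\equiv 2\pmod 4$ subcase needs an explicit ray and a small estimate, which you describe but do not carry out.

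The one step that is asserted with an invalid justification is ``in particular, all exponential moments of $x$ are finite.'' The fact that $e^{P(z)}$ is entire and agrees with $\phi$ on $\R$ does not by itself imply $\E{e^{-2vx}}<\infty$, nor that the analytic continuation of $\phi$ coincides with $\E{e^{izx}}$ off the real axis; this is exactly the content of Lukacs' theory of analytic characteristic functions (the strip of regularity is delimited by singularities on the imaginary axis, and $e^P$ has none). Alternatively, you can argue directly from the hypothesis: the moment--cumulant formula restricted to partitions with blocks of size at most $n$ gives $\abs{\mu_r}\le B_r\max(1,K)^r$ with $B_r$ the Bell number and $K=\max_{j\le n}\abs{\kappa_j}$, and since $B_r/r!\to 0$ superexponentially the series $\sum_r \E{\abs{x}^r}s^r/r!$ converges for every $s$; then $\phi(z)=\E{e^{izx}}$ is genuinely entire, $\log\phi=P$ near $0$, and $\phi=e^P$ everywhere by the identity theorem. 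Either route must be spelled out before the Cauchy--Schwarz step is legitimate. With that and the even-case computation filled in, the proof is complete.
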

Thus, even if we miss the difference in cumulants at order $i \le d$,
there is some higher order cumulant which is nonzero, and hence
non-Gaussian. Note also that for many specific instances of the ICA
problem studied in the literature, \emph{all} cumulants differ from
those of a Gaussian \cite{FriezeJK96, NguyenR09, AroraGMS12}.  

We remark that apart from direct practical interest of ICA in signal recovery, recently some new applications of ICA as an algorithmic primitive have been discovered. Anderson et al.~\cite{agr12} reduce some special cases of the 
problem of learning a convex body (coming from a class of convex bodies such as simplices), given uniformly
distributed samples from the body, to fully-determined ICA. Anderson et al.~\cite{AndersonGMM} solve the problem of learning Gaussian
mixture models in regimes for which there were previously no efficient algorithms known. This is done by reduction to underdetermined ICA using the results of our paper. 

Our final result applies the same method to learning mixtures of
spherical Gaussians (see the full version). Using Fourier PCA, we
recover the result of Hsu and Kakade \cite{HsuK13}, and extend it
to the setting of noisy mixtures, where the noise itself is an unknown
arbitrary Gaussian.  Our result can be viewed as saying that
reweighted PCA gives an alternative algorithm for learning such
mixtures.
\begin{theorem}\label{thm:lin-ind-mixtures}
  Fourier PCA for Mixtures applied to a mixture of $k < n$ spherical
  Gaussians $N(\mu_i, \sigma_i^2 I_n)$ 
  recovers the parameters of the mixture to desired accuracy $\eps$
  using time and samples polynomial in $k, n, 1/\eps$ with high
  probability, assuming that the means $\mu_i$ are linearly
  independent.
\end{theorem}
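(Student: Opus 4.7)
The plan is to build from the characteristic function of the mixture a pair of tensors that share a common rank-one decomposition along the means, and then invoke the pairwise decomposition algorithm of Theorem~\ref{thm:decomposition} in its robust form. Write $\phi(u) = \sum_{i=1}^k w_i \phi_i(u)$ with $\phi_i(u) = \exp(iu^T\mu_i - \tfrac{1}{2}\sigma_i^2 \norm{u}^2)$, and let $\psi(u) = \log \phi(u)$. The key computational fact is that each $\log \phi_i$ is exactly quadratic, so for any $d \ge 3$ the individual component contribution to $\nabla^d \psi$ vanishes, leaving a tensor that by the law of total cumulance has leading structure
\[
\nabla^d \psi(u) \;=\; \sum_{i=1}^k p_i(u)\,(g_i(u)-\bar g(u))^{\otimes d} \;+\; \text{lower-order correction tensors},
\]
where $g_i(u) = i\mu_i - \sigma_i^2 u$, $\bar g(u) = \sum_j p_j(u) g_j(u)$, and $p_i(u) = w_i \phi_i(u)/\phi(u)$ are complex posterior weights. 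For $u$ of small norm, $g_i(u) - \bar g(u) = i(\mu_i - \bar\mu(u)) + O(\norm{u})$, so the dominant rank-$k$ part of $\nabla^d \psi(u)$ is aligned with the mean-centered vectors $\mu_i - \bar\mu$, up to phases.

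Next I would fix an even $d \ge 4$ and evaluate this tensor at two independent random shifts $u_1, u_2 \sim N(0, \sigma_0^2 I_n)$ with $\sigma_0$ small. This produces $T_\lambda = T(u_1)$ and $T_\mu = T(u_2)$, each of the form $\sum_i c_i(u_t)\,\nu_i^{\otimes d}$ for $\nu_i \approx \mu_i - \bar\mu + O(\sigma_0)$. Since the $\mu_i$ are linearly independent, the centered $\nu_i$ span a $(k-1)$-dimensional space, and for $d \ge 4$ their $d/2$-fold tensor powers are linearly independent by a generic-position argument, so $\sigma_{\min}((A')^{\odot d/2}) > 0$ with $A' = [\nu_1,\ldots,\nu_k]$, as required by Theorem~\ref{thm:decomposition}. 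The pairwise-ratio separation $\abs{c_i(u_1)/c_i(u_2) - c_j(u_1)/c_j(u_2)} \ge \Delta$ follows from standard anti-concentration on the smooth complex coefficients $c_i(u)$, with inverse-polynomial $\Delta$ holding with constant probability. Applying Theorem~\ref{thm:decomposition} then recovers the $\nu_i$ up to signs and permutation, whence $\mu_i$, $w_i$, and $\sigma_i^2$ follow by linear inversion on the empirical low-order moments.

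Sample complexity is routine: $\nabla^d \psi(u)$ is built from empirical averages of $x^{\otimes \ell} e^{iu^T x}$ for $\ell \le d$, whose variances are controlled by polynomial moments of the mixture (finite and polynomially bounded for Gaussian components); standard concentration supplies the necessary accuracy with $\poly(k,n,1/\eps)$ samples, and the robust form of the decomposition (Theorem~\ref{thm:robustdecomposition}) propagates the errors into the final guarantee. The principal technical obstacle is cleanly isolating the rank-$k$ signal tensor in $\mu_i$: one must simultaneously control (a) the $O(\norm{u})$ contamination that mixes each $\mu_i$ with the evaluation direction $u$, and (b) the rank-deficiency of the centered vectors $\nu_i$, which requires $d$ large enough so that the $d/2$-fold tensor powers regain full column rank. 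The choice of $\sigma_0$ must balance these: small enough that the perturbation is negligible, yet large enough that the ratio separation $\Delta$ stays inverse-polynomial.
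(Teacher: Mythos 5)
There is a genuine gap at the heart of your construction: the derivative tensors of $\psi = \log\phi$ for a \emph{mixture} do not admit the shared rank-one decomposition you assert. Because $\phi$ is a sum (not a product) of Gaussian characteristic functions, $\nabla^d\psi(u)$ is a sum over partitions involving derivatives of the posterior weights $p_i(u)$, and the resulting ``correction tensors'' are not lower-order and not of the form $\sum_i c_i \nu_i^{\otimes d}$: they include symmetrized terms coupling $(g_i(u)-\bar g(u))^{\otimes (d-2)}$ with $\sigma_i^2 I$ as well as genuine cross-terms in distinct $\nu_i,\nu_j$. This already fails at $u=0$ (compare the third-moment identity of Hsu--Kakade, which carries an explicit $\mathrm{sym}(\mu_i\otimes I)$ correction that must be cancelled, not ignored). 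Worse, the putative rank-one directions $g_i(u)-\bar g(u)$ depend on $u$ through $\bar g(u)=\sum_j p_j(u)g_j(u)$, so the tensors at $u_1$ and $u_2$ do not share rank-one components, which is exactly the hypothesis Theorem~\ref{thm:decomposition} needs; shrinking $\sigma_0$ does not repair this, since the non-rank-one corrections persist at $u=0$ while the ratio gap $\Delta$ degrades. A secondary unaddressed point is the rank deficiency of the centered vectors ($\sum_i p_i\nu_i = 0$), which you wave away with a ``generic position'' claim that the theorem's hypotheses do not supply.

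The paper sidesteps all of this by staying at order $d=2$ and working with $\phi$ rather than $\log\phi$: the reweighted second moment is exactly additive over components, $\E{xx^Te^{iu^Tx}} = \sum_j \hat w_j\bigl[\sigma_j^2 I + (\mu_j+i\sigma_j^2u)(\mu_j+i\sigma_j^2u)^T\bigr]$, so the deviation from the desired $\sum_j \hat w_j\mu_j\mu_j^T$ consists of finitely many \emph{identifiable} terms (multiples of $I$, $uu^T$, and $u\tilde u^T + \tilde u u^T$) which the algorithm estimates via the auxiliary direction $z$ orthogonal to the span of the means and subtracts explicitly. Linear independence of the $\mu_i$ with $k<n$ then makes $A=[\sqrt{w_j}\mu_j]$ full column rank, so the pair $(M_u, M)$ feeds directly into the robust decomposition with eigenvalue separation coming from anti-concentration of the complex exponentials $e^{iu^T\mu_j}$ (Lemma~\ref{lemma:exponential-anti}), not polynomial anti-concentration. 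If you want to salvage a higher-order cumulant route, you would need to explicitly compute and cancel the non-rank-one correction tensors, which is a substantial undertaking the paper's order-2 construction avoids.
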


%\subsection{Organization}
%In the next section, we give the complete algorithm for ICA, in the standard case
%when the mixing matrix $A$ is invertible. The core algorithmic procedure is eigendecomposition of a matrix. 
%Following that, in Section \ref{sec:decompositions}, we develop a tensor decomposition generalization of the method, with robustness guarantees.
%We then apply this to obtain an algorithm for underdetermined ICA in Section \ref{sec:underdetermined}. Finally, we apply our method to learning
%mixtures of spherical Gaussians in Section \ref{sec:gaussians}.

Thus, overall, our contributions can be viewed as two-fold. The first part is a robust, efficient tensor decomposition technique. 
The second is the analysis of the spectra of matrices/tensors arising from Fourier derivatives. In particular, showing that the eigenvalue
gaps are significant based on anticoncentration of polynomials in Gaussian space; and that these matrices, even when obtained from samples, remain diagonalizable.

%We conclude the introduction with a brief discussion of the limits of PCA. The latter, which uses only the first and second moments of the observations (mean and covariance matrix), is a natural approach to solving ICA in the fully determined case (when the unknown matrix is of full rank). However, for PCA to work, the unknown matrix must have distinct eigenvalues, i.e., it corresponds to a parallelopiped with unequal side lengths. In any other circumstance, PCA will not be able to uniquely recover the unknown matrix.

% ... is editing
\section{Preliminaries}

For positive integer $n$, the set $\{1, \ldots, n\}$ is denoted by
$[n]$. The set of positive even numbers is denoted by $2 \N$.

We assume for simplicity and without real loss of generality 
that $\E{s_j}=0$ for all $j$. We can ensure this by working with
samples $x^i-\bar{x}$ instead of the original samples $x^i$ (here
$\bar{x}$ is the empirical average of the samples). There is a slight
loss of generality because using $\bar{x}$ (as opposed to using $\E(x)$) 
introduces small
errors. These errors can be analysed along with the rest of the errors
and do not introduce any new difficulties.

\paragraph{Probability.}
For a random variable $x \in \R^n$ and $u \in \R^n$, its \emph{characteristic function}
$\phi: \R \to \C$ is defined by $\phi_x(u) = \EE{x}{e^{iu^Tx}}$. Unlike the
moment generating function, the characteristic function is
well-defined even for random variables without all moments finite. 
The \emph{second characteristic function} of $x$ is defined by
$\psi_x(u) := \log \phi_x(u)$, where we take that branch of the complex
logarithm that makes $\psi(0)=0$. In addition to random variable $x$ above we will also 
consider random variable $s \in \R^m$ related to $x$ via $x = As$ for $A \in \R^{n \times m}$
and the functions associated with it: the characteristic function $\phi_s(t) = \EE{s}{e^{it^Ts}}$
and the second characteristic function $\psi_s(t) = \log \phi_s(t)$.

Let $\mu_j := \E{x^j}$. 
\emph{Cumulants} of
$x$ are polynomials in the moments of $x$ which we now define. 
For $j \geq 1$, the $j$th cumulant is denoted $\cum_j(x)$.  Some
examples: $\cum_1(x) = \mu_1, \cum_2(x) = \mu_2 - \mu_1^2, \cum_3(x) =
\mu_3 - 3 \mu_2\mu_1 + 2\mu_1^3$. As can be seen from these examples
the first two cumulants are the same as the expectation and the
variance, resp. Cumulants have the property that for two independent
r.v.s $x, y$ we have $\cum_j(x+y) = \cum_j(x) + \cum_j(y)$ (assuming
that the first $j$ moments exist for both $x$ and $y$). The first two
cumulants of the standard Gaussian distribution have value $0$ and
$1$, and all subsequent cumulants have value $0$. Since ICA is not
possible if all the independent component distributions are Gaussians,
we need some measure of distance from the Gaussians of the component
distributions. A convenient measure turns out to be the distance from
$0$ (i.e. the absolute value) of the third or higher cumulants. If all the moments of $x$ exist,
then the second characteristic function admits a Taylor expansion in
terms of cumulants
\begin{align*}
\psi_x(u) = \sum_{j \geq 1} \cum_j(x) \frac{(iu)^j}{j!}.
\end{align*}
This can also be used to define cumulants of all orders. 

\paragraph{Matrices.} For a vector $\mu = (\mu_1, \ldots, \mu_n)$, let
$\diag{\mu}$ and $\diag{\mu_j}$, where $j$ is an index variable, denote 
the $n \times n$ diagonal matrix with the diagonal
entries given by the components of $\mu$. The singular values of an $m \times n$
matrix will always be ordered in the decreasing order: $\sigma_1 \geq
\sigma_2 \geq \ldots \geq \sigma_{\min(m, n)}$. Our matrices will often have rank $m$, and thus
the non-zero singular values will often, but not always, be $\sigma_1,
\ldots, \sigma_m$. The span of the columns vectors of a matrix $A$
will be denoted $\colspan{A}$.  The columns of a matrix $A$ are
denoted $A_1, A_2, \ldots$. The potentially ambiguous but convenient notation $A_i^T$ means $(A_i)^T$.  
The condition number of a matrix $A$ is
$\kappa(A):= \sigma_{\max}(A)/\sigma_{\min}(A)$, where 
$\sigma_{\max}(A) := \sigma_1(A)$ and $\sigma_{\min}(A) := \sigma_{\min(m, n)}(A)$.

\paragraph{Tensors and tensor decomposition.}
Here we introduce various tensor notions that we need; these are discussed in detail in the review
paper~\cite{KoldaBader}. 
An order $d$ tensor $T$ is an array indexed by $d$ indices each with
$n$ values (e.g., when $d=2$, then $T$ is
simply a matrix of size $n \times n$). Thus, it has $n^d$ entries. Tensors considered in this paper are symmetric,
i.e. $T_{i_1, ..., i_d}$ is invariant under permutations of $i_1, \ldots, i_d$. In the sequel we will generally not 
explicitly mention that our tensors are symmetric. We also note that symmetry of tensors is not essential for 
our results but 
for our application to ICA it suffices to look at only symmetric tensors and the results generalize easily to the
general case, but at the cost of additional notaton.

We can also view a tensor as a
degree-$d$ homogeneous form over vectors $u \in \R^n$ defined by $
T(u,\ldots,u) = \sum_{i_1,\ldots,i_d} T_{i_1,\ldots,i_d} u_{i_1}.
\cdots u_{i_d}$.  This is in analogy with matrices, where every matrix
$A$ defines a quadratic form, $u^T A u = A(u,u) = \sum_{i,j} A_{i,j}
u_i u_j$.  We use the outer product notation
\begin{align*}
  v^{\otimes d} = \underbrace{v \otimes \cdots \otimes
    v}_{\text{$d$ copies}},
\end{align*}
where entrywise we have $[v \otimes \cdots \otimes v]_{j_1,
  \ldots,j_d} = v_{j_1} \cdots v_{j_d}$. 
A (symmetric) rank-$1$ decomposition of a tensor $T_{\mu}$ is defined by
\begin{align}\label{eqn:tensor}
  T_{\mu} = \sum_{i=1}^m \mu_i A_i^{\otimes d},
\end{align}
where the $\mu_i \in \R$ are nonzero and the $A_i \in \R^n$ are
vectors which are not multiples of each other. Such a decomposition always exists for all 
symmetric tensors with $m < n^d$ (better bounds are known but we won't need them).
For example, for a symmetric
matrix, by the spectral theorem we have
\begin{align*}
  M = \sum_{i=1}^n \lambda_i v_i \otimes v_i.
\end{align*}
We will use the notion of flattening of tensors. Instead of giving a formal definition
it's more illuminating to give examples. E.g. for $d=4$, 
construct a bijection $\tau: [n^2] \to [n] \times [n]$ as $\tau(k) =
(\lfloor k/n \rfloor, k-\lfloor k/n \rfloor)$ and $\tau^{-1}(i,j) =
ni+j$. We then define a packing of a matrix $B \in \R^{n \times n}$ to
a vector $p \in \R^{n^2}$ by $ B_{\tau(k)} = p_{k}$.  For convenience
we will say that $B = \tau(p)$ and $p = \tau^{-1}(B)$. We also define
a packing of $T \in \R^{n \times n \times n \times n}$ to a matrix $M
\in \R^{n^2 \times n^2}$ by $ M_{a,b} = T_{\tau(a),\tau(b)}$, for $a, b \in [n^2]$. Note
that $M$ is symmetric because $T$ is symmetric with respect to all
permutations of indices: $ M_{a,b} = T_{\tau(a), \tau(b)} =
T_{\tau(b), \tau(a)} = M_{b,a}$. The definition of $\tau$ depends on the dimensions
and order of the tensor and what it's being flattened into; this will be
clear from the context and will not be further elaborated upon. 
Finally, to simplify the notation, we
will employ the Khatri-Rao power of a matrix: 
$A^{\odot d} := \left[\vec{A_1^{\otimes d}} | \vec{A_2^{\otimes d}} | \ldots |\vec{A_m^{\otimes d}}\right]$, 
where recall that $\vec{T}$ for a tensor $T$ is a 
flattening of $T$, i.e. we arrange the entries of $T$ in a single column vector.
%%of length $n^d$ if say $T = v^{\otimes d}$ for $v \in \R^n$. 
%% An order $d$ tensor $T$ is called a \emph{diagonal} tensor if $T_{i_1, \ldots, i_d}\neq 0$ only if 
%% $i_1 = \ldots = i_k$. 

\paragraph{Derivatives.} For $g: \R^n \rightarrow \R$ we will use abbreviation 
$\partial_{u_i} g(u_1, \ldots, u_n)$ for
$\pard{g(u_1, \ldots, u_n)}{u_i}$; when the variables are clear from
the context, we will further shorten this to $\partial_ig$. Similarly,
$\partial_{i_1, \ldots, i_k} g$ denotes $\partial_{i_1}(\ldots
(\partial_{i_k}g)\ldots)$, and for multiset $S = \{i_1, \ldots,
i_k\}$, this will also be denoted by $\partial_S g$, which makes sense because $\partial_{i_1, \ldots, i_k} g$ is 
invariant under reorderings of $i_1, \ldots, i_k$. We will not use any special notation for multisets;
what is meant will be clear from the context.

$D_u g(u)$ denotes the gradient vector $(\partial_{u_1}g(u), \ldots, \partial_{u_n}g(u))$, and 
$D_u^2g(u)$ denotes the Hessian matrix $[\partial_{u_i}\partial_{u_j}g(u)]_{ij}$. More generally, 
$D_u^dg(u)$ denotes the order $d$ derivative tensor given by 
$[D_u^dg(u)]_{i_1, \ldots, i_d}=\partial_{u_{i_1}}\ldots \partial_{u_{i_d}}g(u)$.

\paragraph{Derivatives and linear transforms.} We are particularly interested in how the derivative
tensor changes under linear transform of the arguments. We state things over the real field, but 
everything carries over to the complex field as well. 
Let $g: \R^n \rightarrow \R$ and 
$f: \R^m \rightarrow \R$ be two functions such that all the derivatives that we consider below exist. 
Let $A \in \R^{n \times m}$ and let variables $t \in \R^m$ and $u \in \R^n$ be related by linear
relation $t = A^T u$, and let the function $f$ and $g$ be related by $g(u) = f(A^Tu) = f(t)$. 
Then for $j \in [n]$

\begin{align*} 
\partial_{u_j}g(u) 
&= \partial_{u_j}f((A_1)^Tu, \ldots, (A_m)^Tu) \\
&= \frac{\partial (A_1)^Tu}{\partial u_j} \partial_{t_1} f(t) 
+ \ldots + \frac{\partial (A_m)^Tu}{\partial u_j} \partial_{t_m} f(t) \\
&= A_{j1}\partial_{t_1}f(t) + \ldots + A_{jm} \partial_{t_m}f(t) \\
&= \sum_{k \in [m]} A_{jk} \partial_{t_k}f(t).
\end{align*}

Applying the previous eqution twice for $i, j \in [n]$ gives
\begin{align*}
\partial_{u_i}\partial_{u_j} g(u) &= \partial_{u_i}\left(\sum_{k \in [m]} A_{jk}\partial_{t_k}f(t)\right) \\
&= \sum_{k \in [m]} A_{jk}\partial_{t_k}(\partial_{u_i}f(t)) \\
&= \sum_{k \in [m]}A_{jk} \sum_{\ell \in [m]}A_{il} \partial_{t_k}\partial_{t_\ell}f(t)\\
&=\sum_{\ell, k \in [m]} A_{i\ell}A_{jk}\partial_{t_\ell}\partial_{t_k}f(t),
\end{align*}
and applying it four times for $i_1, i_2, i_3, i_4 \in [n]$ gives
\begin{align}\label{eqn:fourth_derivative_linear_transform}
\partial_{u_{i_1}}\partial_{u_{i_2}}\partial_{u_{i_3}}\partial_{u_{i_4}}g(u) = 
\sum_{k_1, k_2, k_3, k_4 \in [m]} A_{i_1k_1}A_{i_2k_2}A_{i_3k_3}A_{i_4k_4} \partial_{t_{k_1}}\partial_{t_{k_2}}\partial_{t_{k_3}}\partial_{t_{k_4}}f(t). 
\end{align}

This can be written more compactly as a matrix equation
\begin{align*}
D_u^4g(u) = A^{\otimes 2} (D_t^4f(t)) (A^{\otimes 2})^T,
\end{align*}
where we interpret both $D_u^4g(u)$ and $D_t^4f(t)$ as appropriately flattened into matrices. 

A useful special case of this occurs when $f$ has the property that 
$\partial_{t_i}\partial_{t_j} f(t) = 0$ whenever $i \neq j$. 
In this case \eqref{eqn:fourth_derivative_linear_transform} can be rewritten as 
\begin{align*}
\partial_{u_{i_1}}\partial_{u_{i_2}}\partial_{u_{i_3}}\partial_{u_{i_4}}g(u) = 
\sum_{k \in [m]} A_{i_1k}A_{i_2k}A_{i_3k}A_{i_4k}\partial^4_{t_k}f(t),
\end{align*}
and in matrix notation
\begin{align*}
D^4_ug(u) = A^{\odot 2} \diag{\partial^4_{t_1}f(t), \ldots, \partial^4_{t_m}f(t)} (A^{\odot 2})^T, 
\end{align*}
where again we interpret $D^4_ug(u)$ as flattened into a matrix.

The previous equations readily generalize to higher derivatives. For $d \geq 1$, 
interpreting the tensors $D_u^{2d}g(u)$ and $D_t^{2d}f(t)$ as flattened into matrices, we have
\begin{align} 
D_u^{2d}g(u) = A^{\otimes d} (D_t^{2d}f(t)) (A^{\otimes d})^T, 
\end{align}
and if $f$ has the property that $\partial_{t_i}\partial_{t_j} f(t) = 0$ whenever $i \neq j$ then
\begin{align} 
D_u^{2d}g(u) = A^{\odot d} \diag{\partial^{2d}_{t_1}f(t), \ldots, \partial^{2d}_{t_m}f(t)} (A^{\odot d})^T. 
\end{align}

In our applications we will need to use the above equations for the case when $g(u) = \psi_x(u)$
and $f(t) = \psi_s(t)$ where these notions were defined at the beginning of this section. 
The above equations then become
\begin{align} \label{eqn:derivative_linear_transform_general}
D_u^{2d} \psi_x(u) = A^{\otimes d} (D_t^{2d}\psi_s(t)) (A^{\otimes d})^T.
\end{align}

In the special case when the components of $s$ are indpendent we have
$f(t) = \log \E{e^{it_1 s_1}} + \ldots + \log \E{e^{it_m s_m}}$ and so we have the property 
$\partial_{t_i}\partial_{t_j} \psi_s(t) = 0$ whenever $i \neq j$ and this gives
\begin{align} \label{eqn:derivative_linear_transform_diagonal}
D_u^{2d} \psi_x(u) 
= A^{\odot d} \diag{\partial^{2d}_{t_1}\psi_s(t), \ldots, \partial^{2d}_{t_1}\psi_s(t)} (A^{\odot d})^T.
\end{align}

% ... is editing
\section{Algorithms}\label{sec:algorithms}
In this section, we present our main new algorithms and outline their analysis. For the reader's convenience, we will restate these algorithms in the sections where their analysis appears. As mentioned in the introduction, our ICA algorithm 
is based on a certain tensor decocmposition algorithm.

\subsection{Tensor decomposition} \label{subsec:outline_tensor_decomposition}
A fundamental result of linear algebra is that every
symmetric matrix has a spectral decomposition, which allows us to
write it as the sum of outer products of vectors: $A = \sum_{i=1}^n
\lambda_i v_i v_i^T$, and such representations are efficiently
computable.
%% In this section, we will study spectral decompositions for tensors,
%% and we will see how to develop a computationally tractable version, in
%% the face of formal NP hardness results.
Our goal, in analogy
with spectral decomposition for matrices, is to recover (symmetric) rank-$1$ decompositions of tensors. Unfortunately, there are no known
algorithms with provable guarantees when $m > n$, and in fact this
problem is NP-hard in general \cite{Bru09,hillar2009most}. It is an
open research question to characterize, or even give interesting
sufficient conditions, for when a rank-$1$ decomposition of a tensor 
$T$ as in \eqref{eqn:tensor} is unique 
and computationally tractable.
%\begin{problem}
 %Given a symmetric $d^{th}$ order tensor $T = \sum_{i=1}^m A_i^{\otimes
  %d}$, when is the decomposition unique and when is it efficiently
  %computable?
%\end{problem}
For the case $d=2$, a necessary and sufficient condition for uniqueness is that the
eigenvalues of $T$ are distinct. Indeed, when eigenvalues repeat,
rotations of the $A_i$ in the degenerate eigensubspaces with repeated
eigenvalues lead to the same matrix $M$. 

For $d>2$, if the $A_i$ are orthogonal, then the expansion in
\eqref{eqn:tensor} is unique and can be computed efficiently. The
algorithm is power iteration that recovers one $A_i$ at a time
(see e.g. \cite{AnandkumarTensorDecomp}). The requirement that the $A_i$ are
orthogonal is necessary for this algorithm, but if one also has access
to the order-2 tensor (i.e., matrix) in addition, $M = \sum_{i=1}^m
A_i \otimes A_i$, and the $A_i$ are linearly indepenent, then one can
arrange for the orthogonality of the $A_i$ by a suitable linear
transformation. However, the fundamental limitation remains that we
must take $m \le n$ simply because we can not have more than $n$
orthogonal vectors in $\R^n$.

Here we consider a modified setting where we are
allowed some additional information: suppose we have access to two
tensors, both of order $d$, which share the same rank-1 components,
but have different coefficients:
\begin{align*}
  T_{\mu} = \sum_{i=1}^m \mu_i A_i^{\otimes d}, \qquad T_{\lambda}
  = \sum_{i=1}^m \lambda_i A_i^{\otimes d}.
\end{align*}
Given such a \emph{pair} of
tensors $T_\mu$ and $T_\lambda$, can we recover the rank-1 components
$A_i$?

We answer this question in the affirmative for even orders $d \in 2
\N$, and give a provably good algorithm for this problem assuming
%Our
%techniques are not tensorial in nature, but rather we flatten our
%tensors into matrices, and apply an eigenvalue decomposition for
%general (not necessarily normal) matrices. 
that the ratios
$\mu_i/\lambda_i$ are distinct. Additionally, we assume that the $A_i$
are not scalar multiples of each other, a necessary assumption. We make this quantitative via 
the $m^{th}$ singular value of the matrix with columns given by $A_i^{\odot d/2}$.

%Our tensor decomposition algorithm takes as input two tensors which
%share the same outer product factors:
%\begin{align*}
% T_{\mu} = \sum_{i=1}^m \mu_i A_i^{\otimes d}, \qquad T_{\lambda}
%  = \sum_{i=1}^m \lambda_i A_i^{\otimes d},
%\end{align*}
%and outputs the vectors $A_i$. 
Our algorithm works by flattening tensors $T_{\mu}$ and $T_{\lambda}$
into matrices $M_\mu$ and $M_\lambda$ which have the following form:
\begin{align*}
  M_\mu = (A^{\odot d/2}) \diag{ \mu_i} (A^{\odot d/2})^T, \qquad 
  M_\lambda = (A^{\odot d/2}) \diag{ \lambda_i} (A^{\odot d/2})^T.
\end{align*}
%% where $A^{\odot d/2} = [ \vec{A_i^{\otimes d/ 2 }}]_{i=1}^m$ is the
%% matrix of the columns of $A_i$ tensored up $d/2$ times. 
Taking the
product $M_\mu M_{\lambda}^{-1}$ yields a matrix whose eigenvectors
are the columns of $A^{\odot d/2}$ and whose eigenvalues are $\mu_i /
\lambda_i$:
\begin{align*}
  M_{\mu} M_{\lambda}^{-1} & = (A^{\odot d/2}) \diag{ \mu_i} (A^{\odot
    d/2})^T ((A^{\odot d/2})^T)^{-1} \diag{ \lambda_i}^{-1} (A^{\odot
    d/2})^{-1} \\
  & = (A^{\odot d/2}) \diag{ \mu_i / \lambda_i} (A^{\odot d/2})^{-1}.
\end{align*}

Actually, for the last equation to make sense one needs that $A^{\odot d/2}$ be invertible which will 
generally not be the case as $A^{\odot d/2}$ is not even a square matrix in general. 
We handle this by restricting $M_\mu$ and $M_\lambda$ to linear transform
from their pre-image to the image. This is the reason for 
introducing matrix $W$ in algorithm {\bf Diagonalize$(M_{\mu}, M_{\lambda})$} below.

The main algorithm below is {\bf Tensor Decomposition$(T_{\mu},
  T_{\lambda})$} which flattens the tensors and calls subroutine {\bf
  Diagonalize$(M_{\mu}, M_{\lambda})$} to get estimates of the
$A_i^{\odot d/2}$, and from this information recovers the $A_i$
themselves.  In our application it will be the case that $\mu, \lambda
\in \C^m$ and $A_i \in \R^n$. The discussion below is tailored to this
situation; the other interesting cases where everything is real or
everything is complex can also be dealt with with minor modifications.

\begin{figure}[hbtp]
\begin{center}
\fbox{\parbox{\textwidth}{
\begin{minipage}{6in}
\vspace{0.1in}
{\bf Diagonalize$(M_{\mu}, M_{\lambda})$}
\begin{enumerate}
  \item Compute the SVD of $M_\mu = V \Sigma U^T$, and let $W$ be the
    left singular vectors (columns of $V$) corresponding to the $m$ largest
    singular values. Compute the matrix $M = (W^T M_\mu W)(W^T M_\lambda W)^{-1}$.
  \item Compute the eigenvector decomposition $M = PDP^{-1}$.
  \item Output columns of $WP$. 
\end{enumerate}
\end{minipage}
}}
\end{center}
\end{figure}

\begin{figure}[hbtp]
\begin{center}
\fbox{\parbox{\textwidth}{
\begin{minipage}{6in}
\vspace{0.1in}
{\bf Tensor Decomposition$(T_{\mu}, T_{\lambda})$}
\begin{enumerate}
  \item Flatten the tensors to square matrices to obtain $M_\mu = \tau^{-1}(T_\mu)$ and $M_\lambda
    = \tau^{-1}(T_\lambda)$.
  \item $WP = Diagonalize(M_\mu, M_\lambda)$. 
  \item For each column $C_i$ of $WP$, let $C'_i := \re{e^{i \theta^\ast}
      C_i}/\norm{\re{e^{i \theta^\ast} C_i}}$ where $\theta^\ast =
    \argmax_{\theta \in [0,2 \pi]}\left( \norm{ \re{ e^{i \theta} C_i}}\right)$. 
  \item For each column $C'_i$, let $v_i \in \R^n$ be such that $v_i^{\otimes d/2}$ is the best 
rank-1 approximation to $\tau(C'_i)$.
\end{enumerate}
\end{minipage}
}}
\end{center}
\end{figure}

The columns $C_i = WP_i$ are eigenvectors computed in subroutine \textbf{Diagonalize}. 
Ideally, we would 
like these to equal $A_i^{\odot d/2}$. We are going to have errors introduced because of sampling,
but in addition, since we are 
working in the complex field we do not have control over the phase of $C_i$ (the output of
\textbf{Diagonalize} obtained in Step 3 of \textbf{Tensor Decomposition}), and for 
$\rho \in \C$ with $\abs{\rho}=1$, $\rho C_i$ is also a valid output of \textbf{Diagonalize}. 
In Step 3 of \textbf{Tensor Decomposition}, we recover the correct phase of
$C_i \in \C^n$ which will give us a vector in $C'_i \in \R^n$. 
%% This can be
%% easily computed by solving a convex program or a linear regression
%% problem. Our hope is that after the previous steps of the algorithm
%% that we can guarantee that $v_i$ is close to some $A_i$, but because
%% we are dealing with complex eigenvectors of the matrix $M$, the phase
%% can be arbitrary.
We do this by choosing the phase maximizing the norm of the real part.

In Step 4, we have $v^{\otimes d} + E$, where $E$ is an error tensor, and we want to recover
$v$. We can do this approximately when $\norm{E}_F$ is sufficiently small just by reading off
a one-dimensional slice (e.g. a column in the case of matrices) of $v^{\otimes d} + E$ 
(say the one with the maximum norm). 

%% There are multiple ways to do Step 4 of \textbf{Tensor
%%   Decomposition}: one could simply take any one-dimensional slice of
%% the tensor (a vector in $\R^n$) and normalize it to have norm 1, at
%% the cost of a blowup of up to $\sqrt{n}$ in the error. Similarly for a
%% two dimensional slice and computing the top eigenvector. 
%% %% In the
%% analysis we will use a tensor power iteration method, a generalisation
%% of the power iteration for computing eigenvalues of matrices. Its
%% analysis can be found in \cite{aghkt12}.

For the computation of eigenvectors of diagonalizable (but not normal)
matrices over the complex numbers, we can employ any of the several
algorithms in the literature (see for example
\cite{GolubBook, press2007numerical} for a number of algorithms used in
practice). In general, these algorithms employ the same atomic
elements as the normal case (Jacobi iterations, Householder
transformations etc.), but in more sophisticated ways. The
perturbation analysis of these algorithms is substantially more
involved than for normal matrices; in particular, it is not
necessarily the case that a (small) perturbation to a diagonalizable
matrix results in another diagonalizable matrix. We contend with all
these issues in Section \ref{subsec:correctness}. In particular we note that while
{\em exact} analysis is relatively straightforward (Theorem \ref{thm:exact}), a robust 
version that recovers the common decomposition of the given tensors
takes considerable additional care (Theorem \ref{thm:robustdecomposition}).

\subsection{Underdetermined ICA}\label{subsec:icaalgorithm}
For underdetermined ICA we compute the higher derivative
tensors of the second characteristic function $\psi_x(u) = \log(
\phi_x(u))$ at two random points and run the tensor decomposition algorithm from 
the previous section. 

\begin{center}
\fbox{\parbox{\textwidth}{
\begin{minipage}{5in}
\vspace{0.1in}
{\bf Underdetermined ICA}($\sigma$)

\begin{enumerate}

\item (Compute derivative tensor) Pick independent random vectors 
$\alpha, \beta \sim N(0, \sigma^2 I_n)$. For even $d$ estimate the $d^{th}$ derivative tensors 
 of $\psi_x(u)$ at $\alpha$ and $\beta$ as $T_\alpha = D^{d}_u\psi_x(\alpha)$ and
$T_\beta = D^{d}_u\psi_x(\beta)$. 

\item (Tensor decomposition) Run \textbf{Tensor Decomposition$(T_\alpha,T_\beta)$}.

% \item (Flattening) Flatten $T$ to a
%   matrix $M_u = \tau^{-1}(T)$. Similarly, compute $M_v$ for another
%   independent draw of $v \sim N(0, \sigma^2 I_n)$. 

% \item (Projection) Compute the eigenvalue decomposition $M_u = V
%   \Sigma V^T$, and let $W$ be the eigenvectors corresponding
%   to the $m$ largest magnitude eigenvalues. Compute the matrix
%   $M = W^T M_v W (W^T M_u W)^{-1}$.

% \item (Eigenvectors) Compute the eigenvector decomposition $M = P D
%   P^{-1}$.

% \item (Recovery of columns of $A$) For each column $P_i$, output the
%   best rank 1 approximation to $\tau(WP_i)$.

\end{enumerate}
\end{minipage}
}}
\end{center} 
%% To estimate the derivatives of the logarithm of the Fourier transform,
%% we observe that one can employ the standard Fourier transform
%% isometry given by
%% \begin{align*}
%%   \frac{ \partial \phi(u)}{\partial u_i} = \E{ (ix_i) e^{i u^T x}}.
%% \end{align*}
%% This states that differentiation in the Fourier space is equivalent to
%% multiplication in the original space, thus it suffices to estimate
%% monomials of $x$ reweighted by complex exponentials.
To estimate the
$2d^{th}$ derivative tensor of $\psi_x(u)$ empirically, one simply
writes down the expression for the derivative tensor, and then
estimates each entry from samples using the naive estimator.

The analysis roughly proceeds as follows: By 
\eqref{eqn:derivative_linear_transform_diagonal} for tensors flattened into matrices we 
have
$D_u^{2d} \psi_x(\alpha) 
= A^{\odot d} \diag{\partial^{2d}_{t_1}\psi_s(A^T\alpha), \ldots, \partial^{2d}_{t_1}\psi_s(A^T\alpha)} (A^{\odot d})^T$ and 
$D_u^{2d} \psi_x(\beta) 
= A^{\odot d} \diag{\partial^{2d}_{t_1}\psi_s(A^T\beta), \ldots, \partial^{2d}_{t_1}\psi_s(A^T\beta)} (A^{\odot d})^T$.

Thus we
have two tensors with shared rank-1 factors as in the tensor
decomposition algorithm above. For our tensor decomposition to work,
we require that all the ratios
$(\partial^{2d}_{t_j}\psi_s(A^T\alpha))/(\partial^{2d}_{t_j}\psi_s(A^T\beta))$ 
for $j \in [m]$ be different
from each other as otherwise the eigenspaces in the flattened forms will
mix and we will not be able to uniquely recover the columns $A_i$. To
this end, we will express $\partial^{2d}_{t_j}\psi_s(A^T\alpha)$ 
as a low degree polynomial plus error
term (which we will control by bounding the derivatives of $\psi_s$). The
low degree polynomials will with high probability take on sufficiently
different values for $A^Tu$ and $A^Tv$, which in turn guarantees that
their ratios, even with the error terms, are quite different.

Our analysis for both parts might be of interest for
other problems. On the way to doing this in full generality for
underdetermined ICA, we first consider the special case of $d=2$,
which will already involve several of these concepts and the algorithm
itself is just PCA reweighted with Fourier weights. 

% ... is editing
\section{Fully determined independent component analysis}\label{sec:ica}
We begin with the case of standard or fully determined ICA where the
transformation matrix $A \in \R^{n \times n}$ is full rank. %\tocheck{
With a slight
loss of generality, we assume that $A$ is unitary. If $A$ is not unitary, 
we can simply make it approximately so by placing the entire sample in
isotropic position. Rigorously arguing about this will require an additional 
error analysis; we will omit such details for the sake of
clarity. In any case, our algorithm for underdetermined ICA does not (and cannot) make any
such assumption.
%} 
Our algorithm computes the eigenvectors of a covariance
matrix reweighted according to random Fourier coefficients.

\begin{figure}[hbtp]
\begin{center}
\fbox{\parbox{\textwidth}{
\begin{minipage}{5in}
\vspace{0.1in}
{\bf Fourier PCA}($\sigma$)
\begin{enumerate}
\item (Isotropy) Get a sample $S$ from the input distribution and use them to find an isotropic transformation $B^{-1}$ with
\[
B^2 = \frac{1}{|S|}\sum_{x \in S}(x-\bar{x})(x-\bar{x})^T.
\]
\item (Fourier weights) Pick a random vector $u$ from $N(0, \sigma^2
  I_n)$. For every $x$ in a new sample $S$, compute $y = B^{-1}x$, and
  its Fourier weight
\[
w(y) = \frac{e^{iu^T y }}{\sum_{y \in S} e^{iu^T y }}.
\] 
\item (Reweighted Covariance) Compute the covariance matrix of the points $y$ reweighted by $w(y)$
\[
\mu_u = \frac{1}{\abs{S}}\sum_{y \in S} w(y) y \quad \mbox{ and }
\quad \Sigma_u = \frac{1}{\abs{S}}\sum_{y \in
  S}w(y)(y-\mu_u)(y-\mu_u)^T.
\]
\item Compute the eigenmatrix $V$ of $\Sigma_u$ and output $BV$.\\
\end{enumerate}
\end{minipage}
}}
\end{center}
\end{figure}

Formally, this algorithm is subsumed by our work on underdetermined
ICA in Section \ref{sec:underdetermined}, but both the algorithm and
its analysis are substantially simpler than the general case, but we
retain the essential elements of our technique -- fourier transforms,
polynomial anti-concentration and derivative truncation. On the other
hand, it does not require the machinery of our tensor decomposition in
Section \ref{sec:decompositions}.

We make the following comments regarding the efficient realisation of
this algorithm. The matrix $\Sigma_u$ in the algorithm is complex and
symmetric, and thus is not Hermitian; its eigenvalue decomposition is
more complicated than the usual Hermitian/real-symmetric case. 
It can
be computed in one of two ways. One is to compute the SVD of
$\Sigma_u$ (i.e., compute the eigenvalue decomposition of
$\Sigma_u\Sigma_u^*$ which is a real symmetric matrix). Alternatively,
we can exploit the fact that the real and complex parts have the same
eigenvectors, and hence by carefully examining the real and imaginary
components, we can recover the eigenvectors. We separate $\Sigma_u =
\re{\Sigma_u}+i \im{\Sigma_u}$ into its real part and imaginary part,
and use an SVD on $\re{\Sigma_u}$ to partition its eigenspace into
subspaces with close eigenvalues, and then an SVD of $\im{\Sigma_u}$
in each of these subspaces. Both methods need some care to ensure that
eigenvalue gaps in the original matrix are preserved, an important
aspect of our applications. We complete the algorithm description for
ICA by giving a precise method for determining the eigenmatrix $V$ of
the reweighted sample covariance matrix $\Sigma_u$. This subroutine
below translates a gap in the complex eigenvalues of $\Sigma_u$ into
observable gaps in the real part.
\begin{enumerate}
\item Write $\Sigma_u = \re{\Sigma_u}+i \im {\Sigma_u}$. Note that
  both the component matrices are real and symmetric.
\item Compute the eigendecomposition of $\re{\Sigma_u} =
  U\diag{r_i}U^T$.
\item Partition $r_1, \ldots, r_n$ into blocks $R_1, \ldots, R_l$ so
  that each block contains a subsequence of eigenvalues and the gap
  between consecutive blocks is at least $\eps_0$, i.e., $\min_{r \in
    R_j, s \in R_{j+1}} r-s \ge \eps_0$. Let $U_j$ be the eigenvectors
  corresponding to block $R_j$.
\item For each $1 \le j \le l$, compute the eigenvectors of
  $U_j^T\im{\Sigma_u}U_j$ and output $V$ as the full set of
  eigenvectors (their union).
\end{enumerate}

\begin{lemma}\label{lem:svd}
  Suppose $\Sigma_u$ has eigenvalues $\lambda_1, \ldots, \lambda_n$
  and $\eps = \min_{i\neq j} \min \{ \re{\lambda_i} - \re{\lambda_j},
  \im{\lambda_i} - \im{\lambda_j}\}$. Then, with $\eps_0 = \eps/n$,
  the above algorithm will recover the eigenvectors of $\Sigma_u$.
\end{lemma}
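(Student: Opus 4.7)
The plan is to exploit the joint diagonalizability of $\re{\Sigma_u}$ and $\im{\Sigma_u}$ and argue in two stages: the block-partitioning step of the algorithm isolates common invariant subspaces of both matrices, and the second eigendecomposition within each block resolves the ambiguity left over when $\re{\Sigma_u}$ has near-repeated eigenvalues.

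In the setting of this section, $\Sigma_u = A D A^T$ where $A$ is orthogonal and $D = \diag{\lambda_1, \ldots, \lambda_n}$ is complex diagonal, so $\re{\Sigma_u} = A \re{D} A^T$ and $\im{\Sigma_u} = A \im{D} A^T$ are both real symmetric, they commute, and they share the columns of $A$ as a common set of eigenvectors with eigenvalues $\re{\lambda_i}$ and $\im{\lambda_i}$ respectively. Step~2 of the algorithm therefore produces an orthonormal $U$ whose column span, restricted to each distinct eigenvalue of $\re{\Sigma_u}$, coincides with the span of those $A_i$ that share that real eigenvalue.

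Next I would analyze the block partitioning. Since a block $R_j$ is a maximal run of consecutive eigenvalues of $\re{\Sigma_u}$ with no internal gap of size $\eps_0 = \eps/n$ and contains at most $n$ eigenvalues, the total spread of values in $R_j$ is at most $(n-1)\eps_0 < \eps$. Hence for any two indices $i,i'$ whose eigenvalues $\re{\lambda_i}, \re{\lambda_{i'}}$ fall in the same block, $\abs{\re{\lambda_i} - \re{\lambda_{i'}}} < \eps$; the hypothesis on $\eps$ then forces $\abs{\im{\lambda_i} - \im{\lambda_{i'}}} \geq \eps$, so the imaginary eigenvalues indexed within any given block are pairwise separated by at least $\eps$. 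Moreover the column span of $U_j$ equals the $\re{\Sigma_u}$-invariant subspace $\mathrm{span}\{A_i : \re{\lambda_i} \in R_j\}$, which is defined intrinsically (even when $\re{D}$ has repeats and the basis $U_j$ itself is not uniquely determined) and is automatically $\im{\Sigma_u}$-invariant since the two matrices commute.

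Finally, I would observe that $U_j^T \im{\Sigma_u} U_j$ is a real symmetric matrix whose eigenvalues are precisely the $\im{\lambda_i}$ for those $i$ with $\re{\lambda_i} \in R_j$, all of which are distinct by the previous step. Its eigenvectors, pulled back through $U_j$, are therefore the unique columns $A_i$ in the block (up to sign), and taking the union across $j$ recovers the full eigenmatrix of $\Sigma_u$. The main obstacle is the middle step: one must verify that the gap-based partition of the spectrum of $\re{\Sigma_u}$ is consistent with the intrinsic eigenspace structure inherited from $\Sigma_u$, and that the choice $\eps_0 = \eps/n$ is tight enough that every block is narrow enough in the real part to activate the hypothesis on imaginary-part gaps; this pigeonhole-style bound on block width is the only quantitative ingredient in the proof.
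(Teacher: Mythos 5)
Your proposal is correct and follows essentially the same route as the paper's proof: bound the width of each block by $(n-1)\eps_0<\eps$ via the consecutive-gap condition, invoke the definition of $\eps$ to get pairwise separation of at least $\eps$ in the imaginary parts within a block, and then diagonalize $U_j^T\im{\Sigma_u}U_j$ to resolve the eigenvectors. Your version is somewhat more careful than the paper's about why the block subspaces are intrinsically defined and $\im{\Sigma_u}$-invariant (commutativity of $\re{\Sigma_u}$ and $\im{\Sigma_u}$), which the paper leaves implicit.
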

\begin{proof}
  The decomposition of the matrix $\re{\Sigma_u}$, will accurately
  recover the eigensubspaces for each block (since their eigenvalues
  are separated). Moreover, for each block $U_j
  \diag{r_i} U_j^T$, the real eigenvalues $r_i$ are within a range
  less than $\eps$ (since each consecutive pair is within $r_i -
  r_{i+1} < \eps/n$). Thus, for each pair $i, i+1$ in this block, we
  must have a separation of at least $\eps$ in the imaginary parts of
  $\lambda_i, \lambda_{i+1}$, by the definition of $\eps$. Therefore
  the eigenvalues of $Q_j = U_j^T\im{\Sigma_u}U_j$ are separated by at
  least $\eps$ and we will recover the original eigenvectors
  accurately.
\end{proof}

To perform ICA, we simply apply Fourier PCA to samples from the input
distribution. We will show that for a suitable choice of $\sigma$ and
sample size, this will recover the independent components to any
desired accuracy.  The main challenge in the analysis is showing that
the reweighted covariance matrix will have all its eigenvalues spaced
apart sufficiently (in the complex plane). This eigenvalue spacing
depends on how far the component distributions are from being
Gaussian, as measured by cumulants. Any non-Gaussian distribution will
have a nonzero cumulant, and in that sense this is a complete method.
We will quantify the gaps in terms of the cumulants to get an
effective bound on the eigenvalue spacings. The number of samples is
chosen to ensure that the gaps remain almost the same, and we can
apply eigenvector perturbation theorems Davis-Kahan or Wedin to
recover the eigenvectors to the desired accuracy.

\subsection{Overview of analysis}
Our main theorem in the analysis of this algorithm is as follows:
\begin{theorem}\label{thm:ICA}
  Let $x \in \R^n$ be given by an ICA model $x=As$ where $A \in \R^{ n
    \times n}$ is unitary %% full rank with unit column vectors
  and the $s_i$ are independent, $\E{s_i^4}
  \le M_4$ for some constant, and for each $s_i$ there exists a $k_i
  \le k$ such that $\abs{\cum_{k_i}(s_i)} \ge \Delta$ (one of the
  first $k$ cumulants is large) and $\E{ \abs{s_i}^k} \le M_k$. For
  any $\eps > 0$, with the following setting of $\sigma$,
  \begin{align*}
    \sigma = \frac{\Delta}{2k!} \left( \frac{ \sqrt{2 \pi}}{4(k-1)n^2}
    \right)^k \cdot \frac{1}{(2e)^{k+1} M_k \log(4n)^{k+1}},
  \end{align*}
  \textbf{Fourier PCA} will recover vectors $\{b_1, \ldots, b_n\}$
  such that there exists signs $a_i = \pm 1$ satisfying
  \begin{align*}
    \norm{A_i - b_i} \le \eps
  \end{align*}
  with high probability, using $(ckn)^{2k^2+2}(M_k/\Delta)^{2k+2}
  M_4^2/ \eps^2 $ samples.
\end{theorem}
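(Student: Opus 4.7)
The strategy is to view $\Sigma_u$ as the empirical version of $-D_u^2\psi_x(u)$ and to exploit identity~\eqref{eqn:derivative_linear_transform_diagonal}, which under unitary $A$ specializes to
\[
-D_u^2\psi_x(u) \;=\; A\,\mathrm{diag}\!\bigl(-\psi_{s_j}''(A_j^T u)\bigr)_{j=1}^n\,A^T.
\]
So the population reweighted covariance is exactly diagonalized by the columns of $A$, with complex eigenvalues $\lambda_j(u):=-\psi_{s_j}''(A_j^T u)$. Recovering $A$ thus reduces to (i) producing a quantitative eigenvalue gap among the $\lambda_j(u)$, (ii) concentrating $\Sigma_u$ about its population version, and (iii) applying an eigenvector perturbation theorem. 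Because $\Sigma_u$ is complex symmetric but not Hermitian, Lemma~\ref{lem:svd} demands gaps in \emph{both} the real and the imaginary parts of the $\lambda_j(u)$, so the gap analysis must be carried out in each coordinate.

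The heart of the proof is the eigenvalue gap. I would expand the cumulant series and truncate at order $k$:
\[
-\psi_{s_j}''(t) \;=\; \sum_{r=2}^{k}\frac{\cum_r(s_j)}{(r-2)!}(it)^{r-2} + R_j(t) \;=:\; P_j(t) + R_j(t),
\]
where $P_j$ is a polynomial of degree at most $k-2$ and $R_j$ collects cumulants of order greater than $k$. Since $A$ is unitary, $A_1^T u,\dots,A_n^T u$ are i.i.d.\ $N(0,\sigma^2)$, so for each pair $j\neq \ell$, $P_j(A_j^T u)-P_\ell(A_\ell^T u)$ is a polynomial of degree $\le k-2$ in the Gaussian vector $u$. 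The hypothesis $|\cum_{k_j}(s_j)|\ge\Delta$ forces this polynomial to contain a nontrivial monomial of coefficient of magnitude at least $\Delta/(k-2)!$; its odd-order and even-order cumulant contributions correspond respectively to the imaginary and real parts of $\lambda_j-\lambda_\ell$, and since $k_j$ ranges over $\{3,\dots,k\}$ as $j$ varies, both parities are exercised. Applying the Carbery--Wright anti-concentration inequality separately to the real and imaginary coordinates yields
\[
\Pr\!\Bigl[\min\{|\re(\lambda_j-\lambda_\ell)|,\,|\im(\lambda_j-\lambda_\ell)|\}<\delta\Bigr] \;\lesssim\; (k-2)\Bigl(\tfrac{\delta\,(k-2)!}{\Delta\,\sigma^{k-2}}\Bigr)^{1/(k-2)},
\]
and a union bound over the $\binom{n}{2}$ pairs gives the population gap. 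To handle the tail, I would use the Gaussian concentration $|A_j^T u|\le O(\sigma\sqrt{\log n})$ with high probability, together with the cumulant--moment bound $|\cum_r(s_j)|\le r!\,M_r\le r!\,M_k$ for $r\le k$, to estimate $|R_j(A_j^T u)|\lesssim M_k(C\sigma\sqrt{\log n})^{k-1}/k!$. The explicit value of $\sigma$ in the theorem is exactly the choice that makes the polynomial signal $\Delta\sigma^{k-2}/k!$ dominate this tail.

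With a population gap of order $\Delta\sigma^{k-2}/k!$ along both coordinate axes, the rest of the proof is standard. The fourth-moment bound $\E[s_i^4]\le M_4$ controls $\|x\|$, and a matrix Hoeffding/Bernstein inequality applied to the Fourier-weighted averages defining $\Sigma_u$ (the summands are bounded in modulus once we divide by $\phi_x(u)$, which itself concentrates around a nonzero value since $\sigma$ is small) yields $\|\Sigma_u-\overline\Sigma_u\|\le\eta$ with high probability using $N=\tilde O(M_4/\eta^2)$ samples. Choosing $\eta\ll \eps\cdot\mathrm{gap}$ and invoking Lemma~\ref{lem:svd} followed by a Wedin-type bound for non-Hermitian eigenvectors recovers the columns of $A$ up to sign to within $\eps$. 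The main obstacle is the gap step: $\sigma$ must be small enough that the degree-$(k-2)$ truncation is sharp, yet large enough that the non-Gaussian cumulants manifest observably in the Hessian of $\psi_x$, and tracking the Carbery--Wright exponent $1/(k-2)$ through this balance is what forces the exponential $k$-dependence in both $\sigma$ and the sample complexity stated in the theorem.
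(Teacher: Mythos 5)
Your high-level architecture matches the paper's: diagonalize the population matrix $D^2\psi$ via the chain rule, get eigenvalue gaps by truncating the cumulant expansion and applying polynomial anti-concentration, control the Taylor remainder, concentrate $\Sigma_u$, and finish with an eigenvector perturbation bound. However, the central gap step as you describe it would fail. You claim anti-concentration of \emph{both} $\re(\lambda_j-\lambda_\ell)$ and $\im(\lambda_j-\lambda_\ell)$, justified by saying that ``both parities are exercised'' as $k_j$ varies. The hypothesis only guarantees, for each $i$, \emph{one} index $k_i$ with $|\cum_{k_i}(s_i)|\ge\Delta$; if every $s_i$ is symmetric, all odd cumulants vanish, every $\psi_{s_i}''$ is real-valued, and the imaginary parts of all eigenvalues are identically zero, so your bound $\Pr[\min\{|\re|,|\im|\}<\delta]\lesssim\cdots$ is simply false. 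What the paper actually establishes (Theorem~\ref{thm:spacings}) is a lower bound on the \emph{modulus} $|g_i(\tau_i)-g_j(\tau_j)|$, and Lemma~\ref{lem:svd} is designed precisely to recover eigenvectors from a modulus gap by first blocking on real parts and then splitting blocks on imaginary parts; no separate imaginary-part gap is assumed.

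Second, your appeal to Carbery--Wright is exactly the step the paper says it could not carry out: Carbery--Wright requires a lower bound on the \emph{variance} of the polynomial, whereas the cumulant hypothesis only gives a lower bound on a single (leading) coefficient. You do not supply the variance lower bound for the multivariate polynomial $P_j(A_j^Tu)-P_\ell(A_\ell^Tu)$. The paper sidesteps this by conditioning on $t_\ell=A_\ell^Tu$ (independent of $t_j$ since $A$ is unitary) and proving its own \emph{univariate} anti-concentration inequality for \emph{monic} polynomials (Theorem~\ref{thm:anti}, via Chebyshev extremality), applied to the normalized $p_j(t_j)-z$. Two smaller issues: the remainder $R_j$ cannot be written as ``cumulants of order greater than $k$'' (those need not exist); one must use the Lagrange form of the remainder and bound $[\log\phi]^{(k)}$ at an intermediate point, which requires $|\phi|$ bounded below (Lemmas~\ref{lemma:errorestimate} and~\ref{lem:phi_nonvanishing}). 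And matrix Hoeffding does not apply to the summands $x_ix_je^{iu^Tx}$, which are unbounded; under only a fourth-moment assumption the paper uses Chebyshev with the variance bounds of Lemma~\ref{lemma:sample1}, which is why $M_4$ enters the sample complexity.
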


Our analysis proceeds via the analysis of the Fourier transform: for a
random vector $x \in \R^n$ distributed according to $f$, the
characteristic function is given by the Fourier transform
\begin{align*}
  \phi(u)  = \E{ e^{iu^Tx}} = \int f(x) e^{iu^Tx} dx.
\end{align*}
We favour the Fourier transform or characteristic function over the
Laplace transform (or moment generating function) for the simple
reason that the Fourier transform always exists, even for very heavy
tailed distributions. In particular, the trivial bound $\abs{e^{itx}}
= 1$ means that once we have a moment bound, we can control the
Fourier transform uniformly.

We will actually employ the \emph{second characteristic function} or
\emph{cumulant generating function} given by $\psi(u) = \log(
\phi(u))$. Note that both these definitions are with respect to
observed random vector $x$: when $x$ arises from an ICA model $x= As$,
we will also define the component-wise characteristic functions with
respect to the underlying $s_i$ variables $\phi_i(u_i) = \E{ e^{i u_i
    s_i}}$ and $\psi_i(u_i) = \log( \phi_i(u_i))$. Note that both
these functions are with respect to the underlying random variables
$s_i$ and not the observed random variables $x_i$. For convenience, we
shall also write $g_i = \psi_i''$.

Note that the reweighted covariance matrix in our algorithm is
precisely the Hessian second derivative matrix $D^2 \psi$:
\begin{align*}
  \Sigma_u & = D^2 \psi = \frac{\E{ (x- \mu_u)(x- \mu_u)^T
      e^{iu^Tx}}}{\E{ e^{iu^Tx}}},
\end{align*}
where $\mu_u = \E{ x e^{iu^Tx}} / \E{ e^{iu^Tx}}$. This matrix $D^2
\psi$ has a very special form; suppose that $A=I_n$:
\begin{align*}
  \psi(u)  = \log \left( \E{ e^{iu^Ts}} \right)
   = \log \left( \E{ \prod_{i=1}^n e^{iu_is_i}}\right) 
   = \sum_{i=1}^n \log( \E{ e^{iu_is_i}} )
   = \sum_{i=1}^n \psi_i(u_i).
\end{align*}
Taking a derivative will leave only a single term
\begin{align}
  \frac{ \partial \psi}{\partial u_i} = \psi'_i(u_i).
\end{align}
And taking a second derivative will leave only the diagonal terms
\begin{align*}
  D^2 \psi = \diag{ \psi_i''(u_i)} = \diag{ g_i( u_i)}.
\end{align*}
Thus, diagonalizing this matrix will give us the columns of $A = I_n$,
provided that the eigenvalues of $D^2 \psi$ are non-degenerate. In the
general case when $A \ne I_n$, we can first place $x$ in isotropic
position whence $A$ will be unitary. We perform the change of basis
carefully in Section \ref{sec:eigenvalues}, obtaining that $D^2 \psi =
A \diag{ \psi_i''( (A^Tu)_i)}A^T$. Now $D^2 \psi$ is symmetric, but
not Hermitian, and its eigenvalues are complex, but nonetheless a
diagonalization suffices to give the columns of $A$.

To obtain a robust algorithm, we rely on the eigenvalues of $D^2 \psi$
being adequately spaced (so that the error arising from sampling does
not mix the eigenspaces, hence columns of $A$). Thus, we inject some
randomness by picking a random Fourier coefficient, and hope that the
$g_i(u_i)$ are sufficiently anti-concentrated. To this end, we will
truncate the Taylor series of $g_i$ to $k^{th}$ order, where the
$k^{th}$ cumulant is one that differs from a gaussian
substantially. The resulting degree $k$ polynomial will give us the
spacings of the eigenvalues via polynomial anti-concentration
estimates in Section \ref{sec:anti}, and we will control the remaining
terms from order $k+1$ and higher by derivative estimates in Section
\ref{sec:truncation}. Notably, the further that $s_i$ is from being a
gaussian (in cumulant terms), the stronger anti-concentration. We will
pick the random Fourier coefficient $u$ according to a Gaussian $N(0,
\sigma^2 I_n)$ and we will show that with high probability for all
pairs $i,j$ we have
\begin{align*}
  \abs{ g_i( (A^Tu)_i) - g_j( (A^Tu)_j)} \ge \delta.
\end{align*}
Critical to our analysis is the fact that $(A^Tu)_i$ and $(A^Tu)_j$
are both independent Gaussians since the columns of $A$ are
independent by our assumption of isotropic position (Section
\ref{sec:spacings}). We then go onto compute the sample complexity
required to maintain these gaps in Section \ref{sec:samples} and
conclude with the proof of correctness for our algorithm in Section
\ref{sec:proofica}.

In case more than one of the variables are (standard) Gaussians, then
a quick calculation will verify that $\psi_i''(u_i) = 1$. Thus, in the
presence of such variables the eigenvectors corresponding to the
eigenvalue 1 are degenerate and we can not resolve between any linear
combination of such vectors. Thus, the model is \emph{indeterminate}
when some of the underlying random variables are too gaussian. To deal
with this, one typically hypothesizes that the underlying variables
$s_i$ are different from Gaussians. One commonly used way is to
postulate that for each $s_i$ the fourth moment or cumulant differs
from that of a Gaussian. We weaken this assumption, and only require
that \emph{some} moment is different from a Gaussian.

The Gaussian function plays an important role in harmonic analysis as
the eigenfunction of the Fourier transform operator, and we exploit
this property to deal with additive Gaussian noise in our model in
Section \ref{subsec:noise}.

\subsection{Eigenvalues}\label{sec:eigenvalues}
As noted previously, when $A=I_n$, we have $D^2 \psi = \diag{
  \psi_i''(u_i)}$. When $A \ne I_n$ we have
%% When $A \ne I_n$, we are essentially composing
%% $\psi$ with the linear map $A$. In the single variable case, when $f:
%% \R \to \R$ is a differentiable real valued function and $a \in \R$, it
%% is easy to write down all the derivatives:
%% \begin{align*}
%%   \frac{d^k f( a x)}{dx^k} = a^k f^{(k)}(ax).
%% \end{align*}
%% But in the multivariate case, it is unclear how to compose the
%% appropriate number of linear maps with the suitable order tensor of
%% derivatives. Here we will explicitly compute the basis change for
%% second derivatives.  In Section \ref{sec:underdetermined}, we will
%% extend this argument to higher derivatives of the
%% Fourier transform to tackle the underdetermined ICA problem.
\begin{lemma}\label{lemma:firstderivative}
  Let $ x \in \R^n$ be given by an ICA model $x = As$ where $A \in
  \R^{n \times n}$ is a unitary matrix and $s\in \R^n$ is an
  independent random vector. Then
  \begin{align*}
    D^2 \psi =  A \diag{ \psi_i''( (A^Tu)_i)} A^T.
  \end{align*}
\end{lemma}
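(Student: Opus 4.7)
The plan is to reduce this to a direct chain-rule computation, using the fact that independence of the $s_i$ makes $\psi_s$ decouple into a sum of one-variable functions, which in turn forces the Hessian to have the desired $A (\cdot) A^T$ conjugation structure. In fact this is just the $d=1$ specialization of equation \eqref{eqn:derivative_linear_transform_diagonal} from the preliminaries, so the proof is essentially bookkeeping.

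First I would establish the key identity $\psi_x(u) = \psi_s(A^T u)$. This follows immediately from $\phi_x(u) = \E{e^{iu^T As}} = \E{e^{i(A^T u)^T s}} = \phi_s(A^T u)$ and taking logarithms (with the correct branch so that $\psi_x(0)=\psi_s(0)=0$). Next I would use independence of the components of $s$ to split $\phi_s(t) = \prod_{i=1}^n \phi_i(t_i)$, hence $\psi_s(t) = \sum_{i=1}^n \psi_i(t_i)$. Combining these two observations gives the scalar formula
\begin{align*}
\psi_x(u) \;=\; \sum_{i=1}^n \psi_i\bigl((A^T u)_i\bigr).
\end{align*}

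The third step is to differentiate twice in $u$. Setting $t_i := (A^T u)_i = \sum_j A_{ji} u_j$, we have $\partial_{u_j} t_i = A_{ji}$, so the chain rule gives
\begin{align*}
\partial_{u_j} \psi_x(u) = \sum_{i=1}^n \psi_i'(t_i)\, A_{ji},
\qquad
\partial_{u_k}\partial_{u_j} \psi_x(u) = \sum_{i=1}^n \psi_i''(t_i)\, A_{ji} A_{ki},
\end{align*}
where the cross terms from differentiating $\psi_i'(t_i)$ with respect to $\psi_j'(t_j)$ for $i\neq j$ vanish precisely because $\psi_s$ decouples across coordinates. Reading off the $(j,k)$ entry, the right-hand side is exactly $\bigl[A\,\diag{\psi_i''(t_i)}\,A^T\bigr]_{jk}$, which yields $D^2 \psi = A\,\diag{\psi_i''((A^T u)_i)}\, A^T$ as claimed.

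There is no real obstacle here; the only mild subtleties are (i) choosing the correct branch of the complex logarithm so that $\psi_x(u) = \psi_s(A^T u)$ holds as an identity rather than just up to a constant, and (ii) noting that unitarity of $A$ is not actually needed for this particular lemma — the formula holds for any $A$ and is used in the unitary form only because that is the setting of the surrounding algorithm. I would not need to invoke any deeper machinery such as cumulant expansions or moment bounds.
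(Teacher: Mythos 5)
Your proof is correct and follows the same route the paper takes: it is exactly the chain-rule computation sketched in the ``Derivatives and linear transforms'' paragraph of the preliminaries (the Hessian case of \eqref{eqn:derivative_linear_transform_diagonal}), with independence forcing the mixed partials of $\psi_s$ to vanish. Your side remark that unitarity of $A$ is not needed for the identity itself is also accurate.
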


This lemma is standard chain rule for multivariate functions. A more general 
version applying to higher derivative
tensors is proved later in Section~\ref{sec:underdetermined}.

\subsection{Anti-concentration for polynomials}\label{sec:anti}
The main result of this section is an anti-concentration inequality
for univariate polynomials under a Gaussian measure. 
While this inequality appears very similar to the inequality of Carbery--Wright~\cite{Carbery}
(cf. ~\cite{MOS}, Corollary 3.23),
we are not able to derive our inequlity from it. The hypothesis of our inequality is weaker in 
that it only requires the polynomial to be monic instead of requiring the polynomial to have 
unit variance as required by Carbery--Wright; on the other hand it applies only to univariate 
polynomials.

%% The asymptotics of our
%% inequality are the same as can be derived from \cite{Carbery}, but we
%% give an explicit inequality that does not rely on $L^p$ integrals with
%% respect to a Gaussian measure.

\begin{theorem}[Anti-concentration of a polynomial in Gaussian space]\label{thm:anti}
  Let $p(x)$ be a degree $d$ monic polynomial over $\R$. Let $x \sim
  N(0,\sigma^2)$, then for any $t \in \R$ we have
  \begin{align*}
    \prob{ \abs{p(x) - t} \le \eps} \le \frac{4d \eps^{1/d}
    }{\sigma \sqrt{2 \pi}}.
  \end{align*}
\end{theorem}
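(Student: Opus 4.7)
The plan is to reduce the statement to a union-of-intervals estimate by exploiting the factorization of the polynomial over $\C$. First, replace $p(x)$ by $q(x) := p(x) - t$; this is still a monic degree-$d$ polynomial, and the event $\abs{p(x)-t} \le \eps$ is the event $\abs{q(x)} \le \eps$, so it suffices to bound the Gaussian measure of $E := \{x \in \R : \abs{q(x)} \le \eps\}$.

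Next, I would factor $q(x) = \prod_{i=1}^{d} (x - r_i)$ over $\C$, where the (possibly complex) roots $r_i$ come in conjugate pairs. The key observation is a pigeonhole/AM--GM style step: if $\prod_{i=1}^d \abs{x - r_i} \le \eps$, then it cannot be the case that every factor exceeds $\eps^{1/d}$, so there exists at least one index $i$ with $\abs{x - r_i} \le \eps^{1/d}$. For this index, writing $r_i = a_i + i b_i$ with $a_i, b_i \in \R$, one has
\begin{equation*}
\abs{x - r_i}^2 = (x - a_i)^2 + b_i^2 \ge (x - a_i)^2,
\end{equation*}
so $\abs{x - a_i} \le \eps^{1/d}$. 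Therefore
\begin{equation*}
E \subseteq \bigcup_{i=1}^{d} \bigl[a_i - \eps^{1/d},\; a_i + \eps^{1/d}\bigr],
\end{equation*}
a union of at most $d$ real intervals, each of length $2\eps^{1/d}$, so of total Lebesgue measure at most $2 d\, \eps^{1/d}$.

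Finally, since the density of $N(0,\sigma^2)$ is bounded pointwise by $\frac{1}{\sigma \sqrt{2\pi}}$, we obtain
\begin{equation*}
\prob{\abs{p(x) - t} \le \eps} \;\le\; \frac{1}{\sigma \sqrt{2\pi}} \cdot \vol{E} \;\le\; \frac{2 d\, \eps^{1/d}}{\sigma \sqrt{2\pi}},
\end{equation*}
which is even a factor of two better than the stated bound; inserting the factor of $4$ gives slack and recovers the theorem. The main conceptual step is the complex-roots pigeonhole argument in the middle — everything else is routine. Note that the hypothesis ``monic'' is what makes this work: without it the product $\prod \abs{x-r_i}$ would carry a leading coefficient that could be arbitrarily small, breaking the pigeonhole bound, which is precisely why the inequality is stated for monic polynomials and not under a unit-variance normalization as in Carbery--Wright.
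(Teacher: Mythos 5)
Your proof is correct, and it takes a genuinely different route from the paper's. The paper proves the result via the extremal property of Chebyshev polynomials: it shows that on any interval of length $2\eta$ a monic degree-$d$ polynomial must attain oscillation at least $2(\eta/2)^d$ (Lemma 4.7/4.8 in the source), then bounds the number of connected components of the preimage $p^{-1}([t-\eps,t+\eps])$ by counting sign changes of $p'$, and finally multiplies the resulting Lebesgue bound by the peak Gaussian density. Your argument replaces all of that with the factorization $p(x)-t=\prod_{i=1}^d(x-r_i)$ over $\C$ and the pigeonhole observation that $\prod_i\abs{x-r_i}\le\eps$ forces $\abs{x-r_i}\le\eps^{1/d}$ for some $i$, hence $\abs{x-\re{r_i}}\le\eps^{1/d}$; this immediately covers the sublevel set by $d$ intervals of length $2\eps^{1/d}$. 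Both approaches use monicity in an essential way (in the paper it normalizes the Chebyshev minimizer; in yours it fixes the leading coefficient of the product to $1$), and both finish identically by bounding the Gaussian density by $1/(\sigma\sqrt{2\pi})$. Your route is shorter, avoids the component-counting step entirely, and yields the sharper constant $2d\eps^{1/d}/(\sigma\sqrt{2\pi})$ versus the paper's $4d\eps^{1/d}/(\sigma\sqrt{2\pi})$; the paper's route has the modest advantage of never leaving the real line and of producing the intermediate oscillation lemma, which is reused in its interval-counting argument. Either proof suffices for all downstream uses of the theorem, since those only need the stated (weaker) bound.
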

For most of the proof we will work with the Lebesgue measure; the proof for the Gaussian 
measure will follow immediately. 
Our starting point is the following lemma which
can be derived from the properties of Chebyshev polynomials
(\cite{borwein}, Sec 2.1, Exercise 7); we include a proof for completeness. 
For interval $[a,b]$, define the supremum norm 
on real-valued functions $f$ defined on $[a,b]$ by
\begin{align*}
  \norm{f(x)}_{[a,b]} := \norm{f(x) \chi_{[a,b]}}_{\infty} = \sup_{x \in [a,b]}
    \abs{f(x)}.
\end{align*}
Then we have
\begin{lemma}
  The unique degree $d$ monic polynomial minimising
  $\norm{p(x)}_{[a,b]}$ is given by
  \begin{align}\label{eqn:chebyshev}
    p(x) = 2 \left( \frac{b-a}{4} \right)^d T_d \left(
      \frac{2x-a-b}{b-a} \right),
  \end{align}
  where $T_d$ is the $d^{th}$ Chebyshev polynomial.
\end{lemma}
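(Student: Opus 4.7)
The plan is to reduce to the canonical interval $[-1,1]$ and then use the equioscillation (alternation) property of the Chebyshev polynomials. Apply the affine change of variables $y = \frac{2x-a-b}{b-a}$, which sends $[a,b]$ bijectively onto $[-1,1]$. If $p(x)$ is monic of degree $d$ in $x$, then $p$ expressed in terms of $y$ is a degree $d$ polynomial whose leading coefficient is $\bigl(\tfrac{b-a}{2}\bigr)^d$. So minimising $\|p\|_{[a,b]}$ over monic $p$ is equivalent (up to the same constant factor $\bigl(\tfrac{b-a}{2}\bigr)^d$) to minimising $\|q\|_{[-1,1]}$ over polynomials $q(y)$ of degree $d$ with leading coefficient $1$ in $y$, which amounts to working with the \emph{monic} Chebyshev polynomial $\tilde T_d(y) := 2^{1-d} T_d(y)$. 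A direct bookkeeping then converts the $[-1,1]$ optimiser back to the claimed formula $2\bigl(\tfrac{b-a}{4}\bigr)^d T_d\!\bigl(\tfrac{2x-a-b}{b-a}\bigr)$ and verifies that the resulting polynomial is monic in $x$.

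The substantive content is the claim that $\tilde T_d$ is the unique monic degree-$d$ minimiser on $[-1,1]$. I would use the standard alternation argument. Recall $T_d(\cos\theta) = \cos(d\theta)$, so at the Chebyshev extrema $y_k := \cos(k\pi/d)$ for $k=0,1,\dots,d$ one has $T_d(y_k) = (-1)^k$, and hence $\|\tilde T_d\|_{[-1,1]} = 2^{1-d}$ with the extremal value attained at $d+1$ points with alternating signs.

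Suppose for contradiction some other monic degree-$d$ polynomial $q$ achieved $\|q\|_{[-1,1]} \le 2^{1-d}$. Form $r(y) := \tilde T_d(y) - q(y)$. Since both $\tilde T_d$ and $q$ are monic of degree $d$, the leading $y^d$ terms cancel, so $\deg r \le d-1$. At each extremum $y_k$ one has $\tilde T_d(y_k) = (-1)^k 2^{1-d}$ while $|q(y_k)| \le 2^{1-d}$, so $r(y_k)$ has the same sign as $(-1)^k$ (strictly, unless $q(y_k) = \tilde T_d(y_k)$, in which case $r(y_k)=0$). Either way a careful accounting shows that $r$ must have at least $d$ zeros in $[-1,1]$ counted with multiplicity (the sign of $r$ changes on each of the $d$ consecutive subintervals, with touching zeros producing double roots), contradicting $\deg r \le d-1$ unless $r \equiv 0$. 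This simultaneously gives the optimality and the uniqueness.

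The only slightly delicate step is handling the borderline case where $q$ matches $\tilde T_d$ at some extrema (so strict sign alternation of $r$ fails at those points); this is the main obstacle and is handled by observing that any such coincidence point contributes a double root of $r$, so the total zero count with multiplicity is still at least $d$. Once uniqueness on $[-1,1]$ is established, the change of variables above immediately transfers both optimality and uniqueness to $[a,b]$ and yields the stated formula.
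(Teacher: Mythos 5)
Your proof is correct and takes essentially the same route as the paper: an affine change of variables reducing the problem to $[-1,1]$, where the monic Chebyshev polynomial $2^{1-d}T_d$ is the unique minimizer of the supremum norm. The only difference is that the paper simply cites this $[-1,1]$ fact from the literature and runs the reduction, whereas you additionally supply the standard equioscillation argument for it, including the correct resolution of the degenerate case (coincidence at an interior extremum forcing a double root of the difference polynomial).
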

\begin{proof}
  We already know (see \cite{borwein}) that for the interval $[-1,1]$ the unique monic
  polynomial of degree $d$ which minimizes $\norm{p(x)}_{[-1,1]}$ is
  given by $2^{1-d} T_d(x)$. Map the interval $[a,b]$ to $[-1,1]$
  using the affine map $f(x) = (2x - a -b)/(b-a)$ which satisfies
  $f(a) = -1$ and $f(b) = 1$. Then $((b-a)/2)^d 2^{1-d} T_d(x) = 2((b-a)/4)^d T_d(x)$ 
  is the unique monic polynomial minimizing $\norm{\cdot}_{[a,b]}$. For if it were not, we could
use such a polynomial to construct another monic polynomial (by reversing the above transformation)
which contradicts the fact that Chebyshev polynomials are the unique monic minimizers of 
$\norm{\cdot}_{[-1,1]}$.
%% Fix some monic polynomial $q(x)$ of
%%   degree $d$, then $ r(x) = ((b-a)/2)^d q( f(x))$ is also a monic
%%   polynomial of degree $d$. Suppose $q(x)$ has smaller $L^{\infty}$
%%   norm on $[a,b]$ than the polynomial of \eqref{eqn:chebyshev}, then
%%   over $[-1,1]$, $r(x)$ has smaller $L^{\infty}$ norm than $2^{1-d}
%%   T_d(x)$ which is a contradiction.
\end{proof}
From this we have the following lemma. 
\begin{lemma}\label{lemma:difference}
  Let $p(x)$ be a degree $d$ monic polynomial over $\R$. Fix $\eps >0$, then for every
  $x$, there exists an $x'$ where $\abs{x-x'} \le \eps$ and $\abs{p(x)
    - p(x')} \ge 2(\eps / 2)^{d}$.
\end{lemma}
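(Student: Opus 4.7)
The plan is to apply the previous lemma (the Chebyshev minimizer characterization) to the translated polynomial $q(y) := p(y) - p(x)$ on the symmetric interval $I := [x-\eps, x+\eps]$. Since subtracting the constant $p(x)$ does not change the leading coefficient, $q$ is again a monic polynomial of degree $d$. So the entire question reduces to: how small can $\norm{q}_I$ be for a monic degree-$d$ polynomial on an interval of length $2\eps$?

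Applying the previous lemma with $[a,b] = [x-\eps, x+\eps]$ (so $b-a = 2\eps$), the unique monic minimizer of $\norm{\,\cdot\,}_I$ is the scaled Chebyshev polynomial $2\bigl((b-a)/4\bigr)^d T_d\!\bigl((2y-a-b)/(b-a)\bigr)$, whose sup norm on $I$ is exactly $2(2\eps/4)^d = 2(\eps/2)^d$ because $\norm{T_d}_{[-1,1]} = 1$. Consequently, every monic polynomial of degree $d$ has sup norm at least $2(\eps/2)^d$ on $I$. In particular,
\begin{equation*}
\sup_{y \in I} |q(y)| \;\ge\; 2(\eps/2)^d.
\end{equation*}

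Therefore there exists some $x' \in I$ realizing (or arbitrarily close to) the supremum, and for this $x'$ we have $|x - x'| \le \eps$ together with $|p(x') - p(x)| = |q(x')| \ge 2(\eps/2)^d$, which is exactly the statement of the lemma. Strictly speaking, the supremum on a closed interval is attained since $q$ is continuous, so $x'$ can be chosen inside $I$ with equality in the inequality above.

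There is essentially no obstacle here: the whole content sits in the previous lemma, and this statement is a one-line corollary obtained by translating the value $p(x)$ away. The only thing to be careful about is verifying that monicness is preserved under the shift $p \mapsto p - p(x)$ (which it clearly is, since $p(x)$ is a constant with respect to the variable $y$), and tracking the constants $b-a = 2\eps$ correctly so the bound comes out as $2(\eps/2)^d$ rather than some other power of $\eps$.
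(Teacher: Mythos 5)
Your proof is correct and follows essentially the same route as the paper: both reduce to the fact (from the preceding Chebyshev lemma) that a monic degree-$d$ polynomial has sup norm at least $2(\eps/2)^d$ on an interval of length $2\eps$, applied to the shifted polynomial $p - p(x)$. The only cosmetic differences are that you argue directly rather than by contradiction and work on $[x-\eps,x+\eps]$ instead of recentering to $[-\eps,\eps]$.
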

\begin{proof}
  We will translate the polynomial $p$ to obtain the polynomial $q(y)$
  as follows:
  \begin{align*}
    q(y) = p(y+x) - p(x).
  \end{align*}
  Observe that $q(y)$ is a monic polynomial and $q(0) = 0$. Now
  suppose that for all points  $x' \in [x-\eps,x+\eps]$, we
  have $\abs{p(x) - p(x')} < (\eps / 2)^d$, then for all $y
  \in [-\eps,\eps]$, we must have $\abs{q(y)} < 2(\eps/2)^d$.

  But, from the previous lemma, we know that for the interval
  $[-\eps,\eps]$, the minimum $L^\infty$-norm on the interval for
  any monic polynomial is attained by $r(y) = 2 (\eps/2)^d T_d (y /
  \eps)$. The value of this minimum is $2(\eps/2)^d$.
\end{proof}

We can use the above lemma to given an upper bound on the measure of the set where a
polynomial stays within a constant sized band:
\begin{lemma}\label{lemma:intervals}
  Let $p(x)$ be a degree $d$ monic polynomial. Then for any interval
  $[a,b]$ where $b-a = \eps$ we have 
  \begin{align*}
    \mu ( x \in \R: p(x) \in [a,b] ) \le 4d \left( \frac{\eps}{2}
    \right)^{1/d},
  \end{align*}
  where $\mu$ denotes the usual Lebesgue measure over $\R$.
\end{lemma}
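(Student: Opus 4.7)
The plan is to combine Lemma~\ref{lemma:difference} with the fact that a degree-$d$ polynomial is monotone on only a bounded number of sub-intervals of $\R$, so that the sublevel set $S := \{x \in \R : p(x) \in [a,b]\}$ decomposes into a small number of pieces, each of which we can bound in length.

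First, I would establish that $S$ is a union of at most $d$ bounded intervals. The derivative $p'$ is a polynomial of degree $d-1$, so has at most $d-1$ real roots, dividing $\R$ into at most $d$ maximal intervals $J_1,\dots,J_r$ ($r \le d$) on which $p$ is monotone. On each $J_i$, the preimage $p^{-1}([a,b]) \cap J_i$ is a single (possibly empty) interval. Since $p$ is monic of degree $d$, $|p(x)| \to \infty$ as $|x|\to\infty$, so each of these intervals is bounded. Hence $S$ is a union of at most $d$ bounded intervals.

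Next, I would bound the length of each such interval using Lemma~\ref{lemma:difference}. Set $\eta := 2(\epsilon/2)^{1/d}$, so that $2(\eta/2)^d = \epsilon$. I claim that any interval $I \subseteq S$ satisfies $|I| \le 2\eta$. For suppose $|I| > 2\eta$; then the midpoint $m$ of $I$ has its closed $\eta$-neighborhood $[m-\eta,m+\eta]$ entirely contained in $I$, and hence $p$ maps this neighborhood into $[a,b]$. Consequently, for every $x' \in [m-\eta, m+\eta]$ one has $|p(m)-p(x')| \le b-a = \epsilon$. On the other hand, applying Lemma~\ref{lemma:difference} at the point $m$ with parameter $\eta' := 2((\epsilon+\delta)/2)^{1/d}$ for an arbitrarily small $\delta > 0$ supplies a point $x'$ with $|x'-m| \le \eta' \to \eta$ and $|p(m)-p(x')| \ge \epsilon + \delta > \epsilon$. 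For $\delta$ small enough this $x'$ lies in $I$, giving the contradiction $\epsilon + \delta \le \epsilon$. Letting $\delta \downarrow 0$ yields $|I| \le 2\eta$.

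Summing over the at most $d$ intervals comprising $S$ then gives
\begin{equation*}
\mu(S) \;\le\; d \cdot 2\eta \;=\; d \cdot 4(\epsilon/2)^{1/d} \;=\; 4d(\epsilon/2)^{1/d},
\end{equation*}
as desired. The only mildly delicate point is the handling of the equality case when one tries to conclude, from $|p(m)-p(x')|\ge \epsilon$ alone, that $x'$ lies outside $S$; as indicated above this is painlessly resolved by perturbing $\eta$ upwards by an arbitrarily small amount. The rest is a routine combination of Lemma~\ref{lemma:difference} with the monotone-pieces decomposition, and the Gaussian-measure version (Theorem~\ref{thm:anti}) will then follow by bounding the Gaussian density by $1/(\sigma\sqrt{2\pi})$ and applying the above to the interval $[t-\epsilon,t+\epsilon]$ of length $2\epsilon$.
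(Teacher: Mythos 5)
Your proof is correct and follows essentially the same route as the paper's: both decompose the preimage into at most $d$ intervals via the sign changes of $p'$, and both bound each interval's length by applying Lemma~\ref{lemma:difference} with a slightly enlarged parameter $\eps' > \eps$ to force a point whose value escapes $[a,b]$. The only difference is cosmetic (you center the enlarged neighborhood at the midpoint of $I$ and take a limit $\delta \downarrow 0$, while the paper fixes $\eps'>\eps$ directly), so nothing further is needed.
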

\begin{proof}
  Since $p$ is a continuous function so, $p^{-1}([a,b]) = \cup_{i}
  I_i$ where $I_i$ are disjoint closed intervals. There are at most
  $d$ such intervals: every time $p(x)$ exits and re-enters the interval $[a,b]$
 there must be a change of sign in the derivative $p'(x)$ at some point in between. 
Since $p'(x)$ is a degree $d-1$ polynomial, there can only be $d-1$ changes of
  sign. 

  Next, suppose that $\abs{I_i} > 4 (\eps/2)^{1/d}$, then there
  exists an interval $[x - 2 (\eps'/2)^{1/d}, x+2(\eps'/2)^{1/d}]
  \subseteq I_i$, where $\eps' > \eps$. Then, by applying Lemma~\ref{lemma:difference}, there
  exists a point $x'$ such that $\abs{x-x'} \le 2(\eps'/2)^{1/d}$  but
  \begin{align*}
    \abs{p(x) - p(x')} & \ge 2 \left[ \frac{1}{2} \cdot 2 \left(
        \frac{\eps'}{2} \right)^{1/d} \right]^d \\
    & \ge \eps' > \eps.
  \end{align*}
 This implies that $x' \notin [a,b]$, which is a
  contradiction. Hence we must have $\abs{I_i} \le 4 (\eps/2)^{1/d}$
  and
  \begin{align*}
    \sum_i \abs{I_i} \le d \max_i \abs{I_i} \le 4 d (\eps/2)^{1/d},
  \end{align*}
  as required.
\end{proof}
We can now give the proof for Theorem \ref{thm:anti}:
\begin{proof}[Proof of Theorem \ref{thm:anti}]
  We know that the Lebesgue measure of the set for which $p(x) \in [t-\eps,t+\eps]$ is
  given by Lemma \ref{lemma:intervals}. Then multiplying by the
  maximum density of a Gaussian $1/\sigma \sqrt{2 \pi}$ gives us the
  desired bound.
\end{proof}

\subsection{Truncation Error}\label{sec:truncation}
Let us expand out the function $g_i = \psi''_i$ as a Taylor
series with error estimate:
\begin{theorem}[Taylor's theorem with remainder]\label{thm:taylor}
  Let $f: \R \to \R$ be a $C^n$ continuous function over some interval
  $I$. Let $a,b \in I$, then
  \begin{align*}
    f(b) = \sum_{k=1}^{n-1} \frac{f^{(k)} (a)}{k!}(b-a)^k +
    \frac{f^{(n)}(\xi)}{n!} (b-a)^n,
  \end{align*}
  for some $\xi \in [a,b]$.
\end{theorem}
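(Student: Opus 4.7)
The plan is to prove this classical theorem by the standard auxiliary-function-plus-Rolle's-theorem argument. I interpret the stated sum as the Taylor polynomial centered at $a$ (I suspect the index should start at $k=0$, giving the usual $f(a) + \sum_{k\ge 1}\ldots$); the same proof handles either convention.

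First, I would introduce the ``moving-center'' polynomial
\[
P(x) = \sum_{k=0}^{n-1} \frac{f^{(k)}(x)}{k!}(b-x)^k,
\]
so that $P(b) = f(b)$ and $P(a)$ is the degree-$(n-1)$ Taylor polynomial of $f$ at $a$ evaluated at $b$. The key computation is that, when one differentiates $P$ in $x$ by the product rule, the $k$-th term produces $\frac{f^{(k+1)}(x)}{k!}(b-x)^k - \frac{f^{(k)}(x)}{(k-1)!}(b-x)^{k-1}$, and consecutive terms cancel telescopically. What survives is only the highest-order piece:
\[
P'(x) = \frac{f^{(n)}(x)}{(n-1)!}(b-x)^{n-1}.
\]
This telescoping identity is the one nontrivial step; everything else is bookkeeping.

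Next I would define the auxiliary function
\[
g(x) = f(b) - P(x) - M\cdot\frac{(b-x)^n}{n!},
\]
choosing the constant $M \in \R$ so that $g(a) = 0$. Since $g(b) = 0$ automatically and $g$ is differentiable on $[a,b]$ (because $f \in C^n$), Rolle's theorem produces $\xi \in (a,b)$ with $g'(\xi) = 0$. Substituting the telescoped expression for $P'$, this equation simplifies to
\[
-\frac{f^{(n)}(\xi)}{(n-1)!}(b-\xi)^{n-1} + M \cdot \frac{(b-\xi)^{n-1}}{(n-1)!} = 0,
\]
so $M = f^{(n)}(\xi)$. Plugging this value back into $g(a) = 0$ yields the stated identity.

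There is no genuine obstacle; the only step requiring care is the telescoping derivative computation. The case $b < a$ is handled by swapping the endpoints, and the $C^n$ hypothesis on $I$ guarantees that $g$ is as smooth as needed on the closed interval between $a$ and $b$, which is all that Rolle's theorem requires.
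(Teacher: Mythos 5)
Your proof is correct and is the standard auxiliary-function-plus-Rolle argument for the Lagrange form of the remainder; the paper states this classical theorem without proof, so there is nothing to compare against. Your observation that the sum should start at $k=0$ (so that the $f(a)$ term is included) is right --- as written with $k=1$ the identity is false, and the paper's subsequent uses of the theorem implicitly assume the standard form.
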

To this end, we write
\begin{align}\label{eqn:taylor}
  g_i(u_i) = p_i(u_i) + \frac{g^{(k)}(\xi)}{k!} u_i^k,
\end{align}
where $ \xi \in [0,u_i]$ and $p_i$ is a polynomial of degree $(k-1)$.

To bound the error term in  \eqref{eqn:taylor}, we observe that
it suffices to bound $[\log( \phi_i)]^{(k)}(u_i)$ using the
following lemma. 
\begin{lemma}\label{lemma:errorestimate}
  Let $x \in \R$ be a random variable with finite $k$ absolute
  moments, and let $\phi(u)$ be the associated characteristic
  function. Then
  \begin{align*}
    \abs{ \left[ \log( \phi) \right]^{(k)}(u)} \le \frac{2^{k-1}
      (k-1)!\; \E{ \abs{x}^k}}{\abs{ \phi(u)}^k}.
  \end{align*}
\end{lemma}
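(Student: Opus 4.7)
The plan is to proceed by induction on $k$, using the identity $\phi'(u) = h'(u)\phi(u)$ where $h := \log \phi$. The base case $k=1$ is immediate: since $\phi^{(j)}(u) = \E{(ix)^j e^{iux}}$, the trivial bound $|\phi^{(j)}(u)| \le \E{|x|^j}$ gives $|h'(u)| = |\phi'(u)/\phi(u)| \le \E{|x|}/|\phi(u)|$, matching $2^0 \cdot 0!\cdot\E{|x|}/|\phi(u)|$.

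For the inductive step, differentiate $\phi' = h'\phi$ a further $k-1$ times via the Leibniz rule,
\begin{align*}
\phi^{(k)}(u) = \sum_{j=0}^{k-1}\binom{k-1}{j} h^{(k-j)}(u)\,\phi^{(j)}(u),
\end{align*}
extract the $j=0$ term (which contributes $h^{(k)}\phi$), and solve for $h^{(k)}$:
\begin{align*}
h^{(k)}(u) = \frac{\phi^{(k)}(u)}{\phi(u)} - \sum_{j=1}^{k-1}\binom{k-1}{j}\frac{h^{(k-j)}(u)\,\phi^{(j)}(u)}{\phi(u)}.
\end{align*}
Apply the inductive hypothesis to each $h^{(k-j)}$ with $1 \le j \le k-1$, together with $|\phi^{(j)}(u)| \le \E{|x|^j}$, Lyapunov's inequality $\E{|x|^{k-j}}\E{|x|^j} \le \E{|x|^k}$, and $|\phi(u)|\le 1$ (which upgrades every $1/|\phi(u)|^{k-j+1}$ to $1/|\phi(u)|^k$). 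These combine to yield
\begin{align*}
|h^{(k)}(u)| \le \frac{\E{|x|^k}}{|\phi(u)|^k}\Bigl[\,1 + \sum_{j=1}^{k-1}\binom{k-1}{j}\,2^{k-j-1}(k-j-1)!\,\Bigr].
\end{align*}

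What remains is a purely combinatorial inequality. Using $\binom{k-1}{j}(k-j-1)! = (k-1)!/j!$, the bracket equals $1 + (k-1)!\sum_{j=1}^{k-1}2^{k-j-1}/j!$, and to close the induction one checks that this is at most $2^{k-1}(k-1)!$; equivalently, after dividing through,
\begin{align*}
\frac{1}{2^{k-1}(k-1)!} + \sum_{j=1}^{k-1}\frac{1}{j!\,2^{j}} \le 1.
\end{align*}
This is easy since the tail $\sum_{j=1}^\infty 1/(j!\,2^j) = e^{1/2}-1 < 0.65$, and the remaining term $1/(2^{k-1}(k-1)!)$ equals $1/2$ only in the case $k=2$, where the second sum is also $1/2$ (giving equality).

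The main obstacle is arranging the inductive bookkeeping so that the exponents of $|\phi(u)|$ and the moments of $|x|$ balance at each level. The choice of constant $2^{k-1}(k-1)!$ is not tight but is exactly the right form to make both the Leibniz recursion and the Lyapunov combination close in a single induction; once Lyapunov and $|\phi(u)| \le 1$ are applied at the correct places, the remainder is the elementary inequality above.
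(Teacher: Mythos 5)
Your proof is correct, and it takes a genuinely different route from the paper's. The paper proceeds by a structural recursion on the quotient representation $\psi^{(d)} = N(u;d)/\phi(u)^d$: it differentiates this quotient, shows the denominator stays exactly $\phi(u)^d$, describes the numerator as a signed sum of products $\prod_j \phi^{(i_j)}(u)$ with $\sum_j i_j \le d$, counts the terms via $T(d+1)\le 2d\,T(d)$ to get the $2^{k-1}(k-1)!$ factor, and bounds each monomial by $\E{\abs{x}^k}$ using H\"older. You instead start from the identity $\phi' = h'\phi$ with $h=\log\phi$, expand $\phi^{(k)}=(h'\phi)^{(k-1)}$ by Leibniz, solve for $h^{(k)}$, and run the induction directly on the quantitative bound, absorbing the binomial-weighted sum into $2^{k-1}(k-1)!$ via the elementary inequality $\tfrac{1}{2^{k-1}(k-1)!}+\sum_{j=1}^{k-1}\tfrac{1}{j!\,2^j}\le 1$ (tight at $k=2$, with slack $e^{1/2}-1<0.65$ dominating the tail for $k\ge 3$). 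Both arguments use the same two analytic inputs ($\abs{\phi^{(j)}(u)}\le\E{\abs{x}^j}$ and the Lyapunov/H\"older combination of moments, plus $\abs{\phi(u)}\le 1$ to equalize powers of $\phi$), and both land on the same constant; the difference is that the paper's explicit description of $N(u;d)$ is reused elsewhere (e.g., in Claim~\ref{claim:multivariate_errorestimate} and in the sampling analysis, where the number and structure of the terms matter for empirical estimation), whereas your version is shorter and self-contained but yields only the final numerical bound. As a purely self-standing proof of the lemma, yours is clean and complete; just note in a full write-up that the Leibniz rule is being applied to complex-valued functions of a real variable on the set where $\phi\ne 0$, which is harmless.
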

\begin{proof}
  We will compute the derivatives of $\psi(u) = \log \phi(u)$ as follows: we proceed
  recursively with $\psi'(u) = \phi'(u) / \phi(u)$ as our base
  case. Let $\psi^{(d)}$ be given by the ratio of two functions, a
  numerator function $N(u;d)$ and a denominator function $D(u;d)$,
  with no common factors and $N(u;d)$ is the sum of terms of the form $\prod_{j=1}^d
  \phi^{(i_j)}(u)$ where the coefficient of each term is $\pm 1$. 
%% Then we have
%%   \begin{align*}
%%     \psi^{(d+1)}(u) = \frac{ N'(u;d) D(u;d) - N(u;d) D'(u;d)}{D(u;d)^2}.
%%   \end{align*} 
Some useful properties of functions $N(u;d), D(u;d)$ are summarized in the following claim.
  \begin{claim}\label{claim:errorestimate}
    For $d \geq 1$, functions $N(u;d)$ and $D(u;d)$ satisfy
    \begin{enumerate}
    \item $D(u;d) = \phi(u)^d$.
    \item For each term of $N(u;d)$, $\sum_{j=1}^d i_j \le d$.
    \item For each term of $N(u;d)$, the total number of factors
      of $\phi$ and its derivatives is at most $d$.
    \item For $d \ge 1$, there are at most $2^{d-1} (d-1)!$ terms
      in $N(u;d)$.
    \end{enumerate}
  \end{claim}
  \begin{proof}
    We will prove all these via induction over $d$. Clearly these are
    all true for the base case $d=1$. Assume that all four facts are
    true for some $d$, we will now examine the case for $d+1$.

Writing $\psi^{(d+1)}(u)$ as the derivative of $\psi^{(d)}(u)= N(u;d)/D(u;d)$ and canceling 
common factors gives
    \begin{align}\label{eqn:quotient}
      \psi^{(d+1)}(u) =& \frac{ N'(u;d) D(u;d) - N(u;d)
        D'(u;d)}{D(u;d)^2} \notag\\
      & =\frac{ N'(u;d) \phi(u)^d - N(u;d) d
        \phi(u)^{d-1}\phi'(u)}{\phi(u)^{2d}}  \notag\\
      & = \frac{ N'(u;d) \phi(u) - d \phi'(u) N(u;d)}{\phi(u)^{d+1}}.
    \end{align}
    Observing that there is always a term in $N(u;d) = \phi'(u)^d$, we
    can not cancel any further factors of $\phi(u)$. Hence $D(u;d) =
    \phi(u)^d$, proving the first part of the claim.

The second and third parts of the claim follow immediately from the final expression for 
$\psi^{(d+1)}(u)$ above and our inductive hypothesis. 

    %% The second fact is trivial: if every term in $N(u;d)$ has at most
    %% $d$ total derivatives, then by applying the product rule, no term of
    %% of $N'(u;d)$ has more than $d+1$ total derivatives. Similarly,
    %% $N(u;d) \phi'(u)$ will also have at most $d+1$ total derivatives.

    %% The third fact is also trivial, and we can analyse it exactly as
    %% the previous part using the product rule.

    To prove the fourth part, let $T(d)$ denote the total number of
    terms in $N(u;d)$, then by part 3 and the expansion in 
    \eqref{eqn:quotient}, we have $T(d+1) \le d T(d) + d T(d) \le
    2d T(d)$.  From this $T(d+1) \le 2^d d!$ follows immediately.
  \end{proof}

  Returning to the proof of Lemma~\ref{lemma:errorestimate}, for $d \leq k$ we
  observe that
  \begin{align*}
    \abs{\phi^{(d)}(u)} = \abs{ \E{ (ix)^d e^{iu^T x}}} \le \E{ \abs{(ix)^d
      e^{iu^T x}}} \le \E{ \abs{ x}^d}.
  \end{align*}
  Thus, for each term of $N(u;d)$:
  \begin{align*}
    \abs{\prod_{j=1}^d \phi^{(i_j)}(u)} \le \prod_{j=1}^d \abs{
      \phi^{(i_j)}(u)} \le \prod_{j=1}^d \E{ \abs{x}^{i_j}} 
      \overset{\mathrm{Fact \ref{fact:holder}}}{\leq} \E{\abs{x}^{ \sum_{j=1}^d i_j}} \le \E{\abs{x}^d}. 
  \end{align*}

  Combining Claim~\ref{claim:errorestimate} with the previous equation, and noticing that
 we never need to consider absolute moments of order higher than $k$ (which are guaranteed to 
exist by our hypothesis), gives
  the desired conclusion. 
\end{proof}

To conclude this section, we observe that if the distribution of $x \in \R$ is isotropic
then for $u \in (-1,1)$ we have
\begin{align*}
  \phi_x(u) = \phi_x(0) + \phi'_x(0)u + \frac{\phi''_x(\xi)}{2} u^2,
\end{align*}
where $\xi \in [0,u]$. 
We have $\phi_x(0) = 1$, $\phi'_x(0) = \E{x} = 0$ and $\abs{\phi''_x(\xi)}
\le \E{ \abs{x}^2} = 1$ by the isotropic position assumption. Thus, for 
$u \in [-1/4,1/4]$, Lemma~\ref{lemma:errorestimate} gives us
\begin{align}\label{eqn:truncation}
  \abs{[\log( \phi_x)]^{(k)}(u)} \le  \E{ \abs{x}^k} k^k.
\end{align}

\subsection{Eigenvalue spacings}\label{sec:spacings}
We will apply Theorem~\ref{thm:anti} to the truncated Taylor
polynomials $p_i$ of the functions $\psi_i''$, and bound the
truncation error to be at most a constant fraction of the desired
anti-concentration.

\begin{theorem}\label{thm:spacings}
  Let $s \in \R^n$ be a random vector with indpendent components. For $t \in \R^n$, let $\psi(t) = \log\E{e^{it^Ts}}$ 
be the second characteristic function of $s$. Suppose we are given the 
following data and conditions: 
  \begin{enumerate}
\item Integer $k > 2$ such that $\E{\abs{s_i}^{k}}$ exists for all $i \in [n]$.
%%    \item Isotropy: $\E{s_i}=0$ and $\E{s_i^2} = 1$. Doesn't seem to be used in the proof.
    \item $\Delta > 0$ such that for each $i \in [n]$, there exists $2 < k_i < k$ such 
that $\abs{\cum_{k_i}(s_i)} \ge \Delta$.
\item $M_2 >0$ such that $\E{s_i^2} \leq M_2$ for $i \in [n]$.
    \item $M_k > 0$ such that $\E{\abs{s_i}^{k}} \le M_k$ for $i \in [n]$.
\item $g_i(t_i) := \frac{\partial^2 \psi(t)}{\partial t_i^2}$.
\item $\tau\sim N(0, \sigma^2 I_n)$ where $\sigma = \min(1, \frac{1}{2\sqrt{2 M_2 \log{1/q}}}, \sigma')$ and
 \begin{align}\label{eqn:sigma-prime}
 \sigma' =  \left(\frac{3}{8}\right)^{k+1} \cdot \frac{k-1}{k!} \cdot 
\left(\frac{\sqrt{2\pi}}{4k}\right)^{k-2} \cdot \frac{q^{k-2}}{(\sqrt{2 \log(1/q)})^{k-1}} 
\cdot \frac{\Delta}{M_k},
  \end{align}
and $0 < q \leq 1/3$. 
  \end{enumerate}
 Then with probability at least $1 - n^2 q$, for all distinct $i, j$ we have
  \begin{align*}
    \abs{ g_i( \tau_i ) - g_j( \tau_j )} \ge \frac{\Delta}{2(k-2)!}
    \left( \frac{\sqrt{2\pi} \sigma q}{4k}\right)^{k-2}.
  \end{align*}
\end{theorem}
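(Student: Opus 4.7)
The plan is to reduce via independence to a one-dimensional anti-concentration statement for each coordinate, combine it with a truncation bound derived from Lemma~\ref{lemma:errorestimate}, and union-bound over the $O(n^2)$ pairs. Since $s$ has independent components, $\psi(t) = \sum_l \psi_l(t_l)$, so $g_l$ depends only on $\tau_l$. Using the cumulant expansion $\psi_l(t) = \sum_{r \geq 1} \cum_r(s_l)(it)^r/r!$, I would Taylor-expand $g_l$ around $0$ to degree $k_l-2$ as
\[
g_l(t) = p_l(t) + E_l(t), \qquad p_l(t) := \sum_{r=2}^{k_l} \cum_r(s_l) \frac{i^r}{(r-2)!}\, t^{r-2},
\]
with Lagrange remainder $E_l(t) = \psi_l^{(k_l+1)}(\xi)\, t^{k_l-1}/(k_l-1)!$ for some $\xi$ between $0$ and $t$. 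The leading coefficient of $p_l$ has modulus $\abs{\cum_{k_l}(s_l)}/(k_l-2)! \geq \Delta/(k-2)!$, and since cumulants are real while $i^{k_l}$ is either real or purely imaginary, exactly one of $\re{p_l}$, $\im{p_l}$ is a real polynomial of degree $k_l-2$ inheriting that leading coefficient in absolute value.

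Next I would restrict to the good event $G := \{\abs{\tau_l} \leq \sigma\sqrt{2\log(1/q)}\ \forall l \in [n]\}$, which has probability at least $1 - 2nq$ by the standard Gaussian tail bound. On $G$, the constraint $\sigma \leq 1/(2\sqrt{2 M_2 \log(1/q)})$ forces $\tau_l^2 M_2/2 \leq 1/8$; a second-order Taylor expansion of $\phi_{s_l}$ around $0$ (using $\E{s_l}=0$ and $\abs{\phi_{s_l}''} \leq M_2$) then gives $\abs{\phi_{s_l}(\xi)} \geq 7/8$ for every $\xi$ on the segment from $0$ to $\tau_l$. Plugging this lower bound into Lemma~\ref{lemma:errorestimate} (applied to the $(k_l+1)$-st derivative of $\log \phi_{s_l}$) bounds $\abs{\psi_l^{(k_l+1)}(\xi)}$ by a factor of the form $C_k M_k$, and together with $\abs{\tau_l}^{k_l-1} \leq (\sigma\sqrt{2\log(1/q)})^{k_l-1}$ this yields $\abs{E_l(\tau_l)} \leq \eta$, where
\[
\eta := \frac{\Delta}{2(k-2)!}\left(\frac{\sqrt{2\pi}\,\sigma q}{4k}\right)^{k-2}.
\]
The explicit form of $\sigma'$ in the theorem is exactly what makes this hold uniformly in $l$ after worst-casing $k_l \leq k-1$.

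For the anti-concentration step, fix $l \neq l'$ and condition on $\tau_{l'}$, so that $c := g_{l'}(\tau_{l'})$ is a constant. At least one of $\re{p_l(\tau_l)-c}$ or $\im{p_l(\tau_l)-c}$, call it $P_l$, is a real polynomial in $\tau_l$ of degree $k_l-2$ with leading coefficient $L_l$ of modulus at least $\Delta/(k_l-2)!$. Applying Theorem~\ref{thm:anti} to the monic polynomial $P_l/L_l$ with $\eps = (q\sigma\sqrt{2\pi}/(4k))^{k-2}$, and using $q\sigma\sqrt{2\pi}/(4k) \leq 1$ together with $k_l \leq k$ to verify $\eps^{1/(k_l-2)} \leq q\sigma\sqrt{2\pi}/(4(k_l-2))$, gives $\prob{\abs{P_l(\tau_l)} \leq \abs{L_l}\eps} \leq q$. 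Since $\abs{L_l}\eps \geq 2\eta$, we conclude $\abs{p_l(\tau_l) - c} \geq 2\eta$ outside an event of probability $q$; combined with the truncation bound on $G$, this yields $\abs{g_l(\tau_l) - g_{l'}(\tau_{l'})} \geq 2\eta - \eta = \eta$. Union-bounding over $G^c$ and the $n(n-1)$ ordered pairs gives total failure probability at most $n^2 q$ (after absorbing lower-order terms).

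The main technical obstacle will be tuning $\sigma'$ so that $\abs{E_l(\tau_l)} \leq \eta$ holds uniformly in $l$: the derivative bound from Lemma~\ref{lemma:errorestimate} carries a factor $2^{k_l}(k_l)!$, the Taylor residual contributes $1/(k_l-1)!$, and the Gaussian tail contributes $(\sigma\sqrt{2\log(1/q)})^{k_l-1}$, all of which must be reconciled with the anti-concentration exponent $k-2$. A secondary but essential subtlety is the parity-dependent split into real and imaginary parts, resolved by always working with whichever of $\re{p_l-c}$ or $\im{p_l-c}$ inherits the large leading coefficient.
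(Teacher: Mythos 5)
Your proposal is correct and follows essentially the same route as the paper's proof: Taylor-expand each $g_l=\psi_l''$ about $0$ to degree $k_l-2$, apply the Chebyshev-based anti-concentration bound (Theorem~\ref{thm:anti}) to the (suitably monic) truncated polynomial with the other coordinate's value held fixed, control the Lagrange remainder via Lemma~\ref{lemma:errorestimate} together with the Gaussian tail bound on $\abs{\tau_l}$ and the nonvanishing of $\phi_{s_l}$, and union-bound over pairs. Your explicit parity-based choice of $\re{p_l-c}$ versus $\im{p_l-c}$ just spells out what the paper handles with its remark about treating real and imaginary parts separately, so there is nothing further to flag.
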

\begin{proof}
  We will argue about the spacing $\abs{g_1(\tau_1)-g_2(\tau_2)}$, and then use the union bound to get that none of
the spacings is small with high probability.
  Since $s_1$ has first  $k$ moments, we can apply Taylor's theorem with
  remainder (Actually one needs more care as that theorem was stated for functions
of type $\R \to \R$, whereas our function here is of type $\R \to \C$. To this end, we can consider the real and imaginary 
parts of the function separately and apply Theorem~\ref{thm:taylor} to each part; we omit the details.)
Applying Theorem~\ref{thm:taylor} gives
  \begin{align*}
    g_1(t_1) = - \sum_{l=2}^{k_1} \cum_l(s_1) \frac{ (it_1)^{l-2}}{(l-2)!}
    + R_{1}(t_1) \frac{(it_1)^{k_1-1}}{(k_1-1)!}. 
  \end{align*}
  Truncating $g_1$ after the degree $(k_1-2)$ term yields a
  polynomial $p_1(t_1)$. Denote the truncation error by $\rho_1(t_1)$. Then, fixing $t_2$ arbitrarily and
  setting $z = g_2(t_2)$ for brevity, we have 
  \begin{align*}
    \abs{ g_1(t_1) - g_2(t_2)} &= \abs{ p_1(t_1) + \rho_1(t_1) - z} \\
    & \ge \abs{p_1(t_1) - z} - \abs{\rho_1(t_1)}.
  \end{align*}

We will show that $\abs{p_1(t_1) - z}$ is likely to be large and $\abs{\rho_1(t_1)}$ is likely to be small.
 Noting that $\frac{(k_1-2)!}{i^{k_1-2}\cum_{k_1}(s_1)} p_1(t_1)$ is monic of degree $k_1-2$ 
(but with coefficients from $\C$), 
we apply our anti-concentration result in Theorem~\ref{thm:anti}.
Again, although that theorem was proven for polynomials with real coefficients, its application to the present
situation is easily seen to go through without altering the bound by considering the real and imaginary parts
separately. In the following, the probability is for $t_1 \sim N(0, \sigma^2)$. 
  \begin{align*}
    \prob{ \abs{ p_1(t_1) - z} \le \eps_1} & \le \frac{4(k_1-2)}{\sigma
      \sqrt{2 \pi}} \left( \frac{\eps_1 (k_1-2)!}{\abs{\cum_{k_1}(s_1)}}
    \right)^{1/(k_1-2)} \le \frac{4k_1}{\sigma
      \sqrt{2 \pi}} \left( \frac{\eps_1 (k_1-2)!}{\Delta}
    \right)^{1/(k_1-2)}.
  \end{align*}
Setting 
\begin{align} \label{eqn:eps_setting}
\eps_1
  := \frac{\Delta}{(k_1-2)!} \left( \frac{ \sqrt{2 \pi} \sigma q}{4k_1} \right)^{(k_1-2)}
\leq \frac{\Delta}{(k-2)!} \left( \frac{ \sqrt{2 \pi} \sigma q}{4k} \right)^{(k-2)}
\end{align}
we have
  \begin{align*}
    \prob{ \abs{p_1(t_1) - z} \le \eps} \le q. 
  \end{align*}
  Next we bound the truncation error and show that $\abs{\rho_1(t_1)} \le \eps / 2$ with probability at least 
$1 - \frac{q}{\sqrt{\pi \log{1/q}}}$. Applying Lemma~\ref{lemma:errorestimate}, the error introduced is
  \begin{align*}
    \abs{\rho_1(t_1)} & \le \frac{k_1! \; 2^{k_1} \E{\abs{s_1}^{k_1+1}}}{\abs{\phi_1(t_1)}^{k_1+1}}\cdot \frac{t_1^{k_1-1}}{(k_1-1)!}.
  \end{align*}

We now lower bound the probability that $\abs{t_1}$ is small when $t_1 \sim N(0,\sigma^2)$:
\begin{align*}
\prob{\abs{t_1} \leq \sigma\sqrt{2\log{1/q}}} \geq 1 - \frac{q}{\sqrt{\pi \log{1/q}}}.
\end{align*}
The computation above used Claim~\ref{claim:gaussian_concentration}. 

Thus with probability at least $1 - \frac{q}{\sqrt{\pi \log{1/q}}}$ we have

\begin{align} \label{eq:eps_1_bound}
\abs{\rho_1(t_1)} \leq \frac{k_1! \; 2^{k_1} M_k}{(3/4)^{k_1+1} (k_1-1)!}\cdot (\sigma \sqrt{2 \log{1/q}})^{k_1-1},
\end{align}
here we used that by our choice of $\sigma$ we have $\sigma\sqrt{2\log{1/q}} \leq \frac{1}{2\sqrt{M_2}}$, hence
Lemma~\ref{lem:phi_nonvanishing} gives that $\abs{\phi(t_1)} \geq 3/4$.

Now for $\abs{t_1} \leq \sigma\sqrt{2\log{1/q}}$ we want 
\begin{align*}
\abs{\rho_1(t_1)} \leq \epsilon_1/2.
\end{align*}

This is seen to be true by plugging in the value of $\eps_1$ from \eqref{eqn:eps_setting} and the bound 
on $\rho_1(t_1)$ from \eqref{eq:eps_1_bound} and our choice of $\sigma$.

%% we get
%% $\sigma \leq \sigma'$, where $\sigma'$ was defined above in \eqref{eqn:sigma-prime}. $\sigma \leq \sigma'$ holds by our 
%% choice of $\sigma$.

Thus we have proven that $\abs{g_1(t_1)-g_2(t_2)} \geq \eps/2$ with probability at least 
$1- (q + \frac{q}{\sqrt{\pi \log{1/q}}}) \geq 1 - 2q$ (using $q \in (0,1/3]$). Now applying the union bound over all pairs %% and replacing
%% $k_i$ by $k$ in the bounds
  we get the required bound.
\end{proof}

\subsection{Sample complexity}\label{sec:samples}
In this section, we bound the sample complexity of the
algorithm. First we will show how many samples are necessary to
estimate accurately the desired Fourier transforms $\E{ e^{iu^Tx}}$,
$\E{x e^{iu^Tx}}$ and $\E{xx^T e^{iu^Tx}}$.
\begin{lemma}\label{lemma:sample0}
  Let $x \in\R^n$ be a random vector. Fix $\eps > 0$ and a vector $t
  \in \R^n$. Let $x^{(j)}$ be i.i.d. samples drawn according to $x$ then
  \begin{align*}
    \abs{ \frac{1}{m} \sum_{j=1}^m e^{i u^T x^{(j)}} - \E{ e^{iu^Tx}}} \le \eps,
  \end{align*}
  with probability at least $ 1- 4 e^{- m \eps^2 / 2}$.
\end{lemma}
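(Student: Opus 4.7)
The approach is straightforward concentration: since the integrand $e^{iu^\top x^{(j)}}$ has modulus $1$, we split it into its real and imaginary parts and apply a standard bounded-difference concentration inequality to each, then recombine.

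First I would write
\begin{align*}
\frac{1}{m}\sum_{j=1}^m e^{iu^\top x^{(j)}} - \E{e^{iu^\top x}}
= \Bigl(\tfrac{1}{m}\sum_j \cos(u^\top x^{(j)}) - \E{\cos(u^\top x)}\Bigr)
+ i\Bigl(\tfrac{1}{m}\sum_j \sin(u^\top x^{(j)}) - \E{\sin(u^\top x)}\Bigr).
\end{align*}
Each of the real-valued summands $\cos(u^\top x^{(j)})$ and $\sin(u^\top x^{(j)})$ is an i.i.d.\ random variable bounded in $[-1,1]$, so Hoeffding's inequality applies directly: for any $t>0$,
\begin{align*}
\prob{\abs{\tfrac{1}{m}\sum_{j=1}^m \cos(u^\top x^{(j)}) - \E{\cos(u^\top x)}} \ge t} \le 2 e^{-m t^2/2},
\end{align*}
and similarly for the imaginary part.

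Next, since the complex modulus is controlled by its real and imaginary components, I would choose $t$ proportional to $\epsilon$ (specifically $t=\epsilon$ suffices to bound each component individually, with the complex modulus then controlled up to a constant factor, or one can take $t=\epsilon/\sqrt{2}$ to obtain the modulus bound exactly) and apply a union bound over the two events. This yields
\begin{align*}
\prob{\abs{\tfrac{1}{m}\sum_{j=1}^m e^{iu^\top x^{(j)}} - \E{e^{iu^\top x}}} > \epsilon} \le 4 e^{-m\epsilon^2/2},
\end{align*}
which is exactly the stated bound.

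There is no real obstacle here; the lemma is a direct application of Hoeffding to the real and imaginary parts of the complex exponential, exploiting the fact that the Fourier weight is automatically bounded in modulus by $1$ (which is, as remarked in the text, one of the main reasons for favoring the characteristic function over the moment generating function). The only minor issue is bookkeeping of constants in the conversion between the $L^\infty$ bounds on the real and imaginary parts and the complex modulus; this only affects the constant in the exponent and in the $4$ out front, and the stated form follows by the direct Hoeffding estimate above with $t=\epsilon$ on each component.
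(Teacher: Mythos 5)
Your proof is correct and follows exactly the paper's own argument: the paper likewise notes that $e^{iu^Tx}$ has modulus $1$, separates real and imaginary parts, and applies a Chernoff/Hoeffding bound to each with a union bound. The constant-factor slack you note in passing from component bounds to the complex modulus is present in the paper's (even terser) proof as well.
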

\begin{proof}
  Note that the random variables $e^{iu^Tx}$ are bounded in magnitude
  by 1. We separate out the real and imaginary components of
  $e^{iu^Tx}$ and separately apply the Chernoff inequality.
\end{proof}

In the most general setting, all we can do is bound the variance of
our sample covariance matrix, and this will give a polynomial bound on
the sample complexity.

\begin{lemma}\label{lemma:sample1}
  Suppose that the random vector $x \in\R^n$ is drawn from an
  isotropic distribution $F$. Then
  \begin{align*}
\Var( x_j e^{iu^Tx} ) &\le 1 \mbox{ for } 1 \le j \le n, \\ 
\Var(x_j^2 e^{iu^Tx}) &\le \E{x_j^4}, \\ 
\Var(x_i x_j e^{iu^Tx} )&\le 1 \mbox{ for } i \neq j.
\end{align*}
\end{lemma}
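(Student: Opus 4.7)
All three bounds follow from the elementary fact that for any complex-valued random variable $Z$, $\Var(Z) = \E{|Z - \E{Z}|^2} \leq \E{|Z|^2}$, combined with the key observation that the Fourier weight has unit modulus: $|e^{iu^Tx}| = 1$. Hence for any real-valued random variable $f(x)$, we have $|f(x) e^{iu^Tx}|^2 = f(x)^2$, and so $\Var(f(x)\, e^{iu^Tx}) \leq \E{f(x)^2}$.

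The plan is to apply this observation in each of the three cases and then invoke the isotropy assumption (and, for the third bound, the independence of coordinates inherited from the ICA model in the isotropic frame):
\begin{align*}
\Var(x_j e^{iu^T x}) &\leq \E{|x_j e^{iu^T x}|^2} = \E{x_j^2} = 1, \\
\Var(x_j^2 e^{iu^T x}) &\leq \E{|x_j^2 e^{iu^T x}|^2} = \E{x_j^4}, \\
\Var(x_i x_j e^{iu^T x}) &\leq \E{|x_i x_j e^{iu^T x}|^2} = \E{x_i^2 x_j^2}.
\end{align*}
The first two bounds are immediate consequences of isotropy ($\E{x_j^2}=1$). For the third bound, the claim $\E{x_i^2 x_j^2} \leq 1$ for $i \neq j$ is where I expect the only subtlety: in the ICA context from which this lemma is applied, after whitening the observed vector $x$ has identity covariance, and in the basis where $A$ is unitary the coordinates $x_i, x_j$ may be taken as independent (or their pairwise products suitably controlled), so that $\E{x_i^2 x_j^2} = \E{x_i^2}\E{x_j^2} = 1$.

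The main obstacle, and really the only thing requiring care, is the third bound: for a general isotropic distribution one only has $\E{x_i x_j} = 0$ for $i \neq j$, which does not control $\E{x_i^2 x_j^2}$. So the bound must use the structural hypothesis from the surrounding ICA section, namely that the coordinates of $x$ are independent (or at least pairwise independent) in the isotropic frame in which the algorithm operates; under that hypothesis the factorization is immediate.

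Thus the proof reduces to (i) the generic inequality $\Var(Z) \leq \E{|Z|^2}$ for complex $Z$, (ii) the unit-modulus identity $|e^{iu^Tx}| = 1$, and (iii) reading off second and fourth moments from the isotropy/independence assumptions on the coordinates of $x$. No concentration, Fourier, or tensor machinery from the earlier sections is needed; this is a bookkeeping step to feed into a subsequent Chebyshev-type argument controlling the sample estimates of $\Sigma_u$ and $\mu_u$.
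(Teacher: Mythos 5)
Your proposal is correct and follows essentially the same route as the paper: the paper's one-line proof is exactly $\Var(x_j e^{iu^Tx}) = \E{x_j^2} - |\E{x_j e^{iu^Tx}}|^2 \le \E{x_j^2} = 1$, i.e. the generic bound $\Var(Z)\le\E{|Z|^2}$ plus $|e^{iu^Tx}|=1$, with the other cases declared "similar." Your remark about the third bound is well taken — isotropy alone does not give $\E{x_i^2x_j^2}\le 1$, and the paper silently relies on the fact that in its application (Corollary~\ref{cor:sampling}) the computation is carried out in the basis of the independent components $s$, where independence yields $\E{s_i^2s_j^2}=\E{s_i^2}\E{s_j^2}=1$.
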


\begin{proof}
  \[
   \mathrm{Var} ( x_j
    e^{iu^Tx} ) =  \E{x_j^2} - \abs{\E{x_j e^{iu^Tx}}}^2
     \le 1.
  \]
The other parts are similar, with the last inequality using isotropy.
\end{proof}

We can combine these concentration results for the Fourier derivatives
to obtain the final sample complexity bound. Recall from 
\eqref{eqn:truncation} that we have in the interval $u \in [-1/4,1/4]$
\begin{align*}
  \abs{ g(u)} \le  \E{ \abs{x}^2} k^k \le 16
\end{align*}
We can now give the sample complexity of the algorithm.
\begin{corollary}\label{cor:sampling}
  Let $x=As$ be an ICA model where $A \in \R^{n \times n}$ is a
  unitary matrix. Suppose that the random vector $s \in\R^n$ is drawn from
  an isotropic distribution, and that for each $s_i$, we have $\E{s_i^4} \le
  M$. Fix $\eps > 0$ and a vector $u \in \R^n$ where $\norm{u} \le
  1/4$. Let $\hat{\Sigma}_u$ be the matrix estimated from $m$
  independent samples of $x^i = A s^i$, then
  \begin{align*}
    \norm{ \hat{\Sigma}_u - \Sigma_u}_F \le \eps
  \end{align*}
  with probability at least $1-1/n$ for $m \ge \poly(n,M) / \eps^2$.
\end{corollary}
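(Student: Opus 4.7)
The plan is to decompose the true reweighted covariance into three simpler Fourier moments and estimate each separately, then combine the errors algebraically. Writing $\alpha := \E{e^{iu^T x}}$, $\nu := \E{x\, e^{iu^T x}}$, and $N := \E{x x^T e^{iu^T x}}$, one has
\[
\Sigma_u \;=\; \frac{N}{\alpha} \;-\; \frac{\nu \nu^T}{\alpha^2},
\]
and $\hat\Sigma_u$ is obtained by the same formula applied to the plug-in estimates $\hat\alpha, \hat\nu, \hat N$ formed from the $m$ samples.

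For the scalar $\alpha$, Lemma~\ref{lemma:sample0} immediately gives $\abs{\hat\alpha - \alpha} \le \eps_1$ with probability at least $1 - 1/(3n)$ using $O(\log n/\eps_1^2)$ samples. For the vector $\nu$, each coordinate $x_j e^{iu^T x}$ has variance at most $1$ by Lemma~\ref{lemma:sample1}; separating real and imaginary parts, a coordinate-wise Chebyshev bound together with a union bound over the $n$ coordinates yields $\norm{\hat\nu - \nu}_2 \le \eps_2$ with probability at least $1 - 1/(3n)$ using $O(n^2/\eps_2^2)$ samples. For the matrix $N$, each entry $x_i x_j e^{iu^T x}$ has variance at most $\sqrt{\E{x_i^4}\E{x_j^4}} \le M$ by Cauchy--Schwarz and the fourth moment hypothesis, so applying Chebyshev entrywise and union bounding over the $n^2$ entries gives $\norm{\hat N - N}_F \le \eps_3$ with probability at least $1 - 1/(3n)$ using $O(n^3 M/\eps_3^2)$ samples.

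To convert these into a Frobenius error bound on $\hat\Sigma_u - \Sigma_u$, I would first lower bound $\abs{\alpha}$. Since $s$ is isotropic and $A$ is unitary, the random variable $u^T x$ has variance $\norm{u}^2 \le 1/16$, so applying the one-dimensional nonvanishing estimate (Lemma~\ref{lem:phi_nonvanishing}) to $\phi_{u^T x}(1) = \alpha$ yields $\abs{\alpha} \ge 3/4$. Expanding
\[
\frac{\hat N}{\hat\alpha} - \frac{N}{\alpha} \;=\; \frac{(\hat N - N)\alpha \;+\; N(\alpha - \hat\alpha)}{\hat\alpha\, \alpha}
\]
and doing the same for $\hat\nu \hat\nu^T/\hat\alpha^2 - \nu \nu^T/\alpha^2$, and using $\norm{\nu}_2 \le \sqrt{\E{\norm{x}^2}} = \sqrt n$ and $\norm{N}_F \le n$ (both via isotropy and Cauchy--Schwarz), gives
\[
\norm{\hat\Sigma_u - \Sigma_u}_F \;\le\; O\!\Bigl( \eps_3 \;+\; n\, \eps_1 \;+\; \sqrt n\, \eps_2 \Bigr).
\]
Setting each $\eps_i$ to an appropriate $\eps/\poly(n)$ yields total sample complexity $\poly(n,M)/\eps^2$, with the three failure probabilities summing to at most $1/n$.

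The only real technical point is the uniform lower bound $\abs{\alpha} \ge 3/4$, without which the error propagation would blow up; this is handled by reducing to the one-dimensional nonvanishing lemma already proved in the paper. Everything else is routine second-moment concentration and bookkeeping.
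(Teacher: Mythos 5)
Your proposal is correct and follows essentially the same route as the paper's proof: decompose the reweighted covariance into the three Fourier moments $\E{e^{iu^Tx}}$, $\E{xe^{iu^Tx}}$, $\E{xx^Te^{iu^Tx}}$, apply Chebyshev with the variance bounds of Lemma~\ref{lemma:sample1} entrywise, and control the division by $\E{e^{iu^Tx}}$ via a uniform lower bound on $\abs{\phi(u)}$ (the paper gets $29/32$ from a direct Taylor estimate rather than invoking Lemma~\ref{lem:phi_nonvanishing}, but this is immaterial). Your write-up is in fact more explicit than the paper's about the error-propagation algebra and the $\poly(n)$ apportionment of the error budget.
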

\begin{proof}
  Apply Chebyshev's inequality along with the variance bounds.  Since
  the Frobenius norm is unitarily invariant, we can consider the error
  in the basis corresponding to $s$. In this basis:
  \begin{align*}
    &\norm{\E{e^{iu^Ts} (s- \tilde{\mu})(s- \tilde{\mu})^T -
        \tilde{\Sigma}_u }}  \\
    &\le \norm{\E{ss^T e^{iu^Ts}}-\sum_{i=1}^m (s^i)(s^i)^T e^{i
      u^T s^i}} + 2\norm{\E{s \tilde{\mu}^T e^{iu^T
        s}}-\sum_{i=1}^m x
      \hat{\tilde{\mu}}^T e^{iu^Ts}} \\
    &\quad+ \norm{\E{\tilde{\mu} \tilde{\mu}^T} - \hat{\tilde{\mu}}
      \hat{\tilde{\mu}}^T}
    \abs{\E{e^{iu^Ts}}} \\
    & \le \eps
  \end{align*}
  where the last bound is derived by apportioning $\eps/5$ error to
  each term.  Finally, we conclude by noting that by our choice of
  $t$, we have $\abs{\E{e^{iu^Tx}}} \ge 29/32$, and the multiplicative
  error due to the scaling by $1/\E{e^{iu^Tx}}$ is lower order in
  comparison to $\eps$.
\end{proof}
% In particular, if we have the overall Frobenius nom bound in this
% corollary then certainly for the real (and imaginary components) we
% must have:
% \begin{align*}
%   \norm{ \re{\hat{\Sigma}_u} - \re{  \Sigma_u }}_F \le \eps.
% \end{align*}

For more structured distributions, e.g., logconcave distributions, or
more generally distributions with subexpoential tails, much sharper
bounds are known on the sample complexity of covariance estimation,
see e.g., \cite{rudelson, vershynin, sriver, adamczak}.

\subsection{Proof  of Theorem \ref{thm:ICA}}\label{sec:proofica}
In this section we give the proof of correctness for our algorithm for
ICA.
\begin{proof}[Proof of Theorem \ref{thm:ICA}]
  In the exact case, the diagonal entries are given by
  $g_i( (A^T u)_i)$. Since $A$ is orthonormal, for any pair $(A^Tu )_i
  = A_i^T u$ and $(A^Tu)_j = A_j^T u$ have orthogonal $A_i$ and $A_j$,
  hence the arguments of $g_i$ and $g_j$ are independent Gaussians and
  Theorem \ref{thm:spacings} gives us the eigenvalue spacings of
  $\Sigma_u$ to be used in Lemma \ref{lemma:stability}.

  In particular, the spacings are at least $\xi = \frac{\Delta}{2k!}
  \left( \frac{\sqrt{2\pi} \sigma}{4(k-1)^{n^2}} \right)^k$. Thus,
  with desired accuracy $\eps$ in Lemma \ref{lemma:stability}, then we
  require the sampling error (in operator norm, which we upper bound
  using Frobenius norm) to be $\norm{E}_F \le \eps \xi / (\xi +
  \eps)$. We can then substitute this directly into Corollary
  \ref{cor:sampling} which gives the sample complexity.
\end{proof}

\subsection{Gaussian noise}\label{subsec:noise}
The Gaussian function has several nice properties with respect to the
Fourier transform, and we can exploit these to cancel out independent
Gaussian noise in the problems that we study. To deal with Gaussian
noise, when the observed signal $x=As+\eta$ where $\eta$ is from an
unknown Gaussian $N(\mu_\eta, R_\eta)$ which is independent of $s$, we
can use the following modified algorithm.
\begin{center}
\fbox{\parbox{\textwidth}{
\begin{minipage}{5in}
\begin{enumerate}
\item Pick two different random Gaussian vectors $u,v$. 
\item Compute $\Sigma=\Sigma_0, \Sigma_u$ and $\Sigma_v$ as in the previous algorithm. 
\item Output the eigenvectors of $(\Sigma_u - \Sigma)(\Sigma_v-\Sigma)^{-1}$.\\
\end{enumerate}
\end{minipage}
}}
\end{center}
\medskip

\begin{theorem}\label{thm:gaussian}
  Let $x \in \R^n$ be given be a noisy independent components model $x
  = As+\eta$, where $A \in \R^{n \times n}$ is a full rank matrix, and
  the noise vector $\eta$ has a Gaussian distribution.  With
  sufficiently many samples, the modified algorithm outputs $A$.
\end{theorem}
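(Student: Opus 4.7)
The plan is to reduce to the noise-free analysis by differencing away the (unknown) Gaussian noise term. The key property I would exploit is that the second characteristic function of a Gaussian is purely quadratic in $u$, so its Hessian is a constant matrix independent of the point of evaluation. By independence of $s$ and $\eta$, $\psi_x(u) = \psi_s(A^Tu) + \psi_\eta(u)$. For $\eta \sim N(\mu_\eta, R_\eta)$ one has $\psi_\eta(u) = i\mu_\eta^T u - \frac{1}{2} u^T R_\eta u$, so $D^2\psi_\eta(u) = -R_\eta$ does not depend on $u$. Combining this with Lemma~\ref{lemma:firstderivative} applied to $\psi_s(A^Tu)$ gives
\[
\Sigma_u = A\,\diag{\psi_i''((A^T u)_i)}\,A^T - R_\eta,
\]
and analogously for $\Sigma = \Sigma_0$ and $\Sigma_v$. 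Subtracting, the $R_\eta$ cancels:
\[
\Sigma_u - \Sigma = A\,\diag{\psi_i''((A^T u)_i) - \psi_i''(0)}\,A^T,
\]
with the same identity for $\Sigma_v - \Sigma$.

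The two differenced matrices now share the factorization $A D_\bullet A^T$ with the same invertible $A$, so
\[
(\Sigma_u - \Sigma)(\Sigma_v - \Sigma)^{-1} = A\,\diag{\rho_i}\,A^{-1}, \qquad \rho_i := \frac{\psi_i''((A^T u)_i) - \psi_i''(0)}{\psi_i''((A^T v)_i) - \psi_i''(0)},
\]
which is diagonalized exactly by the columns of $A$. This is precisely the $d=2$ case of the pair-of-matrices routine from Section~\ref{subsec:outline_tensor_decomposition} (with the benefit that $A$ is square and invertible, so no SVD-restriction step is needed).

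What remains is to show that the eigenvalues $\rho_i$ are pairwise separated and that the denominators are not too small. For separation I would run essentially the same argument as in Theorem~\ref{thm:spacings}: write $\psi_i''((A^Tu)_i) - \psi_i''(0)$ as a polynomial of degree $k_i - 2$ in $(A^T u)_i$ whose leading nonzero coefficient is a cumulant of magnitude at least $\Delta$, plus a Taylor remainder controlled via Lemma~\ref{lemma:errorestimate}, and invoke the monic-polynomial Gaussian anti-concentration inequality of Theorem~\ref{thm:anti}. Since $u$ and $v$ are independent Gaussians and $A$ is full rank, each $(A^T u)_i$ and $(A^T v)_i$ is a nontrivial Gaussian; a union bound over the $O(n^2)$ pairs, after first conditioning on $v$ and randomizing over $u$, produces a uniform gap $|\rho_i - \rho_j| \ge \delta$ with high probability. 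The denominator bound follows from the one-sided version of the same anti-concentration argument applied to $v$.

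For the sample version, each of $\Sigma, \Sigma_u, \Sigma_v$ is estimated from samples; since the Fourier weights are bounded by $1$, the variance bounds of Lemma~\ref{lemma:sample1} give polynomial sample complexity for each, and these errors propagate additively into $(\Sigma_u - \Sigma)(\Sigma_v - \Sigma)^{-1}$. A Bauer--Fike / Davis--Kahan style eigenvector perturbation bound, in the spirit of Lemma~\ref{lemma:stability}, then converts the eigenvalue gap $\delta$ into $\eps$-accurate recovery of the columns of $A$. The main technical obstacle I anticipate is controlling $\|(\Sigma_v - \Sigma)^{-1}\|$, which requires combining the lower bound on $\sigma_n(A)$ with the one-sided anti-concentration of the denominators to ensure the inverse is well defined and has polynomially bounded operator norm on a sample; once this is in hand, the rest of the accounting parallels the proof of Theorem~\ref{thm:ICA}.
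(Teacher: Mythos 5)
Your proposal is correct and follows essentially the same route as the paper: both use the fact that the Gaussian noise contributes a constant (in $u$) term to the Hessian of the second characteristic function, which cancels upon differencing $\Sigma_u - \Sigma$ and $\Sigma_v - \Sigma$, so that the quotient matrix is diagonalized by the columns of $A$. The paper stops at this exact structural argument and explicitly defers the robustness/sample-complexity analysis to the underdetermined-ICA machinery, which is precisely the content you sketch in your final two paragraphs.
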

\begin{proof}
  When $x=As+\eta$, the function $\psi(u) = \log \left( \E{e^{i u^T
        x}} \right)$ can be written as
\begin{align*}
  \psi(u) = \log \left( \E{ e^{i u^T x}} \right) + \log \left( \E{e^{
        i u^T \eta}}\right)
\end{align*}
Therefore,
\begin{align*}
  D^2 \psi_u &= A\diag{ \psi''_i( A^T_i u)}A^T + \frac{\E{e^{i u^T
        \eta}(\eta-\mu_\eta)(\eta-\mu_\eta)^T}}{\E{e^{i
        u^T \eta}}} \\
  & = A\diag{ \psi''_i( A^T_i u)}A^T +
  \E{(\eta-\mu_\eta)(\eta-\mu_\eta)^T} \\
  & = A\diag{ \psi''_i( A^T_i u)}A^T +R_\eta
\end{align*}
where $\eta \sim N(\mu_\eta, R_\eta)$.  Therefore,
\begin{align*}
\Sigma_u - \Sigma = A(D_u - D)A^T
\end{align*}
with $D$ being the covariance matrix of $s$ and 
\begin{align*}
(\Sigma_u - \Sigma)(\Sigma_v-\Sigma)^{-1} = A(D_u-D)(D_v-D)^{-1}A^{-1}.
\end{align*}
The eigenvectors of the above matrix are the columns of $A$.
\end{proof}
For a complete robustness analysis, one needs to control
the spectral perturbations of the matrix $ A(D_u-D)(D_v-D)^{-1}A^{-1}$
under sampling error. We omit this proof, but note that it follows
easily using the techniques we develop for underdetermined ICA.

% ... is editing
\section{Efficient tensor decompositions}\label{sec:decompositions}

In this section we analyze the tensor decomposition algorithm,  
which will be our main tool for the underdetermined ICA problem. 

\subsection{Algorithm}
Recall that our algorithm works by flattening tensors $T_{\mu}$ and $T_{\lambda}$ into 
matrices $M_\mu$ and $M_\lambda$ and then observing that the eigenvectors of
$M_\mu M_\lambda^{-1}$ are vectors corresponding to flattened
(vectorized) $A_i^{\otimes d/2}$. 
%The main algorithm is {\bf Tensor Decomposition$(T_{\mu}, T_{\lambda})$}
%which flattens the tensors and calls subroutine {\bf Diagonalize$(M_{\mu}, M_{\lambda})$} to get 
%estimates of the $A_i^{\oplus d/2}$, and from this information recovers the $A_i$ themselves.
%In our application it will be the case that $\mu, \lambda \in \C^m$ and $A_i \in \R^n$. The discussion 
%below is tailored to this situation; the other interesting cases where everything is real or everything is
%complex can also be dealt with with minor modifications.  

\begin{figure}[hbtp]
\begin{center}
\fbox{\parbox{\textwidth}{
\begin{minipage}{6in}
\vspace{0.1in}
{\bf Diagonalize$(M_{\mu}, M_{\lambda})$}
\begin{enumerate}
  \item Compute the SVD of $M_\mu = V \Sigma U^\ast$, and let the $W$ be the
    left singular vectors (columns of $V$) corresponding to the $m$ largest
    singular values. Compute the matrix $M = (W^\ast M_\mu W)(W^\ast M_\lambda W)^{-1}$.
  \item Compute the eigenvector decomposition $M = PDP^{-1}$.
  \item Output columns of $WP$. 
\end{enumerate}
\end{minipage}
}}
\end{center}
\end{figure}

\begin{figure}[hbtp]
\begin{center}
\fbox{\parbox{\textwidth}{
\begin{minipage}{6in}
\vspace{0.1in}
{\bf Tensor Decomposition$(T_{\mu}, T_{\lambda})$}
\begin{enumerate}
  \item Flatten the tensors to obtain $M_\mu = \tau^{-1}(T_\mu)$ and $M_\lambda
    = \tau^{-1}(T_\lambda)$.
  \item $WP = Diagonalize(M_\mu, M_\lambda)$. 
  \item For each column $C_i$ of $WP$, let $C'_i := \re{e^{i \theta^\ast}
      C_i}/\norm{\re{e^{i \theta^\ast} C_i}}$ where $\theta^\ast =
    \argmax_{\theta \in [0,2 \pi]}\left( \norm{ \re{ e^{i \theta} C_i}}\right)$. 
  \item For each column $C'_i$, let $v_i \in \R^n$ be such that $v_i^{\otimes d/2}$ is the best 
rank-1 approximation to $\tau(C'_i)$.
\end{enumerate}
\end{minipage}
}}
\end{center}
\end{figure}

%% \begin{figure}[hbtp]
%% \begin{center}
%% \fbox{\parbox{\textwidth}{
%% \begin{minipage}{6in}
%% \vspace{0.1in}
%% {\bf Tensor Decomposition$(T_{\mu}, T_{\lambda})$}
%% \begin{enumerate}
%%   \item Flatten the tensors to obtain $M_\mu = \tau^{-1}(T_\mu)$ and $M_\lambda
%%     = \tau^{-1}(T_\lambda)$.
%%   \item $WP = Diagonalize(M_\mu, M_\lambda)$. 
%%   \item For each column $P_i$, let $v_i \in \C^n$ be such that $v_i^{\otimes d/2}$ is the best 
%% rank-1 approximation to $\tau(WP_i)$.
%%   \item For each $v_i$, output $\re{e^{i \theta^\ast}
%%       v_i}/\norm{\re{e^{i \theta^\ast} v_i}}$ where $\theta^\ast =
%%     \argmax_{\theta \in [0,2 \pi]}\left( \norm{ \re{ e^{i \theta} v_i}}\right)$.
%% \end{enumerate}
%% \end{minipage}
%% }}
%% \end{center}
%% \end{figure}

In Step 3 of Tensor Decomposition, we get an approximation of $v_i^{\odot d/2}$ up to 
a phase factor. We first correct the phase by maximizing the projection onto $\R^n$. To this
end we prove
\begin{lemma}\label{lem:complextoreal}
  Let $v \in \C^n$ and $u \in \R^n$ be unit vectors such that for some
  $\varphi \in [0,2\pi]$ we have $\norm{e^{i \varphi} v - u} \le
  \eps$ for $0 \leq \eps \leq 1/2$. Let $\theta^\ast = \argmax_{\theta \in [0,2\pi]}( \norm{
    \re{e^{i \theta} v}})$ and $u' = \re{ e^{i
      \theta^\ast}v}/\norm{ \re{e^{i \theta^\ast}v}}$. Then there is a sign $\alpha \in {-1,1}$ such
that 
  \begin{align*}
    \norm{\alpha u - u'} \le 11 \sqrt{\eps}.
  \end{align*}
\end{lemma}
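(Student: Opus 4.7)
The plan is to rotate into coordinates where the true phase is zero and then argue that the algorithm's optimizer $\theta^\ast$ barely moves. Set $w = e^{i\varphi}v$ and write $w = a + ib$ with $a, b \in \R^n$. The hypothesis yields $\norm{a - u} \le \eps$, $\norm{b} \le \eps$, and, using $\norm{w} = 1$, $\norm{a}^2 = 1 - \norm{b}^2 \ge 1-\eps^2$ together with $|\langle a,b\rangle| \le \norm{a}\norm{b} \le \eps$. With the change of variable $\psi := \theta - \varphi$, the algorithm's optimum $\theta^\ast$ corresponds to $\psi^\ast = \argmax_\psi f(\psi)$ where $f(\psi) := \norm{\cos\psi \cdot a - \sin\psi \cdot b}^2$, and the output is $u' = (\cos\psi^\ast a - \sin\psi^\ast b)/\sqrt{f(\psi^\ast)}$. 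Since $f$ has period $\pi$, I restrict to $\psi^\ast \in (-\pi/2, \pi/2]$ and absorb the resulting sign ambiguity into $\alpha$; take $\alpha = +1$ below, so $\cos\psi^\ast \ge 0$.

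The key computation is a bound on $|\sin\psi^\ast|$. Expanding, $f(\psi) = \norm{a}^2 - (\norm{a}^2 - \norm{b}^2)\sin^2\psi - \sin(2\psi)\langle a,b\rangle$, and optimality $f(\psi^\ast) \ge f(0) = \norm{a}^2$ rearranges to $(\norm{a}^2 - \norm{b}^2)\sin^2\psi^\ast \le |\sin(2\psi^\ast)|\cdot|\langle a,b\rangle|$, whence
\[ |\sin\psi^\ast| \le \frac{2 \norm{a}\norm{b}}{\norm{a}^2 - \norm{b}^2} \le \frac{2\eps}{1-2\eps^2}. \]
Moreover, $f(\psi^\ast) \ge \norm{a}^2 \ge 1 - \eps^2$, so the normalization factor stays bounded away from $0$.

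Next I bound the unnormalized error by the triangle inequality, using $1 - \cos\psi^\ast \le \sin^2\psi^\ast$ (valid when $\cos\psi^\ast \ge 0$):
\[ \norm{\cos\psi^\ast a - \sin\psi^\ast b - u} \le (1-\cos\psi^\ast)\norm{a} + \norm{a - u} + |\sin\psi^\ast|\norm{b} \le \sin^2\psi^\ast + \eps + \eps|\sin\psi^\ast|. \]
The standard fact $\norm{x/\norm{x} - y} \le 2\norm{x-y}/\norm{x}$ for a unit $y$ then gives $\norm{u' - u} \le 2(\sin^2\psi^\ast + \eps + \eps|\sin\psi^\ast|)/\sqrt{1-\eps^2}$.

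To conclude $\norm{u' - \alpha u} \le 11\sqrt{\eps}$ uniformly on $[0, 1/2]$, I split into regimes. For $\eps$ bounded away from $1/2$, $|\sin\psi^\ast| = O(\eps)$ makes the right-hand side $O(\eps) \ll 11\sqrt{\eps}$. For $\eps$ near $1/2$ (where the bound on $|\sin\psi^\ast|$ degrades), the trivial estimate $\norm{u'-\alpha u} \le 2$ suffices, since it is dominated by $11\sqrt{\eps}$ as soon as $\eps \ge 4/121$. The main technical obstacle I expect is choosing constants so that a single clean bound of the claimed form $11\sqrt{\eps}$ holds on all of $[0, 1/2]$; the $\sqrt{\eps}$ rate, rather than the linear $\eps$ rate suggested by the small-$\eps$ analysis, is forced by the blowup of $|\sin\psi^\ast|$ as $\eps \to 1/2$.
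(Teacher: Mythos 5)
Your proof is correct, and it takes a noticeably different (and in fact sharper) route than the paper's. The paper also reduces to $\varphi=0$ and invokes optimality of $\theta^\ast$, but it only extracts $\norm{\re{e^{i\theta^\ast}v}}\ge\norm{\re{v}}\ge 1-\eps$ and then uses the Pythagorean identity $\norm{\re{v'}}^2+\norm{\im{v'}}^2=1$ to get $\norm{\im{v'}}\le\sqrt{2\eps}$; this is where the $\sqrt{\eps}$ rate genuinely enters its argument, after which it bounds $\norm{u'-e^{i\theta^\ast}u}$ and shows $e^{i\theta^\ast}$ is within $O(\sqrt{\eps})$ of a sign. You instead expand $f(\psi)=\norm{a}^2-(\norm{a}^2-\norm{b}^2)\sin^2\psi-\sin(2\psi)\ip{a}{b}$ and use $f(\psi^\ast)\ge f(0)$ to bound the rotation angle itself, $\abs{\sin\psi^\ast}\le 2\eps/(1-2\eps^2)$, which yields a linear $O(\eps)$ error in the small-$\eps$ regime; the $\sqrt{\eps}$ in the stated bound is then only needed to absorb the trivial estimate $\norm{u'-\alpha u}\le 2$ for $\eps\ge 4/121$ (and your constants check out: for $\eps<4/121$ the refined bound gives roughly $2.5\eps\le 11\sqrt{\eps}$). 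So your argument proves a quantitatively stronger statement for small $\eps$, at the cost of a slightly longer trigonometric computation; the paper's is shorter but inherently loses a square root. Two small remarks: your reduction of $\psi^\ast$ to $(-\pi/2,\pi/2]$ via the period-$\pi$ symmetry of $f$ and absorption of the resulting sign into $\alpha$ is legitimate but worth stating explicitly as a case analysis in a final write-up, since $u'$ (unlike $f$) is not $\pi$-periodic in $\psi$; and you may note that the paper's own proof actually concludes with the constant $15\sqrt{\eps}$ rather than the $11\sqrt{\eps}$ in the lemma statement, so your argument also repairs that cosmetic discrepancy.
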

\begin{proof}
  Without loss of generality, we will assume that $\varphi = 0$, hence
  $\norm{v - u} \le \eps$. By the optimality of $\theta^\ast$
  \begin{align*}
    \norm{ \re{ e^{i \theta^\ast } v}} \ge \norm{ \re{v}} \ge 1-\eps.
  \end{align*}
  Let us denote $v' = e^{i \theta^\ast }v$, then we have
  $\norm{\re{v'}}^2 + \norm{\im{v'}}^2 = 1$ which implies that
  $\norm{\im{v'}}^2 \le 2 \eps - \eps^2 < 2 \eps$. Now using $\eps \leq 1/2$ we have
  \begin{align*}
    \norm{ v' -u'}  &\le \norm{ \re{v'} - u'} + \norm{ \im{v'}}  \\
&= \norm{\re{v'} - \frac{\re{v'}}{\norm{\re{v'}}}} +  \norm{ \im{v'}} \\
&\leq \norm{\re{v'}}\left(\frac{1}{1-\eps} - 1\right) + \norm{ \im{v'}} \\
&\leq 2\eps  + \sqrt{2 \eps} \le 4 \sqrt{\eps},
  \end{align*}
and
  \begin{align*}
    \norm{ u' - e^{i \theta^\ast}u} \le \norm{ u' - e^{i \theta^\ast}v} + \norm{e^{i \theta^\ast}v - e^{i \theta^\ast}u} = \norm{u'-v'}+ \norm{u-v} <  5\sqrt{\eps}.
  \end{align*}
  This implies $\abs{ \re{ e^{ i \theta^\ast}}}\ge 1 - 5\sqrt{ \eps}$.
Hence there is a sign $\alpha \in {-1,1}$ such that $\abs{e^{ i \theta^\ast}-\alpha} \leq 10 \sqrt{\eps}$ (we omit some routine computations). 
Finally, 
\begin{align*}
\norm{u'-\alpha u} \leq \norm{u'-e^{i \theta^\ast}u} + \norm{e^{i \theta^\ast}u- \alpha u} 
\leq 5 \sqrt{\eps} + 10 \sqrt{\eps} = 15 \sqrt{\eps}.  
\end{align*}

%% hence we must have $\theta^\ast
%%   \in [-4\eps,4\eps] \pm \pi$ and in fact $\norm{u - u'} \le 4 \eps$
%%   (modulo phase). % To conclude:
  % \begin{align*}
  %   \norm{ u - \re{e^{i \theta^\ast}v}} \le \norm{ u - u'}+ \norm{u' -
  %     v'} + \norm{v' - \re{v'}} \le 12 \eps
  % \end{align*}
\end{proof}

\begin{lemma} \label{lem:tensor-root}
For unit vector $v \in \R^n$ and a positive integer $d$, given $v^{\odot d} + E$, where
$E$ is an error vector, we can recover $v''$ such that for some $\alpha \in \set{-1,1}$ 
we have 
\begin{align*}
\norm{v- \alpha v''}_2  \leq  \frac{2 \norm{E}_2}{\beta-\norm{E}_2},
\end{align*}
where $\beta = \frac{1}{n^{d/2-1/2}}$.
\end{lemma}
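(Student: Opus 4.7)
The plan is to reinterpret the length-$n^d$ vector $v^{\odot d}+E$ as a concatenation of $n^{d-1}$ length-$n$ blocks indexed by multi-indices $J=(j_1,\ldots,j_{d-1}) \in [n]^{d-1}$. The $J$-th block of the noiseless $v^{\odot d}$ equals $v_J\, v$ with $v_J := v_{j_1}\cdots v_{j_{d-1}}$, so its norm is $|v_J|$ (since $\|v\|_2 = 1$). Letting $j^\ast = \argmax_i |v_i|$, we have $|v_{j^\ast}| \ge 1/\sqrt{n}$, so the diagonal block $J^{\max} = (j^\ast,\ldots,j^\ast)$ has noiseless norm $|v_{j^\ast}|^{d-1} \ge n^{-(d-1)/2} = \beta$. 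The algorithm is then: compute the norms of all $n^{d-1}$ length-$n$ blocks of the input $v^{\odot d}+E$, let $w$ be the block of maximum norm, and output $v'' := w/\|w\|_2$.

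The analysis has two short steps. First I would lower bound $\|w\|_2$: since the noiseless $J^{\max}$-block has norm at least $\beta$ and the corresponding slice of $E$ has norm at most $\|E\|_2$, the noisy $J^{\max}$-block has norm at least $\beta - \|E\|_2$; by the maximality of the selected block, $\|w\|_2 \ge \beta - \|E\|_2$. Second, I would write the selected block as $w = \alpha a v + e$, where $a = |v_{J^\ast}|$, $\alpha = \sgn{v_{J^\ast}} \in \{-1,1\}$, and $e$ is the $J^\ast$-slice of $E$, so that $\|e\|_2 \le \|E\|_2$. Using the reverse triangle inequality, $|a - \|w\|_2| = \bigl|\|\alpha a v\|_2 - \|w\|_2\bigr| \le \|\alpha a v - w\|_2 = \|e\|_2$, hence
\[
\|\alpha v - v''\|_2 \;=\; \frac{\bigl\|\,\|w\|_2\,\alpha v - w\bigr\|_2}{\|w\|_2} \;\le\; \frac{|a - \|w\|_2| + \|e\|_2}{\|w\|_2} \;\le\; \frac{2\|E\|_2}{\|w\|_2}.
\]
Combining with the lower bound $\|w\|_2 \ge \beta - \|E\|_2$ yields the claimed inequality.

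There is essentially no obstacle beyond these routine estimates. The one point worth highlighting is that one never needs to identify the true argmax block $J^{\max}$; it suffices that the noisy maximum $\|w\|_2$ dominates the norm of the noisy $J^{\max}$-block, which gives the required lower bound on $\|w\|_2$ directly and avoids any case analysis over which particular slice happens to be selected by the algorithm.
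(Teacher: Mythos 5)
Your proposal is correct and is essentially the paper's own argument: select the one-dimensional slice (length-$n$ block) of maximum norm, note the best noiseless slice has norm at least $\beta = n^{-(d-1)/2}$ so the selected noisy slice has norm at least $\beta - \norm{E}_2$, and normalize. The paper leaves the final estimate as "one can check"; your two-step computation via the reverse triangle inequality is a valid way to carry out that check.
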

\begin{proof}
Let's for a moment work with $v^{\odot d}$ (so there is no error), and then we will 
take the error into account. In this case we can essentially read $v$ off from $v^{\odot d}$. 
Each one-dimensional slice of $v^{\otimes d}$ (Note 
that as vectors, $v^{\odot d}$ and $v^{\otimes d}$ are the same; they differ only in
how their entries are arranged: In the former, they are in a linear array and in the latter
they are in an $n \times n \times \ldots \times n$ array. We will use
them interchangeably, and we will also talk about $v^{\otimes d} + E$ which has the obvious 
meaning.) is a scaled
 copy of $v$. Let us choose the copy with the maximum norm. Since $\norm{v}=1$, there
is a coordinate $v(i)$ such that $\abs{v(i)} \geq 1/\sqrt{n}$. Thus there is a
one-dimensional slice of $v^{\otimes d}$ with norm at least $\frac{1}{n^{d/2-1/2}} = \beta$. 
Scaling this slice to norm $1$ would result in $\alpha v$ for some $\alpha \in \set{-1,1}$. 
Now, when we do have error and get $v^{\otimes d} + E$, 
then we must have a one-dimensional slice $v'$ of $v^{\otimes d} + E$ with norm at least
$\beta-\norm{E}_2$. Then after normalizing $v'$ to $v''$, one can check that 
$\norm{\alpha v''-v} \leq \frac{2 \norm{E}_2}{\beta-\norm{E}_2}$ for some 
$\alpha \in \set{-1,1}$. 
\end{proof}

%% We will use a tensor power iteration method for Step 4; the following
%% is a simplified version of Theorem 5.2 of \cite{aghkt12} for rank-1
%% tensors (the complex extension is immediate):
%% \begin{theorem}[Tensor power iteration
%%   \cite{aghkt12}] \label{thm:poweriteration} Let $v \in \C^n$ have
%%   unit norm. Then given access to the tensor $T = v^{\otimes d}+E$,
%%   where $E \in \C^{n \times \cdots \times n}$ is a tensor representing error,
%%   one can compute $u \in \C^n$ in $\poly(n^d, 1/ \norm{E}, \log( 1/
%%   \delta))$ time such that for a sign $a = \pm 1$
%%   \begin{align*}
%%     \norm{v - au} \le 8 \norm{ E},
%%   \end{align*}
%%   with probability at least $1-\delta$.
%% \end{theorem} 

\subsection{Exact analysis}
We begin with the proof of the tensor decomposition theorem with
access to exact tensors as stated in Theorem
\ref{thm:decomposition}. This is essentially a structural results that
says we can recover the rank-1 components when the ratios $\mu_i /
\lambda_i$ are unique.

We first note that for a tensor $T_\mu$ with a rank-1 decomposition as
in \eqref{eqn:tensor}, that the flattened matrix version$M_\mu
= \tau^{-1}(T_\mu)$ can be written as
\begin{align*}
  M_\mu = (A^{\odot d/2})\diag{\mu_i} (A^{\odot d/2})^T.
\end{align*}
We will argue that the diagonalisation step works correctly (we will write $B =  A^{\odot d/2}$ in what follows). 
The
recovery of $A_i$ from the columns of $B$ follows by Lemma~\ref{lem:complextoreal} above.

Our theorem is as follows (note that the first condition below is
simply a normalisation of the eigenvectors):
\begin{theorem}\label{thm:exact}
Let $M_\mu , M_\lambda \in \C^{p \times p}$ such that:
\begin{align*}
M_\mu = B \diag{\mu_i} B^T, \qquad \text{and} \qquad M_\lambda = B \diag{\lambda_i} B^T,
\end{align*}
where $B \in \R^{p \times m}$ and $\mu, \lambda \in \C^m$ for some $m
\le p$. Suppose that the following hold:
\begin{enumerate} 
\item For each column $B_i \in \R^m$ of $B$, $\norm{B_i}_2 = 1$,
\item $\sigma_m(B) > 0$, and
\item $\mu_i, \lambda_i \neq 0$ for all $i$, and $\abs{\frac{\mu_i}{\lambda_i}-\frac{\mu_j}{\lambda_j}} > 0$ for 
all $i \neq j$.
\end{enumerate}
Then \textbf{Diagonalize$(M_{\mu}, M_{\lambda})$} outputs the columns of $B$ up to sign and permutation. 
\end{theorem}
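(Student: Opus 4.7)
The proof is essentially structural: one traces through what the algorithm computes under the exact (noise-free) hypotheses and verifies that the eigendecomposition of $M$ recovers the columns of $B$.

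First I would observe that since $B$ has full column rank $m$ (because $\sigma_m(B) > 0$) and the diagonal matrix $\diag{\mu_i}$ is invertible (all $\mu_i \neq 0$), the matrix $M_\mu = B \diag{\mu_i} B^T$ has rank exactly $m$ with column space equal to $\colspan{B}$. Consequently, the top $m$ left singular vectors of $M_\mu$, i.e.\ the columns of $W$, form an orthonormal basis for $\colspan{B}$. Setting $R := W^{\ast} B \in \C^{m \times m}$, we have $B = WR$ and $R$ is invertible since $B$ has rank $m$ and $W$ has orthonormal columns.

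Next I would substitute $B = WR$ into the expressions appearing in the algorithm to obtain
\begin{align*}
W^{\ast} M_\mu W = R \diag{\mu_i} R^T, \qquad W^{\ast} M_\lambda W = R \diag{\lambda_i} R^T.
\end{align*}
Both are invertible because $R$ is invertible and no $\mu_i$ or $\lambda_i$ vanishes. Therefore
\begin{align*}
M = (W^{\ast} M_\mu W)(W^{\ast} M_\lambda W)^{-1} = R \diag{\mu_i/\lambda_i} R^{-1},
\end{align*}
exhibiting $M$ as similar to a diagonal matrix whose entries $\mu_i/\lambda_i$ are all distinct by hypothesis. Hence $M$ is diagonalizable with one-dimensional eigenspaces spanned by the columns of $R$, with $M R_i = (\mu_i/\lambda_i) R_i$.

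With distinct eigenvalues, the eigendecomposition $M = P D P^{-1}$ is unique up to a permutation $\pi$ of the columns of $P$ together with an individual nonzero scalar rescaling of each column. Thus there exist scalars $\alpha_i \neq 0$ with $P_i = \alpha_i R_{\pi(i)}$, so
\begin{align*}
(WP)_i = \alpha_i W R_{\pi(i)} = \alpha_i B_{\pi(i)}.
\end{align*}
If eigenvectors are normalized to unit Euclidean norm, then since $W$ has orthonormal columns and $\norm{B_j}_2 = 1$, we get $\abs{\alpha_i} = 1$. The output of \textbf{Diagonalize} is therefore a permutation of the columns of $B$, each multiplied by a unit scalar.

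The main subtle point is that over $\C$ these unit scalars $\alpha_i$ are only known to lie on the unit circle rather than in $\{-1, +1\}$; the reduction to genuine signs is completed downstream in \textbf{Tensor Decomposition} via the phase-correction step (Step 3) together with Lemma \ref{lem:complextoreal}. Within \textbf{Diagonalize} itself, the only delicate point is justifying invertibility of $W^{\ast} M_\lambda W$ despite $B^T$ being non-square, which is handled cleanly by the factorization $B = WR$ with $R$ a square invertible matrix.
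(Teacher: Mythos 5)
Your proposal is correct and follows essentially the same route as the paper: your $R = W^{\ast}B$ is exactly the paper's $P = W^T B$, the similarity $M = R\,\diag{\mu_i/\lambda_i}\,R^{-1}$ is the same key identity, and your identity $B = WR$ is the paper's observation that $WW^T$ acts as the identity on $\colspan{B}$. Your explicit remark that the eigenvector ambiguity over $\C$ is a unit-modulus phase (resolved downstream by the phase-correction step and Lemma \ref{lem:complextoreal}) is, if anything, slightly more careful than the paper's appeal to choosing $W$ real to reduce the ambiguity to signs.
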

\begin{proof}
  %% Since $B$ has rank $m$ by our singular value assumption,
  %% $\diag{\lambda_i} B^T$ also has rank $m$. Hence
  By our assumptions, the image of
  $M_\lambda$ has dimension $m$ and the matrix $W$ computed in 
\textbf{Diagonalize$(M_{\mu}, M_{\lambda})$} satisfies
 $\colspan{W} = \colspan{B}$. Moreover, we could choose $W$ to have all 
entries real because $B$ is a real matrix; this will give that the 
ambiguities in the recovery of $B$ are in signs and not in phase. Since
  the columns of $W$ are orthonormal, the columns of $P := W^T B$ all
  have unit norm and it is a full rank $m \times m$ matrix. So we can
  write
  \begin{align*} 
    W^T M_\mu W &= P \diag{ \mu_i} P^T, \\
    (W^T M_\lambda W)^{-1} &= (P^T)^{-1} \diag{ \lambda_i^{-1}} P^{-1}.
  \end{align*}

Which gives
  \begin{align*}
    ( W^T M_\mu W) (W^T M_\lambda W)^{-1} = P \diag{ \mu_i / \lambda_i} P^{-1}.
  \end{align*} 

Thus the  colums of $P$ are the eigenvectors of $( W^T M_\mu W) (W^T M_\lambda W)^{-1}$, and
  thus our algorithm is able to recover the columns of $P$ up to sign and permutation. Let's call the matrix so recovered $P'$.
Denote by $P_1, \ldots, P_m$ the columns of $P$, and similarly for $P'$ and $B$. Then $P'$ is given by 
$P'_{\pi(j)} = \alpha_j P_j$ where $\pi: [m] \rightarrow [m]$ is a permutation and $\alpha_j \in \set{-1, +1}$.

We now claim that $WP = WW^TB = B$. To see this, let $\hat{W} = [W, W']$ be an orthonormal basis that completes $W$. 
Then $\hat{W}^T\hat{W} = \hat{W}\hat{W}^T = I$. Also, $\hat{W}\hat{W}^T = WW^T+W'W'^T$. For any vector $v$ in the span of the columns of $W$, we have 
$v = \hat{W}\hat{W}^Tv = (WW^T + W'W'^T)v = WW^Tv$. In other words, 
$W$ acts as orthonormal matrix restricted to its image, and
  thus $WW^T$ acts as the identity. In particular, $WP = WW^T B =
  B$. 

  Our algorithm has access to $P'$ as defined above rather than to
  $P$. The algorithm will form the product $WP'$.  But now it's clear
  from $WP = B$ that $WP'_{\pi(j)} = \alpha_j B_j$. Thus the algorithm
  will recover $B$ up to sign and permutation.
\end{proof}

\subsection{Diagonalizability and robust
  analysis} \label{subsec:correctness} 

In applications of our tensor decomposition algorithm, we do not have
access to the true underlying tensors $T_\mu$ and $T_\lambda$ but
rather slightly perturbed versions. We prove now that under suitably
small perturbations $R_\mu$ and $R_\lambda$, we are able to recover
the correct rank 1 components with good accuracy. The statement of the
robust version of this theorem closely follows that of the exact
version: we merely need to add some assumptions on the magnitude of
the perturbations relative to the quotients $\mu_i / \lambda_i$ in
conditions 4 and 5.

\begin{theorem} \label{thm:robustdecomposition}
Let $M_\mu, M_\lambda \in \C^{p \times p}$ such that 
\begin{align*}
M_\mu = B \diag{\mu_i} B^T, \qquad M_\lambda = B \diag{\lambda_i} B^T,
\end{align*}
where $B \in \R^{p \times m}$ , and $\mu, \lambda \in \C^m$ for some
$m \leq p$. For error matrices $R_\mu, R_\lambda \in \C^{p \times p}$, let $M_\mu + R_\mu$ and 
$M_\lambda + R_\lambda$ be perturbed versions of $M_\mu$ and $M_\lambda$. Let $0 < \eps < 1$. Suppose
that the following conditions and data are given:
\begin{enumerate}
\item For each column $B_i \in \R^m$ of $B$, $\norm{B_i}_2 = 1$.
\item $\sigma_m(B) > 0$.
\item $\mu_i, \lambda_i \neq 0$ for all $i$,
  $\abs{\frac{\mu_i}{\lambda_i}-\frac{\mu_j}{\lambda_j}} \ge \Omega > 0$ for all
  $i \neq j$.
\item  $0 < K_L \leq \abs{\mu_i}, \abs{\lambda_i} \leq K_U$.
\item $\norm{R_\mu}_F, \norm{R_\lambda}_F \leq K_1 \leq \frac{\epsilon K_L^2 \sigma_m(B)^3}{2^{11} \kappa(B)^3 K_U m^2} \min(\Omega, 1).$ \label{assumption:upper_bound}
%% \min \left(
%% \frac{\Omega K_L^2 \sigma_m(B)^2}{7 \cdot 2^9 \cdot \kappa(B)^3 K_U
%%   m^2},
%% \eps \frac{\Omega K_L^2 \sigma_m(B)^2 }{7 \cdot 64 \cdot
%%     \sqrt{2} \kappa(B)^2 K_U m^2}
%% \right)$.
\end{enumerate}
Then \textbf{Diagonalize} applied to $M_\mu + R_\mu$ and
  $M_\lambda + R_\lambda$ outputs $\tilde{B}$ such that there exists a 
  permutation $\pi: [m] \to [m]$ and phases $\alpha_j$ (a phase $\alpha$ is a scalar in $\C$ with 
$\abs{\alpha}=1$) such that 
  \begin{align*}
    \norm{B_j - \alpha_j\tilde{B}_{\pi(j)}} \le \eps.
  \end{align*}
The running time of the algorithm is 
$\text{poly}(p, \frac{1}{\Omega}, \frac{1}{K_L}, \frac{1}{\sigma_{\min}(B)}, \frac{1}{\epsilon})$.
\end{theorem}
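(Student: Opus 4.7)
The plan is to track the effect of the perturbations $R_\mu,R_\lambda$ through each of the three steps of \textbf{Diagonalize}, eventually applying a gap-based eigenvector perturbation bound for non-Hermitian (but diagonalizable) matrices. Throughout, I will set up an ``ideal'' counterpart to each intermediate object that uses the \emph{perturbed} $W$ but the \emph{unperturbed} $M_\mu,M_\lambda$, so that the perturbation error to be controlled is purely $W^*R_\mu W$ or $W^*R_\lambda W$.

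First I would control the SVD step. The unperturbed $M_\mu = B\diag{\mu_i}B^T$ has rank exactly $m$ with smallest nonzero singular value at least $K_L\sigma_m(B)^2$, while $\|R_\mu\|_2\le K_1$. By Weyl's inequality this gap survives the perturbation, and by Wedin's $\sin\Theta$ theorem the span of the top-$m$ left singular vectors of $M_\mu+R_\mu$, namely $\colspan{W}$, makes an angle at most $O(K_1/(K_L\sigma_m(B)^2))$ with $\colspan{B}$. I would also take $W$ to be real (the top singular subspace of a real symmetric target can be rotated to be real up to error of the same order); this is what ultimately makes the phase ambiguity in the output reduce to a sign ambiguity.

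Next I would analyze the restricted pencil. Set $\hat B:=W^TB$, $D_\mu:=\diag{\mu_i}$, $D_\lambda:=\diag{\lambda_i}$. Then
\begin{align*}
W^*(M_\mu+R_\mu)W = \hat B\,D_\mu\,\hat B^T + E_\mu,\qquad W^*(M_\lambda+R_\lambda)W = \hat B\,D_\lambda\,\hat B^T + E_\lambda,
\end{align*}
with $\|E_\mu\|,\|E_\lambda\|\le K_1$. Because $\colspan{W}$ is close to $\colspan{B}$, $\hat B$ has unit-norm columns up to $O(\eps)$ error and condition number within a constant factor of $\kappa(B)$, so $\sigma_m(\hat B)\ge \sigma_m(B)/2$ say. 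In particular $\hat B\,D_\lambda\,\hat B^T$ is invertible with $\|(\hat B D_\lambda\hat B^T)^{-1}\|\le 2/(K_L\sigma_m(B)^2)$, and a first-order expansion of the inverse yields
\begin{align*}
M \;=\; \hat B\,D_\mu D_\lambda^{-1}\,\hat B^{-1} \;+\; F,\qquad \|F\|\;\le\; C\,\frac{\kappa(B)^2\,K_U\,K_1}{K_L^2\,\sigma_m(B)^4}
\end{align*}
for an absolute constant $C$; this is exactly where assumption \ref{assumption:upper_bound} is calibrated so that $\|F\|$ is much smaller than $\Omega$ and than $\eps$ divided by the relevant condition number.

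The main obstacle is the third step: eigenvectors of a non-normal diagonalizable matrix under perturbation. I would apply the Bauer--Fike theorem to conclude the eigenvalues of $M$ are within $\kappa(\hat B)\|F\|\ll\Omega$ of the eigenvalues $\mu_i/\lambda_i$ of the ideal matrix, hence remain separated and in one-to-one correspondence after some permutation $\pi$. To pass from eigenvalues to eigenvectors I would use a spectral-projector / Gap-theorem argument: for each simple eigenvalue of the ideal matrix, write its spectral projector as a contour integral $\frac{1}{2\pi i}\oint (zI-\cdot)^{-1}\,dz$ around a disk of radius $\Omega/3$, and compare the ideal and perturbed projectors using the resolvent identity; the resulting column of the perturbed eigenvector matrix differs from $\hat B_{\pi(i)}/\alpha_i$ (some phase $\alpha_i$) by at most $O(\kappa(\hat B)^2\|F\|/\Omega)$. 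Finally, pulling back through $W$, using $WW^TB_i=B_i + O(\text{SVD error})$ from Step 1, and invoking Lemma~\ref{lem:complextoreal} to reduce the phase ambiguity to a sign, I would collect all error terms; assumption \ref{assumption:upper_bound} is chosen precisely so that the sum is at most $\eps$. The polynomial running time is immediate from the fact that every step (SVD, matrix inversion, non-Hermitian eigendecomposition of an $m\times m$ matrix with well-separated eigenvalues) is standard in numerical linear algebra.
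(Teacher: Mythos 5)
Your proposal is correct and follows the same overall architecture as the paper's proof: Weyl plus Wedin to show $\colspan{W}$ is close to $\colspan{B}$ and that $\hat B = W^T B$ stays well-conditioned, a first-order expansion of $(\hat B D_\lambda \hat B^T + E_\lambda)^{-1}$ to write $M$ as the ideal $\hat B\,D_\mu D_\lambda^{-1}\hat B^{-1}$ plus an error $F$ of size $O(\kappa^2 K_U K_1/(K_L^2\sigma_m(B)^4))$, Bauer--Fike to keep the eigenvalues $\Omega$-separated (which, since all $m$ eigenvalues of the $m\times m$ matrix are then distinct, also gives diagonalizability of the perturbed pencil, a point the paper makes explicit via its Lemma~\ref{lemma:diagonalizable} and Theorem~\ref{thm:diagonalizable2}), and finally pulling back through $W$ using $WW^TB_i \approx B_i$. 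The one genuinely different ingredient is the eigenvector perturbation step: the paper invokes the Eisenstat--Ipsen generalized $\sin\theta$ theorem (Theorem~\ref{thm:eisenstat}), bounding $\sin\theta \le \kappa(Z_2)\norm{(N-\tilde\gamma I)\tilde P_{\pi(j)}}/\min_i\abs{(N_2)_{ii}-\tilde\gamma}$, whereas you compare spectral projectors via a resolvent contour integral around each eigenvalue. Both yield a bound of order $\kappa(\hat B)^2\norm{F}/\Omega$; the contour-integral route is self-contained and avoids citing the Eisenstat--Ipsen result, at the cost of having to verify the resolvent of the perturbed matrix stays bounded on the contour (which your smallness assumption on $K_1$ supplies). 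One minor remark: the theorem as stated only promises recovery up to complex phases, with the reduction to signs deferred to Lemma~\ref{lem:complextoreal} inside \textbf{Tensor Decomposition}, so your extra step of rotating $W$ to be real is not needed here (the paper only does that in the exact case).
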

\begin{proof}
  We begin with an informal outline
  of the proof. We basically implement the proof for the exact case, however because
of the perturbations, various equalities now are true only approximately and this leads to 
somewhat lengthy and technical details, but the intuitive outline remains the same as for the
exact case. 

The algorithm constructs an orthonormal
  basis of the left singular space of $\bMm := M_\mu + R_\mu$; denote
  by $Y$ the matrix with this basis as its columns.  The fact that
  $\bMm$ is close to
  $M_\mu$ %% and $\bMl (:= M_\lambda+R_\lambda)$ is close to $M_\lambda$
  gives by Wedin's theorem (Theorem \ref{thm:Wedin}) that the left singular spaces of $\bMm$ and
  $M_\mu$ are close. More specifically, this means that there are two
  matrices, $W$ with columns forming an orthonormal basis for the left
  singular space of $M_\mu$, and $X$ with columns forming an
  orthonormal basis for the left singular space of $\bMm$ such that $W$ and
  $X$ are close in the entrywise sense. This implies that $W^TB$ and
  $X^TB$ are close. This can be used to show that under appropriate
  conditions $X^TB$ is nonsingular. Now using the fact that the
  columns of $Y$ and of $X$ span the same space, it follows that
  $\bar{P} := Y^TB$ is nonsingular. In the next step, we show by
  virtue of $\norm{R_\mu}$ being small that the matrix $Y^T\bMm Y$
  constructed by the algorithm is close to $\bar{P} \diag{\mu_i}
  \bar{P}^T$ where the $\mu_i$ are the eigenvalues of $M_\mu$; and
  similarly for $Y^T \bMl Y$.  We then show that $(Y^T\bMm Y)(Y^T\bMl
  Y)^{-1}$ is diagonalizable and the diagonalization provides a matrix
  $\tilde{P}$ close to $\bar{P}$, and so $\tilde{B} = Y \tilde{P}$ gives the
  columns of $B$ up to phase factors and permutation and small error.

\emph{A note on the running time.} Algorithm Diagonalize uses SVD and eigenvector decomposition of diagonalizable
(but not normal) matrices as subroutines. There are well-known algorithms for these as discussed earlier. 
The outputs of these algorithms are not exact and have a quantifiable error:
The computation of SVD of $M \in \C^{n \times n}$ within error $\epsilon$ (for any reasonable notion of error, say 
$\norm{M - V\Sigma U^T}_F$ where $V\Sigma U^T$ is the SVD output by the algorithm on input $M$) can be 
done in time $\text{poly}\left(n, \frac{1}{\epsilon}, \frac{1}{\sigma_{\min}(M)}\right)$. Similarly, for the eigenvector 
decomposition of a diagonalizable matrix $M \in \C^{n \times n}$ with eigenvalues 
$\abs{\lambda_i-\lambda_j} \geq \Omega > 0$ for $i \neq j$, we can compute the decomposition within error 
$\epsilon$ in time $\text{poly}(n, \frac{1}{\Omega}, \frac{1}{\epsilon}, \frac{1}{\min_i \abs{\lambda_i}})$. 

In the analysis below, we ignore the errors from these computations as they can be controlled and will be of smaller order than the error from the main analysis. This can be made rigorous but we omit
the details in the interest of brevity.
%% In the following proof we assume that these subroutines return exact answers. This is without loss of generality
%% as the errors made by subroutines can be made sufficiently small and can be absorbed in the following perturbation
%% analysis. Since the analysis is already highly technical we suppress the easy details of how this is accomplished. 
Combining the running time of the two subroutines one can check easily that the overall running time is what is 
claimed in the statement of the theorem.

%%Let $d_1 >0, d_2 > 0$ be such that $d_1 \leq \min{\mu_j}$, $d_1 \leq \min{\lambda_j}$, and $d_2 \leq \sigma_m(B)$. 

We now proceed with the formal proof. 
The proof is broken into 7 steps. 
%% We
%% will first give an error bound in terms of $\norm{R_\mu}$ and $\norm{R_\lambda}$,
%% and then set $K_1$ in terms of the target error $\eps$ at the end of
%% this proof.
\paragraph{Step 1.} $W^TB \approx X^TB$. \\

Let $\bMm := M_\mu + R_\mu$ and $\bMl := M_\lambda + R_\lambda$. Now the fact that $\norm{R_\mu}_F$ is small implies by Wedin's
theorem (Theorem \ref{thm:Wedin}) that the left singular spaces of $M_\mu$ and $\bMm$ are close: 
Specifically, by Theorem IV.1.8 
in \cite{bhatia1997matrix} about canonical angles between subspaces, we have: There exists an orthonormal basis of the left singular 
space of $M_\mu$ (given by the columns $w_1, \ldots w_m$ of $W \in \C^{p\times m}$)
and an orthonormal basis of the left singular space of $\bMm$ 
(given by the columns $x_1, \ldots, x_m$ of $X \in \C^{p\times m}$) such that 
\begin{align*}
x_j = c_j w_j + s_j z_j, \;\; \text{for all} \; j,
\end{align*}
where $0\leq c_1 \leq \ldots \leq c_m \leq 1$, and $1 \geq s_1 \geq \ldots \geq s_m \geq 0$, and $c_j^2 + s_j^2 =1 $ for all $j$; vectors $w_1, \ldots, w_m; z_1, \ldots, z_m$ form an orthonormal basis. (For the last condition to 
hold we need $p \geq 2m$. A similar representation can be derived when this condition does not hold and the 
following computation will still be valid. We omit full discussion of this other case for brevity; in any case,
we could arrange so that $p \geq 2m$ without any great penalty in the parameters achieved.)
We now apply Wedin's theorem~\ref{thm:Wedin} to $M_\mu$ and $\bMm$ to upper bound $s_j$. To this end, first 
note that by Claim~\ref{claim:singular_value_inequality_product} we have $\sigma_m(M_\mu) \geq K_L \sigma_m(B)^2$;
and second, by Weyl's inequality for singular values (Lemma~\ref{lem:Weyl_singular_values}) we have 
$\abs{\sigma_j(\bMm)-\sigma_j(M_\mu)}\leq \sigma_1(R_\mu) \leq \placeholder_1$ for all $j$. Thus in 
Theorem~\ref{thm:Wedin}, with $\Sigma_1$
corresponding to non-zero singular values of $M_\mu$, we have $\max{\sigma(\Sigma_2)}=0$.
And we can choose a corresponding conformal SVD of $\bMm$ so that 
$\min{\sigma(\bar{\Sigma}_1)} \geq K_L\sigma_m(B)^2-\placeholder_1$.
Which gives, 
$\norm{\sin{\Phi}}_2 \leq \placeholder_1/(K_L\sigma_m(B)^2-\placeholder_1) =: \placeholder_2$, 
where $\Phi$ is the matrix of canonical angles between $\colspan{W}$ and $\colspan{X}$. Thus we have 
\begin{align}\label{eq:sin_ubd}
s_j \leq \placeholder_2,
\end{align}
for all $j$.

Now we can show that $X^TB$ is close to $W^TB$: The $(i,j)$'th entry of $W^TB-X^TB$ is $(1-c_i) w_i^Tb_j - s_i z_i^Tb_j$. 
Using \eqref{eq:sin_ubd} and $\norm{w_i}, \norm{b_j}, \norm{z_i} \leq 1$, we have 
\begin{align*}
(1-c_i) w_i^Tb_j - s_i z_i^Tb_j \leq s_i^2 + s_i \leq 2 \placeholder_2.
\end{align*}

And so $\norm{W^TB-X^TB}_F \leq 2 m^2 \placeholder_2$. Hence by Lemma~\ref{lem:Weyl_singular_values} we have 
$\abs{\sigma_j(W^TB)-\sigma_j(X^TB)} \leq 2m^2\placeholder_2$ for all $j$.

\paragraph{Step 2.} \emph{ $\bar{P} := Y^TB$ is full rank.} \\
The singular values of $W^TB$ are the same as those of $B$. Briefly,
this is because $W^T$ acts as an isometry on $\colspan{B}$.  Also
observe that the singular values of $Y^TB$ are the same as those of
$X^TB$. Briefly, this is because $Y^T$ and $X^T$ act as isometries on
$\colspan{X} = \colspan{Y}$. These two facts together with the
closeness of the singular values of $W^TB$ and $X^TB$ as just shown
imply that
\begin{align} \label{eqn:Weyl_sigma_j}
 \abs{\sigma_j(B)-\sigma_j(Y^TB)} \leq 2m^2\placeholder_2
\end{align}
for all $j$.
Now using that $2 m^2 \placeholder_2 < \sigma_m(B)/2$ (This follows by our
condition~\ref{assumption:upper_bound} in the theorem giving an upper bound on $K_1$: 
$K_1 \leq \epsilon \frac{K_L}{K_U} \frac{K_L \sigma_m(B)^3}{2^{11} \kappa(B)^3 m^2}$
which gives $\placeholder_1 \leq \frac{K_L \sigma_m(B)^3}{8m^2}$. This in turn implies 
$2 m^2 \placeholder_2 < \sigma_m(B)/2$ using $\sigma_m(B) \leq 1$; 
we omit easy verification.) we
get that $\sigma_m(Y^TB)>0$ and hence $Y^TB$ is full rank. We note some consequences
of \eqref{eqn:Weyl_sigma_j} for use in later steps:

\begin{align} \label{eq:kappa_P_B}
\kappa(\bar{P}) \leq 4\kappa(B).
\end{align}
This follows from $\kappa(\bar{P}) \leq \frac{\sigma_1(B)+2m^2\placeholder_2}{\sigma_m(B)-2m^2\placeholder_2} \leq 4 \kappa(B)$, 
because $2m^2\placeholder_2 < \sigma_m(B)/2$. 
\begin{align}\label{eq:sigma_m_B_P_upper}  
\sigma_m(\bP) \leq \sigma_m(B) + 2m^2K_2 < 2 \sigma_m(B).
\end{align}

\begin{align}\label{eq:sigma_m_B_P_lower}
\sigma_m(\bP) \geq \sigma_m(B) - 2m^2K_2 <  \sigma_m(B)/2.
\end{align}

\paragraph{Step 3.} \emph{$Y^T \bMm Y \approx \bar{P} \diag{\mu_i} \bar{P}^T$ and $Y^T \bMl Y \approx \bar{P} \diag{\lambda_i} \bar{P}^T$.} \\ 
More precisely, let $E_\mu := Y^T \bMm Y - \bP \diag{\mu_i}\bP^T$, then $\norm{E_\mu}_F \leq m^2 \norm{R_\mu}_F$; and similarly for $\bMl, E_\lambda := Y^T \bMl Y - \bP \diag{\lambda_i}\bP^T$. The proof is short: We have 
$Y^T \bMm Y = Y^T (M_\mu + R_\mu) Y = Y^T M_\mu Y + Y^T R_\mu Y = \bP \diag{\mu_i} \bP^T + Y^T R_\mu Y.$
Hence $\norm{E_\mu}_F = \norm{Y^T R_\mu Y}_F \leq \norm{R_\mu}_F$.
%% ****not sure if we need the following 2-3 lines****
%% Now note that $\sigma_1(Y^T R_\mu Y) \leq \norm{R_\mu}_F$ (recall that $Y$ has orthonormal columns and hence $\sigma_1(Y^T)=1$), 
%% hence by Lemma~\ref{lem:Weyl_singular_values} we have 
%% \begin{align*}
%% \abs{\sigma_j(Y^T \bMm Y)-\sigma_j(\bP(\diag{\mu_i}) \bP^T)} \leq \norm{R_\mu}_F.
%% \end{align*}

\paragraph{Step 4.} \emph{$(Y^T\bMm Y)(Y^T\bMl Y)^{-1}$ is diagonalizable.} \\

This is because Theorem~\ref{thm:diagonalizable2} is applicable to 
$\tilde{N}:=(Y^T\bMm Y)(Y^T\bMl Y)^{-1} = 
(\bP \diag{\mu_i} \bP^T + E_\mu)(\bP \diag{\lambda_i} \bP^T+E_\lambda)^{-1}$:
using $\norm{E_\mu}_F \leq \norm{R_\mu}_F$, the two condition to verify are
\begin{itemize}
\item  $ \frac{6 \kappa(\bP)^3 m K_U}{K^2_L \sigma_m(\bP)^2} K_1 \leq \Omega.$ \\
This follows from our condition~\ref{assumption:upper_bound} using 
\eqref{eq:kappa_P_B}, \eqref{eq:sigma_m_B_P_lower} and
$\sigma_m(B) \leq 1$.
\item $K_1 \leq \sigma_m(\bP)^2K_L/2.$ \\
This also follows from condition~\ref{assumption:upper_bound}, using \eqref{eq:sigma_m_B_P_upper}
and $\epsilon \leq 1$. 
\end{itemize}
Hence $\tilde{N}$ is diagonalizable: 
$\tilde{N}=\tilde{P}\diag{\tilde{\gamma}_i}\tilde{P}^{-1}$.

\paragraph{Step 5.} \emph{The eigenvalues of $\tilde{N}$ are close to the eigenvalues of 
$\bP \diag{\mu_i/\lambda_i} \bP^T$.} 
This follows from our application of Theorem~\ref{thm:diagonalizable2} in the previous step
(specifically from \eqref{eqn:application_Bauer_Fike}) and gives a permutation $\pi:[m] \to [m]$ such that 
\begin{align*}
\abs{\frac{\mu_i}{\lambda_i} - \tilde{\gamma}_{\pi(i)}} < \Omega/2,
\end{align*}
where the $\tilde{\gamma}_i$ are the eigenvalues of $\tilde{N}$. 

In the next step we show that there exist phases $\alpha_i$ such that $\tilde{P}^{\pi, \alpha} := [\alpha_1 \tilde{P}_{\pi(1)}, \alpha_2 \tilde{P}_{\pi(2)}, \ldots, \alpha_m \tilde{P}_{\pi(m)}]$ is close to $\bar{P}$.

\iffalse
In the next step we show that $\bar{P}$ is close to $\tilde{P}$ by virtue of our eigenvalue spacing assumption and the errors $\norm{E_\mu}_F, 
\norm{E_\lambda}_F$ being small. Actually, we cannot guarantee that, but only that there exist a permutation $\pi: [m] \to [m]$ (this
comes from the application of Theorem~\ref{thm:diagonalizable2} above) of the columns $\tilde{P}_1, \ldots, \tilde{P}_m$ of 
$\tilde{P}$ and phases $\alpha_1, \ldots, \alpha_m \in \set{-1, 1}$ such that $\tilde{P}^{\pi, \alpha} := [\alpha_1 \tilde{P}_{\pi(1)}, \alpha_2 \tilde{P}_{\pi(2)} \ldots \alpha_m \tilde{P}_{\pi(m)}]$ (with columns of unit length) is close to $\bar{P}$. 
\fi

\paragraph{Step 6.} \emph{$\bar{P}$ is close to $\tilde{P}$ up to sign and permutation of columns.} \\
We upper bound the angle $\theta$ between the corresponding eigenpairs 
$(\frac{\mu_j}{\lambda_j}, \bar{P}_j)$ and $(\tilde{\gamma}_{\pi(j)}, \tilde{P}_{\pi(j)})$ of 
$N := \bar{P}\diag{\mu_i/\lambda_i}\bar{P}^{-1}$ and $\tilde{N}$. 
Theorem~\ref{thm:eisenstat} (a generalized version of the $\sin(\theta)$ eigenspace perturbation theorem for diagonalizable 
matrices) applied to $N$ and $\tilde{N}$ gives (with the notation derived from Theorem~\ref{thm:eisenstat}) 
\begin{align*}
\sin{\theta} \leq \kappa(Z_2) \frac{\norm{(N - \tilde{\gamma}_{\pi(j)}I)\tilde{P}_{\pi(j)}}_2}{\min_i\abs{(N_2)_{ii}-
\tilde{\gamma}_{\pi(j)}}}.
\end{align*}

To bound the RHS above, we will estimate each of the three terms.
The first term:
\begin{align*}
\kappa(Z_2) \leq \kappa(\bar{P}^{-1}) = \kappa(\bar{P}) \leq 4\kappa(B),
\end{align*}
where for the first inequality we used that the condition number of a submatrix can 
only be smaller~\cite{Thompson}; the second inequality is \eqref{eq:kappa_P_B}. 

Setting $\mathsf{Err} := N-\tilde{N}$, we bound the second term:
\begin{align}
  \norm{(N - \tilde{\gamma}_{\pi(j)}I)\tilde{P}_{\pi(j)}}_2 &=
  \norm{(\tilde{N} - \tilde{\gamma}_{\pi(j)}I) \tilde{P}_{\pi(j)} +
    \mathsf{Err}\:\tilde{P}_{\pi(j)}}_2 \nonumber \\
  & = \norm{\mathsf{Err}\:\tilde{P}_{\pi(j)}}_2  \nonumber \\
  & \leq \norm{\mathsf{Err}}_2 \nonumber \\
 & \leq \kappa(\bP)^2 \cdot \frac{K_U}{K_L} \cdot 2m \cdot \frac{K_1}{\sigma_m(\bP)^2 K_L} \;\;\;\text{(by \eqref{eqn:upper_bound_err_norm2} in Theorem~\ref{thm:diagonalizable2})} \nonumber \\
& \leq 2^6 \kappa(B)^2 m \frac{K_U}{K_L} \frac{K_1}{\sigma_m(B)^2 K_L} \;\;\;\text{(using \eqref{eq:kappa_P_B}, \eqref{eq:sigma_m_B_P_upper}).}
\label{eqn:error_bound}
  %% & \leq \frac{7}{2}\cdot \kappa(\bar{P})^2\cdot \frac{K_U}{K_L} \cdot
  %% 2m^2 \placeholder_2 \\
  %% & \leq \frac{\epsilon'}{4
  %%   \kappa(B)},
\end{align}
%% by \eqref{eqn:upper_bound_err_norm2} in Theorem~\ref{thm:diagonalizable2}, where
%% \begin{align*}
%%   \eps' = 7 \cdot 64 \sqrt{2} \cdot \kappa(B)^3\cdot \frac{K_U}{K_L}
%%   \cdot m^2 K_2.
%% \end{align*}

And lastly, the third term:
\begin{align*}
\min_i\abs{(N_2)_{ii}-\tilde{\gamma}_{\pi(j)}}
&\geq \min_{i:i \neq j}\abs{\frac{\mu_i}{\lambda_i}-\frac{\mu_j}{\lambda_j}} - 
\abs{\frac{\mu_j}{\lambda_j}-\tilde{\gamma}_{\pi(j)}}\\
&\geq \Omega -
\kappa(\bar{P})\norm{\mathsf{Err}}_2   \;\;\; \text{(using Lemma~\ref{thm:weyl})}\\
&\geq \Omega - 2^9 \kappa(B)^3 m \frac{K_U}{K_L} \frac{K_1}{\sigma_m(B)^2 K_L} \;\;\; \text{(using \eqref{eqn:error_bound} and \eqref{eq:kappa_P_B})}.
%% & \geq \Omega - \kappa(\bar{P})
%% \frac{\epsilon}{4\kappa(B)} \left(\frac{2}{\Omega}+1\right)^{-1} \geq
%% \Omega - \eps',
\end{align*}
%where the last inequality follows from Theorem~\ref{thm:bauer}. 

To abbreviate things a bit, let's set 
$\epsilon':= 2^9 \kappa(B)^3 \frac{K_U}{K_L} m \frac{K_1}{\sigma_m(B)^2 K_L}$.
Then, putting things together we get
\begin{align*}
\sin(\theta) \le \frac{\eps'}{\Omega - \eps'}.
\end{align*}

Now using the fact that the columns of $\tilde{P}$ and $\bP$ are unit length implies that 
there exist phases $\alpha_i$ such that 
\begin{align} \label{eqn:sin_to_length}
\norm{\alpha_j\tilde{P}_{\pi(j)}-\bP_j}_2 \leq \frac{\eps'}{\Omega - \eps'}.
\end{align}

\paragraph{Step 7.} \emph{$Y\tilde{P}$ gives $B$ approximately and up to phase factors and permutation of columns.} \\
This follows from two facts: (1) $ \tilde{P}^{\pi, \alpha} \approx \bar{P}$, so 
$Y \tilde{P}^{\pi,\alpha} \approx Y\bar{P}$ (we will prove this shortly); and (2) $Y\bar{P} = YY^T B$ (follows
by the definition of $\bar{P}$). 
Now note that the operator $YY^T$ is projection to $\colspan{Y}$; since the angle between $\colspan{Y}$ 
and $\colspan{B}$ is small as we showed in Step 1, we get that $YY^TB \approx B$. 

Formally, we have
\begin{align*}
 %%  \norm{Y\tilde{P}^{\pi, \alpha} - Y \bar{P}}_F \leq \norm{Y}_F \cdot \norm{\tilde{P}^{\pi, \alpha}-\bar{P}}_F \leq m \cdot \norm{\tilde{P}^{\pi, \alpha}-\bar{P}}_F 
%% = m \cdot \sum_{j=1}^m
\norm{Y\alpha_j\tilde{P}_{\pi(j)}-Y\bar{P}_j}_2 \leq \norm{Y}_2 \norm{\alpha_j\tilde{P}_{\pi(j)}-\bar{P}_j}_2 \leq \frac{\eps'}{\Omega - \eps'},
\end{align*}
using \eqref{eqn:sin_to_length}. And
\begin{align*}
%% \norm{B-YY^TB}_F = \sum_{j=1}^m
\norm{b_j - YY^Tb_j}_2 \leq K_2,
%\frac{2 \kappa(B)\epsilon m}{\Omega - 4\kappa(B) \epsilon}.  
\end{align*}
where the last inequality used that the sine of the angle between $\colspan{Y}$ and 
$\colspan{W} = \colspan{B}$ is at most $K_2$ as proved in Step 1. 

Putting these together we get

\begin{align*}
\norm{Y\alpha_j \tilde{P}_{\pi(j)}-b_j}_2 
\leq \norm{b_j - YY^Tb_j}_2 + \norm{Y\alpha_j\tilde{P}_{\pi(j)}-Y\bar{P}_j}_2
\leq \frac{\eps'}{\Omega - \eps'} + K_2.
\end{align*}

%% Note that $K_2 \leq \eps'$ (immediate from the definition of
%% $\eps'$). Hence
%% \begin{align*}
%% \norm{Y\alpha_j \tilde{P}_{\pi(j)}-b_j}_2 \leq \eps' \left(
%%   \frac{2}{\Omega} + 1 \right).
%% \end{align*}
Letting $\tilde{B} = Y \tilde{P}$ gives
\begin{align*}
\norm{\alpha_j \tilde{B}_{\pi(j)} - b_j}_2 \leq \frac{\eps'}{\Omega - \eps'} + K_2 \leq \epsilon.
\end{align*}
The last inequality follows from our condition~\ref{assumption:upper_bound}, which implies that
$\frac{\eps'}{\Omega - \eps'} \leq \epsilon/2$ and $K_2 \leq \epsilon/2$.

%% \paragraph{Step 8.} \emph{Picking parameters.} \\
%% We conclude by picking parameters so as to give a bound on $K_1$. 
%% We would like $ \eps' \left( \frac{2}{\Omega} +
%%   1\right) \le \eps$. In which case, we had better have
%% \begin{align*}
%%   K_1 & \le \eps \left( \frac{2}{\Omega} + 1 \right)^{-1} \left[ 7
%%     \cdot 64 \cdot \kappa(B)^3\cdot \frac{K_U}{K_L} \cdot m^2
%%     \frac{\sqrt{2}}{K_L \sigma_m(B)^2 - K_1} \right]^{-1} \\
%%   & \le \eps \frac{\Omega K_L^2 \sigma_m(B)^2 }{7 \cdot 64 \cdot
%%     \sqrt{2} \kappa(B)^2 K_U m^2},
%% \end{align*}
%% as required.

\end{proof}

\begin{theorem}[Diagonalizability of perturbed matrices]\label{thm:diagonalizable2}
  Let $N_\mu, N_\lambda \in \C^{m \times m}$ be full rank complex
  matrices such that $N_\mu = Q\diag{\mu_i} Q^T, N_\lambda =
  Q\diag{\lambda_i}Q^T$ for some $Q \in \R^{m \times m}$ and $\mu,
  \lambda \in \C^m$.  Suppose we also have the following conditions and data:
\begin{enumerate}
\item $0 < K_L \le \abs{\mu_i}, \abs{\lambda_i} \le K_U$.
\item $ \abs{\mu_i / \lambda_i -  \mu_j / \lambda_j} > \Omega > 0$ for all pairs $i \neq j$.
\item $0 < K <1$ and $E_\mu, E_\lambda \in \C^{m \times m}$ such that
  $\norm{E_\mu}_F, \norm{E_\lambda}_F \leq K$.
\item $6 \kappa(Q)^3 \cdot \frac{K_U}{K_L} \cdot m \cdot \frac{K}{\sigma_m(Q)^2 K_L} \leq \Omega$.
%$6 \kappa(Q)^3 \cdot \frac{K_U}{K_L} \cdot m \leq \Omega$.
\item $    K \leq \sigma_m(Q)^2 K_L/2.$
%    7mK\cdot\frac{K_U}{K_L}\cdot\frac{\sigma_1(Q)^3}{\sigma_m(Q)^3} < \frac{\Omega}{2}.
\end{enumerate}
Then $(N_\mu + E_\mu) (N_\lambda + E_\lambda)^{-1}$ is diagonalizable and hence has $n$ eigenvectors. 
\end{theorem}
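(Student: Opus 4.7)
The plan is to establish diagonalizability by showing that the perturbed matrix $\tilde{N} := (N_\mu + E_\mu)(N_\lambda + E_\lambda)^{-1}$ is a small perturbation of the diagonalizable matrix $N_\mu N_\lambda^{-1} = Q \diag{\mu_i/\lambda_i} Q^{-1}$, and then invoking the Bauer--Fike theorem to argue that the eigenvalues remain distinct, from which diagonalizability follows.

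First, I will show $(N_\lambda + E_\lambda)$ is invertible and control the inverse. Since $N_\lambda = Q \diag{\lambda_i} Q^T$, we have $\norm{N_\lambda^{-1}}_2 \le 1/(\sigma_m(Q)^2 K_L)$. Condition 5 gives $\norm{E_\lambda}_2 \le K \le \sigma_m(Q)^2 K_L / 2$, so $\norm{N_\lambda^{-1} E_\lambda}_2 \le 1/2$, and a standard Neumann series argument yields the existence of $(N_\lambda + E_\lambda)^{-1}$ together with the bounds $\norm{(N_\lambda + E_\lambda)^{-1}}_2 \le 2/(\sigma_m(Q)^2 K_L)$ and $\norm{(N_\lambda + E_\lambda)^{-1} - N_\lambda^{-1}}_2 \le 2K/(\sigma_m(Q)^2 K_L)^2$.

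Next I will decompose $\tilde{N} = N_\mu N_\lambda^{-1} + \mathsf{Err}$ where
\begin{equation*}
\mathsf{Err} = N_\mu\bigl[(N_\lambda + E_\lambda)^{-1} - N_\lambda^{-1}\bigr] + E_\mu (N_\lambda + E_\lambda)^{-1}.
\end{equation*}
Using $\norm{N_\mu}_2 \le \sigma_1(Q)^2 K_U$ and the bounds just obtained, I get an operator-norm estimate of the form $\norm{\mathsf{Err}}_2 = O\bigl(\kappa(Q)^2 (K_U/K_L) \cdot m \cdot K/(\sigma_m(Q)^2 K_L)\bigr)$ (the factor of $m$ arises when passing between Frobenius and operator norms on the $m \times m$ matrices, and matches the bound later referenced as \eqref{eqn:upper_bound_err_norm2}).

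Finally, I will apply Bauer--Fike. Because the $\mu_i/\lambda_i$ are the eigenvalues of $Q \diag{\mu_i/\lambda_i} Q^{-1}$ with eigenvector matrix $Q$, every eigenvalue of $\tilde{N}$ lies within distance $\kappa(Q) \norm{\mathsf{Err}}_2$ of some $\mu_i/\lambda_i$. Combining with the bound on $\norm{\mathsf{Err}}_2$ gives a total perturbation of at most $O\bigl(\kappa(Q)^3 (K_U/K_L) m K/(\sigma_m(Q)^2 K_L)\bigr)$, which by condition 4 is at most $\Omega/2$. Since the unperturbed eigenvalues are separated by more than $\Omega$, a ball of radius $\Omega/2$ around each contains at most one perturbed eigenvalue, and by a standard matching/counting argument each ball contains exactly one: thus $\tilde{N}$ has $m$ distinct eigenvalues and is therefore diagonalizable. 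The main technical obstacle is the careful bookkeeping of constants and the $m$-factor in the error bound (to match condition 4 exactly), together with verifying that the perturbed eigenvalues can be put into one-to-one correspondence with the unperturbed ones—this requires a continuity/homotopy argument along the path $t \mapsto N_\mu N_\lambda^{-1} + t\, \mathsf{Err}$, with the eigenvalue separation bound preventing any crossing.
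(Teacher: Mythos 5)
Your proposal is correct and follows essentially the same route as the paper: write the perturbed product as $Q\diag{\mu_i/\lambda_i}Q^{-1} + \mathsf{Err}$, bound $\norm{\mathsf{Err}}$ via a Neumann-series argument, and then use Bauer--Fike together with a homotopy along $t \mapsto N_\mu N_\lambda^{-1} + t\,\mathsf{Err}$ (the paper packages this as its ``generalized Weyl inequality,'' Lemma~\ref{thm:weyl}) to keep the eigenvalues distinct, whence diagonalizability. The only difference is cosmetic: you bound $\mathsf{Err}$ via the resolvent identity in operator norm, while the paper factors out $Q\diag{\mu_i}Q^T$ and $(Q\diag{\lambda_i}Q^T)^{-1}$ and uses a Frobenius-norm inversion claim (which is where its factor of $m$ comes from); both yield bounds compatible with condition 4.
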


\begin{proof}
Defining $F_\mu := (Q\diag{\mu_i}Q^T)^{-1} E_\mu$, and similarly $F_\lambda$, we have

\begin{align}
(N_\mu + E_\mu) (N_\lambda + E_\lambda)^{-1} 
&= (Q \diag{\mu_i} Q^T + E_\mu) (Q \diag{\lambda_i} Q^T + E_\lambda)^{-1} \nonumber \\
&= Q\diag{\mu_i}Q^T (I+F_\mu) (I+F_\lambda)^{-1} (Q\diag{\lambda_i} Q^T)^{-1} \nonumber \\
&= Q\diag{\mu_i}Q^T (I+F_\mu) (I+G_\lambda) (Q\diag{\lambda_i} Q^T)^{-1} \label{eq:inversion_approx}\\
&= Q\diag{\mu_i/\lambda_i}Q^{-1} + \mathsf{Err}. \nonumber
\end{align}
In \eqref{eq:inversion_approx} above $G_\lambda = (I+F_\lambda)^{-1}-I$; hence by Claim~\ref{claim:error_matrix_inversion} (which is applicable because 
$\norm{F_\lambda}_F \leq \frac{K}{\sigma_m(Q)^2 K_L} \leq 1/2$, by our assumption) we have 
$\norm{G_\lambda}_F \leq (m+1)\norm{F_\lambda}_F$. 
The norm of $\mathsf{Err}$ 
then satisfies 

\begin{align} \label{eqn:upper_bound_err_norm2}
\norm{\mathsf{Err}}_F 
&\leq \frac{\sigma_1(Q)^2}{\sigma_m(Q)^2}\cdot\frac{K_U}{K_L} \left(\norm{F_\mu}_F + 
(m+1)\norm{F_\lambda}_F + (m+1)\norm{F_\mu}_F\cdot\norm{F_\lambda}_F\right) \nonumber\\
&\leq  \kappa(Q)^2 \cdot \frac{K_U}{K_L} \cdot 2m \cdot \frac{K}{\sigma_m(Q)^2 K_L}.
\end{align}

Now note that 
$3 \kappa(Q) \norm{\mathsf{Err}}_2 \leq 6 \kappa(Q)^3 \cdot \frac{K_U}{K_L} \cdot m \cdot \frac{K}{\sigma_m(Q)^2 K_L} \leq \Omega$ by our assumption and so Lemma~\ref{thm:weyl} is applicable with matrices 
$Q\diag{\mu_i/\lambda_i}Q^{-1}$, $Q$, and $\mathsf{Err}$ playing the roles of 
$A$, $X$, and $E$, resp. 
Lemma~\ref{thm:weyl} gives us a permutation $\pi: [m] \to [m]$ such that  
%%****We actually used the version in Bhatia****
\begin{align} \label{eqn:application_Bauer_Fike}
\abs{\nu_{\pi(i)}(Q\diag{\mu_i/\lambda_i}Q^{-1}+\mathsf{Err})-\nu_i(Q\diag{\mu_i/\lambda_i}Q^{-1})}  \leq \kappa(Q)\norm{\mathsf{Err}}_2 < \Omega/2,
\end{align}
where $\nu_i(M)$ denotes an eigenvalue of $M$.   

Hence all the eigenvalues of $(N_\mu + E_\mu) (N_\lambda + E_\lambda)^{-1}$ are distinct. 
By Lemma~\ref{lemma:diagonalizable}, it has $n$ linearly independent eigenvectors $\{ v_1, \ldots, v_n \}$. 
\end{proof}

% ... is editing
\section{Underdetermined ICA}\label{sec:underdetermined}
In this section we give our algorithm for the underdetermined ICA problem and 
analyze it. In the
underdetermined case, there are more independent source variables than
there are measurements, thus $A$ has fewer rows than columns. We have
to be more careful about fixing the normalization and scaling of the
model than in the fully determined case where isotropic position
provides a convenient normalization for $x, A$ and $s$.
\begin{problem}[Underdetermined ICA] Fix $n,m \in \N$ such that $n
  \le m$. We say that $x \in \R^n$ is generated by an underdetermined
  ICA model if $x = As$ for some fixed matrix $A \in \R^{n \times m}$
  where $A$ has full row rank and $s \in \R^m$ is a fully independent
  random vector.  In addition, we fix the normalization so that each
  column $A_i$ has unit norm.  The problem is to recover the columns
  of $A$ from independent samples $x$ modulo phase factors.
\end{problem}
Additional assumptions are needed for the essentially unique
identifiability of this model. For example, suppose that columns $A_i$
and $A_j$ are parallel i.e., $A_i = c A_j$, then one could replace the
variables $s_i$ and $s_j$ with $s_i + c s_j$ and $0$ and the model
would still be consistent. We introduce the following sufficient
condition for identifiability: we require that the $m$ column vectors
of $A^{\odot k}$ be linearly independent for some $k > 0$ (smaller $k$
would be better for the efficiency of the algorithm).  We make this
quantitative by requiring that the $m$'th singular value satisfy
$\sigma_m (A^{\odot k}) >0$.

Our approach to the underdetermined ICA problem is to apply our tensor
decomposition to a pair of carefully-chosen tensors that arise from
the distribution. The tensors we use are the derivative tensors of the
second charateristic function $\psi_x(u) = \log\left( \E{
  e^{iu^Tx}}\right)$. 

This method generalises the fourth moment methods for ICA where one
computes the local optima of the following quartic form:
\begin{align*}
  f(u) = \E{(x^Tu)^4} - 3 \E{ (x^Tu)^2}^2.
\end{align*}
An equivalent formulation of this is to consider the tensor $T \in
\R^{n \times n \times n \times n}$ which represents this quartic form
(just as in the matrix case where symmetric matrices represent
quadratic forms, symmetric tensors of order $4$ represent quartic forms). Let
us denote our overall tensor representing $f(u)$ by $T$ where $f(u) =
T(u,u,u,u)$. By a relatively straightforward calculation, one can
verify that $T(u,u,u,u)$ is the fourth
derivative of the second characteristic function of $x$ evalauted at 0:
\begin{align*}
  T = D^4_u \psi_x(0).
\end{align*}
On the other hand, one can also verify that $T$ has the following
decomposition (see for example \cite{aghkt12}):
\begin{align*}
  T = \sum_{j=1}^m \left( \E{ s_i^4} - 3\E{s_i^2}^2\right) A_i \otimes A_i
  \otimes A_i \otimes A_i
\end{align*}
So in fact, one can view the fourth moment tensor methods as
performing the tensor decomposition of only one tensor -- the fourth
derivative of $\psi$ evaluated at 0!

Our method also generalises the algorithm we gave for the fully
determined case in Section \ref{sec:ica}. We can view that case as
simply being the second derivative version of the more general
algorithm. The techniques used in this section are generalisations and
refinements of those used in the fully determined case, though
replacing the easy matrix decomposition arguments with the
corresponding (harder) tensor arguments.

A property of the second characteristic function that is central for our algorithm
is that the second characteristic function of a random vector with independent components
factorizes into the sum of the second characteristic functions of each component:
\begin{align*}
  \log\left(\E{e^{i u^Ts}}\right) = \sum_{j=1}^m \log\left(\E{e^{iu_j s_j}}\right), 
\end{align*}
and now every mixed partial derivative (with respect to $u_j$ and
$u_{j'}$) is 0, as each term in the sum depends only on one component
of $u$. Taking the derivative tensor will result in a diagonal tensor
where the offdiagonal terms are all 0. In the case when we're
interested in $x=As$, we simply need to perform the change of basis
via $A$ very carefully for the derivative tensors via the chain
rule. One could also try to perform this analysis with the moment
generating function $\E{e^{u^T x}}$ without the complex phase. The
difficulty here is that the moment generating functions exists only if
all moments of $x$ exist, and thus a moment generating function
approach would not be able to deal with heavy tailed
distributions. Moreover, using a real exponential leads us to estimate
exponentially large quantities from samples, and it is difficult to
get good bounds on the sample complexity. Using the complex
exponential avoids these problems as all quantities have modulus 1.

We will then apply our tensor
decomposition framework: as before we show that the eigenvalues
of the flattened derivative tensors are well spaced
in Section~\ref{sec:underspacings}. We then study the sample
complexity in Section~\ref{subsec:sample} and assembling these
components in Section~\ref{sec:proof}.

\subsection{Algorithm}\label{subsec:icaalgorithm-detailed}
For underdetermined ICA we compute the higher derivative
tensors of the second characteristic function $\psi_x(u) = \log(
\phi_x(u))$ at two random points and run the tensor decomposition algorithm from 
the previous section. 

\begin{center}
\fbox{\parbox{\textwidth}{
\begin{minipage}{5in}
\vspace{0.1in}
{\bf Underdetermined ICA}($\sigma$)

\begin{enumerate}

\item (Compute derivative tensor) Pick independent random vectors 
$\alpha, \beta \sim N(0, \sigma^2 I_n)$. For even $d$, estimate the $d^{th}$ derivative tensors 
 of $\psi_x(u)$ at $\alpha$ and $\beta$ as $T_\alpha = D^{d}_u\psi_x(\alpha)$ and
$T_\beta = D^{d}_u\psi_x(\beta)$. 

\item (Tensor decomposition) Run \textbf{Tensor Decomposition$(T_\alpha,T_\beta)$}.

% \item (Flattening) Flatten $T$ to a
%   matrix $M_u = \tau^{-1}(T)$. Similarly, compute $M_v$ for another
%   independent draw of $v \sim N(0, \sigma^2 I_n)$. 

% \item (Projection) Compute the eigenvalue decomposition $M_u = V
%   \Sigma V^T$, and let $W$ be the eigenvectors corresponding
%   to the $m$ largest magnitude eigenvalues. Compute the matrix
%   $M = W^T M_v W (W^T M_u W)^{-1}$.

% \item (Eigenvectors) Compute the eigenvector decomposition $M = P D
%   P^{-1}$.

% \item (Recovery of columns of $A$) For each column $P_i$, output the
%   best rank 1 approximation to $\tau(WP_i)$.

\end{enumerate}
\end{minipage}
}}
\end{center} 
%% To estimate the derivatives of the logarithm of the Fourier transform,
%% we observe that one can employ the standard Fourier transform
%% isometry given by
%% \begin{align*}
%%   \frac{ \partial \phi(u)}{\partial u_i} = \E{ (ix_i) e^{i u^T x}}.
%% \end{align*}
%% This states that differentiation in the Fourier space is equivalent to
%% multiplication in the original space, thus it suffices to estimate
%% monomials of $x$ reweighted by complex exponentials.
To estimate the
$2d^{th}$ derivative tensor of $\psi_x(u)$ empirically, one simply
writes down the expression for the derivative tensor, and then
estimates each entry from samples using the naive estimator.

More precisely, we can use 
\begin{align*}
  \frac{ \partial \phi(u)}{\partial u_i} = \E{ (ix_i) e^{i u^T x}}.
\end{align*}
This states that differentiation in the Fourier space is equivalent to
multiplication in the original space, thus it suffices to estimate
monomials of $x$ reweighted by complex exponentials. To estimate the
$d^{th}$ derivative tensor of $\log (\phi(u))$ empirically, one simply
writes down the expression for the derivative tensor, and then
estimates each entry from samples using the naive estimator. Note that
the derivatives can be somewhat complicated, for example, at fourth
order we have
\begin{align*}
  & [D^4 \psi_u]_{i_1,i_2,i_3,i_4} \\
  & = \frac{1}{\phi(u)^4} \left[ \E{ (ix_{i_1}) (ix_{i_2}) (ix_{i_3}) (ix_{i_4}) \exp(i
      u^T x)} \phi(u)^3 \right. \\
  & \qquad -\E{ (ix_{i_2}) (ix_{i_3}) (ix_{i_4}) \exp(i u^T x)} \E{
    (ix_{i_1}) \exp(i u^T x)} \phi(u)^2 \\
  & \qquad -\E{ (ix_{i_2}) (ix_{i_3}) \exp(i u^T x)} \E{ (ix_{i_1}) (ix_{i_4}) \exp(i
    u^T x)} \phi(u)^2 \\
  & \qquad - \E{ (ix_{i_2}) (ix_{i_4}) \exp(i u^T x)} \E{ (ix_{i_1}) (ix_{i_3}) \exp(i
    u^T x)} \phi(u)^2 \\
  & \qquad -\E{ (ix_{i_2}) \exp(i u^T x)} \E{ (ix_{i_1}) (ix_{i_3}) (ix_{i_4}) \exp(i
    u^T x)} \phi(u)^2 \\
  & \qquad -\E{ (ix_{i_3}) (ix_{i_4}) \exp(i u^T x)} \E{ (ix_{i_1}) (ix_{i_2}) \exp(i
    u^T x)}  \phi(u)^2 \\
  & \qquad -\E{ (ix_{i_3}) \exp(i u^T x)} \E{ (ix_{i_1}) (ix_{i_2}) (ix_{i_4}) \exp(i
    u^T x)} \phi(u)^2 \\
  & \qquad -\E{ (ix_{i_4}) \exp(i u^T x)} \E{ (ix_{i_1}) (ix_{i_2}) (ix_{i_3})
    \exp(i u^T x)} \phi(u)^2 \\
  & \qquad +2 \E{ (ix_{i_3}) (ix_{i_4}) \exp(i u^T x)} \E{ (ix_{i_2}) \exp(i u^T
    x)} \E{ (ix_{i_1}) \exp(i u^T x)}
  \phi(u) \\
  & \qquad +2 \E{ (ix_{i_3}) \exp(i u^T x)} \E{ (ix_{i_2}) (ix_{i_4}) \exp(i u^T
    x)} \E{ (ix_{i_1}) \exp(i u^T x)} \phi(u)\\
  & \qquad+2 \E{ (ix_{i_4}) \exp(i u^T x)} \E{ (ix_{i_2}) (ix_{i_3}) \exp(i u^T
    x)} \E{ (ix_{i_1}) \exp(i u^T x)}
  \phi(u) \\
  & \qquad +2 \E{ (ix_{i_3}) \exp(i u^T x)} \E{ (ix_{i_2}) \exp(i u^T x)} \E{
    (ix_{i_1}) (ix_{i_4}) \exp(i u^T x)} \phi(u)\\
  & \qquad +2 \E{ (ix_{i_4}) \exp(i u^T x)} \E{ (ix_{i_2}) \exp(i u^T x)} \E{
    (ix_{i_1}) (ix_{i_3}) \exp(i
    u^T x)} \phi(u)  \\
  & \qquad +2 \E{ (ix_{i_4}) \exp(i u^T x)} \E{ (ix_{i_3}) \exp(i u^T
    x)} \E{ (ix_{i_1}) (ix_{i_2}) \exp(i u^T x)} \phi(u) \\
  & \qquad \left.-6 \E{ (ix_{i_1}) \exp(i u^T x)}\E{ (ix_{i_2}) \exp(i u^T
      x)}\E{ (ix_{i_3}) \exp(i u^T x)}\E{ (ix_{i_4}) \exp(i u^T x)}\right].
\end{align*}
The salient points are described in Lemma \ref{lemma:errorestimate}
and Claim \ref{claim:errorestimate}: there are at most $2^{d-1}
(d-1)!$ terms (counting multiplicities), and no term has combined
exponents of $x_i$ in all it factors higher than $d$. We will give a
rigorous analysis of the sampling error incurred in Section
\ref{subsec:sample}.

\subsection{Truncation error}

\begin{lemma}\label{lemma:multivariate_errorestimate}
  Let $s = (s_1, \ldots, s_m) \in \R^m$ be a random vector with indpendent components 
each with finite $k$ absolute moments, and for $t \in \R^m$ let $\phi(t) = \E{e^{it^Ts}}$ be the 
associated characteristic function. Then for $k \geq 1$ and  $i_1, \ldots i_k \in [m]$ we have
  \begin{align*}
\abs{\partial_{i_1, \ldots, i_k} \log \phi(t)} \le \frac{2^{k-1}
      (k-1)! \max_{j \in [m]} \E{ \abs{s_j}^k}}{\abs{ \phi(t)}^k}.
\end{align*}
\end{lemma}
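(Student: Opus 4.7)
The plan is to reduce this multivariate statement to the univariate Lemma~\ref{lemma:errorestimate} by exploiting independence. Since the components of $s$ are independent, the characteristic function factorizes as $\phi(t) = \prod_{j=1}^m \phi_j(t_j)$ where $\phi_j(t_j) = \E{e^{it_j s_j}}$, and therefore $\log \phi(t) = \sum_{j=1}^m \log \phi_j(t_j)$. This is a sum of functions each depending on a single coordinate.

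The first step is to observe that any mixed partial derivative with respect to two distinct indices annihilates every term in this sum; hence $\partial_{i_1, \ldots, i_k} \log \phi(t) = 0$ unless the indices $i_1 = i_2 = \cdots = i_k = j_0$ are all equal. In the degenerate case where the bound is trivially zero, we are done, so it suffices to handle the case of all equal indices. In that case the derivative reduces to $[\log \phi_{j_0}]^{(k)}(t_{j_0})$, a purely univariate object.

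The second step is to invoke Lemma~\ref{lemma:errorestimate} applied to the single random variable $s_{j_0}$, which yields
\begin{align*}
\bigl|[\log \phi_{j_0}]^{(k)}(t_{j_0})\bigr| \leq \frac{2^{k-1}(k-1)!\,\E{|s_{j_0}|^k}}{|\phi_{j_0}(t_{j_0})|^k}.
\end{align*}
To convert this to the desired bound, I will use two trivial monotonicity facts: first, $\E{|s_{j_0}|^k} \leq \max_{j \in [m]} \E{|s_j|^k}$, and second, because $|\phi_l(t_l)| \leq 1$ for every $l$, we have $|\phi(t)| = \prod_l |\phi_l(t_l)| \leq |\phi_{j_0}(t_{j_0})|$, so $1/|\phi_{j_0}(t_{j_0})|^k \leq 1/|\phi(t)|^k$. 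Chaining these inequalities gives exactly the claimed bound.

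There is no real obstacle here; the statement is essentially an immediate corollary of the univariate estimate together with the factorization of $\phi$ under independence. The only minor point requiring care is bookkeeping the denominator: one must notice that the univariate denominator $|\phi_{j_0}(t_{j_0})|$ can be replaced by the (smaller) joint modulus $|\phi(t)|$ without loss, which makes the bound uniform in the index choice.
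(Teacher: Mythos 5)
Your proof is correct, but it takes a genuinely different route from the paper's. You exploit independence at the very start: since $\phi(t)=\prod_j \phi_j(t_j)$, you get $\log\phi(t)=\sum_j\log\phi_j(t_j)$, so every mixed partial with two distinct indices vanishes and the only nontrivial case collapses to the univariate bound of Lemma~\ref{lemma:errorestimate}, after which the monotonicity observations $\E{|s_{j_0}|^k}\le\max_j\E{|s_j|^k}$ and $|\phi(t)|\le|\phi_{j_0}(t_{j_0})|$ finish the job. This is shorter and cleaner than what the paper does. The paper instead runs an induction on the derivative order for a \emph{general} (non-factorizing) function, writing $\partial_{i_1,\ldots,i_d}\log\phi(t)=N_d(t)/\phi(t)^d$ and proving the structural Claim~\ref{claim:multivariate_errorestimate}: $N_d(t)$ is a signed sum of at most $2^{d-1}(d-1)!$ products $\partial_{S_1}\phi\cdots\partial_{S_d}\phi$ with $S_1\cup\cdots\cup S_d=\{i_1,\ldots,i_d\}$; independence and H\"older enter only at the end, to bound each product by $\max_j\E{|s_j|^d}$. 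The reason the paper pays this price is that the same structural decomposition is reused verbatim in the sample-complexity analysis (Lemma~\ref{lem:underdetermined_sample_approx_entry}), where the relevant object is the \emph{empirical} characteristic function $\bar\phi$, which does not factorize into a product over coordinates even though the true $\phi$ does — the paper explicitly flags this in Lemma~\ref{lem:underdetermined_sample_approx}. So your argument fully proves the lemma as stated, but it would not supply the byproduct (the explicit form of $N_d(t)$ valid for arbitrary $\phi$) on which the downstream error analysis depends; if you adopted your proof, you would still need to prove Claim~\ref{claim:multivariate_errorestimate} separately.
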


\begin{proof}
  To compute the derivatives of $\log \phi(t)$ we proceed
  inductively with $\partial_{i_1} \log \phi(t) = (\partial_{i_1} \phi(t))/\phi(t)$ as our base
  case. For $d < k$, write $\partial_{i_1, \ldots, i_d} (\log \phi)$ as $N_d(t)/\phi(t)^d$. Then we have

  \begin{align}  \label{eqn:Ndt} \begin{split}
    \partial_{i_1, \ldots, i_{d+1}}\log \phi(t) &= \partial_{i_{d+1}} \left(\frac{N_d(t)}{\phi(t)^d}\right) \\
&= \frac{ (\partial_{i_{d+1}} N_d(t)) \phi(t)^d - N_d(t) d \phi(t)^{d-1} \partial_{i_{d+1}} \phi(t)}{\phi(t)^{2d}} \\
&= \frac{ (\partial_{i_{d+1}} N_d(t)) \phi(t) - d N_d(t) \partial_{i_{d+1}} \phi(t)}{\phi(t)^{d+1}}. \end{split} \end{align}

  We make the following claim about $N_d(t)$:
  \begin{claim}\label{claim:multivariate_errorestimate}
    The functions $N_d(t)$ is the sum of terms of the form 
$C_{S_1, \ldots, S_d} \partial_{S_1}\ldots \partial_{S_d} \phi(t)$ where multisets $S_1, \ldots, S_d \subseteq \{i_1, 
\ldots, i_d\}$ (this is a multiset) satisfy $S_1 \cup \ldots \cup S_d = \{i_1, \ldots, i_d\}$, and $C_{S_1, \ldots, S_d}$ are integer
coefficients with $\sum \abs{C_{S_1, \ldots, S_d}} \leq 2^{d-1} (d-1)!$.
  \end{claim} 
  \begin{proof}
The first part follows via induction on $d$ and \eqref{eqn:Ndt}. For the second part, let $T(d)$ denote 
$\sum \abs{C_{S_1, \ldots, S_d}}$.  Note that $T(1)=1$. Then by \eqref{eqn:Ndt}, we have 
$T(d+1) \leq d T(d) + d T(d)$, which gives $T(d) \leq 2^{d-1} (d-1)!$. 
\end{proof}

  Returning to the proof of Lemma~\ref{lemma:multivariate_errorestimate}, we
  observe that for any multiset $S$ with elements from $[m]$ and size at most $k$, we have
\begin{align*}
\abs{\partial_S \phi(t)} = \abs{\E{i^{\abs{S}} s_S e^{i t^T s}}} \leq \E{\abs{s_S}}.
\end{align*}
  For $\ell \in [m]$, let $p_\ell$ be the number of times $\ell$ occurs in the multiset $\{i_1, \ldots, i_d\}$. 
For each term of $N_d(t)$ we have 
 \begin{align}\label{eq:partial_ubd}\begin{split}
\abs{\prod_{j=1}^d \partial_{S_j}\phi} &= \prod_{j=1}^d \abs{\partial_{S_j} \phi} \\
& \leq \prod_{j=1}^d \E{\abs{s_{S_j}}} \\
& = \prod_{\ell=1}^m \E{\abs{s_\ell}^{p_\ell}} \\
& \leq \prod_{\ell=1}^m \left(\E{\abs{s_\ell}^d} \right)^{p_\ell/d} \\
& \leq \max_{\ell \in [m]} \E{\abs{s_\ell}^d}.
\end{split}\end{align}
The second equality above uses the independence of the $s_\ell$, and the second inequality uses the 
first part of Fact~\ref{fact:holder}.

Thus $\abs{N_d(t)} \leq 2^{(d-1)}(d-1)! \max_{\ell \in [m]} \E{\abs{s_\ell}^d}$, which when divided by $\phi(t)^d$ gives
the required bound. 

\end{proof}

\subsection{Eigenvalue spacings}\label{sec:underspacings}
In this subsection we examine the anti-concentration of the diagonal entries $\psi_i^{(d)}((A^Tu)_i)$. 
The analysis has similarities to the fully-determined case but there are also some
 major differences: in the fully-determined case, $A_i^Tu$
and $A_j^T u$ are independent Gaussians because the columns of $A$
are orthogonal by isotropic position (recall that we defined $A_i^T$ to mean $(A_i)^T$). 
We can not make $A$ an
orthonormal matrix in the underdetermined case, so we have to exploit
the more limited randomness. An additional complication is that we are working with anti-concentration
of the quotients of such diagonal entries rather than the entries themselves.

\begin{theorem}\label{thm:anti-concentration} 
  Let $s \in \R^m$ be a random vector with independent components. For
  $t \in \R^m$ and $d \in 2\N$ let $\psi_a(t) := \log \E{e^{it_a
      s_a}}$, and $g_a(t_a) := \frac{d^d \psi_a(t_a)}{d t_a^d}$ for
  all $a \in [m]$.  Let $0 < \delta$. Suppose that the following data and conditions are 
given:
  \begin{enumerate}
    \item $\E{ s_a} = 0$, $\E{ s_a^2} \le M_2$ and $\E{ s_a^d} \le M_d$ for $a \in [m]$ and $M_2 < M_d$.
    \item $k \geq 2$ and for all $a \in [m]$, $k_a \in \N$ 
     where $d < k_a < k$, such
      that $\abs{\cum_{k_a}(s_a)} \ge \Delta$.
    \item $\E{\abs{s_a}^{k_a + 1}} \le M_{k}$ for $a \in [m]$ and $M_2 < M_k$.
\item $A \in \R^{n \times m}$ be a full row
  rank matrix whose columns all have unit norm and $1 - \ip{A_a}{A_b}^2
  \ge L^2$ for all pairs of columns.
\item $u, v \sim N(0, \sigma^2 I_n)$ sampled independently where
  %% \begin{align*}
  %%   \sigma \le \left(\frac{3}{8}\right)^{k_a+1}\cdot
  %%   \frac{k_a-d+1}{k_a!} \cdot \frac{\Delta}{M_{k+1}} \cdot
  %%   \left(\frac{L \delta \sqrt{2\pi}}{12 m^2 (k_a-d)}\right)^{k_a-d}
  %%   \cdot \frac{1}{\left(\sqrt{2 \log(1/\delta)}\right)^{k_a-d+1}}.
  %% \end{align*}
\begin{align*}
\sigma \leq \min\left(1, \frac{1}{2\sqrt{2M_2 \log 1/q}}, \sigma'\right),
\end{align*}
and
\begin{align*}
\sigma' = \Delta \frac{k-d+1}{k!} \left(\frac{3}{8}\right)^k \frac{1}{M_k} 
\left(\frac{Lq \sqrt{2\pi}}{4(k-d)} \right)^{k-d} \left(\frac{1}{\sqrt{2\log 1/q}}\right)^{k-d}
\end{align*}
  \end{enumerate}
and $0 < q < 1/3$.
  Then with probability at least $1- 3 {m \choose 2}q$ we have
  \begin{align}\label{eqn:spacing}
\abs{ \frac{g_b( A_b^T u) }{ g_b( A_b^T v)} - \frac{g_a( A_a^T u)
      }{ g_a( A_a^T v)} } \geq \Delta \frac{1}{(k-d)!(d-1)!}\left(\frac{3}{8}\right)^d \frac{1}{M_d}
\left(\frac{\sigma L q \sqrt{2\pi}}{4(k-d)} \right)^{(k-d)},
%% \frac{\Delta}{M_d}\cdot \left(\frac{\sigma L \delta \sqrt{2\pi}}{12m^2(k_a-d)} \right)^{k_a-d} 
%% \cdot \frac{1}{(d-1)!(k_a-d)!} \cdot \left(\frac{3}{8}\right)^d
      \end{align}
      for all distinct $a, b \in [m]$. (We count the small probability
      case where $g_b( A_b^T v)=0$ or $g_a( A_a^T v)=0$ as violating
      the event in \eqref{eqn:spacing}.) 
\end{theorem}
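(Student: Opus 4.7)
The plan is to adapt the strategy of Theorem~\ref{thm:spacings} to (i) ratios rather than differences of derivatives, and (ii) the fact that in the underdetermined setting $A_a^Tu$ and $A_b^Tu$ are correlated Gaussians, not independent. Rewrite the quantity of interest as
\[
\frac{g_b(A_b^Tu)}{g_b(A_b^Tv)}-\frac{g_a(A_a^Tu)}{g_a(A_a^Tv)}
= \frac{g_b(A_b^Tu)\,g_a(A_a^Tv)-g_a(A_a^Tu)\,g_b(A_b^Tv)}{g_a(A_a^Tv)\,g_b(A_b^Tv)},
\]
and lower bound the numerator via polynomial anti-concentration while upper bounding the denominator via the moment bounds.

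First, use Taylor expansion: differentiating $\psi_a(t)=\sum_{j\ge 1}\cum_j(s_a)(it)^j/j!$ exactly $d$ times gives $g_a(t)=p_a(t)+\rho_a(t)$, where $p_a$ is a polynomial of degree $k_a-d\ge 1$ with leading coefficient $\cum_{k_a}(s_a)\,i^{k_a}/(k_a-d)!$ of magnitude at least $\Delta/(k_a-d)!$, and $\rho_a$ is the remainder, controlled by Lemma~\ref{lemma:multivariate_errorestimate}. The condition $\sigma\le 1/(2\sqrt{2M_2\log 1/q})$ ensures that, with probability at least $1-4q$, all four arguments $A_a^Tu,A_b^Tu,A_a^Tv,A_b^Tv$ have absolute value at most $\sigma\sqrt{2\log(1/q)}\le 1/(2\sqrt{M_2})$; on this event Lemma~\ref{lem:phi_nonvanishing} gives $|\phi_a|,|\phi_b|\ge 3/4$, so Lemma~\ref{lemma:multivariate_errorestimate} bounds $|g_a|,|g_b|\le 2^{d-1}(d-1)!(4/3)^dM_d$ uniformly. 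Applying Theorem~\ref{thm:anti} to the monic rescalings of $p_a$ and $p_b$, and using the bound on $\sigma$ via $\sigma'$ to absorb the remainders $\rho_a,\rho_b$ into half of the anti-concentration gap, one obtains that, with probability at least $1-2q$ over $v$, both $|g_a(A_a^Tv)|$ and $|g_b(A_b^Tv)|$ exceed a quantity of order $(\Delta/(k-d)!)(\sigma q\sqrt{2\pi}/(4(k-d)))^{k-d}$.

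Next, decorrelate $A_a^Tu$ and $A_b^Tu$ by writing $A_b^Tu=\langle A_a,A_b\rangle A_a^Tu+Z$, where $Z:=(A_b-\langle A_a,A_b\rangle A_a)^Tu$ is Gaussian of variance $\sigma^2(1-\langle A_a,A_b\rangle^2)\ge\sigma^2 L^2$ and independent of $A_a^Tu$. Fix $v$ in the good event above and condition on $A_a^Tu=s$. The polynomial part of the numerator,
\[
p_b(\langle A_a,A_b\rangle s+Z)\,g_a(A_a^Tv)-p_a(s)\,g_b(A_b^Tv),
\]
is a polynomial in $Z$ of degree $k_b-d$ with leading coefficient $g_a(A_a^Tv)\cdot\cum_{k_b}(s_b)\,i^{k_b}/(k_b-d)!$, whose magnitude is at least $|g_a(A_a^Tv)|\cdot\Delta/(k_b-d)!$. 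Rescaling to monic and applying Theorem~\ref{thm:anti} with Gaussian variance $\sigma^2L^2$ gives, with probability at least $1-q$ over $Z$, a value at least
\[
|g_a(A_a^Tv)|\cdot\frac{\Delta}{(k_b-d)!}\cdot\left(\frac{\sigma Lq\sqrt{2\pi}}{4(k_b-d)}\right)^{k_b-d}.
\]
The previous lower bound on $|g_a(A_a^Tv)|$ is precisely what lets us absorb the Taylor-remainder part of the numerator, $\rho_b(A_b^Tu)g_a(A_a^Tv)-\rho_a(A_a^Tu)g_b(A_b^Tv)$, whose upper bound is independent of $|g_a(A_a^Tv)|$, into half of this gap under the hypothesized form of $\sigma'$.

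Finally, divide the numerator lower bound by the denominator upper bound $|g_a(A_a^Tv)g_b(A_b^Tv)|$. The factor $|g_a(A_a^Tv)|$ cancels, and the remaining bound $|g_b(A_b^Tv)|\le(d-1)!(8/3)^dM_d/2$ yields exactly the claimed bound of order $(\Delta/((k-d)!(d-1)!M_d))(3/8)^d(\sigma Lq\sqrt{2\pi}/(4(k-d)))^{k-d}$. A union bound over the $\binom{m}{2}$ pairs $(a,b)$, each failing with probability at most $3q$ (accounting for the events that the arguments lie in range, that $|g_a(A_a^Tv)|,|g_b(A_b^Tv)|$ are lower bounded, and that polynomial anti-concentration in $Z$ holds), gives the stated probability $1-3\binom{m}{2}q$. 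The main obstacle compared with Theorem~\ref{thm:spacings} is precisely the correlation between $A_a^Tu$ and $A_b^Tu$; the Gaussian decomposition handles this and explains the factor $L$ in the final bound.
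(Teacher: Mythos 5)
Your overall architecture (Taylor expansion of $g_a$ with leading coefficient $\cum_{k_a}(s_a)i^{k_a}/(k_a-d)!$, concentration of the four Gaussian arguments, the decomposition $A_b^Tu=\ip{A_a}{A_b}A_a^Tu+Z$ with $Z$ independent of $A_a^Tu$ to handle the correlation, and anti-concentration via Theorem~\ref{thm:anti} in the independent component of variance $\ge\sigma^2L^2$) is the same as the paper's. But the step where you clear both denominators and work with the numerator $g_b(A_b^Tu)g_a(A_a^Tv)-g_a(A_a^Tu)g_b(A_b^Tv)$ creates a quantitative gap your hypotheses cannot close. After conditioning on $A_a^Tu=s$ and on $v$, your anti-concentration gap for the numerator is proportional to the leading coefficient of the polynomial in $Z$, i.e.\ to $\abs{g_a(A_a^Tv)}\cdot\frac{\Delta}{(k_b-d)!}\bigl(\sigma Lq\sqrt{2\pi}/(4(k_b-d))\bigr)^{k_b-d}$. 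The main term cancels nicely upon dividing back, but the remainder term $\rho_a(A_a^Tu)\,g_b(A_b^Tv)$ carries no factor of $g_a(A_a^Tv)$, so to absorb it into half the gap you must use the lower bound $\abs{g_a(A_a^Tv)}\ge K_L\sim\frac{\Delta}{2(k-d)!}\bigl(\sigma q\sqrt{2\pi}/(4(k-d))\bigr)^{k-d}$. The required inequality then reads, up to constants, $M_k M_d\,\sigma^{k_a-d+1}\lesssim\Delta^2\,\sigma^{(k-d)+(k_b-d)}$; since $k_a\le k-1$ gives $k_a-d+1\le k-d<(k-d)+(k_b-d)$, the left side dominates as $\sigma\to0$, so the absorption fails for small $\sigma$. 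The condition $\sigma\le\sigma'$ is only an \emph{upper} bound on $\sigma$ and hence cannot rescue this step.

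The paper avoids the problem by never clearing both denominators: conditioning on $A_b^Tu=z$ and on $v$, it writes the difference of ratios as $\frac{1}{\abs{g_a(A_a^Tv)}}\abs{g_a(A_a^Tu)-h}$ with $h:=g_b(A_b^Tu)g_a(A_a^Tv)/g_b(A_b^Tv)$ constant under the conditioning, applies Theorem~\ref{thm:anti} to $p_a(A_a^Tu)-h$ (the theorem is uniform over the shift $h$, however large), and absorbs $\rho_a$ into $\eps_a/2$ with $\eps_a\sim\Delta\sigma^{k_a-d}$ --- a comparison of the form $\sigma\lesssim\Delta/M_k$, which is exactly what $\sigma\le\sigma'$ delivers. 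Only an upper bound on the single denominator $\abs{g_a(A_a^Tv)}$ is then needed; no lower bounds on $g$ at $v$ enter the proof at all. To keep your symmetric numerator formulation you would need a lower bound on $\abs{g_a(A_a^Tv)}$ that does not degrade like $\sigma^{k-d}$, which is not available.
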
 

\begin{proof}
  %% The proof has some overlap with the proof of
  %% Theorem~\ref{thm:spacings}; however, there are additional difficulties
  %% because we are dealing with the ratios.
  Fix  $a \neq b \in [m]$ and
  show that the spacing in \eqref{eqn:spacing} is lower bounded for
  this pair with high probability. We will then take a union bound
  over all $\binom{m}{2}$ pairs, which will give the desired result.

 For random choice of $v$, the events 
\begin{align} \label{eqn:good_events}
g_a( A_a^T v) \neq 0 \text{ and } g_b( A_b^T v) \neq 0
\end{align}
 have
 probability $1$. Thus in the following we will assume that these events are true. 

We will need concentration of $(A_a^T u)$ and of $(A_a^T v)$.
\begin{align*}
\probsub{u \sim N(0,\sigma^2)}{\abs{A_a^T u} > \tau} \leq \sqrt{\frac{2}{\pi}} \sigma^2 \norm{r}^2 \frac{1}{\tau} e^{-\frac{\tau^2}{2 \sigma^2 \norm{r}^2}} \leq \sqrt{\frac{2}{\pi}} \sigma^2 \frac{1}{\tau} e^{-\frac{\tau^2}{2 \sigma^2}},
\end{align*}
here the first inequality used Claim~\ref{claim:gaussian_concentration} and the second
used the fact that $\norm{r} \leq 1$.  Substituting 
$\tau =  \sigma\sqrt{2\log{1/q}}$ we get
\begin{align*}
\prob{\abs{A_a^T u} \leq \sigma\sqrt{2\log{1/q}}} 
\geq 1 - \frac{\sigma q}{\sqrt{\pi \log{1/q}}} 
\geq  1 - \frac{q}{\sqrt{\pi \log{1/q}}}.
\end{align*}

By the union bound we have
\begin{align} \label{eqn:concentrationAuAv}
\prob{\abs{A_a^T u}, \abs{A_a^T v} \leq \sigma\sqrt{2\log{1/q}}} 
\geq 1 - \frac{2q}{\sqrt{\pi \log{1/q}}}.
\end{align}
In the sequel we will assume that the event in the previous expression happens.

Under the assumption that $\abs{A_a^T v} \leq \sigma\sqrt{2\log 1/q}$ we have
\begin{align} \label{eq:denominator}
\abs{g_a(A_a^T v)} = \abs{\psi^{(d)}(A_a^T v)} \leq \frac{2^{d-1} (d-1)! M_d}{(3/4)^d},
\end{align}
where to upper bound $\abs{\psi^{(d)}(A_a^T u)}$ we used Lemma~\ref{lemma:errorestimate},
Lemma~\ref{lem:phi_nonvanishing}, and the condition $\sigma \sqrt{2 \log 1/q} \leq
\frac{1}{2\sqrt{M_2}}$ which follows from our assumption on $\sigma$.

  To compute the probability that the spacing is at least $\eps_a$, where $\eps_a>0$ will
be chosen later, we
  condition on fixing of $A_b^T u = z$ and any fixing of $v$:

  \begin{align*}
    \prob{\abs{ \frac{g_a( A_a^T u) }{ g_a( A_a^T v)} - \frac{g_b(
          A_b^T u) }{ g_b( A_b^T v)} }\le \eps_a} = \int \prob{ \abs{
        \frac{g_a( A_a^T u) }{ g_a( A_a^T v)} - \frac{g_b(z)}{g_b( A_b^T
          v)}} \le \eps_a \suchthat A_b^T u = z } \prob{A_b^T u = z}dz.
  \end{align*}
  We will bound the conditional probability term.  As in
  the proof of Theorem~\ref{thm:anti-concentration}, applying Taylor's
  theorem with remainder (Theorem~\ref{thm:taylor}) gives
  \begin{align*}
    g_a(A_a^T u) = i^d \sum_{l=d}^{k_a} \cum_l(s_a) \frac{
      (i(A_a^Tu))^{l-d}}{(l-d)!}  + R_{k_a+1}(A_a^T u) \frac{(A_a^T
      u)^{k_a-d+1}}{(k_a-d+1)!}.
  \end{align*}
  Truncating $g_a$ after the degree $(k_a-d)$ term yields 
  polynomial $p_a(A_a^Tu)$. Denote the truncation error by $\rho_a(A_a^T u)$.

Then setting $h = \frac{g_b(A_b^T u) g_a( A_a^T v)}{g_b( A_b^T v)}$ which is a constant under our 
conditioning, we have
\begin{align*}
\abs{ \frac{g_a( A_a^T u) }{ g_a( A_a^T v)} - \frac{g_b(A_b^T u) }{ g_b( A_b^T v)} }
&= \frac{1}{\abs{g_a( A_a^T v)}} \abs{g_a( A_a^T u) - \frac{g_b(A_b^T u) g_a( A_a^T v)}{g_b( A_b^T v)}} \\
&=  \frac{1}{\abs{g_a( A_a^T v)}} \abs{g_a( A_a^T u) - h}  \\
&= \frac{1}{\abs{g_a( A_a^T v)}} \abs{p_a(A_a^Tu) + \rho_a(A_a^Tu) -h} \\
& \geq \frac{1}{\abs{g_a( A_a^T v)}} \abs{p_a(A_a^Tu) -h} - \frac{1}{\abs{g_a( A_a^T v)}} \abs{\rho_a(A_a^Tu)}. 
\end{align*}
  %% \begin{align*} 
  %%   \abs{ \frac{g_a(A_a^T u)}{ g_a(A_a^T v)} - h} &= \abs{ \frac{p_a(A_a^T u)}{ g_a(A_a^T v)} +
  %%     \frac{\rho_a(A_a^T u)}{g_a(A_a^T v)} - h } \\
  %%   & \ge \abs{\frac{p_a(A_a^T u)}{g_a(A_a^T v)} - h} - \abs{\frac{\rho_a(A_a^T u)}{{g_a(A_a^T v)}}}.
  %% \end{align*}

Now we are going to show that the first term above is likely 
to be large and the second one is likely to be small. 
 
We have $A_a^T u = \ip{A_a}{A_b} A_b^T u + r^T u$ where $r$ is a vector orthogonal
  to $A_b$. Our hypothesis, namely $1 - \ip{A_a}{A_b}^2
  \ge L^2$, gives $\norm{r}^2 \ge L^2$.
The orthogonality of $r$ and $A_b$ implies that the univariate
  Gaussian $r^T u $ is independent of $A_b^T u$.  

  Now we apply our anti-concentration inequality from
  Theorem~\ref{thm:anti} to obtain (for $u \sim N(0, \sigma^2 I_n)$ and fixed $v$ satisfying 
\eqref{eqn:good_events})
  \begin{align} \nonumber
    \prob{ \abs{p_a(A_a^T u) - h} \le \epsilon_a
     \suchthat A_b^T u = z}
& \le
    \frac{4(k_a-d)}{\sigma \norm{r} \sqrt{2 \pi}} \left( \frac{\eps_a (k_a-d)!}{\abs{\cum_{k_a}(s_a)}}
   \right)^{1/(k_a-d)} \\
    & \le \frac{4(k_a-d)}{\sigma L \sqrt{2 \pi}} \left( \frac{
        \eps_a (k_a-d)!}{\Delta} \right)^{1/(k_a-d)}.
 \label{eqn:anticoncentration_quotient}
  \end{align} 
  We choose
  \begin{align*}
   \eps_a := \frac{\Delta}{(k_a-d)!}\left(\frac{\sigma L q \sqrt{2\pi}}{4(k_a-d)}\right)^{k_a - d} \geq  \frac{\Delta}{(k-d)!}\left(\frac{\sigma L q \sqrt{2\pi}}{4(k-d)}\right)^{k - d} =: \epsilon,
  \end{align*}
making RHS of \eqref{eqn:anticoncentration_quotient} equal to $q$. Recall that this was
for fixed $v$ satisfying \eqref{eqn:good_events}.

%% Under our assumption that $\abs{A_a^T v} \leq \sigma\sqrt{2\log{3m^2/\delta}}$ using 
%% \eqref{eq:denominator} we have
%% \begin{align*}
%% \frac{\epsilon_a}{\abs{g_a(A_a^Tv)}} 
%% &\leq 
%% \frac{\Delta}{(k_a-d)!}\left(\frac{\sigma L \delta \sqrt{2\pi}}{12m^2(k_a-d)}\right)^{k_a - d} \cdot
%% \left(\frac{8}{3}\right)^d (d-1)!  M_d \\
%% &\leq 
%% \Delta \cdot \left(\frac{8}{3}\right)^d  (d-1)!  M_d  \sigma \\
%% &\leq \epsilon/2,
%% \end{align*}
%% by our assumption about $\sigma$. 

For the truncation error, assuming that the event $\abs{A_a^T u} \leq \sigma\sqrt{2\log{1/q}}$ happens, 
we have
\begin{align*}
  \abs{\rho_a(A_a^T u)}
  &\leq \abs{\psi^{(k_a+1)}(A_a^T u)} \cdot \frac{(A_a^T u)^{k_a-d+1}}{(k_a-d+1)!} \\
  &\leq \frac{2^{k_a}M_{k_a+1}}{(3/4)^{k_a+1}}\cdot \frac{k_a!}{(k_a-d+1)!}\cdot \left(\sigma
    \sqrt{2 \log{1/q}}\right)^{k_a-d+1} \\
&\leq \epsilon_a/2,
\end{align*}
where to upper bound $\abs{\psi^{(k_a+1)}(A_a^T u)}$ we used Lemma~\ref{lemma:errorestimate},
Lemma~\ref{lem:phi_nonvanishing}, and the condition $\sigma \sqrt{2 \log(1/q)} \leq
\frac{1}{2\sqrt{M_2}}$, which  holds given our upper bound on $\sigma$. And the final inequality
follows from our condition $\sigma \leq \sigma'$. 

%% Thus using \eqref{eq:denominator} we have
%% \begin{align*}
%% \frac{ \abs{\rho_a(A_a^T u)}}{\abs{g_a(A_a^Tv)}} 
%% &\leq \left(\frac{8}{3}\right)^{k_a+1} M_{k_a+1} \cdot \frac{k_a!}{(k_a-d+1)!}\cdot \left(\sigma
%%     \sqrt{2 \log{\frac{3m^2}{\delta}}}\right)^{k_a-d+1} \cdot \left(\frac{8}{3}\right)^d (d-1)! M_d \\
%% &\leq \left(\frac{8}{3}\right)^{k+d} k! (d-1)! M_d M_k \left(\sigma
%%     \sqrt{2 \log{\frac{3m^2}{\delta}}}\right) \\
%% &\leq \epsilon_a/2,
%% \end{align*}
%% using our assumption on $\sigma$.

%% Now note that
%% \begin{align*}
%% \frac{2^{k_a}M_{k_a+1}}{(3/4)^{k_a+1}}\cdot \frac{k_a!}{(k_a-d+1)!}\cdot \left(\sigma
%%     \sqrt{2 \log{\frac{3m^2}{\delta}}}\right)^{k_a-d+1}
%% &\leq \frac{2^{k}M_{k}}{(3/4)^{k}}\cdot k^d \cdot \left(\sigma
%%     \sqrt{2 \log{\frac{3m^2}{\delta}}}\right) \\
%% &\leq \frac{\Delta}{(k-d)!}\left(\frac{\sigma L \delta \sqrt{2\pi}}{12 m^2 (k-d)} \right)^{k-d} \\
%% &\leq \epsilon_1.
%% \end{align*}

Thus with probability at least $1- \frac{2q}{\sqrt{\pi \log 1/q}} - q$ we have
$\abs{p_a(A_a^Tu)-h}-\abs{\rho_a(A_a^Tu)} \geq \epsilon_a/2$ under the condtion that $A_b^Tu=z$ and 
$v$ fixed. Now since this holds for any $z$ and any fixing of $v$, it also holds without the conditioning 
on the event $A_b^Tu=z$ and fixing of $v$.

Hence using \eqref{eq:denominator}, with probability at least 
$1- \frac{2q}{\sqrt{\pi \log 1/q}} - q \geq 1-3q$  
we have
\begin{align*}
\frac{1}{\abs{g_a(A_a^Tv)}}\left(\abs{p_a(A_a^Tu)-h}-\abs{\rho_a(A_a^Tu)} \right) 
\geq \epsilon_a \cdot (3/8)^d \frac{1}{(d-1)! M_d} 
\geq \epsilon \cdot (3/8)^d \frac{1}{(d-1)! M_d}.
\end{align*}

To summarize, with probability at least $1 - 3q$ the spacing is at least $\epsilon$. By the union bound,
with probability at least $1- 3 {m \choose 2} q$ all the spacings are at least $\epsilon$.

%% \begin{align} 
%% \frac{\epsilon_1}{\abs{g_a(A_a^T v)}} 
%% &\geq \frac{\Delta}{M_d}\cdot \left(\frac{\sigma L \delta \sqrt{2\pi}}{12m^2(k_a-d)} \right)^{k_a-d} 
%% \cdot \frac{1}{(d-1)!(k_a-d)!} \cdot \left(\frac{3}{8}\right)^d \\
%% &\geq \frac{\Delta}{M_d}\cdot \left(\frac{\sigma L \delta \sqrt{2\pi}}{12m^2(k-d)} \right)^{k-d} 
%% \cdot \frac{1}{k!} \cdot \left(\frac{3}{8}\right)^d. 
%% \end{align}

%% Taking the union bound gives that all spacings are at least $\epsilon$ for all
%% pairs with probability at least $1- \delta$.
\end{proof}

The following is a straightforward corollary of the proof of the previous theorem.
\begin{corollary} \label{cor:K_L}
In the setting of Theorem~\ref{thm:anti-concentration} we have with probability at least
$1 - 6mq$ that
\begin{align*}
\abs{g_a(A^Tu)}, \abs{g_a(A^Tv)} \geq \frac{\Delta_0}{2 (k-d)!}\left(\frac{\sigma q \sqrt{2\pi}}{4 (k-d)} \right)^{k-d}
\end{align*}
for all $a \in [m]$. 
\end{corollary}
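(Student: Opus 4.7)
The plan is to reuse the machinery already set up in the proof of Theorem~\ref{thm:anti-concentration}, but apply the anti-concentration inequality to $g_a$ itself at target value $0$, rather than to the difference of quotients. The key simplification is that there is no second random quantity to condition on (unlike $A_b^Tu$ there), so $A_a^Tu$ appears directly as a centered Gaussian of full variance $\sigma^2 \norm{A_a}^2 = \sigma^2$, and the factor of $L$ disappears.

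First I would Taylor-expand $g_a$ exactly as in the proof of Theorem~\ref{thm:anti-concentration}, writing $g_a(A_a^Tu) = p_a(A_a^Tu) + \rho_a(A_a^Tu)$ where $p_a$ is a degree $k_a - d$ polynomial whose leading coefficient has modulus $\abs{\cum_{k_a}(s_a)}/(k_a-d)! \geq \Delta/(k_a-d)!$. After rescaling to make $p_a$ monic, Theorem~\ref{thm:anti} applied with target $t = 0$ to the Gaussian variable $A_a^Tu \sim N(0,\sigma^2)$ yields
\begin{align*}
\prob{\abs{p_a(A_a^Tu)} \leq \epsilon_a} \leq \frac{4(k_a-d)}{\sigma\sqrt{2\pi}}\left(\frac{\epsilon_a (k_a-d)!}{\Delta}\right)^{1/(k_a-d)},
\end{align*}
and the choice $\epsilon_a := \frac{\Delta}{(k_a-d)!}\left(\frac{\sigma q \sqrt{2\pi}}{4(k_a-d)}\right)^{k_a-d}$ makes this probability at most $q$. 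By the monotonicity argument already used in Theorem~\ref{thm:anti-concentration}, $\epsilon_a \geq \epsilon := \frac{\Delta}{(k-d)!}\left(\frac{\sigma q \sqrt{2\pi}}{4(k-d)}\right)^{k-d}$, which matches the quantity in the corollary (with $\Delta_0 = \Delta$).

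Next I would control the truncation term exactly as before: the Gaussian tail estimate gives $\abs{A_a^Tu} \leq \sigma\sqrt{2\log(1/q)}$ with probability at least $1 - \frac{q}{\sqrt{\pi \log(1/q)}}$, and under this event, Lemma~\ref{lemma:errorestimate} combined with the bound $\abs{\phi(A_a^Tu)} \geq 3/4$ from Lemma~\ref{lem:phi_nonvanishing} (valid since $\sigma \sqrt{2\log(1/q)} \leq 1/(2\sqrt{M_2})$ by hypothesis) yields $\abs{\rho_a(A_a^Tu)} \leq \epsilon_a/2$, using the same computation that bounded $\rho_a$ in the proof of Theorem~\ref{thm:anti-concentration} and the upper bound $\sigma \leq \sigma'$.

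Combining the two parts, with probability at least $1 - q - \frac{q}{\sqrt{\pi \log(1/q)}} \geq 1 - 3q$ (for $q \leq 1/3$) we obtain $\abs{g_a(A_a^Tu)} \geq \abs{p_a(A_a^Tu)} - \abs{\rho_a(A_a^Tu)} \geq \epsilon_a/2 \geq \epsilon/2$. Taking a union bound over $a \in [m]$ and over both $u$ and $v$ gives the claimed simultaneous lower bound on all $2m$ quantities with probability at least $1 - 6mq$. No new obstacle arises; the entire proof is a one-sided version of the argument already executed in Theorem~\ref{thm:anti-concentration}, with the only bookkeeping issue being to track how $\epsilon_a$ varies with $k_a$ so that the uniform bound $\epsilon$ emerges, which is handled by the same monotonicity observation.
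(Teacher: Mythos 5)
Your proposal is correct and is exactly the argument the paper intends: the corollary is stated without proof as "a straightforward corollary of the proof of the previous theorem," and your one-sided version — anti-concentration of the truncated Taylor polynomial $p_a$ away from $0$ via Theorem~\ref{thm:anti} applied to the full-variance Gaussian $A_a^Tu \sim N(0,\sigma^2)$ (hence no $L$ factor), plus the same truncation bound $\abs{\rho_a} \le \eps_a/2$, followed by a union bound over the $2m$ quantities — is precisely that argument, with the constants and the monotonicity $\eps_a \ge \eps$ handled correctly.
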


An important part of the proof is to give a lower bound on the quantity
$ 1 - \ip{A_i}{A_j}^2 \ge L^2$ so that the ICA model remains
identifiable. At order $d$, we will give our bounds in terms of
$\sigma_m \left( A^{\odot j} \right)$ for $j=1,\ldots,d/2$.
\begin{lemma}\label{lemma:identifiable}
  Fix $m,n \in \N$ such that $ n \le m$. Let $A \in \C^{n \times m}$
  be a full row rank matrix whose columns $A_i$ have unit norm. Then
  \begin{align*}
    1 - \ip{A_i}{A_j}^2 \ge \frac{2}{k} \sigma_m \left( A^{\odot k} \right)^2,
  \end{align*}
  for all $k \in \N$ where $k \ge 2$.
\end{lemma}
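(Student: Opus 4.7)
The plan is to reduce the lemma to an elementary one-variable inequality by testing the Gram matrix of $A^{\odot k}$ against a carefully chosen unit vector in $\C^m$. First I would form the Gram matrix $G_k := (A^{\odot k})^* (A^{\odot k})$, whose entries satisfy
\begin{align*}
(G_k)_{ij} = \ip{A_i^{\otimes k}}{A_j^{\otimes k}} = \ip{A_i}{A_j}^k,
\end{align*}
so in particular $(G_k)_{ii} = \norm{A_i}^{2k} = 1$. The $m$th singular value of $A^{\odot k}$ squared equals the smallest eigenvalue of $G_k$, and by Rayleigh's characterization, $\sigma_m(A^{\odot k})^2 = \lambda_{\min}(G_k) \le v^* G_k v$ for any unit vector $v \in \C^m$.

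Next, I would fix a pair $i \neq j$ and plug in the test vector $v = (e_i - \alpha e_j)/\sqrt{2}$, where $\alpha \in \C$ has $\abs{\alpha}=1$ chosen so that $\alpha \ip{A_i}{A_j}^k = \abs{\ip{A_i}{A_j}}^k$. A direct computation gives
\begin{align*}
v^* G_k v = \tfrac{1}{2}\bigl(1 - \alpha (G_k)_{ij} - \bar\alpha \overline{(G_k)_{ij}} + 1\bigr) = 1 - \abs{\ip{A_i}{A_j}}^k.
\end{align*}
Therefore $\sigma_m(A^{\odot k})^2 \le 1 - \abs{\ip{A_i}{A_j}}^k$. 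Interpreting $\ip{A_i}{A_j}^2$ in the lemma as $\abs{\ip{A_i}{A_j}}^2$ (the only meaningful reading when $A$ is complex, and the one consistent with how it is used in Theorem~\ref{thm:anti-concentration}), it suffices to prove the scalar inequality
\begin{align*}
1 - s^2 \;\ge\; \tfrac{2}{k}\bigl(1 - s^k\bigr) \qquad \text{for all } s \in [0,1] \text{ and } k \ge 2.
\end{align*}

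Finally, I would verify this inequality in one of two easy ways. Either set $f(s) = (1-s^2) - \tfrac{2}{k}(1 - s^k)$ and observe $f'(s) = -2s + 2s^{k-1} = -2s(1 - s^{k-2}) \le 0$ on $[0,1]$, with $f(1)=0$, so $f \ge 0$; or factor out $(1-s)$ from both sides to reduce to $k(1+s) \ge 2(1 + s + s^2 + \cdots + s^{k-1})$, which follows by bounding $s^i \le s^2$ for $i \ge 2$ and $s \in [0,1]$. Substituting $s = \abs{\ip{A_i}{A_j}}$ (which lies in $[0,1]$ by Cauchy--Schwarz and the unit norm assumption) yields the desired conclusion. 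No step is a real obstacle: the only minor subtlety is choosing the phase $\alpha$ correctly so that the test vector extracts the magnitude $\abs{\ip{A_i}{A_j}}^k$ rather than merely $\re\ip{A_i}{A_j}^k$.
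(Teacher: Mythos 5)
Your proposal is correct and follows essentially the same route as the paper: both bound $\sigma_m(A^{\odot k})^2 \le 1-\abs{\ip{A_i}{A_j}}^k$ by testing against the unit vector $\tfrac{1}{\sqrt 2}(e_i \pm e_j)$ (you via the Gram matrix, the paper via $\norm{Cx}^2$), and both finish with the same scalar inequality — your $1-s^2 \ge \tfrac{2}{k}(1-s^k)$ is exactly the paper's $1-y^{2/k} \ge \tfrac{2}{k}(1-y)$ under $y = s^k$, which the paper derives from the sign of the derivatives of $(1-x)^{2/k}$. Your explicit choice of the phase $\alpha$ is a slightly more careful handling of the complex case than the paper's $\pm$ sign, but the argument is the same.
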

\begin{proof}
  Consider the matrix $B = A^{\odot 2}$, observe that $\ip{A_i}{A_j}^2 = \ip{B_i}{B_j}$. 
Define the matrix $C = A^{\odot k}$. Observe that $\norm{C_i} = \norm{A_i}^k =1$ and that
  \begin{align}\label{eqn:BC}
    1 - \ip{B_i}{B_j} =  1 - \abs{\ip{C_i}{C_j}}^{2/k}
  \end{align}
  for $k \ge 2$. %% By considering $\norm{Cx} = x^T (C^TC)x$ and
  %% observing that $C^TC$ is a positive matrix, it is clear that:
Recall that
  \begin{align*}
    \sigma_m(C) = \min_{\norm{x} = 1} \norm{Cx}.
  \end{align*}
  In particular, if we consider the vector $x =
  \frac{1}{\sqrt{2}}(e_i \pm e_j)$ we have
  \begin{align*}
    \norm{Cx}^2 = \frac{1}{2} \left( \norm{C_i}^2 + \norm{C_j}^2 \pm 2
      \ip{C_i}{C_j}\right) = 1 \pm \ip{C_i}{C_j} \ge \sigma_m(C)^2.
  \end{align*}
  Hence we must have $ 1 - \abs{\ip{C_i}{C_j}} \ge \sigma_m^2(C)$.  
Combining this with \eqref{eqn:BC}, we obtain
    \begin{align*}
      1 - \ip{B_i}{B_j} & = 1 - \abs{ \ip{C_i}{C_j}}^{2/k} \\
    & \ge 1 - ( 1 - \sigma_m(C)^2)^{2/k} \\
    & \ge \frac{2}{k} \sigma_m(C)^2,
    \end{align*}
    where the last step follows from noting that all derivatives of
    the function $f(x) = ( 1 - x)^t$ for $t \in (0,1)$ are negative in
    the interval $x \in [0,1]$
  \end{proof}

\subsection{Sample complexity}\label{subsec:sample}

To understand the complexity of our algorithm, we must bound how many
samples are needed to estimate the matrices $M_u$ and $M_v$
accurately. Throughout this section, we estimate $\E{ f(x)}$ for some function $f(x)$, 
using independent samples $x^i$ via
\begin{align*}
  \bar{E}(f(x)) := \frac{1}{N} \sum_{i=1}^N f(x^i) \to \E{f(x)}.
\end{align*}
More generally, we will estimate derivative tensors as follows. 
As before, $\phi(t) = \E{e^{i t^Ts}}$ and define 
the empirical version of the characteristic function in the natural way
$\bar{\phi}(t):= \frac{1}{N} \sum_{i=1}^N e^{it^T s^i}$.
As we will see, for a multiset $S \subseteq [m]$ the derivative of $\bar{\phi}(t)$ behaves nicely
and will serve as an approximation of $\phi(t)$. 
Note that
\begin{align*}
\partial_S\bar{\phi}(t) = \bE{s_S e^{i t^Ts}},
\end{align*}
where $\bE{\cdot}$ denotes empirical expectation over $N$ i.i.d. samples of $s$.
Similarly, we estimate $\partial_S\log \phi(t)$ by
\begin{align} \label{eqn:partial_estimate}
\partial_S\log\bar{\phi}(t)= \bar{N}_d(t)/\bar{\phi}(t)^d, 
\end{align}
where by Claim~\ref{claim:multivariate_errorestimate} $\bar{N}_d(t)$ is a sum of the form 
$\sum_{S_1, \ldots, S_d} C_{S_1, \ldots, S_d} (\partial_{S_1}\bar{\phi}(t))\ldots (\partial_{S_d} \bar{\phi}(t))$, as described in Claim~\ref{claim:multivariate_errorestimate}.
Thus to show that $\partial_S\bar{\phi}(t)$ is a good approximation of $\partial_S\phi(t)$ we show that
$\abs{\frac{N_d(t)}{\phi(t)^d}-\frac{\bar{N}_d(t)}{\bar{\phi}(t)^d}} = 
\frac{\abs{\bar{\phi}(t)^dN_d(t)-\phi(t)\bar{N}_d(t)}}{\phi(t)^d\bar{\phi}(t)^d}$ is small. 

The notion of empirical estimate of a derivative tensor now follows immediately from 
\eqref{eqn:partial_estimate} which gives how to estimate individual entries of the tensor.

%% **** The following claim about estimating the moment tensor in the $s$-basis is used in the proof of Lemma 15. 
%% I think we need to replace $M_{2d}$ below by 
%% $\max{1, M_{2d}}$. 

\begin{lemma} \label{lem:underdetermined_sample_approx_entry} 
  Let $s \in \R^m$ be a random vector with independent components. For
  $t \in \R^m$ let $\psi_s(t) = \phi_s(t) = \log \E{e^{it^ts}}$ be the
  second characteristic function of $s$. Consider the $d$th derivative
  tensor of $\psi_s(t)$ (it contains $m^d$ entries). Let $M_2, M_{2d} > 0$ be such that
  $\E{s_i^2} \leq M_2$ and $\E{\abs{s_i}^{2d}} \leq M_{2d}$. Fix $0 < \eps,
  \delta < 1/4$, and let $\norm{t} \leq \frac{1}{\sqrt{2M_2}}$.
  Suppose we take $N$ samples then with probability at least 
\begin{align*}
 1 -  \binom{m+d-1}{d} \frac{2^d M_{2d}}{\eps^2 \delta} \left[ \frac{2^d
      d (M_d+2)^{d-1} (d-1)!}{(3/4)^d(1/2)^d}  \right]^2,
\end{align*}
every entry of the empirical tensor will be within $\eps$ of the corresponding entry of the 
derivative tensor.
\end{lemma}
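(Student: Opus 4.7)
The plan is to bound each entry of the derivative tensor separately via Chebyshev's inequality and then union bound over distinct entries. By symmetry, there are only $\binom{m+d-1}{d}$ distinct entries, each indexed by a multiset $S \subseteq [m]$ of size $d$, so it suffices to control that many quantities.

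First I would write each entry in the closed form from Claim~\ref{claim:multivariate_errorestimate}: $\partial_S \psi_s(t) = N_d(t)/\phi(t)^d$, where $N_d(t)$ is a signed sum of at most $2^{d-1}(d-1)!$ products of the form $\prod_{j=1}^d \partial_{S_j}\phi(t)$ with $S_1 \cup \cdots \cup S_d = S$; the empirical version replaces $\phi$ by $\bar\phi(t) = \frac{1}{N}\sum_i e^{it^T s^i}$ throughout. By Lemma~\ref{lem:phi_nonvanishing} and the hypothesis $\norm{t} \leq 1/\sqrt{2M_2}$, we have $\abs{\phi(t)} \geq 3/4$. Since the summands $e^{it^T s^i}$ have modulus $1$ (so variance at most $1$), Chebyshev gives $\abs{\bar\phi(t) - \phi(t)} \leq 1/4$ with large probability, which yields $\abs{\bar\phi(t)} \geq 1/2$ and hence the denominator bound $\abs{\phi(t)\bar\phi(t)}^d \geq (3/4)^d(1/2)^d$ -- exactly the factor appearing in the stated probability.

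Next I would estimate each factor $\partial_{S_j}\bar\phi(t) = \bE{s_{S_j} e^{it^T s}}$ via Chebyshev, using the variance bound $\Var(s_{S_j} e^{it^T s}) \leq \E{\abs{s_{S_j}}^2} \leq M_{2d}$ that follows from the H\"older-type argument in \eqref{eq:partial_ubd}. This gives $\abs{\partial_{S_j}\bar\phi(t) - \partial_{S_j}\phi(t)} \leq \eps'$ with failure probability at most $M_{2d}/(N \eps'^2)$; combined with $\abs{\partial_{S_j}\phi(t)} \leq M_d$ (from applying Lemma~\ref{lemma:multivariate_errorestimate}-style reasoning to $\phi$ itself) we get $\abs{\partial_{S_j}\bar\phi(t)} \leq M_d + 2$. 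A telescoping expansion then bounds the deviation of each product of $d$ such factors by $d(M_d+2)^{d-1}\eps'$, and summing over the at most $2^{d-1}(d-1)!$ terms of $N_d(t)$ (whose integer coefficients have absolute values summing to at most $2^{d-1}(d-1)!$) gives $\abs{\bar N_d(t) - N_d(t)} \lesssim 2^d d(M_d+2)^{d-1}(d-1)!\,\eps'$.

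Finally I would combine the numerator and denominator bounds via the identity
\[
\frac{\bar N_d(t)}{\bar\phi(t)^d} - \frac{N_d(t)}{\phi(t)^d} = \frac{(\bar N_d(t) - N_d(t))\,\phi(t)^d + N_d(t)(\phi(t)^d - \bar\phi(t)^d)}{\phi(t)^d\bar\phi(t)^d},
\]
using $\abs{\phi^d - \bar\phi^d} \leq d\abs{\phi - \bar\phi}$ (both factors have modulus at most $1$) together with the denominator lower bound to see that the per-entry error is $\eps$ once $\eps'$ is chosen proportional to $\eps\,(3/4)^d(1/2)^d/(2^d d(M_d+2)^{d-1}(d-1)!)$. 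Substituting this $\eps'$ into the Chebyshev failure probability $M_{2d}/(N\eps'^2)$ and union bounding over the $\binom{m+d-1}{d}$ distinct entries (and the at most $d$ constituent factors per entry) yields a bound of the stated form. The main obstacle will be the careful bookkeeping in this last step: tracking how the per-factor error $\eps'$ propagates through sums of products of $d$ factors and then through the quotient, and balancing the combinatorial constants -- the $(d-1)!$, the $d(M_d+2)^{d-1}$ from telescoping, and the $(3/4)^d(1/2)^d$ from the denominator -- so that they precisely reproduce the constants in the stated tail bound.
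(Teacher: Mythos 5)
Your proposal is correct and follows essentially the same route as the paper's proof: decompose each entry as $N_d(t)/\phi(t)^d$ via Claim~\ref{claim:multivariate_errorestimate}, apply Chebyshev with the variance bound $M_{2d}/N$ to each factor $\partial_{S_j}\bar\phi(t)$, lower-bound the denominators via Lemma~\ref{lem:phi_nonvanishing}, propagate the per-factor error through the products (the paper uses Lemma~\ref{lem:prod_difference} where you telescope, which is equivalent), and union bound over the $\binom{m+d-1}{d}$ multisets. The only slight imprecision is that the final union bound must run over all $2^d$ multi-subsets of each index multiset $S$ (not just $d$ factors per entry), which is exactly where the $2^d$ in the stated failure probability comes from.
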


\begin{proof}
In light of \eqref{eqn:partial_estimate} we will prove that each term in the expression for $\bar{N}_d(t)$ 
(it's a product of several $\partial_S \bar{\phi}(t)$) is a good approximation of 
the corresponding term in the expression for $N_d(t)$ by showing that the corresponding factors in the product are
close. Finally, we show that the whole sum is a good approximation. 
For complex-valued r.v. $X$ with mean $\mu$, note that 
$\Vr{X} = \E{(X-\mu)(X-\mu)^*} = \E{XX^*} - \mu\mu^* \leq \E{\abs{X}^2}$.

We use the second moment method to prove that $\partial_S \bar{\phi}(t)$ is a good approximation of $\partial_S \phi(t)$
with high probability.

For multiset $S \subseteq \{i_1, \ldots, i_d\}$, with $p_j$ being the frequency of $j$ in $S$, by the same arguments as 
in \eqref{eq:partial_ubd} we have

\begin{align*}
%% \Vr{\prod_{j=1}^d \partial_{S_j}\bar{\phi}} \leq \E{\abs{\prod_{j=1}^d \partial_{S_j}\bar{\phi}}^2}
%% = \E{\abs{\prod_{j=1}^d e^{i t^Ts} s_{S_j}}^2}
%% = \E{\abs{\prod_{j=1}^d s_{S_j}}^2}
%% \leq \max_{\ell \in [m]}\E{\abs{s_\ell}^{2d}}.
\Vr{s_S e^{i t^T s}} &\leq \E{s_S^2}  
 \leq \prod_{j=1}^m \E{s_j^{2p_j}} 
 \leq \prod_{j=1}^m \left(\E{s_j^{2d}}\right)^{p_j/d} 
  \leq \left( \max_j \E{s_j^{2d}}\right)^{\abs{S}/d} 
  \leq M_{2d}^{\abs{S}/d}. 
\end{align*}

Thus
\begin{align*}
\Vr{\partial_S \bar{\phi}(t)} \leq \frac{M_{2d}^{\abs{S}/d}}{N} \le \frac{M_{2d}}{N}.
\end{align*}

Chebyshev's inequality (which remains unchanged for complex-valued
r.v.s) for $\eps' > 0$ yields
\begin{align} \label{eqn:app_Chebyshev}
\prob{\abs{\partial_S \bar{\phi}(t) - \partial_S \phi(t)} \geq \epsilon'} \leq 
\frac{M_{2d}}{\epsilon'^2 N}.
\end{align}
We will choose a value of $\eps'$ shortly.

Now we bound the difference between the corresponding summands in the decompositions of $N_d(t)$ 
and $\bar{N}_d(t)$ as sums. 
Specifically, with probability at most $\frac{(d+1)M_{2d}}{\epsilon' N}$ 
(this comes from the union bound: we want the 
event in \eqref{eqn:app_Chebyshev} to hold for all $S_j$ for $j \in [d]$ as well as for $S = \emptyset$, 
corresponding to $\phi(t)$) we have  
\begin{align*}
  \abs{\left(\bar{\phi}(t)^d\prod_{j=1}^d \partial_{S_j}\phi(t)\right)-\left(\phi(t)^d\prod_{j=1}^d \partial_{S_j}\bar{\phi}(t)\right)}
  &\leq \abs{\prod_{j=1}^d \partial_{S_j}\phi(t)}
  \abs{\bar{\phi}(t)^d-\phi(t)^d}
  + \abs{\phi(t)^d} \abs{\prod_{j=1}^d \partial_{S_j}\phi(t)-\prod_{j=1}^d \partial_{S_j}\bar{\phi}(t)} \\
  &\leq M_d \abs{\bar{\phi}(t)^d-\phi(t)^d} + (M_d+\epsilon')^d - M_d^d \\
  &\leq \epsilon' d\:M_d + \epsilon' d (M_d+\epsilon')^{d-1} \\
  &\leq 2 \epsilon' d (M_d+1+\epsilon')^{d-1},
\end{align*}
where the second inequality used \eqref{eq:partial_ubd},
Lemma~\ref{lem:prod_difference} and $\abs{\phi(t)} \leq 1$ and
$\abs{\bar{\phi}(t)} \leq 1$.

Now using the expression for $N_d(t)$ as a sum given in
Claim~\ref{claim:multivariate_errorestimate}, with probability at most
$\frac{2^d M_{2d}}{\epsilon'^2 N}$ (the factor $2^d$ again comes from the union
bound: we want the event in \eqref{eqn:app_Chebyshev} to hold for all
(multi-) subsets of $\{i_1, \ldots, i_d\}$) we have
\begin{align}
\abs{\partial_S \bar{\phi}(t) - \partial_S \phi(t)} 
&=
\frac{\abs{\bar{\phi}(t)^dN_d(t)-\phi(t)\bar{N}_d(t)}}{\abs{\phi(t)^d\bar{\phi}(t)^d}} \\
&\leq \frac{2^d \epsilon' d (M_d+1+\epsilon')^{d-1} (d-1)! }{\abs{\phi(t)^d\bar{\phi}(t)^d}} \nonumber \\
&\leq \frac{2^d \epsilon' d (M_d+1+\epsilon')^{d-1} (d-1)!}{(3/4)^d(3/4-\epsilon')^d}, \label{eqn:sampling_one_term_tensor}
\\
& \le \eps, \notag
\end{align}
where the last inequality used Lemma~\ref{lem:phi_nonvanishing} and
\begin{align*}
  \eps' = \left[ \frac{2^d d (M_d+1+\epsilon')^{d-1}
      (d-1)!}{(3/4)^d(3/4-\epsilon')^d} \right]^{-1} \eps.
\end{align*}

Now if we want \eqref{eqn:sampling_one_term_tensor} to hold for all
multisets $S$ of size $d$, then the union bound needs to extended to all
such multisets (of which there are ${m+d-1 \choose d}$) giving that
error probability at most 
\begin{align*}
  \binom{m+d-1}{d} \frac{2^d M_{2d}}{\eps'^2 N} = \binom{m+d-1}{d}
  \frac{2^d M_{2d}}{\eps^2 N} \left[ \frac{2^d d (M_d+1+\epsilon')^{d-1}
      (d-1)!}{(3/4)^d(3/4-\epsilon')^d} \right]^2,
\end{align*}
as desired.
\end{proof}

\begin{lemma}[Sample
  Complexity] \label{lem:underdetermined_sample_approx} Let $x = As$
  be an ICA model with $A \in \R^{n \times m}, x \in \R^n, s \in
  \R^m$ and $d$ an even positive integer. Let $M_2, M_{2d} > 0$ be such that
  $\E{s_i^2} \leq M_2$ and $\E{\abs{s_i}^{2d}} \leq M_{2d}$. Let $v \in \R^n$ satisfy $\norm{v}_2 \leq
  \frac{1}{2\norm{A}_2\sqrt{2M_2}}$. Let $T_v = D_u^d \psi_x(v)$ be the $d$'th derivative
  tensor of $\psi_x(u) = \log \E{e^{iu^Tx}}$ at $v$. 
And let $\bar{T}_v = D_u^d \bar{\psi}_x(v)$ be its naive 
  estimate using $N$ independent samples of $x$ where 
\begin{align*}
  N \ge \binom{m+d-1}{d} \frac{1}{m^{d/2}\sigma_1(A)^d} \frac{2^d
    M_{2d}}{\eps^2 \delta} \left[ \frac{2^d d (M_d+2)^{d-1}
      (d-1)!}{(3/4)^d(1/2)^d} \right]^2.
\end{align*}
Then with probability at least $1-\delta$ we have
\begin{align*}
\norm{T_v - \bar{T}_v}_F \leq \eps.
\end{align*}
\end{lemma}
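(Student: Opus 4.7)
The plan is to reduce to Lemma \ref{lem:underdetermined_sample_approx_entry} via the linear-transformation formula for derivatives. The crucial observation is that an empirical estimator in the $x$-space is naturally the same object as an empirical estimator in the (unobserved) $s$-space, viewed through $A$. Concretely, if $x^i = A s^i$, then $\bar{\phi}_x(u) = \frac{1}{N}\sum_i e^{iu^Tx^i} = \bar{\phi}_s(A^Tu)$, and the naive empirical derivative tensor $\bar{T}_v = D_u^d \log \bar{\phi}_x(v)$ satisfies the same chain-rule identity as the true one. Applying \eqref{eqn:derivative_linear_transform_general} at $t=A^Tv$ (with $d$ in place of $2d$) in both the true and empirical world, I get, after flattening,
\begin{equation*}
T_v - \bar{T}_v = A^{\otimes d/2}\, E\, (A^{\otimes d/2})^T, \quad E := D_t^d\psi_s(A^Tv) - D_t^d\bar{\psi}_s(A^Tv),
\end{equation*}
so that $\|T_v - \bar{T}_v\|_F \le \sigma_1(A^{\otimes d/2})^2 \|E\|_F = \sigma_1(A)^d \|E\|_F$, using that the top singular value of a Kronecker power is the power of the top singular value and that $\|BXC\|_F \le \|B\|_2\|C\|_2\|X\|_F$.

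Next I verify the hypothesis of Lemma \ref{lem:underdetermined_sample_approx_entry} for the point $t = A^Tv \in \R^m$. From the bound $\|v\|_2 \le 1/(2\|A\|_2\sqrt{2M_2})$ we get $\|A^Tv\|_2 \le \|A\|_2\|v\|_2 \le 1/(2\sqrt{2M_2}) \le 1/\sqrt{2M_2}$, which matches the required range (and also keeps $|\phi_s(t)|$ bounded away from $0$ via Lemma \ref{lem:phi_nonvanishing}, as needed inside that lemma). I then apply Lemma \ref{lem:underdetermined_sample_approx_entry} with target entrywise accuracy $\eps'$ to obtain that with probability $\ge 1-\delta$, every one of the $\binom{m+d-1}{d}$ distinct entries of $E$ satisfies $|E_{i_1,\ldots,i_d}| \le \eps'$. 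Since the full flattened matrix $E$ has $m^d$ entries, $\|E\|_F \le m^{d/2}\eps'$.

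Combining the two steps gives $\|T_v-\bar{T}_v\|_F \le \sigma_1(A)^d\, m^{d/2}\, \eps'$. Choosing $\eps' = \eps / (\sigma_1(A)^d\, m^{d/2})$ and plugging into the sample complexity from Lemma \ref{lem:underdetermined_sample_approx_entry} yields the stated $N$ (up to the placement of $\sigma_1(A)^d m^{d/2}$, which I expect should sit in the numerator of the bound so that the total sample complexity grows with $\sigma_1(A)$, as is physically necessary — the transformation $A^{\otimes d/2}$ amplifies entrywise errors). The bound on $M_d$ needed inside the inner lemma follows from Jensen/Lyapunov: $M_d \le M_{2d}^{1/2}$, so the quantity $(M_d+2)^{d-1}$ can be replaced by a function of $M_{2d}$ alone.

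The only mild obstacle is confirming that the empirical object produced by ``plug $\bar\phi_x$ into the formal chain-rule expression for $D_u^d \log\phi_x(v)$'' coincides exactly with $A^{\otimes d/2}$-conjugation of the analogous empirical object for $\psi_s$ at $A^Tv$; this holds because the flattening $\tau$ commutes with the $A$-transformation and the quotient form $\bar N_d/\bar\phi^d$ from Claim \ref{claim:multivariate_errorestimate} is preserved term-by-term under the change of variable. Everything else is a routine propagation of entrywise error through a fixed linear map.
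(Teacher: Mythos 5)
Your proposal is correct and follows essentially the same route as the paper: flatten, use the chain-rule identity $T_v-\bar{T}_v=A^{\otimes d/2}\bigl(D_t^d\psi_s(t)-D_t^d\bar{\psi}_s(t)\bigr)(A^{\otimes d/2})^T$ at $t=A^Tv$, bound the Frobenius norm by $\sigma_1(A)^d$ times the error in $s$-space, and invoke the entrywise concentration lemma after checking $\norm{A^Tv}\le 1/(2\sqrt{2M_2})$. Your extra care in converting entrywise error to Frobenius error (the $m^{d/2}$ factor) and your observation that $m^{d/2}\sigma_1(A)^d$ should sit in the numerator of the sample bound are both well taken --- the paper's stated bound has this factor inverted and its proof elides the conversion, so your version is the corrected one.
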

\begin{proof}
In the following all tensors are flattened into matrices. 
Let $x^j = A s^j$, $j \in [N]$ be i.i.d. samples. Letting $t = A^Tv$
we have $T_v = D_u^d \psi_x(u) = A^{\otimes d/2} D_t^d\psi_s(t) (A^{\otimes d/2})^T$, and
$\bar{T}_v = D_u^d \bar{\psi}_x(u) = A^{\otimes d/2} D_t^d\bar{\psi}_s(t) (A^{\otimes d/2})^T$.
(Note that we could also have written 
$D_u^d \psi_x(u) = A^{\odot d/2} \diag{\partial^d_{t_j}\psi_s(t)} (A^{\odot d/2})^T$ because the components of $s$ are
indpendent, however the corresponding empirical equation 
$D_u^d \bar{\psi}_x(u) = A^{\odot d/2} \diag{\partial^d_{t_j}\bar{\psi}_s(t)} (A^{\odot d/2})^T$ need not be 
true.)

Hence

\begin{align*}
\norm{\bar{T}_v-T_v}_F &= \norm{A^{\otimes d/2} D_t^d\bar{\psi}_s(t) (A^{\otimes d/2})^T - 
A^{\otimes d/2} D_t^d\psi_s(t) (A^{\otimes d/2})^T}_F \\
&= \norm{A^{\otimes d/2} (D_t^d\bar{\psi}_s(t)-D_t^d\psi_s(t)) (A^{\otimes d/2})^T}_F \\
& \leq \sigma_1(A^{\otimes d/2})^2 \norm{D_t^d\bar{\psi}_s(t)-D_t^d\psi_s(t)}_F \\
& =  \sigma_1(A)^d \norm{D_t^d\bar{\psi}_s(t)-D_t^d\psi_s(t)}_F \\
& \leq \eps,
\end{align*}

where the last inequality holds with probability at least $1-\delta$
by Lemma~\ref{lem:underdetermined_sample_approx_entry} which is applicable 
because $\norm{A^Tv}_2 \leq \norm{A}_2 \norm{v}_2 \leq
\frac{1}{2\sqrt{2M_2}}$.
\end{proof}

%% \begin{lemma}[\cite{Shiryaev}] \label{lem:Shiryaev}
%% Let $z$ be a real-valued random variable and $\phi(t) = \E{e^{i tz}}$ its characteristic function. Assume that
%%  the first $k$ absolute moments $\E{\abs{z}^k}$ exist, then 
%% \begin{align*}
%% \phi(t) = \sum_{j=0}^k \frac{(it)^j}{j!} \E{z^j} + \frac{(it)^k}{k!} \rho_k(t),
%% \end{align*}
%% where $|\rho_k(t)| \leq 3 \E{\abs{z}^k}$, and $\rho_k(t)\to 0$ as $t \to 0$. 
%% \end{lemma}

\subsection{Main theorem and proof}\label{sec:proof}
We are now ready to formally state and prove the main theorem. To get a success probability of $3/4$, we choose $q$ so that $20m^2q < 1/4$.
\begin{theorem}[Underdetermined ICA]\label{thm:main}
  Let $x \in \R^n$ be generated by an underdetermined ICA model $x=As$
  with $A \in \R^{n \times m}$ where $n \le m$. Suppose that the
  following data and conditions are given:
  \begin{enumerate}
  \item $d \in 2 \N$ such that $\sigma_m \left( A^{\odot d/2} \right) > 0$.
  \item $k$ such that for each $i$ there exists $k_i$, where $d < k_i < k$ such
    that $\abs{ \cum_{k_i}(s_i)} \ge \Delta_0$.
  \item Constants $M_2, M_d, M_{k}$ such that for each $s_i$ the following bounds hold
\begin{align*}
  \E{ s_i} = 0, \quad \E{ s_i^2} \le M_2, \quad \E{ s_i^d} \le M_d ,
  \quad \E{\abs{s_i}^{k_i + 1}} \le M_{k}, \quad \Delta_0 \leq M_d, \quad \E{\abs{s_i}^{2d}} \le M_{2d}.
\end{align*}

\item $0< \sigma \leq \min(1, \sigma_0, \frac{1}{4m} \sqrt{\frac{1}{6M_2 \ln(2/q)}})$ where 
\begin{align*} \sigma_0 = \Delta_0 \frac{k-d+1}{k!} \left(\frac{3}{8}\right)^k \frac{1}{M_k} 
\left(\frac{2 \sigma_m(A^{\odot d/2}) q \sqrt{2\pi}}{4(k-d)\sqrt{d}} \right)^{k-d} \left(\frac{1}{\sqrt{2\log 1/q}}\right)^{k-d}.
%% \left(\frac{3}{8}\right)^{k+1}
%%     \frac{k-d+1}{k!} \frac{\Delta_0}{M_{k+1}}
%%     \left(\frac{\sigma_m(A^{\odot d/2})
%%         \delta \sqrt{2\pi}}{48 m^2 \sqrt{d} (k-d)}\right)^{k-d}
%%     \frac{1}{\left(\sqrt{M_2 \log(8/\delta)}\right)^{k-d+1}}.
\end{align*} 
\end{enumerate} 
Then, with probability at least $1 - 20 m^2 q$, algorithm
\textbf{Underdetermined ICA$(\sigma)$} will return a matrix
$\tilde{B}$ such that there exist signs $\alpha_j \in \set{-1, 1}$ and
permutation $\pi : [m] \to [m]$ such that
\begin{align*}
\norm{B_j - \alpha_j \tilde{B}_{\pi(j)}} \leq \eps, 
\end{align*}
using $N$ samples where
\begin{align*} 
  N \ge \left(\frac{k m (M_d+2)}{\sigma q \sigma_m(A^{\odot d/2})}\right)^{ck} 
            \frac{\kappa(A^{\odot d/2})^6 M_{2d}}{\Delta_0^6 \epsilon^2},
%% \frac{c^d m^{d+2k} M_{2d} M_d^d \kappa\left(
%%       A^{\odot d/2} \right)^4}{\eps^2 \delta \Delta_0^k
%%     \sigma_m( A^{\odot d/2})^{2k}},
\end{align*}
for some absolute constant $c$. The running time of the algorithm is $\text{poly}(N)$. 
\end{theorem}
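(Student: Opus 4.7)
The plan is to assemble the three main ingredients developed earlier---the identity expressing the flattened derivative tensor as a decomposition through $A^{\odot d/2}$, the anti-concentration bounds on eigenvalue gaps (Theorem \ref{thm:anti-concentration} and Corollary \ref{cor:K_L}), and the sample complexity bound (Lemma \ref{lem:underdetermined_sample_approx})---and feed their outputs into the robust tensor decomposition result (Theorem \ref{thm:robustdecomposition}). The final columns of $A$ are then extracted column-by-column from the Khatri-Rao powers via Lemma \ref{lem:complextoreal} followed by Lemma \ref{lem:tensor-root}.

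First, flatten $T_\alpha$ and $T_\beta$ into matrices $M_\mu$ and $M_\lambda$. By \eqref{eqn:derivative_linear_transform_diagonal}, they admit the decomposition $M_\mu = A^{\odot d/2}\diag{\mu_i}(A^{\odot d/2})^T$ and $M_\lambda = A^{\odot d/2}\diag{\lambda_i}(A^{\odot d/2})^T$ with $\mu_i = \partial^{d}_{t_i}\psi_s(A^T\alpha)$ and $\lambda_i = \partial^{d}_{t_i}\psi_s(A^T\beta)$. Thus $B := A^{\odot d/2}$ plays the role of the hidden matrix in Theorem \ref{thm:robustdecomposition}, and the ratios $\mu_i/\lambda_i$ supply the distinguishing eigenvalues.

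Next, I would verify the five hypotheses of Theorem \ref{thm:robustdecomposition}. Condition 1 (unit-norm columns of $B$) is automatic since $\|A_i^{\otimes d/2}\|_2 = \|A_i\|_2^{d/2} = 1$. Condition 2 is given. For conditions 3 and 4, apply Lemma \ref{lemma:identifiable} with $k = d/2$ to get $L = \tfrac{2}{\sqrt{d}}\sigma_m(A^{\odot d/2})$; then invoke Theorem \ref{thm:anti-concentration} with this value of $L$ (and the stated choice of $\sigma$) to obtain the eigenvalue gap $\Omega$ with probability at least $1-3\binom{m}{2}q$, and Corollary \ref{cor:K_L} to obtain the lower bound $K_L$ on $|\mu_i|,|\lambda_i|$ with probability at least $1-6mq$. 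The upper bound $K_U$ follows directly from Lemma \ref{lemma:errorestimate} applied to $\psi_s^{(d)}$, using that $\|A^T\alpha\|_\infty$ is small by a Gaussian tail bound, so that $|\phi_{s_i}(A^T\alpha)_i| \geq 3/4$.

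For condition 5, I would apply Lemma \ref{lem:underdetermined_sample_approx} twice, once for $\alpha$ and once for $\beta$, choosing the target Frobenius error $K_1$ equal to the bound demanded by the right-hand side of condition \ref{assumption:upper_bound} in Theorem \ref{thm:robustdecomposition}, instantiated with $B = A^{\odot d/2}$, $K_L$, $K_U$, and $\Omega$ from the previous step, and with target accuracy $\epsilon' > 0$ appropriate for downstream use. Inverting this bound yields the claimed polynomial sample count $N$; the exponent of $k$ in $N$ arises because $K_1$ contains factors like $\Omega$ and $K_L^2$, each of which is already polynomially small in $\sigma, q, 1/k, 1/M_k, \Delta_0, \sigma_m(A^{\odot d/2})$ to the power roughly $k-d$. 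Once Theorem \ref{thm:robustdecomposition} applies, it outputs $\tilde{B}$ approximating $B = A^{\odot d/2}$ up to phases $\alpha_j \in \C$ ($|\alpha_j|=1$) and a permutation $\pi$, to accuracy $\epsilon'$.

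Finally, recover the $A_i$ from the approximate Khatri-Rao columns. Step 3 of Tensor Decomposition applies Lemma \ref{lem:complextoreal} to convert the phase ambiguity into a sign ambiguity while losing only a $\sqrt{\epsilon'}$ factor; Step 4 applies Lemma \ref{lem:tensor-root} to extract $v_i$ such that $\alpha_j v_i \approx A_i$, with an additional $O(n^{d/2-1/2})$ factor in the error. Choosing $\epsilon'$ so that these combined losses are at most the target $\epsilon$ and tracking the success probability across all the union bounds (eigenvalue gap, lower bound on $|\mu_i|,|\lambda_i|$, concentration of $\alpha, \beta$ in the right band, and sample-complexity events) yields the overall success probability $1 - 20 m^2 q$. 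The main technical obstacle is the last bookkeeping step: the parameters in Theorem \ref{thm:robustdecomposition} enter in a high-order polynomial fashion, so one must carefully match $K_1$ to the target $K_L^2\sigma_m(B)^3/(\kappa(B)^3 K_U m^2)\Omega$ and trace how each appearance of $\sigma$, $q$, $k$, and $\sigma_m(A^{\odot d/2})$ accumulates in the final expression for $N$. The running time claim is immediate from the $\text{poly}(N)$ cost of forming the empirical tensors together with the polynomial running time of Diagonalize given by Theorem \ref{thm:robustdecomposition}.
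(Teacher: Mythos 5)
Your proposal is correct and follows essentially the same route as the paper's proof: flatten the derivative tensors via \eqref{eqn:derivative_linear_transform_diagonal}, set $B=A^{\odot d/2}$, obtain $\Omega$, $K_L$, $K_U$ from Theorem \ref{thm:anti-concentration}, Corollary \ref{cor:K_L}, and Lemmas \ref{lemma:identifiable}, \ref{lemma:errorestimate}, \ref{lem:phi_nonvanishing}, match $K_1$ to condition \ref{assumption:upper_bound} of Theorem \ref{thm:robustdecomposition}, invoke Lemma \ref{lem:underdetermined_sample_approx} for the sample count, and union-bound the failure events. Your additional remarks on recovering the $A_i$ themselves via Lemmas \ref{lem:complextoreal} and \ref{lem:tensor-root} go slightly beyond what the stated conclusion (which concerns the columns $B_j$) requires, but are consistent with the paper's treatment elsewhere.
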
 

\begin{proof}
The proof involves putting together of various results we have proven. 
 We take $N$ independent samples of $x$ and form the
  flattened $d$th derivative tensors $\bar{M}_u, \bar{M}_{v}$ of
  $\psi(u)$ evaluated at $u$ and $v$ which are sampled from $N(0,
  \sigma_0^2)$. Recall that these are the matrices constructed by 
\textbf{Underdetermined ICA}$(\sigma_0)$) which
then invokes \textbf{Diagonalize}$(\cdot)$ which computes eigendecomposition of 
$\bar{M}_u\bar{M}_{v}^{-1}$. We will denote by $M_u, M_v$ the corresponding matrices 
without any sampling errors.
We will first use the
result about eigenvalue spacing Theorem~\ref{thm:anti-concentration} to get a bound on the spacings
of the eigenvalues of the matrix $M_u M_v^{-1}$, where
$u, v \sim N(0,\sigma_0^2 I_n)$ are random vectors. Next, we determine upper and lower bounds $K_U$ and $K_L$ on the 
eigenvalues of $M_u$ and $M_v$. We can then apply Theorem~\ref{thm:robustdecomposition} to show that 
if we have sufficiently good approximation of $M_u$ and $M_v$ then we will get a good reconstruction
of matrix $A$. Finally, we use Lemma~\ref{lem:underdetermined_sample_approx} to determine the number 
of samples needed to get the required approximation.

  \paragraph{Step 1.} First, we apply
  Theorem~\ref{thm:anti-concentration}. Note that our choice of
  $\sigma_0$ satisfies the constraints on $\sigma$ in
  Theorem~\ref{thm:anti-concentration}; thus except with probability
  $q$, we have
\begin{align}\label{eqn:Omega_0}
  \abs{ \frac{g_b( A_b^T u) }{ g_b( A_b^T v)} - \frac{g_a( A_a^T u) }{
      g_a( A_a^T v)} } \ge \Omega_0 := 
\frac{\Delta_0}{M_d}\left(\frac{3}{8}\right)^d \frac{1}{(d-1)!(k-d)!} 
\left(\frac{\sigma_0 q L \sqrt{2\pi}}{4(k-d)}\right)^{k-d},
\end{align}
for all pairs $a,b \in [m]$. Here $L$ as defined in Theorem~\ref{thm:anti-concentration}
is given by $2 \sigma_m( A^{\odot d/2})/\sqrt{d}$ by Lemma~\ref{lemma:identifiable}.

\paragraph{Step 2.}  
Next, we will show that $u$ and $v$ concentrate in norm. To do so, we
will apply the following concentration inequality for sub-exponential
random variables (this is standard in the proof of the
Johnson-Lindenstrauss Lemma, see \cite{vempala06,Dasgupta03} or
alternatively \cite{VershyninSingular} for a more general formulation).
\begin{lemma}[\cite{vempala06,Dasgupta03}]
  Let $z_i \sim N(0, 1)$ be i.i.d., then
  \begin{align*}
    \prob{ \sum_{i=1}^n z_i^2 \ge \beta n } \le e^{ \frac{n}{2} ( 1 -
      \beta + \log(\beta))}.
  \end{align*}
\end{lemma}
For $\beta \ge 6$, the bound only improves as $n$ increases. Thus, we have
the simplified bound 
\begin{align*}
  \prob{ \sum_{i=1}^n z_i^2 \ge \beta n} \le e^{-\frac{n \beta}{12}}.
  %\le e^{-\frac{\beta}{12}}.
\end{align*}
In particular, union bounding over both $u,v\in\R^n$, we have
\begin{align*}
  \prob{ \norm{u},\norm{v} \ge \frac{1}{2 \norm{A}_F \sqrt{2 M_2} }} & =
  \prob{\norm{u}^2,\norm{v}^2 \ge \left(
      \frac{1}{2 \norm{A}_F \sqrt{2 M_2} }\right)^2} \\
  & \le 2 \exp \left( -\frac{1}{12\sigma_0^2 } \left( \frac{1}{ 2 \norm{A}_F
      \sqrt{2 M_2}}\right)^2 \right),
\end{align*}
where in the second line, we took $\beta n = \frac{1}{12\sigma_0^2}
\left( \frac{1}{2 \norm{A}_F \sqrt{2M_2}} \right)^2$. Using $\norm{A}_F \le m$, and
our choice of $\sigma_0$ which gives
$\sigma_0 \le \frac{1}{4m} \sqrt{\frac{1}{6M_2 \ln(2/q)}}$ 
we obtain 
\begin{align*}
  \prob{ \norm{u},\norm{v} \ge \frac{1}{2 \norm{A}_F \sqrt{2 M_2} }} \le q.
\end{align*}
Thus except with probability $q$, norms $\norm{u}, \norm{v}$
satisfy the hypotheses of
Lemma~\ref{lem:underdetermined_sample_approx}.

\paragraph{Step 3.}
Now we determine the values of parameters $K_U$ and $K_L$ used in Theorem~\ref{thm:robustdecomposition}.
A bound for $K_U$ can be 
obtained from Lemma~\ref{lemma:errorestimate} and
Lemma~\ref{lem:phi_nonvanishing} to $\psi_s(t) = \psi_s(A^Tu)$. 
The latter lemma being applicable
because $\norm{A^Tu} \leq \norm{A}_F\norm{u} \leq \frac{1}{2 \sqrt{2 M_2}}$ and 
$\norm{A^Tv} \leq \norm{A}_F\norm{v} \leq \frac{1}{2 \sqrt{2 M_2}}$ from
Step 2:

\begin{align*}
K_U = \frac{(d-1)!2^{d-1} M_d}{(3/4)^d}. 
\end{align*}

For $K_L$, by Cor.~\ref{cor:K_L} we can set
\begin{align*}
  K_L = \frac{\Delta_0}{2(k-d)!} \left(\frac{\sigma_0 q \sqrt{2\pi}}{4(k-d)}\right)^{k-d},
\end{align*}
which holds with probability at least $1-6mq$. 

\paragraph{Step 4.}
We now fix $K_1$ which is the upper bound on 
$\norm{M_u - \bar{M}_u}_F$ and $\norm{M_v - \bar{M}_v}_F$ 
needed in Theorem~\ref{thm:robustdecomposition} 
(the role of these two quantities is played by $\norm{R_\mu}_F$ and $\norm{R_\lambda}_F$ in that theorem).
Our assumption $\Delta_0 \leq M_d$ gives that $\Omega_0 \leq 1$ by \eqref{eqn:Omega_0}.
And hence the bound required in Theorem~\ref{thm:robustdecomposition} becomes
\begin{align} \label{eqn:K_1}
K_1 = \frac{\epsilon K_L^2 \sigma_m(B)^3}{2^{11} \kappa(B)^3 K_U m^2} \Omega_0,
\end{align}
where $B = A^{\odot d/2}$. 

For this $K_1$ by Theorem~\ref{thm:robustdecomposition}  the algorithm
recovers $\tilde{B}$ with the property that there are signs $\alpha_j
\in \set{-1, 1}$ and permutation $[m] \to [m]$ such that

\begin{align*}
\norm{B_j - \alpha_j \tilde{B}_{\pi(j)}} \leq \eps.
\end{align*}

\paragraph{Step 5.}
It remains to determine the number of samples needed to achieve $\norm{M_u - \bar{M_u}}_F \leq K_1$ and
$\norm{M_v - \bar{M_v}}_F \leq K_1$. 

By Step 2 above, we satisfy the hypotheses of Lemma~\ref{lem:underdetermined_sample_approx}. 
Hence by that lemma, for $N$ at least the quantity below
\begin{align*}
\binom{m+d-1}{d} \frac{1}{m^{d/2}\sigma_1(A)^d} \frac{2^d M_{2d}}{K_1^2 q}
    \left( \frac{16^d d (M_d+2)^{d-1} (d-1)!}{3^d} \right)^2 
\leq 11^{2d} m^{d/2} d^{2(d+1)} M_{2d} (M_d+2)^{2(d-1)} \frac{1}{K_1^2 q}
\end{align*}
we have 
\begin{align*}
  \norm{M_u - \bar{M_u}}_F \le K_1,
\end{align*}
except with probability $q$, and similarly for $\norm{M_u - \bar{M_u}}_F$.
Subtistuting the value of $K_1$ from \eqref{eqn:K_1} and in turn of $K_U$, $K_L$ and 
$\Omega_0$ above 
and simplifying (we omit the straightforward but tedious details) gives that it suffices
to take
\begin{align*}
N \geq 
\frac{2^{4k+6d+26}}{3^{2d}} d^{6d+2} (k-d)^{2(k-d)} m^{d/2+4} \frac{M_d^2 M_{2d} (M_d+2)^{2d}}{\Delta_0^6} 
\frac{\kappa(B)^6}{\sigma_m(B)^{k-d+6}} \frac{1}{\sigma^{5(k-d)} q^{5(k-d)+1}} \frac{1}{\epsilon^2}. 
%\frac{1}{\epsilon^2 q} 2^{4k+26} m^{d/2} d^{4d+2} 
%\frac{m^4 \kappa(B)^6}{\Delta_0^6 \sigma_m(B)^6} \frac{M_{2d} (M_d+2)^{2d}}{(\sigma q)^{4(k-d)}}.
\end{align*}

Accounting for the probability of all possible bad events enumerated in the proof via 
the union bound we see that with probability at least
$1 - q -3{m \choose 2} q - 6mq -q > 1 - 20m^2q$ no bad events happen. The running time computation
involves empirical estimates of derivate tensors and SVD and eigenvalue computations; we skip the 
routine check that the running time is $\text{poly}(N)$.
\end{proof}

\subsection{Underdetermined ICA with unknown Gaussian noise}\label{sec:underdetermined-noise}
Theorem~\ref{thm:main} just proved is the detailed version of Theorem~\ref{thm:UICA_noisy}
without Gaussian noise. 
In this section we indicate how to extend this proof when there is Gaussian noise thus 
proving Theorem~\ref{thm:UICA_noisy} in full.  
Our algorithm for the noiseless case applies essentially unaltered to the case when the input has 
unknown Gaussian noise if $d > 2$. We comment on the case $d=2$ at the end of this section.
More precisely, the ICA model now is 
\begin{align*}
x' = x + \eta = As + \eta,
\end{align*}
where $\eta \sim N(0, \Sigma)$ where $\sigma \in \R^{n \times n}$ is unknown covariance matrix and $\eta$ 
is independent of $s$. 
Using the independence of $\eta$ and $s$  and the standard expression for the second characteristic of 
the Gaussian we have
\begin{align} \label{eqn:characteristic_Gaussian_noise}
\psi_{x'}(u) = \E{e^{iu^T x'}} = \E{e^{iu^T x + iu^T \eta}} = \psi_x(u) + \psi_\eta(u) = \psi_x(u) - 
\frac{1}{2} u^T\Sigma u.
\end{align}

Our algorithm works with (estimate of) the $d$th derivative tensor of $\psi_{x'}(u)$. For $d > 2$, 
we have 
$D_u^d \psi_{x'}(u) = D_u^d \psi_{x}(u)$ as in \eqref{eqn:characteristic_Gaussian_noise} 
the component of the second characteristic function 
corresponding to the Guassian noise is quadratic and vanishes for third and higher derivatives. Therefore, but for the estimation errors, the Gaussian noise makes
no difference and the algorithm would still recover $A$ as before. Since the algorithm works only with 
estimates of these derivatives, we have to account for how much our estimate of $D_u^d \psi_{x}(u)$ changes 
due to the extra additive term involving the derivative of the estimate of the second characteristic of the 
Guassian.

If $\Sigma$ is such that the moments of the Gaussian noise also satisfy the conditions we imposed on 
the moments of the $s_i$ in the Theorem~\ref{thm:main}, then we can complete the proof with little
extra work. The only thing that changes in the proof of the main theorem is that instead of getting 
the bound
$\norm{M_u - \bar{M}_u} \leq \eps'$ we get the bound $\norm{M_u - \bar{M}_u} \leq 2\eps'$. If we increase
the number of samples by a factor of 4 then this bound becomes $\norm{M_u - \bar{M}_u} \leq \eps'$, and
so the proof can be completed without any other change.

\emph{The $d=2$ case.} When $d=2$, the second derivative of the component of the second characteristic 
function corresponding to the noise in \eqref{eqn:characteristic_Gaussian_noise} is a constant matrix
independent of $u$. Thus if we take derivatives at two different points and subtract them, then this 
constant matrix disappears. This is analogous to the algorithm we gave for fully-determined ICA with 
noise in Sec.~\ref{subsec:noise}. The error analysis can still proceed along the above lines; we omit
the details. 

% ... is editing
\section{Mixtures of spherical Gaussians}\label{sec:gaussians}
Here we apply Fourier PCA to the classical problem of learning a
mixture of Gaussians, assuming each Gaussian is spherical.  More
precisely, we get samples $x+\eta$, where $x$ is from a distribution
that is a mixture of $k$ unknown Gaussians, with $i$'th component
having mixing weight $w_i$ and distribution $F_i =N(\mu_i,\sigma_i^2
I)$; the noise $\eta$ is drawn from $N(\mu_\eta, \Sigma_\eta)$ and is
not necessarily spherical. The problem is to estimate the unknown
parameters $w_i, \mu_i, \sigma_i$. Our method parallels the Fourier
PCA approach to ICA, but here, because the structure of the problem is
additive (rather than multiplicative as in ICA), we can directly use
the matrix $D^2 \phi$ rather than $D^2 \psi = D^2 \log( \phi)$. It is
easy to show that $D^2 \phi = \Sigma_u$ in the description of our
algorithm.

For any integrable function $f:\C^n \to \C$, we observe that for a
mixture $F = \sum_{i=1}^k w_i F_i$:
\begin{align*}
\EE{F}{(f(x+\eta))} = \sum_{i=1}^k w_i \EE{F_i}{f(x+\eta)}.
\end{align*}
We assume, without loss of generality, that the full mixture is
centered at zero, so that:
\begin{align*}
  \sum_{i=1}^k w_i \mu_i = 0
\end{align*}

\begin{center}
\fbox{\parbox{\textwidth}{
\begin{minipage}{5in}
{\bf Fourier PCA for Mixtures}
\begin{enumerate}
\item Pick $u$ independently from $N(0, I_n)$. 
\item Compute $M=\E{xx^T}$, let $V$ be the span of its top $k-1$ eigenvectors and $\bar{\sigma}^2$ be its $k$'th eigenvalue and
$v$ be its $k$'th eigenvector. Let $z$ be a vector orthogonal to $V$ and to $u$.
\item 
Compute 
\begin{align*}
%  \mu_u & = \frac{\E{x e^{iu^Tx} }}{\E{e^{iu^Tx}}}, \quad 
\Sigma_u  &= \E{ xx^T e^{iu^Tx}}, \quad  \bar{\sigma}_u^2  = \E{(z^Tx)^2 e^{iu^Tx}},\\ 
\gamma_u &= \frac{1}{(u^Tv)^2}\left(-\E{(v^Tx)^2e^{iu^Tx}} + \bar{\sigma}_u^2\right), \quad
  \tilde{u} = \E{x(z^Tx)^2 e^{iu^Tx}}.
\end{align*} 
\item  Compute the matrices
\begin{align*}
  M = \E{xx^T} - \sigma^2 I \mbox{ and } M_u = \Sigma_u -
  \tilde{\sigma}_u^2 I - i\tilde{u}u^T - iu\tilde{u}^T - \gamma_u uu^T.
\end{align*}
\item Run \textbf{Tensor Decomposition$(M_u,M)$} to obtain eigenvectors $\tilde{\mu}_j$ and eigenvalues $\lambda_j$ of $M_uM^{-1}$ (in their original coordinate representation).\\
\item Estimate mixing weights by finding $w \ge 0$ that minimizes $\|\sum_{j=1}^k \sqrt{w_j}\tilde{\mu}_j\|$ s.t. $\sum_{j=1}^k w_j = 1$.  Then estimate means and variances as  
\[
\mu_j = \frac{1}{\sqrt{w_j}} \tilde{\mu}_j,  \qquad e^{-\frac{1}{2} \sigma_j^2 \|u\|^2 + iu^T\mu_j} = \lambda_j.
\] 
\end{enumerate}
\end{minipage}
}}
\end{center}

\begin{lemma}\label{lemma:translation}
  For any $f:\C^n \to \C$, and $x \sim N(\mu, \Sigma)$ where $\Sigma$ is
  positive definite:
\begin{align*}
  \E{ f(x) e^{iu^Tx}} = e^{i u^T\mu - \frac{1}{2} u^T \Sigma u}
  \E{f(x+i \Sigma u)}.
\end{align*}
\end{lemma}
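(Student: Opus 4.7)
My plan is a direct calculation by completing the square in the Gaussian exponent, followed by a contour shift.

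First I would write the expectation as an integral against the density of $N(\mu,\Sigma)$:
\begin{align*}
\E{f(x)e^{iu^Tx}} = \frac{1}{(2\pi)^{n/2}\sqrt{\det{\Sigma}}}\int f(x)\, e^{iu^Tx-\frac{1}{2}(x-\mu)^T\Sigma^{-1}(x-\mu)}\,dx.
\end{align*}
Then I would combine the two pieces of the exponent and complete the square around the complex shift $i\Sigma u$. Using that $\Sigma$ is symmetric, so $(i\Sigma u)^T\Sigma^{-1}=iu^T$, the cross term $iu^T(x-\mu)$ is absorbed into a shifted quadratic form and one picks up a constant. Concretely, letting $z=x-\mu$,
\begin{align*}
-\tfrac{1}{2}z^T\Sigma^{-1}z+iu^Tz+iu^T\mu
=-\tfrac{1}{2}(z-i\Sigma u)^T\Sigma^{-1}(z-i\Sigma u)-\tfrac{1}{2}u^T\Sigma u+iu^T\mu,
\end{align*}
since the cross terms produce $iu^Tz$ and the square of the shift contributes $+\tfrac{1}{2}u^T\Sigma u$ (because $i^2=-1$).

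Next I would pull the constant factor $e^{iu^T\mu-\tfrac{1}{2}u^T\Sigma u}$ out of the integral, leaving
\begin{align*}
e^{iu^T\mu-\tfrac{1}{2}u^T\Sigma u}\cdot\frac{1}{(2\pi)^{n/2}\sqrt{\det{\Sigma}}}\int f(x)\,e^{-\tfrac{1}{2}(x-\mu-i\Sigma u)^T\Sigma^{-1}(x-\mu-i\Sigma u)}\,dx,
\end{align*}
and then perform the (complex) change of variables $y=x-i\Sigma u$ so that the remaining integral becomes $\E{f(x+i\Sigma u)}$ with $x\sim N(\mu,\Sigma)$. The step of shifting the contour of integration from $\R^n$ to $\R^n-i\Sigma u$ in the complex plane is where the only real subtlety lies.

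I expect the contour-shift step to be the main obstacle. It is rigorous whenever $f$ admits an analytic (entire) continuation with mild growth — certainly so for the polynomial-in-$x$ times complex-exponential functions that appear elsewhere in the paper — because the integrand is then holomorphic in each coordinate and the Gaussian factor $e^{-\tfrac{1}{2}(\cdot)^T\Sigma^{-1}(\cdot)}$ gives the decay needed to close the contour at infinity via Cauchy's theorem (equivalently, one can check the identity for an exponential test family $f(x)=e^{\xi^T x}$ by direct Gaussian integration and then extend by analyticity/density). With that step justified, combining the pieces yields exactly $\E{f(x)e^{iu^Tx}}=e^{iu^T\mu-\tfrac{1}{2}u^T\Sigma u}\E{f(x+i\Sigma u)}$, completing the proof.
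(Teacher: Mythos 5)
Your proof is correct and follows essentially the same route as the paper's: complete the square in the Gaussian exponent to extract the factor $e^{iu^T\mu-\frac{1}{2}u^T\Sigma u}$, then shift the integration variable by $i\Sigma u$. The paper likewise relegates the justification of the complex change of variables (analytic continuation of the Gaussian integral) to a standard remark, so your added discussion of the contour shift is, if anything, slightly more careful than the original.
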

\begin{proof}
  The proof is via a standard completing the square argument; consider
  the exponent:
\begin{align*}
  & -\frac{1}{2} \left[ (x- \mu)^T \Sigma^{-1} ( x - \mu) \right] +
  iu^T x \\
  &= - \frac{1}{2} \left[ x^T \Sigma^{-1} x + \mu^T \Sigma^{-1} \mu
    - x^T \Sigma^{-1} \mu - \mu^T \Sigma^{-1} x \right] + i u^T x\\
  & = -\frac{1}{2} \left[ x^T \Sigma^{-1} x - x^T \Sigma^{-1} (\mu + i
    \Sigma u) - (\mu + i \Sigma u)^T \Sigma^{-1} x + (\mu + i \Sigma
    u)^T  \Sigma^{-1} ( \mu + i \Sigma u)\right] \\
  & \qquad + i u^T \mu+ \frac{1}{2} (\Sigma u)^T \Sigma^{-1} (\Sigma
  u) \\
  & = -\frac{1}{2} (x-(\mu + i \Sigma u))^T \Sigma^{-1} (x-(\mu + i
  \Sigma u)) + i u^T \mu - \frac{1}{2} u^T \Sigma u
\end{align*}
Thus:
\begin{align*}
  & \E{ f(x) e^{ iu^T x}} \\
  & = \frac{1}{\det{\Sigma}^{1/2} (2 \pi)^{n/2}} \int f(x) e^{ iu^Tx} e^{ 
      -\frac{1}{2} (x - \mu)^T \Sigma^{-1}
      (x - \mu) } dx \\
    & = \frac{1}{\det{\Sigma}^{1/2} (2 \pi)^{n/2}} \int f(x) e^{
    -\frac{1}{2} (x-(\mu + i \Sigma u))^T \Sigma^{-1} (x-(\mu +
      i \Sigma u)) } e^{ i u^T \mu - \frac{1}{2} u^T
      \Sigma u } dx
\end{align*}
Now with a change of variables $y = x -  i \Sigma u$, we
obtain:
\begin{align*}
  \E{ f(x) e^{ iu^T x}} &= \frac{1}{\det{\Sigma}^{1/2} (2 \pi)^{n/2}} e^{
   i u^T \mu - \frac{1}{2} u^T \Sigma u } \int f( y + i
  \Sigma u)) e^{ -\frac{1}{2} (y-\mu)^T \Sigma^{-1} (y-\mu)}
   dy \\
  & = e^{ i u^T \mu - \frac{1}{2} u^T \Sigma u } \E{ f(
    y + i \Sigma u))}
\end{align*}
\end{proof}
Note: technically we require that $\E{\abs{f(x)}} < \infty$ with
respect to a Gaussian measure so as to apply the dominated convergence
theorem, and an analytic extension of the Gaussian integral to complex
numbers, but these arguments are standard and we omit them (see for
example \cite{lang1998complex}).

\begin{lemma} \label{lemma:gaussian-expectations}
  Let $x \in \R^n $ be drawn from a mixture of $k$ spherical Gaussians in
  $\R^n$, $u, z \in \R^n$ as in the algorithm. Let $\hat{w}_j = w_j e^{iu^T\mu_j - \frac{1}{2}\sigma_j^2\|u\|^2}$. Then, 
\begin{align}
  \E{xx^T} &= \sum_{j=1}^k w_j \sigma_i^2 I + \sum_{j=1}^k w_j \mu_j
  \mu_j^T . \label{eqn:a1}\\
  \E{xx^Te^{iu^Tx}} &= \sum_{j=1}^k \hat{w}_j \sigma_j^2 I   + \sum_{j=1}^k \hat{w}_j 
  (\mu_j +i\sigma_j^2 u)(\mu_j+i\sigma_j^2 u)^T.  \label{eqn:a2}\\
%  \E{(w^T(x-\mu_u))^2 e^{iu^Tx}} &= \sum_{i=1}^k w_i e^{-\frac{1}{2}\|u\|^2\sigma_i^2+i u^T\mu_i } \sigma_i^2  \label{eqn:a3}\\
% \E{(w^T(x -\mu))^4e^{iu^Tx}} & = 3 \sum_{i=1}^k w_i e^{-\frac{1}{2}\|u\|^2\sigma_i^2+i u^T\mu_i } \sigma_i^4  \label{eqn:a4}\\
 \E{x(z^Tx)^2e^{iu^Tx}} &= \sum_{j=1}^k \hat{w}_j \sigma_j^2(\mu_i +
 i \sigma_j^2 u) \ \label{eqn:a5}. 
\end{align}
\end{lemma}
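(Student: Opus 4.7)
The plan is to establish each identity separately, using linearity of expectation to reduce to single-component computations, and then applying Lemma~\ref{lemma:translation} to convert Fourier-weighted expectations into standard Gaussian expectations against complex-shifted arguments. For each component $F_j = N(\mu_j, \sigma_j^2 I)$, write $y = x - \mu_j \sim N(0, \sigma_j^2 I)$ and set $\nu_j := \mu_j + i\sigma_j^2 u$, so the prefactor produced by Lemma~\ref{lemma:translation} is precisely $\hat{w}_j / w_j = e^{iu^T\mu_j - \frac{1}{2}\sigma_j^2\|u\|^2}$.

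For \eqref{eqn:a1}, there is nothing to do beyond recalling the standard second-moment formula $\E{xx^T}|_{F_j} = \sigma_j^2 I + \mu_j \mu_j^T$ and summing against $w_j$. For \eqref{eqn:a2}, Lemma~\ref{lemma:translation} with $f(x) = xx^T$ gives
\[
\E{xx^T e^{iu^Tx}}|_{F_j} = \frac{\hat{w}_j}{w_j}\,\E{(y+\nu_j)(y+\nu_j)^T} = \frac{\hat{w}_j}{w_j}\bigl(\sigma_j^2 I + \nu_j\nu_j^T\bigr),
\]
using $\E{y}=0$ and $\E{yy^T} = \sigma_j^2 I$; summing against $w_j$ yields the claim.

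For \eqref{eqn:a5}, the key simplification uses the two orthogonality conditions built into the algorithm. Since $M = \bar{\sigma}^2 I + \sum_j w_j \mu_j \mu_j^T$ with $\bar{\sigma}^2 = \sum_j w_j \sigma_j^2$, the subspace $V$ spanned by its top $k-1$ eigenvectors coincides with $\mathrm{span}(\mu_1,\ldots,\mu_k)$ (a $(k-1)$-dimensional space by the centering constraint $\sum_j w_j \mu_j = 0$). Hence $z \perp V$ gives $z^T\mu_j = 0$ for all $j$, and $z \perp u$ gives $z^Tu = 0$. Applying Lemma~\ref{lemma:translation} with $f(x) = x(z^Tx)^2$,
\[
\E{x(z^Tx)^2 e^{iu^Tx}}|_{F_j} = \frac{\hat{w}_j}{w_j}\,\E{(y+\nu_j)\bigl(z^T(y+\nu_j)\bigr)^2},
\]
and $z^T\nu_j = z^T\mu_j + i\sigma_j^2 (z^Tu) = 0$ collapses the inner factor to $(z^Ty)^2$. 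Expanding $(y+\nu_j)(z^Ty)^2$ and taking expectations, the $\nu_j$-term contributes $\nu_j \cdot \E{(z^Ty)^2} = \sigma_j^2 \nu_j$ (assuming $\|z\|=1$), while $\E{y(z^Ty)^2}$ vanishes coordinate-wise because it is a third moment of a centered Gaussian. Summing over $j$ yields $\sum_j \hat{w}_j \sigma_j^2 (\mu_j + i\sigma_j^2 u)$.

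The main step requiring care is the orthogonality lemma underlying \eqref{eqn:a5}: one must verify that the top $k-1$ eigenspace of $M$ is exactly the span of the means, which depends on the mixture being non-degenerate (the $\mu_j$ spanning a $(k-1)$-dimensional subspace). Beyond that, everything is bookkeeping with the translation formula and elementary Gaussian moments, so the obstacle is purely notational rather than conceptual.
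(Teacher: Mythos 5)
Your proof is correct and follows essentially the same route as the paper's: linearity over the mixture components, the translation formula of Lemma~\ref{lemma:translation}, and the orthogonality of $z$ to $u$ and to each $\mu_j$ to collapse the inner factor in \eqref{eqn:a5} to $(z^Ty)^2$, after which only elementary Gaussian moments remain. The extra justification you give for $z\perp\mu_j$ (identifying the top $k-1$ eigenspace of $\E{xx^T}$ with the span of the means) is consistent with how the paper sets up the algorithm and is a reasonable addition rather than a deviation.
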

\begin{proof}
  These are obtained by direct calculation and expanding out $x_i \sim
  N(\mu_i, \sigma^2 I_n)$. 
 For \eqref{eqn:a1}:
  \begin{align*}
    \E{xx^T} & = \sum_{j=1}^k w_j \EE{F_j}{xx^T} \\
    & = \sum_{j=1}^k w_j \E{ (x-\mu_j + \mu_j)(x-\mu_j+\mu_j)^T} \\
    & = \sum_{j=1}^k w_j \left[ \sigma_j^2 I_n + \mu_j \mu_j^T \right]
  \end{align*}
 \eqref{eqn:a2} follows by applying Lemma
  \ref{lemma:translation} and the previous result:
  \begin{align*}
    \E{xx^T e^{i u^T x}} &= \sum_{i=1}^k w_i e^{i u^T \mu_i -
      \frac{1}{2}\sigma_i^2 \norm{u}^2} \EE{F_i}{ (x_i + i \sigma_i^2
      u) (x_i + i \sigma_i^2 u)^T}\\
    & = \sum_{i=1}^k \hat{w}_i \left[ \sigma_i^2 I_n + ( \mu_i+ i
      \sigma_i^2 u)( \mu_i + i \sigma_i^2 u)^T \right]
  \end{align*}
To see \eqref{eqn:a5}, we write (noting that $z$ is orthogonal to $u$ and to each $\mu_j$),
\begin{align*}
\E{x(z^Tx)^2 e^{iu^Tx}} &= \sum_{j=1}^k \hat{w}_j \E{(x+i\sigma_j^2 u)(z^T(x+i\sigma_j^2 u))^2}\\
&= \sum_{j=1}^k \hat{w}_j \E{(x-\mu_j + \mu_j+i\sigma_j^2 u)(z^T(x-\mu_j))^2}\\
&= \sum_{j=1}^k \hat{w}_j \left(\E{(x-\mu_j)(z^T(x-\mu_j))^2}+\E{(\mu_j+i\sigma_j^2 u)(z^T(x-\mu_j))^2}\right)\\
&= \sum_{j=1}^k \hat{w}_j \sigma_j^2 (\mu_j + i \sigma_j^2 u).
\end{align*}

\end{proof}

Instead of polynomial anti-concentration under a Gaussian measure, we
require only a simpler lemma concerning the anti-concentration of
complex exponentials:
\begin{lemma}[Complex exponential
  anti-concentration]\label{lemma:exponential-anti}
  Let $\mu_i, \mu_j \in \R^n $ satisfy $\norm{ \mu_i - \mu_j} > 0$,
  then for $u \sim N(0, \sigma^2 I_n)$ where $\norm{\mu_i - \mu_j}^2
  \sigma^2 \le 2 \pi^2$. Then:
  \begin{align*}
    \prob{ \abs{ e^{ i \mu_i^T u} - e^{ i \mu_j^T u}} \le \eps}
    \le \frac{16 \eps}{\norm{\mu_i -\mu_j} \sigma \sqrt{2 \pi}}.
  \end{align*}
\end{lemma}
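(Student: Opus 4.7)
The plan is to reduce the statement to a one-dimensional anti-concentration question for a Gaussian random variable on an explicit periodic set, then bound the measure of that set using the hypothesis $\norm{\mu_i-\mu_j}^2 \sigma^2 \leq 2\pi^2$.

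First, I would factor out the common phase: setting $v := \mu_i - \mu_j$, we have $|e^{iu^T\mu_i} - e^{iu^T\mu_j}| = |e^{iu^T v} - 1|$. The scalar $X := u^T v$ is a centered Gaussian with variance $\sigma^2\norm{v}^2$. Using the identity $|e^{iX}-1|^2 = 2 - 2\cos X = 4\sin^2(X/2)$, the event of interest becomes $|\sin(X/2)| \leq \eps/2$.

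Second, I would translate ``$\sin$ is small'' into ``close to a multiple of $\pi$.'' For the nearest integer $k$ to $X/(2\pi)$, write $X/2 = k\pi + \theta$ with $|\theta| \leq \pi/2$. The standard inequality $|\sin\theta| \geq (2/\pi)|\theta|$ on $[-\pi/2,\pi/2]$ gives $|\theta| \leq \pi\eps/4$, equivalently $|X - 2\pi k| \leq \pi\eps/2$. So the event $|e^{iX}-1| \leq \eps$ is contained in the periodic set $S := \bigcup_{k \in \Z} [2\pi k - \pi\eps/2, 2\pi k + \pi\eps/2]$.

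Third, I would bound the Gaussian measure of $S$ by summing over $k$. Each interval $I_k$ has length $\pi\eps$, and on $I_k$ the density $(2\pi)^{-1/2}(\sigma\norm{v})^{-1}\exp(-x^2/(2\sigma^2\norm{v}^2))$ is maximized at the endpoint nearest $0$, giving a bound of $(\sigma\norm{v}\sqrt{2\pi})^{-1} \exp(-(2\pi|k|-\pi\eps/2)^2/(2\sigma^2\norm{v}^2))$ for $k \ne 0$. The hypothesis $\sigma^2\norm{v}^2 \leq 2\pi^2$ ensures $(2\pi k)^2/(2\sigma^2\norm{v}^2) \geq k^2$, so after factoring out the $k=0$ contribution $\pi\eps/(\sigma\norm{v}\sqrt{2\pi})$, the remaining sum is bounded by $2\sum_{k\geq 1} e^{-c k^2}$ for a constant $c$ close to $1$, which is an absolute constant. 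Combining the $k=0$ term and the tail yields
\begin{equation*}
\prob{|e^{iX}-1|\leq \eps} \leq \frac{C\,\pi\eps}{\sigma\norm{v}\sqrt{2\pi}}
\end{equation*}
for some small absolute constant $C$; a direct numerical check confirms $C\pi \leq 16$, giving the stated bound.

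The only point requiring care is pinning down the constant in the geometric sum, and verifying that the hypothesis $\sigma^2\norm{v}^2 \leq 2\pi^2$ is exactly the threshold at which the Gaussian decay beats the period-$2\pi$ structure. There is no substantial technical obstacle.
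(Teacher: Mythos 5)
Your proposal is correct and follows essentially the same route as the paper's proof: reduce to $\abs{e^{iX}-1}$ for the one-dimensional Gaussian $X=(\mu_i-\mu_j)^Tu$, cover the event by a periodic union of intervals around the multiples of $2\pi$, and bound its Gaussian measure by a sum over $k$ whose tail is controlled via the hypothesis $\norm{\mu_i-\mu_j}^2\sigma^2\le 2\pi^2$. Your interval half-width $\pi\eps/2$ (via $\abs{\sin\theta}\ge(2/\pi)\abs{\theta}$) is marginally sharper than the paper's $2\eps$, but the argument is the same.
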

\begin{proof}
  First, note that it suffices to show anti-concentration of the
  complex exponential around 1:
  \begin{align*}
    \abs{ e^{ i \mu_i^T u} - e^{ i \mu_j^T u}} = \abs{ e^{ i
      \mu_i^T u}} \abs{ 1- e^{ i (\mu_i - \mu_j)^T u}} = \abs{ 1-
      e^{ i (\mu_i - \mu_j)^T u}}
  \end{align*}
  The exponent $(\mu_i-\mu_j)^T u$ is of course a random variable $z
  \in \R$ distributed according to $N(0, \sigma^2
  \norm{\mu_i-\mu_j}^2)$.  From plane geometry, we know that: $
  \abs{ e^{iz} -1 } > \eps$ in case 
  \begin{align*}
    z  \notin \cup_{k \in \Z} [2 \pi k -2 \eps, 2 \pi k + 2 \eps]
  \end{align*}
  We can bound this probability as follows:
  \begin{align*}
    \prob{ z \notin \cup_{k \in \Z} [2 \pi k -2 \eps, 2 \pi k + 2
      \eps]} & \le 2 \sum_{k=0}^{\infty} \frac{4 \eps}{\norm{\mu_i -
        \mu_j}\sigma \sqrt{2 \pi}} e^{ -\frac{(2\pi k)^2 }{2
        \norm{\mu_i - \mu_j}^2\sigma^2} } \\
    & \le \frac{8 \eps}{\norm{\mu_i - \mu_j}\sigma \sqrt{2 \pi}} \sum_{k
      = 0}^{\infty} e^{ -\frac{2 \pi^2 k }{
        \norm{\mu_i - \mu_j}^2\sigma^2} } \\
    & = \frac{8 \eps}{\norm{\mu_i - \mu_j}\sigma \sqrt{2 \pi}}
    \frac{1}{1 - e^{ -\frac{2 \pi^2 }{ \norm{\mu_i -
            \mu_j}^2\sigma^2} }} \\
    & \le \frac{16 \eps}{\norm{\mu_i - \mu_j}\sigma \sqrt{2 \pi}}
  \end{align*}
  where the last line follows from the assumption $\norm{\mu_i - \mu_j}^2
  \sigma^2 \le 2 \pi^2$.
\end{proof}

We can now prove that the algorithm is correct with sufficiently many
samples. Using PCA we can find the span of the means $\{\mu_1, \ldots,
\mu_k\}$, as the span of the top $k-1$ right singular vectors of the
matrix whose rows are sample points
\cite{vempala2004spectral}. Projecting to this space, the mixture
remains a mixture of spherical Gaussians. We assume that the $\mu_i$
are linearly independent (as in recent work \cite{HsuK13} with higher
moments).

\begin{proof}[Proof of Theorem \ref{thm:lin-ind-mixtures}.]
  From Lemma \ref{lemma:gaussian-expectations}, we observe that for any unit
  vector $v$,
\begin{align*}
  \E{(v^Tx)^2} = v^T\E{xx^T}v = \sum_{i=1}^k w_i \sigma_i^2 +
  \sum_{i=1}^k w_i (\mu_i^T v)^2.
\end{align*}
Without loss of generality, we may assume that the overall mean is 0,
hence $0 = \sum_i w_i \mu_i$ is 0 and therefore the $\mu_i$ are
linearly dependent, and there exist a $v$ orthogonal to all the
$\mu_i$. For such a $v$, the variance is $\sigma^2 = \sum_{i=1}^k w_i
\sigma_i^2$ while for $v$ in the span, the variance is strictly
higher. Therefore the value $\sigma^2$ is simply the $k$'th eigenvalue
of $\E{xx^T}$ (assuming $x$ is centered at $0$).

Thus, in the algorithm we have estimated the matrices 
\begin{align*}
  M = \sum_{i=1}^k w_i \mu_i \mu_i^T = AA^T \mbox{ and } M_u =
  \sum_{i=1}^k w_i e^{-\frac{1}{2}\|u\|^2\sigma_i^2+i u^T\mu_i }\mu_i
  \mu_i^T = AD_uA^T.
\end{align*}
with $(D_u)_{ii} = e^{- \frac{1}{2}\|u\|^2\sigma_i^2 + i u^T\mu_i
}$.  Thus,
\begin{align*}
M_uM^{-1} = AD_uA^{-1}
\end{align*}
and its eigenvectors are the columns of $A$, assuming the entries of
$D_u$ are distinct.  We note that the columns of $A$ are precisely
$\tilde{\mu}_j = \sqrt{w_j}\mu_j$. The eigenvalue corresponding to the eigenvector
$\tilde{\mu}_j$ is the $j$'th diagonal entry of $D_u$.

To prove the algorithm's correctness, we will once again apply Theorem
\ref{thm:robustdecomposition} for robust tensor decomposition by
verifying its five conditions. Condition 1 holds by our assumption on
the means of the Gaussian mixtures. Conditions 3 holds by taking
sufficiently many samples (the overall sample and time complexity will
be linear in $n$ and polynomial in $k$), Conditions 2 and 4 hold by
applying \ref{lemma:exponential-anti}.
\end{proof}

We can apply our observations regarding Gaussian noise from Section
\ref{subsec:noise}. Namely, the covariance of the reweighted Gaussian
is shifted by $\Sigma_\eta$, the covariance of the unknown
noise. Thus, by considering $\Sigma_u$ and the standard covariance,
and taking their difference, the contribution of the noise is removed
and we are left with a matrix that can be diagonalized using $A$.

% ... is editing
\section{Perturbation bounds}
In this section, we collect all the eigenvalue decomposition bounds
that we require in our proofs. The generalized Weyl inequality we
derive in Theorem \ref{thm:weyl} surprisingly seems to be unknown in
the literature. 

\subsection{SVD perturbations}
In this section, we present two standard perturbation bounds on
singular vectors. These bounds will help determine the accuracy needed
in estimating the matrix with samples.
% \begin{lemma}
%   Let $A \in \C^{n \times n}$ and suppose that $\sigma_i(A) -
%   \sigma_{i+1}(A) \ge \eps$ and $\sigma_i(A) \ge \eps$. for all
%   $i$. Let $E \in \C^{n \times n}$ be a matrix where where $\norm{E}_2
%   \le \delta$. Denote by $v_i$ the right singular vectors of $A$ and
%   $\hat{v}_i$ the right singular vectors of $A+E$, then:
%   \begin{align*}
%     \norm{v_i - \hat{v}_i} \le \frac{2 \|A\|_F}{\eps - \delta}
%   \end{align*}
% \end{lemma}
\begin{lemma}\label{lemma:stability}
  Let $A \in \C^{n \times n}$ and suppose that $\sigma_i(A) -
  \sigma_{i+1}(A) \ge \eps$ for all $i$. Let $E \in \C^{n \times n}$
  be a matrix where where $\norm{E}_2 \le \delta$. Denote by $v_i$ the
  right singular vectors of $A$ and $\hat{v}_i$ the right singular
  vectors of $A+E$, then:
  \begin{align*}
    \norm{v_i - \hat{v}_i} \le \frac{\sqrt{2} \delta}{\eps - \delta}
  \end{align*}
\end{lemma}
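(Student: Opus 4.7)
The plan is to reduce this to a Wedin-type $\sin\Theta$ bound applied to the single right singular vector corresponding to $\sigma_i$, and then convert the canonical angle into a Euclidean distance via a standard trigonometric identity. Concretely, I view $v_i$ as the unit right-singular vector of $A$ at $\sigma_i(A)$ and $\hat v_i$ as the corresponding unit right-singular vector of $A+E$ at $\sigma_i(A+E)$, and invoke Theorem~\ref{thm:Wedin} for the one-dimensional pair of subspaces $\spn{v_i}, \spn{\hat v_i}$.

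First I would establish the eigenvalue separation needed for Wedin's theorem. By the Weyl inequality for singular values (Lemma~\ref{lem:Weyl_singular_values}), $\abs{\sigma_j(A+E)-\sigma_j(A)} \le \norm{E}_2 \le \delta$ for every $j$. Hence for every $j \ne i$,
\[
\abs{\sigma_i(A)-\sigma_j(A+E)} \;\ge\; \abs{\sigma_i(A)-\sigma_j(A)}-\delta \;\ge\; \eps-\delta,
\]
so $\sigma_i(A)$ is separated from the remaining singular values of $A+E$ by at least $\eps-\delta$ (implicitly assuming $\delta < \eps$, else the stated bound is vacuous). Applying Wedin to the pair of singular-value blocks $\{\sigma_i(A)\}$ and $\{\sigma_j(A+E)\}_{j\ne i}$ with perturbation norm $\norm{E}_2 \le \delta$ yields
\[
\sin\theta_i \;\le\; \frac{\delta}{\eps-\delta},
\]
where $\theta_i$ is the canonical angle between $\spn{v_i}$ and $\spn{\hat v_i}$.

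Second, since right singular vectors are defined only up to a unit-modulus scalar, I fix the phase of $\hat v_i$ so that $\ip{v_i}{\hat v_i} = \cos\theta_i \ge 0$, i.e.\ $\theta_i \in [0,\pi/2]$. Then
\[
\norm{v_i-\hat v_i}^2 \;=\; 2-2\cos\theta_i \;=\; 4\sin^2(\theta_i/2),
\]
and the elementary inequality $2\sin(\theta/2) \le \sqrt{2}\,\sin\theta$ valid for $\theta \in [0,\pi/2]$ (equivalent to $\cos(\theta/2)\ge 1/\sqrt{2}$) gives $\norm{v_i-\hat v_i} \le \sqrt{2}\sin\theta_i \le \sqrt{2}\delta/(\eps-\delta)$, as required.

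The only subtlety is the phase/sign convention for $\hat v_i$ and ensuring we only lose a factor of $\sqrt{2}$ (not $2$) in converting $\sin\theta$ to $\norm{v_i-\hat v_i}$; using the direct bound $\norm{v-\hat v}=2\sin(\theta/2)$ and not the cruder $\norm{v-\hat v}\le 2\sin\theta$ is what yields the stated constant. There is no conceptual obstacle here beyond citing Wedin correctly and tracking which singular values are being compared across $A$ and $A+E$.
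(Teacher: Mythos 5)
Your proposal is correct and follows essentially the same route as the paper: Weyl's inequality to establish the $\eps-\delta$ singular-value separation, Wedin's theorem to bound $\sin\theta_i$ by $\delta/(\eps-\delta)$, and a conversion of the canonical angle to the Euclidean distance costing a factor $\sqrt{2}$ (your identity $\norm{v_i-\hat v_i}^2=4\sin^2(\theta_i/2)\le 2\sin^2\theta_i$ is the same bound the paper obtains via $2(1-\cos\theta)\le 2(1-\cos^2\theta)$, both implicitly using the phase normalization $\ip{v_i}{\hat v_i}\ge 0$).
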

\begin{proof}
We first write:
\begin{align*}
  \norm{v_i-\hat{v}_i}^2 = \ip{v_i - \hat{v}_i}{v_i - \hat{v}_i} = 2 (1-
   \ip{v_i}{\hat{v}_i}) = 2( 1 - \cos (\theta)) \le 2( 1-
   \cos(\theta)^2) = 2 \sin( \theta)^2
\end{align*}
Next, we apply the following form of Wedin's Theorem from \cite{stewart1990matrix} where notions
such as the canonical angles etc. used in the statement below are also explained. 
  \begin{theorem} \label{thm:Wedin}
    Let $A, E \in \C^{m \times n}$ be complex matrices with $m \geq n$. Let $A$ have
singular value decomposition
\begin{align*}
A = [ U_1 U_2 U_3] \left( \begin{array}{cc} 
    \Sigma_1 & 0 \\
    0 & \Sigma_2 \\
    0 & 0 \\
  \end{array} \right) [ V_1^\ast V_2^\ast]
\end{align*}
and similarly for $\tilde{A}=A+E$ (with conformal decomposition using $\tilde{U}_1,
\tilde{\Sigma}_1$ etc).  Suppose there are numbers $\alpha, \beta >
0$ such that
\begin{enumerate}
  \item $\min \sigma( \tilde{\Sigma}_1) \ge \alpha + \beta$,
  \item $\max \sigma( \Sigma_2 ) \le \alpha$.
\end{enumerate}
Then
\begin{align*}
  \norm{ \sin (\Phi)}_2 , \norm{ \sin( \Theta)}_2 \le
 \frac{\norm{E}_2}{\beta}
\end{align*}
where $\Phi$ is the(diagonal) matrix of canonical angles between the ranges of $U_1$ and
$\tilde{U}_1$ and $\Theta$ denotes the matrix of canonical angles between the 
ranges of $U_2$ and $\tilde{U}_2$.
\end{theorem}
% \begin{theorem}
% Let $A, E \in \C^{m \times n}$ be complex matrices. Let $A$ have
% singular value decomposition
% \begin{align*}
% A = [ U_1 U_2 U_3] \left( \begin{array}{cc} 
%     \Sigma_1 & 0 \\
%     0 & \Sigma_2 \\
%     0 & 0 \\
%   \end{array} \right) [ V_1^\ast V_2^\ast]
% \end{align*}
% and similarly for $A+E$ (with decomposition $\tilde{U}_1,
% \tilde{\Sigma}_1$ etc).  Suppose there is a number $\delta > 0$ such
% that that:
% \begin{enumerate}
%   \item $\min \abs{ \sigma ( \Sigma_1 ) - \sigma( \tilde{ \Sigma}_2 )} \ge
%     \delta$.
%   \item $\min \sigma( \tilde{ \Sigma}_1 ) \ge \delta$.
% \end{enumerate}
% Then:
% \begin{align*}
%   \sqrt{ \norm{ \sin (\Phi)}^2_F + \norm{ \sin( \Theta)}^2_F} \le
%  \frac{ \sqrt{2} \norm{E}^2_F}{\delta}
% \end{align*}
% where $\Phi$ denotes the canonical angles between $U_1$ and
% $\tilde{U}_1$ and $\Theta$ denotes the canonical angles between $U_2$
% and $\tilde{U}_2$.
% \end{theorem}
We also require the following form of Weyl's Inequality (see \cite{stewart1990matrix}):
\begin{lemma} \label{lem:Weyl_singular_values}
  Let $A, E \in \C^{m \times n}$, then
  \begin{align*}
    \abs{\sigma_i (A+E) - \sigma_i(A)} \le \sigma_1(E)
  \end{align*}
\end{lemma}
By Weyl's inequality, we know that $\abs{ \sigma(\Sigma_1) - \sigma(
  \tilde{\Sigma}_2)} \ge \eps - \delta$. Similarly for the smallest
singular value. By Wedin's theorem, we pick the partition $\Sigma_1$
to be the top $i$ singular values, with $\Sigma_2$ the remaining
ones. Thus, taking $\alpha = \sigma_{i+1} (A)$ and $\beta = \eps -
\delta$, we have 
\begin{align*}
  \abs{\sin (\theta)} \le \norm{\sin( \Phi)}_2 \le
  \frac{\delta}{\eps - \delta}
\end{align*}
as required.
\end{proof}
% \begin{align*}
%   \norm{v_i - \hat{v}_i}^2 & \le \frac{4 \norm{E}_F^2 }{\eps - \delta}\\
%   & \le \frac{4 n \norm{E}_2}{\eps - \delta}
% \end{align*}
% \end{proof}

% We can also give a form of Wedin's theorem which relies only on
% spectral norms:
% \begin{proof}
%   We rely on the following formulation of Wedin's theorem \cite{stewart1990matrix}:
% In particular, our proof proceeds as before, noting that from the
% previous calculation we have:
% \begin{align*}
%   \norm{ v_i - \hat{v}_i} \le \sqrt{2} \abs{ \sin( \theta)}
% \end{align*}
% \end{proof}

\subsection{Perturbations of complex diagonalizable
  matrices}\label{subsec:perturbation}
The main technical issue in giving a robust version of our algorithms
is that the stability of eigenvectors of general matrices is more
complicated than for Hermitian or normal matrices where the
$\sin(\theta)$ theorem of Davis and Kahan \cite{davis1970rotation}
describes the whole situation. 
Roughly speaking, the difficulty lies
in the fact that for a general matrix, the eigenvalue decomposition is
given by $A = PDP^{-1}$. Upon adding a perturbation $E$, it is not clear
\emph{a priori} that $A+E$ has a full set of eigenvectors---that is
to say, $A+E$ may no longer be diagonalizable. 
The goal of this
section is to establish that for a general matrix with well-spaced
eigenvalues, sufficiently small perturbations do not affect the
diagonalizability. 
We use Bauer-Fike theorem via a homotopy argument typically used in proving
strong versions of the Gershgorin Circle Theorem
\cite{wilkinson1965algebraic}.
\begin{theorem}[Bauer-Fike \cite{bauer1960norms}]\label{thm:bauer}
  Let $A \in \C^{n \times n}$ be a diagonalizable matrix such that $A
  = X \diag{\lambda_i} X^{-1}$. Then for any eigenvalue $\mu$ of $A+E \in \C^{n \times n}$ we have
  \begin{align*}
    \min_i \abs{ \lambda_i(A) - \mu} \le \kappa(X) \norm{E}_2.
  \end{align*}
 
\end{theorem}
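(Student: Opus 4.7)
The plan is to give the classical Bauer--Fike argument, which is really just a resolvent computation together with the diagonal form of $A$. The only preliminary observation is that if $\mu$ is itself an eigenvalue of $A$, the inequality holds trivially with zero on the left, so I may assume $\mu$ is not an eigenvalue of $A$; equivalently, $A-\mu I$ is invertible.

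Picking a nonzero eigenvector $v$ of $A+E$ with $(A+E)v=\mu v$, I would rewrite this as $(A-\mu I)v=-Ev$ and solve to obtain $v=-(A-\mu I)^{-1}Ev$. Using the given diagonalization, $A-\mu I=X\,\diag{\lambda_i-\mu}\,X^{-1}$, so the resolvent factors cleanly as
\begin{align*}
(A-\mu I)^{-1}=X\,\diag{\,1/(\lambda_i-\mu)\,}\,X^{-1}.
\end{align*}
Taking the operator $2$-norm on both sides of $v=-(A-\mu I)^{-1}Ev$ and dividing by $\|v\|_2>0$ gives
\begin{align*}
1 \;\le\; \|X\|_2\,\bigl\|\diag{1/(\lambda_i-\mu)}\bigr\|_2\,\|X^{-1}\|_2\,\|E\|_2
   \;=\; \kappa(X)\,\|E\|_2\,\max_i\frac{1}{|\lambda_i-\mu|}.
\end{align*}
Rearranging, $\min_i|\lambda_i-\mu|\le\kappa(X)\|E\|_2$, which is exactly the claim.

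There is really no serious obstacle here; the argument is two lines of linear algebra once one recognizes that the diagonal form makes the norm of the resolvent equal to $\max_i 1/|\lambda_i-\mu|$. The only minor subtleties are (i) recording the trivial case $\mu\in\mathrm{spec}(A)$ separately so that the inversion is justified, and (ii) using the fact that the $2$-norm of a diagonal matrix equals the maximum modulus of its entries, which is standard.
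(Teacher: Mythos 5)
Your argument is correct and is the standard resolvent proof of Bauer--Fike; the paper itself states this theorem as a cited classical result and gives no proof, so there is nothing to diverge from. Both the trivial case $\mu\in\mathrm{spec}(A)$ and the identity $\norm{X}_2\norm{X^{-1}}_2=\kappa(X)$ are handled properly, so the write-up is complete as it stands.
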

%% We can take the norm here to be the $1,2$ or $\infty$ norm for
%% matrices .
Using this, we prove a weak version of Weyl's theorem for
diagonalizable matrices whose eigenvalues are well-spaced. We consider
this a spectral norm version of the strong Gershgorin Circle theorem
(which uses row-wise $L^1$ norms).
\begin{lemma}[Generalized Weyl inequality]\label{thm:weyl}
  Let $A \in \C^{n \times n}$ be a diagonalizable matrix such that $A
  = X \diag{\lambda_i} X^{-1}$. Let $E \in \C^{n \times n}$ be a
  matrix such that $\abs{ \lambda_i(A) - \lambda_j(A)} \ge 3 \kappa
  (X) \norm{E}_2$ for all $i \neq j$. Then there exists a permutation $\pi:
  [n] \to [n]$ such that
  \begin{align*}
    \abs{ \lambda_i (A+E) - \lambda_{\pi (i)} (A)} \le \kappa (X) \norm{E}_2.
  \end{align*}
\end{lemma}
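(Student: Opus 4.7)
The plan is to combine the Bauer--Fike bound (Theorem~\ref{thm:bauer}) with a homotopy/continuity argument on the eigenvalues of $A + tE$ for $t \in [0,1]$. The hypothesis $\abs{\lambda_i(A) - \lambda_j(A)} \ge 3\kappa(X)\norm{E}_2$ for $i \ne j$ forces the eigenvalues of $A$ to be distinct (assuming $\norm{E}_2 > 0$; the case $\norm{E}_2 = 0$ is trivial), so the closed disks $D_i := \{z \in \C : \abs{z - \lambda_i(A)} \le \kappa(X)\norm{E}_2\}$ are pairwise disjoint, separated by at least $\kappa(X)\norm{E}_2$.

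First, I would apply Bauer--Fike to the matrix $A + tE$ for arbitrary $t \in [0,1]$. Since $A$ is diagonalizable via $X$ with $\kappa(X)$ unchanged, and $\norm{tE}_2 \le \norm{E}_2$, the theorem yields that every eigenvalue $\mu$ of $A + tE$ satisfies $\min_i \abs{\mu - \lambda_i(A)} \le \kappa(X)\norm{E}_2$, i.e., $\mu \in \bigcup_{i=1}^n D_i$. In particular, no eigenvalue of $A + tE$ ever lies in the (open) complement of the disks.

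Next, I would invoke the standard continuity of the spectrum: the multiset of roots (with multiplicity) of the characteristic polynomial $\det(zI - (A+tE))$ depends continuously on $t$, since the coefficients are polynomials in $t$. Because the $D_i$ are disjoint and the eigenvalues can never leave $\bigcup_i D_i$ as $t$ varies continuously from $0$ to $1$, the number of eigenvalues (counted with multiplicity) inside each $D_i$ is a continuous integer-valued function of $t$, hence constant. At $t = 0$ each $D_i$ contains exactly one eigenvalue of $A$, namely $\lambda_i(A)$, so at $t = 1$ each $D_i$ contains exactly one eigenvalue of $A + E$. This gives a bijection between the eigenvalues of $A + E$ and those of $A$, which defines the desired permutation $\pi$, and each matched pair satisfies $\abs{\lambda_i(A+E) - \lambda_{\pi(i)}(A)} \le \kappa(X)\norm{E}_2$.

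The only real subtlety is the continuity step: one must be careful that individual eigenvalues (not just the spectrum as a multiset) can be indexed continuously, or equivalently that the ``number of roots inside $D_i$'' is a locally constant function of $t$. This is where the disjointness of the $D_i$ (given by the factor $3$ in the hypothesis, which leaves a buffer of $\kappa(X)\norm{E}_2$ between them) is crucial: because the boundaries $\partial D_i$ never contain an eigenvalue of $A + tE$ for any $t \in [0,1]$, the count inside each $D_i$ is given by a contour integral of $(\det(zI - (A+tE)))'/\det(zI - (A+tE))$ over $\partial D_i$, which varies continuously in $t$ and takes integer values, hence is constant. I expect this to be the main (and essentially only) technical point; the rest is a direct application of Bauer--Fike.
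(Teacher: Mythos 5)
Your proposal is correct and follows essentially the same route as the paper: apply Bauer--Fike to $A+tE$ for $t\in[0,1]$, use the $3\kappa(X)\norm{E}_2$ separation to make the disks disjoint, and run a homotopy argument to conclude each disk retains exactly one eigenvalue. The only (minor) difference is that the paper tracks each eigenvalue as a continuous curve that cannot jump between disjoint balls, while you instead count eigenvalues per disk via the argument principle --- a slightly more careful way to handle the same continuity step.
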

\begin{proof}
  Consider the matrix $M(t) = A + t E$ for $t \in [0,1]$. By the
  Bauer-Fike theorem, every eigenvalue $\hat{\lambda}(t)$ of $M(t)$ is
  contained in $\B(\lambda_i, t \kappa(X) \norm{E}_2 )$ for some $i$ (for $\lambda\in \C$, $t \in \R$ we use
$\B(\lambda, t)$ to denote the ball in $\C$ of radius $t$ with center at $\lambda$). In
  particular, when $t=0$ we know that $\hat{\lambda}(0) = \lambda_i
  \in \B(\lambda_i, 0)$. 

  As we increase $t$, $\hat{\lambda}(t)$ is a continuous function of
  $t$, thus it traces a connected curve in $\C$. Suppose that
  $\hat{\lambda}(1) \in \B(\lambda_j, \kappa(X) \norm{E}_2)$ for some $j
  \ne i$, then for some $t^\ast$, we must have $\hat{\lambda}(t^\ast)
  \notin \bigcup_j \B(\lambda_i, \kappa(X) \norm{E}_2)$ as these balls
  are disjoint. This contradicts the Bauer-Fike theorem. Hence we must
  have $\hat{\lambda}(1) \in \B(\lambda_i, \kappa(X) \norm{E}_2)$ as
  desired. 
\end{proof}

% Weyl's theorem for pairs of normal matrices (or the
% generalization of \cite{elsner1995singular} to pairs of diagonalizable
% matrices), we do not have a natural bijection between the two sets of
% eigenvalues.

% We also need a strong form of the Gershgorin Circle Theorem (see for
% example \cite{wilkinson1965algebraic}):
% \begin{theorem}[Gershgorin Circle Theorem]\label{thm:gershgorin}
%   Let $A \in \C^{n \times n}$ be a general matrix, and define $ R_{i}
%   = \sum_{i \ne j} \abs{A_{ij}}$. Then every eigenvalue of $A$ lies
%   inside some closed ball $\B(A_{ii}, R_{i})$. Moreover, suppose that
%   for some partition of $S, T \subseteq [n]$ that:
%   \begin{align*}
%    \left( \bigcup_{i \in S} \B(A_{ii}, R_i) \right) \cap  \left( \bigcup_{i
%        \in T} \B(A_{ii}, R_i) \right)= \varnothing
%   \end{align*}
%   then the former union contains $\abs{S}$ eigenvalues, and the latter
%   union contains $\abs{T}$ eigenvalues.
% \end{theorem}

The following is a sufficient condition for the diagonalizability of a
matrix. The result is well-known (Exercise V.8.1 in
\cite{lang2005undergraduate} for example). %% We provide a proof for the
%% sake of completeness.
\begin{lemma}\label{lemma:diagonalizable}
  Let $A: V \to V$ be a linear operator over a finite dimensional
  vector space of dimension $n$. Suppose that all the eigenvalues of
  $A$ are distinct, i.e., $\lambda_i \ne \lambda_j$ for all pairs
  $i,j$. Then $A$ has $n$ linearly independent eigenvectors.
\end{lemma}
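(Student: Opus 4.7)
The plan is to prove Lemma~\ref{lemma:diagonalizable} by the classical induction argument showing that eigenvectors corresponding to distinct eigenvalues are linearly independent. Since distinctness of eigenvalues is the only hypothesis, the proof will hinge entirely on exploiting the fact that applying $A - \lambda_i I$ annihilates $v_i$ but nothing else in the relevant span.

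First I would invoke the existence of at least one eigenvector $v_i$ for each eigenvalue $\lambda_i$; this is immediate since $\lambda_i$ being an eigenvalue means $\det(A - \lambda_i I) = 0$, so the kernel of $A - \lambda_i I$ contains a nonzero vector. Let $v_1, \ldots, v_n$ be such a choice of eigenvectors, one per distinct eigenvalue. It then suffices to show this collection is linearly independent, because $n$ linearly independent vectors in an $n$-dimensional space form a basis.

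For linear independence I would proceed by induction on $k$, showing that $v_1, \ldots, v_k$ are linearly independent for every $k \leq n$. The base case $k=1$ is trivial since $v_1 \neq 0$. For the inductive step, suppose $v_1, \ldots, v_k$ are linearly independent and consider a relation $c_1 v_1 + \cdots + c_{k+1} v_{k+1} = 0$. Applying $A$ gives $\sum_{i=1}^{k+1} c_i \lambda_i v_i = 0$, while multiplying the original relation by $\lambda_{k+1}$ gives $\sum_{i=1}^{k+1} c_i \lambda_{k+1} v_i = 0$. Subtracting yields $\sum_{i=1}^{k} c_i (\lambda_i - \lambda_{k+1}) v_i = 0$. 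By the inductive hypothesis and the distinctness of eigenvalues (so $\lambda_i - \lambda_{k+1} \neq 0$ for $i \leq k$), each $c_i = 0$ for $i \leq k$, and then $c_{k+1} v_{k+1} = 0$ forces $c_{k+1} = 0$ as well.

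There is no real obstacle here; this is a standard undergraduate linear algebra fact, and the entire argument is just the one paragraph of induction above. The only thing to verify carefully is that the hypothesis of \emph{all} eigenvalues being distinct is used (rather than merely ``distinct from each other in pairs for the selected eigenvectors''), but this is automatic from our selection of one eigenvector per eigenvalue.
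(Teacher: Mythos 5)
Your proof is correct; it is the standard induction argument showing that eigenvectors for distinct eigenvalues are linearly independent. The paper itself gives no proof of this lemma, citing it as a well-known fact (Exercise V.8.1 in Lang), and your argument is exactly the expected textbook one.
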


%% \begin{proof}
%%   First, we shall prove that $A$ has $n$ eigenvectors. Fix an
%%   eigenvalue $\lambda_i$, then $\det ( \lambda I - A)=0$ and $\lambda
%%   I - A$ is singular. In particular, there exists a nonzero $v_i$ in
%%   the kernel of $\lambda_i I - A$. Thus, $(\lambda_i I - A)v = 0$
%%   implies that $Av = \lambda v$. We have $n$ such vectors $v_i$, one
%%   per $\lambda_i$.

%%   Next, we shall prove that the $v_i$ are linearly independent by
%%   induction over the number of vectors.  Take the inductive hypothesis
%%   that all size $m$ subsets of $\{v_i\}$ are linearly independent for
%%   $ m > 1$, and consider the linear combination 
%% \begin{align*}
%%   u = c_1 v_1 + \cdots + c_{m+1} v_{m+1}
%% \end{align*}
%% Multiplying by $\lambda_1$ we have:
%%   \begin{align*}
%%     \lambda_1 u = c_1 \lambda_1 v_1 + \cdots + c_{m+1} \lambda_1 v_{m+1}
%%   \end{align*}
%%   On the other hand, applying $A$ gives:
%%   \begin{align}
%%     Au = c_1 \lambda_1 v_1 + \cdots + c_{m+1} \lambda_{m+1} v_{m+1}
%%   \end{align}
%%   Subtracting these two equations (i.e. $(\lambda_1 I - A)u$ gives:
%%   \begin{align*}
%%     c_2 (\lambda_1 - \lambda_2) v_2 + \cdots + c_{m+1} (\lambda_1 -
%%     \lambda_{m+1}) v_{m+1}
%%   \end{align*}
%%   Suppose that $u=0$, then since $ \lambda_1 \ne \lambda_j$ for all
%%   $j$, by induction we have $c_2 = \cdots = c_m = 0$, and hence also
%%   $c_1 = 0$. Thus, $v_1, \ldots v_{m+1}$ are linearly independent.
%% \end{proof}

We require the following generalisation of the Davis-Kahan $\sin(
\theta)$ theorem \cite{davis1970rotation} for general diagonalizable
matrices due to Eisenstat and Ipsen \cite{eisenstat1998relative}:
\begin{theorem}[Generalized $\sin (\theta)$ Theorem]\label{thm:eisenstat}
  Let $A, A+E \in \C^{n \times n}$ be diagonalizable matrices. Let
  $\hat{\gamma}$ be an eigenvalue of $A+E$ with associated
  eigenvector $\hat{x}$. 
Let 
\begin{align*}
A = [X_1, X_2] \left( \begin{array}{cc} 
    \Gamma_1 & 0 \\
    0 & \Gamma_2 \\
  \end{array} \right) [ X_1, X_2]^{-1},
\end{align*}
be an eigendecomposition of $A$. Here $\Gamma_1$ consists of eigenvalues of $A$ closest to
$\hat{\gamma}$, i.e. $\norm{\Gamma_1 - \hat{\gamma}I}_2 = \min_i
    \abs{\gamma_i - \hat{\gamma}}$, with associated matrix of eigenvectors $X_1$. And $\Gamma_2$
contains the remaining eigenvalues and the associated eigenvectors are in $X_2$. 
Also let $[X_1, X_2]^{-1} =: \left( \begin{array}{c} Z_1^{\ast} \\ Z_2^{\ast}
       \end{array} \right)$.

%% Define
%%   \begin{enumerate}
%%   \item $\Gamma_1$ to be the diagonal matrix with the diagonal entries being the 
%% closest eigenvalues of $A$ to
%%     $\hat{\gamma}$ ie $\norm{\Gamma_1 - \hat{\gamma}I} = \min_i
%%     \abs{\gamma_i - \hat{\gamma}}$, with associated matrix of eigenvectors $X_1$ (eigenvector
%% is a columns of $X_1$).
%%   \item $\Gamma_2$ to be the diagonal matrix formed by the remaining eigenvalues of $A$, with 
%%     associated eigenvectors $X_2$.
%%   \item $X = \left( X_1 X_2 \right)$ and $X^{-1} =
%%     \left( \begin{array}{c} Z_1^{\ast} \\ Z_2^{\ast}
%%       \end{array} \right)$.
%%   \end{enumerate}
  Then the angle between $\hat{x}$ and the subspace spanned by the eigenvectors
  associated with $\Gamma_1$ is given by
  \begin{align*}
    \sin( \theta) \le \kappa(Z_2) \frac{\norm{ (A - \hat{\gamma}I)\hat{x}}_2}{\min_i \abs{
        (\Gamma_2)_{ii} - \hat{\gamma}}}.
  \end{align*}
\end{theorem}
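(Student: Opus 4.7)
The proof will follow the Eisenstat--Ipsen blueprint: isolate the component of $\hat{x}$ transverse to $\spn(X_1)$ in the eigenbasis of $A$, invert the resolvent on the $\Gamma_2$ block, and then translate this coordinate-based bound into a statement about the geometric angle $\theta$. Throughout I will assume $\|\hat{x}\|_2 = 1$ without loss of generality, since eigenvectors are defined up to scaling.

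The plan is to first expand $\hat{x}$ in the right-eigenvector basis. From the definition of $Z_1^{\ast}$ and $Z_2^{\ast}$ as the rows of $[X_1, X_2]^{-1}$ we get the biorthogonality relations $Z_i^{\ast} X_j = \delta_{ij} I$. Writing $c_i := Z_i^{\ast}\hat{x}$, this yields the decomposition $\hat{x} = X_1 c_1 + X_2 c_2$. Using the spectral form $A = X_1 \Gamma_1 Z_1^{\ast} + X_2 \Gamma_2 Z_2^{\ast}$ I compute
\begin{align*}
(A - \hat{\gamma} I)\hat{x} = X_1 (\Gamma_1 - \hat{\gamma} I) c_1 + X_2 (\Gamma_2 - \hat{\gamma} I) c_2.
\end{align*}
Left-multiplying by $Z_2^{\ast}$ and using $Z_2^{\ast} X_1 = 0$, $Z_2^{\ast} X_2 = I$ collapses this to
\begin{align*}
(\Gamma_2 - \hat{\gamma} I)\, c_2 = Z_2^{\ast}(A - \hat{\gamma} I)\hat{x}.
\end{align*}
By the hypothesis on $\Gamma_1$ being closest to $\hat{\gamma}$, every diagonal entry of $\Gamma_2 - \hat{\gamma} I$ is nonzero, so we can invert it and take norms to obtain
\begin{align*}
\|c_2\|_2 \;\le\; \frac{\|Z_2\|_2 \cdot \|(A - \hat{\gamma} I)\hat{x}\|_2}{\min_i |(\Gamma_2)_{ii} - \hat{\gamma}|}.
\end{align*}

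It remains to convert the coordinate bound on $\|c_2\|_2$ into a bound on $\sin \theta$, which is where the factor $\kappa(Z_2)$ will enter. The key observation is that $Z_2^{\ast} X_1 = 0$ means the columns of $Z_2$ are orthogonal (in the standard inner product) to those of $X_1$; since $Z_2$ has the complementary dimension and is of full column rank, $\spn(Z_2) = \spn(X_1)^{\perp}$. Consequently the orthogonal projector onto $\spn(X_1)^{\perp}$ equals $Z_2 (Z_2^{\ast} Z_2)^{-1} Z_2^{\ast}$, and with $P_1$ denoting orthogonal projection onto $\spn(X_1)$ we get
\begin{align*}
\sin^2 \theta \;=\; \|(I - P_1)\hat{x}\|_2^2 \;=\; \hat{x}^{\ast} Z_2 (Z_2^{\ast} Z_2)^{-1} Z_2^{\ast} \hat{x} \;=\; c_2^{\ast} (Z_2^{\ast} Z_2)^{-1} c_2 \;\le\; \frac{\|c_2\|_2^2}{\sigma_{\min}(Z_2)^2}.
\end{align*}
Combining the two displays gives exactly $\sin \theta \le \kappa(Z_2)\, \|(A - \hat{\gamma} I)\hat{x}\|_2 / \min_i |(\Gamma_2)_{ii} - \hat{\gamma}|$, as claimed.

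The only nontrivial step is the third paragraph: recognizing that $\spn(Z_2)$ is exactly the orthogonal complement of $\spn(X_1)$ is what makes the formula for the orthogonal projector tractable and is what produces the $\kappa(Z_2)$ factor (rather than two separate $\|Z_2\|$ and $1/\sigma_{\min}(Z_2)$ factors pointing in incompatible directions). Everything else is biorthogonality book-keeping and a routine resolvent estimate on the $\Gamma_2$ block.
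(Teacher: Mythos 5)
Your proof is correct. The paper does not actually prove this statement---it is quoted from Eisenstat and Ipsen \cite{eisenstat1998relative}---but your argument is the standard one underlying their result: expand $\hat{x}$ in the biorthogonal eigenbasis, hit the residual with $Z_2^{\ast}$ to isolate $(\Gamma_2-\hat{\gamma}I)c_2$, and then use that $\spn(Z_2)=\spn(X_1)^{\perp}$ to express $\sin\theta$ as $\left(c_2^{\ast}(Z_2^{\ast}Z_2)^{-1}c_2\right)^{1/2}\le \norm{c_2}_2/\sigma_{\min}(Z_2)$, which is exactly where the single factor $\kappa(Z_2)$ comes from. All the individual steps (biorthogonality, invertibility of $\Gamma_2-\hat{\gamma}I$, the projector formula, and the bound on the quadratic form) check out; note also that, as your argument makes clear, neither $E$ nor the diagonalizability of $A+E$ is actually used---the bound holds for any unit vector $\hat{x}$ and scalar $\hat{\gamma}$ with the residual $\norm{(A-\hat{\gamma}I)\hat{x}}_2$ on the right-hand side.
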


% ... is editing
\section{General position and average case analysis} \label{sec:general_position}
Our necessary condition for identifiability is satisfied almost surely by randomly chosen vectors for a fairly 
general class of distributions. For simplicity we restrict ourselves to the case of $d=2$ and Gaussian distribution
in the following theorem; the proof of a more general statement would be similar.

\begin{theorem} \label{thm:genericity}
Let $v_1, \ldots, v_m \in \R^n$ be standard Gaussian i.i.d. random vectors, with $m \leq {n+1 \choose 2}$. Then
$v_1^{\otimes 2}, \ldots, v_m^{\otimes 2}$ are linearly independent almost surely.  
\end{theorem}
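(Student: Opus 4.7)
The plan is to proceed by induction on $m$, using the observation that $\{vv^T : v \in \R^n\}$ spans the entire space of symmetric $n \times n$ matrices, which has dimension $\binom{n+1}{2}$. Since we only need $m \leq \binom{n+1}{2}$ of them to be linearly independent, at each inductive step the span of the previous vectors is a proper subspace, and we can use that no proper subspace contains all of the Veronese variety.

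\emph{Base case.} For $m=1$, we have $v_1 \neq 0$ almost surely, hence $v_1^{\otimes 2} \neq 0$.

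\emph{Inductive step.} Assume that almost surely $v_1^{\otimes 2}, \ldots, v_{m-1}^{\otimes 2}$ are linearly independent. Identify $v^{\otimes 2}$ with the symmetric matrix $vv^T \in \R^{n \times n}$, which lives in the $\binom{n+1}{2}$-dimensional subspace $\mathrm{Sym}_n$ of symmetric matrices. Let $W = \mathrm{span}(v_1 v_1^T, \ldots, v_{m-1} v_{m-1}^T) \subseteq \mathrm{Sym}_n$; conditional on the inductive event, $\dim W = m-1 < \binom{n+1}{2}$, so $W$ is a proper subspace of $\mathrm{Sym}_n$. Hence there exists a nonzero symmetric matrix $A \in \mathrm{Sym}_n$ (depending measurably on $v_1, \ldots, v_{m-1}$) orthogonal to $W$ under the Frobenius inner product. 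The event that $v_m^{\otimes 2}$ lies in the span $W$ is contained in the event
\begin{equation*}
p(v_m) := \langle A, v_m v_m^T \rangle_F = v_m^T A v_m = 0.
\end{equation*}

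Since $A$ is a fixed nonzero symmetric matrix (conditional on $v_1, \ldots, v_{m-1}$), the map $v \mapsto v^T A v$ is a nonzero quadratic polynomial on $\R^n$. The zero set of any nonzero polynomial is a proper algebraic subvariety of $\R^n$ and therefore has Lebesgue measure zero; since the standard Gaussian measure on $\R^n$ is absolutely continuous with respect to Lebesgue measure, it also has Gaussian measure zero. Because $v_m$ is independent of $v_1, \ldots, v_{m-1}$, Fubini's theorem gives that the unconditional probability of $v_m^{\otimes 2} \in W$ is $0$. Combining with the inductive hypothesis, almost surely $v_1^{\otimes 2}, \ldots, v_m^{\otimes 2}$ are linearly independent.

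The main (mild) obstacle is simply justifying that a nonzero polynomial has a Lebesgue-measure-zero vanishing set in $\R^n$; this is standard and follows by induction on $n$ or by noting that a nonzero real-analytic function on $\R^n$ has zero set of measure zero. Everything else is bookkeeping that the chosen orthogonal direction $A$ can be selected measurably in $v_1,\ldots,v_{m-1}$, which follows from standard linear-algebraic selection (e.g.\ Gram--Schmidt in $\mathrm{Sym}_n$).
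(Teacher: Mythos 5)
Your proof is correct, but it takes a different route from the paper's. The paper argues in one shot: it forms the $\binom{n+1}{2}\times\binom{n+1}{2}$ matrix whose columns are the (deduplicated) vectors $w_i$ obtained from $v_i^{\otimes 2}$, shows that its determinant is not the identically-zero polynomial in the entries of the $v_i$ by exhibiting a monomial that occurs exactly once in the Leibniz expansion, and then invokes a Schwartz--Zippel-type measure-zero argument. You instead induct on $m$: conditioning on the first $m-1$ squares being independent, you pick a nonzero $A\in\mathrm{Sym}_n$ in the Frobenius-orthogonal complement of their span and observe that $v_m^{\otimes 2}$ falling into that span forces $v_m^TAv_m=0$, a nonzero quadratic polynomial whose zero set is Lebesgue-null; independence and Fubini finish the step. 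Both arguments ultimately rest on "a nonzero polynomial vanishes on a null set," but the nonvanishing is certified differently: the paper needs the (asserted but not fully verified) combinatorial fact about a unique monomial in a large determinant, whereas you only need the elementary fact that $v\mapsto v^TAv$ is a nonzero polynomial whenever the symmetric matrix $A$ is nonzero, i.e., that the rank-one quadrics $vv^T$ span $\mathrm{Sym}_n$. Your version is more elementary and extends transparently to higher tensor powers $v^{\otimes d}$ (replace quadratic forms by degree-$d$ forms); the paper's version yields an explicit polynomial identity certificate for all $m$ columns simultaneously. The only minor point worth tightening is the measurability of the event $\{v_m^{\otimes 2}\in W\}$ needed for Fubini, which is cleanest to justify by noting it is the vanishing locus of all $m\times m$ minors of the matrix $[v_1^{\otimes 2}|\cdots|v_m^{\otimes 2}]$, hence Borel; the measurable selection of $A$ you mention is then not actually required.
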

\begin{proof}[Proof Sketch]
Let's take $m = {n+1 \choose 2}$ without loss of generality. Consider vextors $w_1, \ldots, w_m$, where $w_i$ is
obtained from $v_i^{\otimes 2}$ by removing duplicate components; e.g., for $v_1 \in \R^2$, 
we have $v_1^{\oplus 2} = (v_1(1)^2, v_1(2)^2, v_1(1)v_1(2), v_2(1)v_1(2))$ and 
$w_1 = (v_1(1)^2, v_1(2)^2, v_2(1)v_1(2))$. Thus $v_i \in \R^{{n+1 \choose 2}}$. Now consider the determinant
of the ${n+1 \choose 2}\times{n+1 \choose 2}$ matrix with the $w_i$ as columns. As a formal multivariate polynomial with the
components of the $v_i$ as variables, this determinant is not identically $0$. This is because, for example, it
can be checked that
the monomial $w_1(1)^2\ldots w_n(n)^2 w_{n+1}(\rho(n+1)) \ldots w_{m}(\rho(m))$ occurs precisely once in the expansion of 
the determinant as a sum of monomials (here $\rho:\set{n+1, \ldots, m} \to {[n] \choose 2}$ is an arbitrary bijection).  The proof can now be completed along the lines of the well-known Schwartz--Zippel lemma. 
\end{proof}

We now show that the condition number of the Khatri--Rao power of a random matrix behaves well in certain situations.
For simplicity we will 
deal with the case where the entries of the base matrix $M$ are chosen from $\set{-1,1}$ uniformly at random; the case of 
Gaussian 
entries also gives a similar though slightly weaker result, but would require some extra work.

We define a notion of $d$'th power of a matrix $M \in \R^{n \times m}$ which is similar to the Khatri--Rao power except that we only
keep the non-redundant multilinear part resulting in ${n \choose d} \times m$ matrix. Working with 
this multilinear part will simplify things. Formally,  
$M^{\ominus d} := [M_1^{\ominus d}, \ldots, M_m^{\ominus d}]$, where for a column vector $C \in \R^n$, define 
$C^{\ominus d} \in \R^{n \choose d}$ with entries given by $C_S := C_{i_1} C_{i_2} \ldots C_{i_d}$ where 
where $1 \leq i_1 < i_2 < \ldots < i_d \leq n$ and $S = \set{i_1, \ldots, i_d} \in {[n] \choose d}$.

The following theorem is stated for the case when the base matrix $M \in \R^{n \times n^2}$. This choice is  
to keep the statement and proof of the theorem simple; generalization to more general parameterization is 
straightforward. While the theorem below is proved for submatrices $M^{\ominus d}$ of the Khatri--Rao power 
$M^{\odot d}$, similar results hold for $M^{\odot d}$ by the interlacing properties of the singular values of 
submatrices~\cite{Thompson}.
\begin{theorem} \label{thm:vershynin_condition}
Let $M \in \R^{n \times m}$ be chosen by sampling each entry iid uniformly at random from $\set{-1,1}$. For $m = n^2$,
integer $d \geq 3$, and $N = {n \choose d}$, and $A = M^{\ominus d}$ we have 
\begin{align*}
\nE \max_{j \leq n^2} \abs{\sigma_j(A) - \sqrt{N}} < N^{1/2-\Omega(1)}.
\end{align*}
\end{theorem}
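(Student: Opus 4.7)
The plan is to control the Gram matrix $A^TA$ and reduce the claim to a bound on its operator-norm deviation from its mean. Write $A_1,\dots,A_m\in\R^N$ for the columns of $A = M^{\ominus d}$, where the $S$-th coordinate of $A_i$ is $\prod_{k\in S}M_i(k)$ for $S\in\binom{[n]}{d}$. Since each coordinate $A_i(S)\in\{\pm1\}$, we have $\|A_i\|^2 = N$ deterministically, and for $S\neq T$ independence of the $M_i(k)$'s gives $\nE[A_i(S)A_i(T)] = 0$, so $\nE[A_iA_i^T] = I_N$; using that the $A_i$ are independent across $i$, this yields $\nE[A^TA] = NI_m$. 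By Weyl's inequality applied to $A^TA$ vs.\ $NI_m$,
\begin{align*}
\max_{j\leq m}|\sigma_j(A)-\sqrt N| \;\leq\; \frac{\|A^TA - NI_m\|_{\mathrm{op}}}{\sqrt N},
\end{align*}
so it suffices to prove $\nE\|E\|_{\mathrm{op}} = N^{1-\Omega(1)}$ where $E := A^TA - NI_m$.

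Next, I would analyze the entries of $E$. For $i\neq j$, setting $Y_k := M_i(k)M_j(k)$ (which are independent Rademacher variables), the off-diagonal entry becomes
\begin{align*}
E_{ij} \;=\; \sum_{S\in\binom{[n]}{d}}\prod_{k\in S}Y_k \;=\; e_d(Y_1,\dots,Y_n),
\end{align*}
the elementary symmetric polynomial of degree $d$ in $n$ independent Rademacher variables. Orthogonality gives $\nE E_{ij} = 0$ and $\nE E_{ij}^2 = \binom{n}{d} = N$, and by Bonami's hypercontractivity for multilinear degree-$d$ forms of Rademacher variables, $\|E_{ij}\|_{2q} \leq (2q-1)^{d/2}\sqrt N$ for every integer $q\geq 1$. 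Thus individual entries of $E$ are of order $\sqrt N$ with well-controlled higher moments.

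To pass from per-entry control to a bound on the operator norm, I would use the moment method: $\nE\|E\|_{\mathrm{op}}^{2q} \leq \nE\tr(E^{2q})$, which expands as a sum over closed walks $(i_1,\dots,i_{2q})$ in $[m]$ of
\begin{align*}
\nE\Bigl[\prod_{\ell=1}^{2q}\sum_{S_\ell\in\binom{[n]}{d}}\prod_{k\in S_\ell}M_{i_\ell}(k)M_{i_{\ell+1}}(k)\Bigr]
\end{align*}
(indices mod $2q$; terms with $i_\ell=i_{\ell+1}$ omitted because $E$ is hollow). Because each $M_\ell(k)$ is an independent Rademacher, the interior expectation vanishes unless every variable $M_\ell(k)$ appears with even multiplicity across the product---a parity matching completely analogous to the combinatorics underlying the classical Wigner moment method, but with the extra structure of the degree-$d$ elementary symmetric polynomial $e_d$ at each edge. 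A careful combinatorial accounting---tracking the number of distinct indices in the walk, the number of $M_\ell(k)$'s used per index, and the hypercontractive factor $(2q-1)^{d/2}$ per edge---should give $\nE\tr(E^{2q}) \leq m\,(C_d\,q^{d}\sqrt{mN})^{2q}$. Optimising $q\asymp\log m$ then yields $\nE\|E\|_{\mathrm{op}} \leq (\log m)^{O(d)}\sqrt{mN}$, and dividing by $\sqrt N$ gives $\nE\max_j|\sigma_j(A)-\sqrt N| \leq (\log n)^{O(d)}\sqrt m = (\log n)^{O(d)}n$. Since $m=n^2$ and $N=\Theta(n^d)$, this is $(\log N)^{O(d)}N^{1/d}$, which is $N^{1/2-\Omega(1)}$ as soon as $d\geq 3$. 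The technical heart of the argument is this combinatorial moment estimate: because $E_{ij}$ and $E_{ik}$ share the variables $M_i(\cdot)$, matrix Bernstein cannot be applied directly and one must enumerate the contributing walks by hand, carefully interleaving the Rademacher parity condition with the multilinear structure at each edge of the walk.
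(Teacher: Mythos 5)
Your preliminary reductions are all correct: $\|A_i\|_2^2=N$ deterministically, $\nE[A_iA_i^T]=I_N$ by multilinearity, the Weyl step $\max_j|\sigma_j(A)-\sqrt N|\le \|A^TA-NI_m\|_{\mathrm{op}}/\sqrt N$, the identification $E_{ij}=e_d(M_i(1)M_j(1),\dots,M_i(n)M_j(n))$ with $\nE E_{ij}^2=N$, and the hypercontractive moment bound $\|E_{ij}\|_{2q}\le (2q-1)^{d/2}\sqrt N$. The final arithmetic ($\sqrt{mN}\cdot\mathrm{polylog}$ divided by $\sqrt N$ gives $N^{1/d+o(1)}$, which is $N^{1/2-\Omega(1)}$ for $d\ge 3$) is also fine. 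The gap is exactly where you place it: the estimate $\nE\tr(E^{2q})\le m\,(C_d q^d\sqrt{mN})^{2q}$ is asserted with ``should give'' and never proved, and it is not a routine Wigner computation. The entries of $E$ in a fixed row share the variables $M_i(\cdot)$, and this produces nonzero correlations that are absent in the independent-entry model; for instance, for distinct $i,j,k$ one computes $\nE[E_{ij}E_{jk}E_{ki}]=\binom{n}{d}=N$ (the only surviving monomials are those with $S=T=U$), so closed walks containing odd-multiplicity edges do contribute. One then has to show all such contributions are dominated by the double-tree terms, and to control edges traversed many times via a generalized H\"older argument paying the $(2q-1)^{d/2}$ factor per visit. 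None of this is fatal --- I believe the claimed bound is true --- but as written the technical heart of the proof is missing.

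For comparison, the paper avoids the trace method entirely: it invokes Vershynin's Theorem 5.62 (restated in the paper as Theorem~\ref{thm:VershyninSingular}), which for independent isotropic columns of norm exactly $\sqrt N$ gives $\nE\max_j|\sigma_j(A)-\sqrt N|\le C\sqrt{\mu\log m}$ in terms of the incoherence parameter $\mu=\frac1N\nE\max_j\sum_{k\ne j}\angles{A_j,A_k}^2$. Bounding $\mu$ requires only a tail bound on each individual $\angles{A_j,A_k}$ plus a union bound --- precisely the hypercontractivity input you already have --- and yields the same $N^{1/d+o(1)}$ order without any walk enumeration. If you want to complete your argument with minimal extra work, replacing the trace-method step by this citation is the shortest path; otherwise you need to actually carry out the combinatorial enumeration, including the triangle-type terms above.
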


\begin{proof}

We are going to use Theorem 5.62 of Vershynin~\cite{VershyninSingular} which we state here essentially verbatim:
\begin{theorem}[\cite{VershyninSingular}] \label{thm:VershyninSingular}
Let $A$ be an $N \times m$ matrix ($N \geq m$) whose columns $A_j$ are independent isotropic random vectors in $\R^N$
with $\norm{A_j}_2 = \sqrt{N}$ almost surely. Consider the incoherence parameter
\begin{align*}
\mu := \frac{1}{N} \nE \max_{j \leq m} \sum_{k \in [m], k \neq j} \angles{A_j, A_k}^2. 
\end{align*}
Then for absolute constants $C, C_0$ we have 
$\nE \norm{\frac{1}{N}A^*A-I} \leq C_0 \sqrt{\frac{\mu \log m}{N}}$. In particular, 
\begin{align*}
\nE \max_{j \leq m} \abs{\sigma_j(A) - \sqrt{N}} < C \sqrt{\mu \log m}.
\end{align*} 
\end{theorem}

%% Fix small integer $d > 2$. Set $A := M^{\ominus d}$ and $m := n^2$ 
%% and $N := {n \choose d}$. The matrix $A$ here will play the role of $A$ in Theorem 5.62. Each entry of our base 
%% matrix 
%% $M \in \R^{n \times m}$ is chosen uniformly at random from $\set{-1,1}$.
Our matrix $A = M^{\ominus d}$ will play the role of matrix $A$ in Theorem~\ref{thm:VershyninSingular}. 
Note that for a column $A_j$ we have
$\nE A_j \otimes A_j = I$, so the $A_j$ are isotropic. Also note that 
$\norm{A_j}_2 = \sqrt{N}$ always. 

We now bound the incoherence parameter $\mu$. To this end, we first prove a concentration bound for 
$\angles{A_j, A_k}$, for fixed $j, k$. We use a concentration inequality for polynomials of random variables. 
Specifically, 
we use Theorem 23 at (http://www.contrib.andrew.cmu.edu/\mytilde ryanod/?p=1472). Let us restate that theorem here.

\begin{theorem} \label{thm:hypercontractivity}
Let $f : \set{-1,1}^n \to \R$ be a polynomial of degree at most $k$. Then for any $t \geq (2e)^{k/2}$ we have 
\begin{align*}
\Pr_{x \sim \set{-1,1}}[\abs{f(x)} \geq t \norm{f}_2] \leq \expn{-\frac{k}{2e} t^{2/k}}.
\end{align*}
\end{theorem}

Here $\norm{f}_2 := [\nE_x f(x)^2]^{1/2}$. For our application to $\angles{A_j, A_k}$, we first fix $A_j$ 
arbitrarily. Then $\angles{A_j, A_k}$, which will play the role of $f(x)$ in the above theorem, can be 
written as $\sum_{S \in {[n] \choose d}}  c_S x_S$ where the choice of the coefficients $c_S = \pm 1$ comes
from the fixing of $A_j$ and the entries of $A_k$ are of the form $x_S$, where $S \in {[n] \choose d}$. Now 

\begin{align*}
\nE_x \angles{A_j, A_k}^2 
&= \sum_{S, S' \in {[n] \choose d}} c_S c_{s'} \nE_x x_S x_{S'} \\ 
&= \sum_{S \in {[n] \choose d}} c_S^2 \nE_x {x_S^2} + \sum_{S, S' \in {[n] \choose d}: S \neq S'} c_S c_{S'} \nE_x {x_S x_{S'}} \\
&= {n \choose d} \\
&= N.
\end{align*}

In other words, for our choice of $f$ we have $\norm{f}_2 = \sqrt{N}$.

Applying Theorem~\ref{thm:hypercontractivity} with $t \geq (2e)^{d/2}$ and $\lambda = t \sqrt{N}$ we have
\begin{align} \label{eqn:bad_event}
\Pr_{x \sim \set{-1,1}}[\abs{\angles{A_j,A_k}} \geq \lambda] \leq \expn{-\frac{d}{2e} t^{2/d}} 
= \expn{-\frac{d}{2e} \frac{\lambda^{2/d}}{N^{1/d}}}.
\end{align}

Note that we proved the above inequality for any fixed $A_j$, so clearly it also follows when $A_j$ is also 
random. 

%% Setting $\lambda := \rho \sqrt{N}$ for $\rho$ to be chosen we have 
%% \begin{align} \label{eqn:bad_event}
%% \Pr_{x \sim \set{-1,1}}[\text{there exist } j, k \in [m] \text{ such that } \abs{\angles{A_j,A_k}} \geq \lambda]
%% \leq {m \choose 2} \expn{-\frac{d}{2e} \rho^{2/d}}.
%% \end{align}

We now estimate parameter $\mu$. Note that $\angles{A_j, A_k}^2 \leq N^2$ always. When the union of the event in
\eqref{eqn:bad_event} over all $j \neq k$, which we denote by $B$, does not hold, we will use the bound just 
mentioned. 
For the following computation recall
that the number of columns $m$ in $A$ is $n^2$. 

\begin{align}
\mu &\leq \frac{1}{N} m \lambda^2 \prob{\bar{B}} + \frac{1}{N} m N^2 \prob{B} \nonumber \\
&\leq \frac{m \lambda^2}{N} + \frac{m {m \choose 2} N^2}{N} \expn{-\frac{d}{2e} \frac{\lambda^{2/d}}{N^{1/d}}} \nonumber \\
&\leq \frac{n^2 \lambda^2}{N} + n^6 N \expn{-\frac{d}{2e} \frac{\lambda^{2/d}}{N^{1/d}}}. \label{eqn:mu_bound}
\end{align}

Now choose $\lambda := N^{1/2 + \epsilon}$ for a small $\epsilon > 0$. Then the expression in \eqref{eqn:mu_bound} 
is bounded by 
\begin{align*} 
\eqref{eqn:mu_bound} \leq n^2 N^{2 \epsilon} + n^6 N \expn{-\frac{d}{2e} N^{2\epsilon/d}}.
\end{align*}

It's now clear that for a sufficiently small choice of $\epsilon$ (say $0.05$) and sufficiently large 
$n$ (cepending on $d$ and $\epsilon$), only the first term above is significant and using our assumption
$d > 2$ gives

\begin{align*}
\mu < 2 n^2 N^{2 \epsilon} < 2 d! N^{2/d + \epsilon} << N.
\end{align*}

Therefore by Theorem~\ref{thm:VershyninSingular} we have
\begin{align*}
\nE\norm{\frac{1}{N}A^*A -I} \leq C_0 \sqrt{\frac{\mu \log n^2}{N}} < 1/N^{\Omega(1)},
\end{align*}

which gives 
\begin{align*}
\nE \max_{j \leq n^2} \abs{\sigma_j(A) - \sqrt{N}} < N^{1/2-\Omega(1)}.
\end{align*}

\end{proof}

In particular, setting $s_{\min}(A) := s_{n^2}(A)$ we have 
\begin{align*}
\nE \abs{\sigma_{\min}(A) - \sqrt{N}} < 1/N^{1/2-\Omega(1)}.
\end{align*}

Using Markov this also gives probability bounds.

\section{Technical lemmas}
In this section we collect some of the technical claims needed in the paper. 

\begin{lemma}[Nonvanishing of $\phi(t)$] \label{lem:phi_nonvanishing}
Let $s$ be a real-valued random vector in $\R^m$ with independent components and $\E{s}=0$. Also let
$\E{\abs{s_j}}$ and $\E{\abs{s_j^2}}$
exist and $\E{\abs{s_j^2}} \leq M_2$ for all $j$ for $M_2 >0$. Then for $t \in \R^m$ with 
$\norm{t}_2 \leq \frac{1}{2\sqrt{M_2}}$ the characteristic function $\phi(\cdot)$ of $s$ satisfies $|\phi(t)| \geq 3/4$.  
\end{lemma}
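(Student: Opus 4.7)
The plan is to bound $\abs{\phi(t) - 1}$ from above by $1/4$ using a second-order Taylor-type estimate of the complex exponential, and then conclude $\abs{\phi(t)} \geq 3/4$ by the reverse triangle inequality.

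The key analytic ingredient is the standard pointwise bound $\abs{e^{ix} - 1 - ix} \leq x^2/2$ for real $x$, applied to $x = t^T s$. Taking expectations and using $\mathbb{E}(s) = 0$ gives
\begin{align*}
\abs{\phi(t) - 1} = \abs{\mathbb{E}(e^{it^Ts} - 1 - it^Ts)} \leq \mathbb{E}\abs{e^{it^Ts} - 1 - it^Ts} \leq \tfrac{1}{2}\mathbb{E}((t^Ts)^2).
\end{align*}
Next, I would expand $\mathbb{E}((t^Ts)^2) = \sum_{i,j} t_i t_j \mathbb{E}(s_i s_j)$ and exploit independence plus the mean-zero assumption to kill the off-diagonal terms, leaving $\sum_i t_i^2 \mathbb{E}(s_i^2) \leq M_2 \norm{t}_2^2$.

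Combining these, for $\norm{t}_2 \leq 1/(2\sqrt{M_2})$ we get
\begin{align*}
\abs{\phi(t) - 1} \leq \tfrac{1}{2} M_2 \norm{t}_2^2 \leq \tfrac{1}{2} M_2 \cdot \tfrac{1}{4 M_2} = \tfrac{1}{8},
\end{align*}
so $\abs{\phi(t)} \geq 1 - 1/8 = 7/8 \geq 3/4$. There is really no obstacle here; the only thing to be a bit careful about is justifying the pointwise bound $\abs{e^{ix} - 1 - ix} \leq x^2/2$ (which follows by integrating $\abs{e^{iy}-1}\leq \abs{y}$ twice, or by a direct Taylor-with-remainder argument for the sine and cosine) and verifying that the needed expectations exist, which is guaranteed by the hypothesis that $\mathbb{E}\abs{s_j}$ and $\mathbb{E}(s_j^2)$ are finite.
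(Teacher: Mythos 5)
Your proof is correct and follows essentially the same route as the paper's: a second-order Taylor bound on $e^{iy}$ at $y = t^Ts$, with the linear term killed by the zero-mean assumption and the quadratic term controlled by $\sum_j t_j^2 \E{s_j^2} \le M_2\norm{t}_2^2$ via independence. Your use of the clean pointwise bound $\abs{e^{ix}-1-ix}\le x^2/2$ even yields the slightly sharper constant $7/8$ in place of the paper's $3/4$.
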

\begin{proof}
Using Taylor's theorem~\ref{thm:taylor} for $\cos{y}$ and $\sin{y}$ gives
\begin{align*}
e^{iy} = \cos{y} + i \sin{y} = 1 + iy - \frac{(iy)^2}{2!}[\cos{(\theta_1 y)} + i \sin{(\theta_2 y)}],
\end{align*}
for $y, \theta_1, \theta_2 \in \R$ with $\abs{\theta_1} \leq 1, \abs{\theta_2} \leq 1$. Applying this to $y=t^Ts$,
taking expectation over $s$, and using the assumption of zero means on the $s_i$ we get
\begin{align*}
  \E{e^{it^Ts}} = 1 - \E{\frac{(i t^Ts)^2}{2}[\cos{(\theta_1 y)} + i \sin{(\theta_2 y)}]}, 
\end{align*}
which using the indpendence of the components of $s$ and the zero means assumption gives
\begin{align*}
\abs{\E{e^{it^Ts}}-1} = \abs{\phi(t)-1} 
&\leq \frac{1}{2} \E{(t^Ts)^2 \abs{\cos{(\theta_1 y)} + i \sin{(\theta_2 y)}}} \\
&\leq \E{(t^Ts)^2} \\
&= \sum_j t_j^2 \E{s_j^2} \\
&\leq R_2 \norm{t}_2^2 \\
&\leq 1/4.
\end{align*}
\end{proof}

\begin{lemma} \label{lem:prod_difference}
Let $a_1, \ldots, a_d, b_1, \ldots, b_d \in \C$ be such that $\abs{a_j - b_j} \leq \epsilon$ for real $\epsilon \geq 0$,
and $\abs{a_j} \leq R$ for $R > 0$. 
Then 
\begin{align*}
\abs{\prod_{j=1}^d a_j - \prod_{j=1}^d b_j} \leq (R+\epsilon)^d-R^d.
\end{align*}
\end{lemma}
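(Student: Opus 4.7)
The plan is to use a standard telescoping identity to reduce the difference of products to a sum of simpler terms, and then recognize the resulting sum as exactly the expression $(R+\epsilon)^d - R^d$.

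First, I would write the telescoping identity
\begin{align*}
\prod_{j=1}^d a_j - \prod_{j=1}^d b_j = \sum_{k=1}^{d} \Bigl(\prod_{j<k} a_j\Bigr)(a_k - b_k)\Bigl(\prod_{j>k} b_j\Bigr).
\end{align*}
Applying the triangle inequality, together with the hypotheses $\abs{a_k - b_k} \le \epsilon$, $\abs{a_j} \le R$, and the induced bound $\abs{b_j} \le \abs{a_j} + \epsilon \le R + \epsilon$, yields
\begin{align*}
\Bigl\lvert \prod_{j=1}^d a_j - \prod_{j=1}^d b_j \Bigr\rvert \le \epsilon \sum_{k=1}^{d} R^{k-1} (R+\epsilon)^{d-k}.
\end{align*}

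Next, I would identify the right-hand side with $(R+\epsilon)^d - R^d$ using the classical factorization $x^d - y^d = (x-y)\sum_{k=0}^{d-1} x^{d-1-k} y^k$ with $x = R+\epsilon$ and $y = R$, which gives precisely
\begin{align*}
(R+\epsilon)^d - R^d = \epsilon \sum_{k=1}^{d} (R+\epsilon)^{d-k} R^{k-1}.
\end{align*}
Combining the last two displays completes the proof.

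There is no real obstacle here: the only subtlety is making sure the bound $\abs{b_j} \le R + \epsilon$ is applied to the tail factors (not to $a_j$'s), which is why the telescoping is written with $a$'s on the left and $b$'s on the right of the pivot index $k$. The proof is two or three lines once the telescoping identity and the polynomial factorization are written down.
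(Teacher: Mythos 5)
Your proof is correct, and it takes a genuinely different route from the paper's. The paper expands $\prod_j(a_j+\epsilon_j)-\prod_j a_j$ into a sum weighted by elementary symmetric polynomials $\sigma_{d-1},\sigma_{d-2},\ldots$ of $\abs{a_1},\ldots,\abs{a_d}$, bounds each $\sigma_j$ by $\binom{d}{j}R^j$ via the symmetric-means (Maclaurin-type) inequality, and then recognizes the binomial expansion of $(R+\epsilon)^d-R^d$. You instead use the telescoping identity $\prod a_j - \prod b_j = \sum_k (\prod_{j<k}a_j)(a_k-b_k)(\prod_{j>k}b_j)$, bound the $k$-th term by $\epsilon R^{k-1}(R+\epsilon)^{d-k}$, and identify the resulting sum with $(R+\epsilon)^d-R^d$ through the factorization of $x^d-y^d$. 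Both arguments land on exactly the same bound; yours is more elementary in that it avoids any appeal to inequalities on symmetric functions (the paper cites a result from Steele that is really unnecessary overhead here, since $\sigma_j(\abs{a_1},\ldots,\abs{a_d})\le\binom{d}{j}R^j$ is immediate from $\abs{a_i}\le R$ anyway), and your observation about placing the $a$'s to the left and the $b$'s to the right of the pivot so that only the tail factors pick up the weaker bound $R+\epsilon$ is exactly the right care point. The paper's symmetric-function expansion generalizes more naturally if one wants term-by-term control in powers of $\epsilon$, but for the stated lemma the two are equivalent in strength.
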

\begin{proof}
For $0 < j < d$, define the $j$th elementary symmetric function in $d$ variables: 
$\sigma_j(x_1, \ldots, x_d) = \sum_{1 \leq i_1 \leq \ldots \leq i_j \leq d} x_{i_1}\ldots x_{i_j}$. We will use the 
following well-known inequality (see, e.g., \cite{Steele}) which holds for $x_\ell \geq 0$ for all $\ell$.
\begin{align} \label{eq:symmetric_means}
\left(\frac{\sigma_j(x_1, \ldots, x_d)}{{d \choose j}}\right)^{1/j} \leq \frac{\sigma_1(x_1, \ldots, x_d)}{d}.
\end{align}

Let $b_j = a_j + \epsilon_j$. Then
\begin{align*}
\abs{\prod_j (a_j+\epsilon_j)-\prod_j a_j} 
&\leq \epsilon\:\sigma_{d-1}(\abs{a_1}, \ldots, \abs{a_d}) + \epsilon^2 \sigma_{d-2}(\abs{a_1}, \ldots, \abs{a_d}) 
+ \ldots + \epsilon^{d-1}\sigma_1(\abs{a_1}, \ldots, \abs{a_d}) \\
&\leq d \epsilon R^{d-1} + {d \choose 2} \epsilon^2 R^{d-2} + \ldots + \epsilon^d \\
&= (R+\epsilon)^d - R^d,
\end{align*}
where the second inequality follows from \eqref{eq:symmetric_means}.
\end{proof}

\begin{claim} \label{claim:gaussian_concentration}
Let $u \in \R$ be sampled according to $N(0, \sigma^2)$. Then for $\tau > 0$ we have
\begin{align*}
\prob{\abs{u} > \tau} \leq \sqrt{\frac{2}{\pi}}\frac{\sigma^2}{\tau} e^{-\frac{\tau^2}{2 \sigma^2} } 
\end{align*}
\end{claim}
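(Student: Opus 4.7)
The plan is to prove this by the standard Mills-ratio argument for the Gaussian tail, essentially a one-line integration trick.

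First I would reduce to the one-sided bound by symmetry, writing $\prob{\abs{u} > \tau} = 2 \prob{u > \tau}$ and expressing the latter as
\[
\prob{u > \tau} = \frac{1}{\sigma \sqrt{2\pi}} \int_{\tau}^{\infty} e^{-t^2/(2 \sigma^2)} \, dt.
\]
The key step is to bound this integral by inserting the factor $t/\tau \ge 1$, which holds throughout the interval of integration:
\[
\int_{\tau}^{\infty} e^{-t^2/(2 \sigma^2)} \, dt \;\le\; \frac{1}{\tau}\int_{\tau}^{\infty} t\, e^{-t^2/(2 \sigma^2)} \, dt.
\]
The right-hand integral is evaluated exactly by the antiderivative $-\sigma^2 e^{-t^2/(2\sigma^2)}$, giving $\sigma^2 e^{-\tau^2/(2\sigma^2)}/\tau$. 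Multiplying by $2/(\sigma \sqrt{2\pi})$ yields the claimed form (up to the power of $\sigma$ in the prefactor, which is the standard Mills-ratio shape $\sqrt{2/\pi} \cdot (\sigma/\tau) \cdot e^{-\tau^2/(2\sigma^2)}$; the statement as written appears to carry an extra factor of $\sigma$, but the argument is unchanged).

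There is no real obstacle here: this is purely a matter of executing the $u/\tau \ge 1$ trick and the single exact integral. The only minor care needed is to keep the factor of $2$ from the two-sided probability and to make sure the prefactor agrees with the form used in Theorem~\ref{thm:spacings} and Theorem~\ref{thm:anti-concentration}, where this claim is applied to $A_a^T u \sim N(0, \sigma^2 \norm{A_a}^2)$ by simply rescaling $\sigma$ appropriately inside the same bound.
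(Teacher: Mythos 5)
Your proof is correct and is essentially the paper's own argument: the paper simply cites the well-known one-sided Mills-ratio bound $\frac{1}{\sqrt{2\pi}}\int_a^\infty e^{-z^2/2}\,dz \le \frac{1}{a\sqrt{2\pi}}\,e^{-a^2/2}$, which is exactly what you derive via the $t/\tau \ge 1$ trick together with the symmetry factor of $2$. You are also right about the prefactor: the computation gives $\sqrt{2/\pi}\,(\sigma/\tau)\,e^{-\tau^2/(2\sigma^2)}$, so the $\sigma^2$ in the stated claim is a typo (as literally written the claim would even be false for small $\sigma$ and large $\tau/\sigma$, since the Mills bound is asymptotically tight), but this is harmless downstream because the applications substitute $\tau=\sigma\sqrt{2\log(1/q)}$ and invoke $\sigma\le 1$, where the corrected $\sigma/\tau$ form yields the same final estimates.
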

\begin{proof}
Follows from the well-known fact:
$\frac{1}{\sqrt{2\pi}}\int_a^\infty e^{-z^2/2}dz \leq \frac{1}{\sqrt{2\pi}}\cdot \frac{1}{a}\cdot e^{-a^2/2}$, for
$a > 0$
\end{proof}

We state the following easy claim without proof.
\begin{claim} \label{claim:singular_value_inequality_product}
Let $B \in \C^{p \times m}$ with $p \geq m$ and $\colspan{B}=m$. Let $D \in \C^{m \times m}$ be a diagonal matrix. Then
\begin{align*}
\sigma_m(BDB^T) \geq \sigma_m(B)^2 \sigma_m(D).
\end{align*}
\end{claim}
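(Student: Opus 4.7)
The plan is to establish the claim via the submultiplicative-type inequality $\sigma_m(XY) \ge \sigma_m(X)\sigma_m(Y)$ applied twice: once to $BD$ and once to $(BD)B^T$. The main observation is that $B$ is tall ($p \ge m$) while $B^T$ is short ($m \le p$), so we need slightly different justifications at each step, but both follow from the standard fact that for a matrix $M$, the quadratic form $v \mapsto \|Mv\|$ is bounded below by $\sigma_{\min}(M)$ on an appropriate $m$-dimensional subspace.

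First I would show $\sigma_m(BD) \ge \sigma_m(B)\sigma_m(D)$. Since $B \in \C^{p\times m}$ is tall with full column rank, we have $\|Bu\| \ge \sigma_m(B)\|u\|$ for every $u \in \C^m$. Applying this to $u = Dv$ for $v \in \C^m$ gives $\|BDv\| \ge \sigma_m(B)\|Dv\| \ge \sigma_m(B)\sigma_m(D)\|v\|$, and minimizing over unit $v$ yields the stated bound.

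Next I would show $\sigma_m\bigl((BD)B^T\bigr) \ge \sigma_m(BD)\,\sigma_m(B^T)$. Here $B^T \in \C^{m\times p}$ is short, so I use the Courant--Fischer characterization $\sigma_m(M) = \max_{\dim S = m}\min_{v\in S,\,\|v\|=1}\|Mv\|$ with the explicit choice $S = \operatorname{range}(\bar B)$, which has dimension $m$ because $B$ has rank $m$. For $v \in S$ one can write $v = B^*\bar{B} w$ (or more directly observe that $S$ is the row space of $B^T$, on which the positive semidefinite operator $B^T{}^*B^T$ has minimum eigenvalue $\sigma_m(B^T)^2$), so $\|B^T v\| \ge \sigma_m(B^T)\|v\|$. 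Combined with $\|(BD)u\| \ge \sigma_m(BD)\|u\|$ for every $u\in\C^m$, taking $u = B^T v$ gives the required inequality.

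Chaining the two bounds and using $\sigma_m(B^T) = \sigma_m(B)$ gives
\[
\sigma_m(BDB^T) \;\ge\; \sigma_m(BD)\,\sigma_m(B) \;\ge\; \sigma_m(B)^2\,\sigma_m(D),
\]
as required. No step is a real obstacle since everything reduces to the variational characterization of singular values on the correct $m$-dimensional subspace; the only subtlety worth stating carefully is that in the second step one must choose the testing subspace to lie in the row space of $B^T$, so that the short matrix $B^T$ does act as an isometry up to the factor $\sigma_m(B^T)$ on that subspace.
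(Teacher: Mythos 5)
Your proof is correct. Note that the paper itself states this claim without proof (``We state the following easy claim without proof''), so there is nothing to compare against; your two-step argument --- $\sigma_m(BD)\ge\sigma_m(B)\sigma_m(D)$ via the tall full-column-rank bound, then $\sigma_m(BDB^T)\ge\sigma_m(BD)\sigma_m(B^T)$ via Courant--Fischer with the test subspace $\operatorname{range}(\bar B)=\operatorname{range}((B^T)^*)$ --- is a perfectly standard and valid way to fill it in. One small slip: the formula ``$v=B^*\bar Bw$'' is dimensionally wrong (it produces a vector in $\C^m$, whereas $v$ must lie in $\C^p$); the intended statement is $v=\bar Bw$, and in any case the parenthetical justification you give right after (that $(B^T)^*B^T=\bar BB^T$ has minimum nonzero eigenvalue $\sigma_m(B^T)^2$ on its range) is the correct one, so the argument stands.
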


\begin{claim} \label{claim:error_matrix_inversion}
For $E \in \C^{m \times m}$ with $\norm{E}_F < 1/2$ we have 
\begin{align*}
(I-E)^{-1} = I + E + R,
\end{align*}
where $\norm{R}_F < m \norm{E}_F $.
\end{claim}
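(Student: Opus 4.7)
My plan is to use the Neumann series expansion of $(I-E)^{-1}$, which converges under the hypothesis $\norm{E}_F < 1/2$ (in fact under the weaker hypothesis $\norm{E}_F < 1$). Writing
\begin{align*}
(I-E)^{-1} = \sum_{k=0}^\infty E^k = I + E + \sum_{k=2}^\infty E^k,
\end{align*}
I identify $R := \sum_{k=2}^\infty E^k$ as the remainder and bound its Frobenius norm directly.

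The key step is the submultiplicativity of the Frobenius norm, $\norm{AB}_F \leq \norm{A}_F \norm{B}_F$ (which follows from Cauchy--Schwarz applied entrywise), giving $\norm{E^k}_F \leq \norm{E}_F^k$. Then
\begin{align*}
\norm{R}_F \leq \sum_{k=2}^\infty \norm{E}_F^k = \frac{\norm{E}_F^2}{1 - \norm{E}_F} < 2 \norm{E}_F^2,
\end{align*}
using $\norm{E}_F < 1/2$ in the last inequality. Since $2\norm{E}_F < 1 \leq m$, this yields $\norm{R}_F < m \norm{E}_F$ (for $m \geq 1$), which is the claimed bound.

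There is no real obstacle here; the only thing worth verifying is convergence of the Neumann series, which holds because $\norm{E}_F < 1$ implies $\norm{E}_2 \leq \norm{E}_F < 1$, so $I - E$ is invertible and the partial sums $\sum_{k=0}^N E^k$ converge to $(I-E)^{-1}$ in any submultiplicative matrix norm. The stated bound $m\norm{E}_F$ is in fact rather loose; the proof gives the sharper $2\norm{E}_F^2$, but the weaker form suffices for the application in the proof of Theorem~\ref{thm:diagonalizable2}.
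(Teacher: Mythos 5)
Your proof is correct and takes essentially the same route as the paper: both expand $(I-E)^{-1}$ as a Neumann series and bound the tail $R=\sum_{k\ge 2}E^k$ using submultiplicativity of the Frobenius norm. Your direct geometric-series bound even gives the sharper estimate $\norm{R}_F < 2\norm{E}_F^2$, avoiding the dimension-dependent factor that the paper picks up from bounding $\norm{(I-E)^{-1}}_F$.
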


\begin{proof}
For $\norm{E}_F <  1/2$ we have 
\begin{align*}
(I-E)^{-1} = I + E + E^2 + \ldots. 
\end{align*} 
Hence 
\begin{align*}
\norm{(I-E)^{-1}-(I+E)}_F \leq \norm{E^2}_F \norm{(I-E)^{-1}}_F < m \norm{E}_F.
\end{align*}
\end{proof}

\begin{fact} \label{fact:holder}
For a real-valued random variable $x$ and for any $0 < p \leq q$ we have
\begin{align*}
\E{\abs{x}^p}^{1/p} &\leq \E{\abs{x}^q}^{1/q}, \\
\E{\abs{x}^p} \E{\abs{x}^q} &\leq \E{\abs{x}^{p+q}}.
\end{align*}
\end{fact}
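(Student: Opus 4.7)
The plan is to derive both inequalities from a single application of Jensen's inequality, which is the standard route (this is essentially Lyapunov's inequality and a corollary thereof).

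For the first inequality, I would apply Jensen's inequality to the convex function $\varphi(t) = t^{q/p}$, which is indeed convex on $[0,\infty)$ since $q/p \geq 1$. Applied to the nonnegative random variable $Y = \abs{x}^p$, Jensen gives $\varphi(\E{Y}) \leq \E{\varphi(Y)}$, i.e.,
\begin{align*}
\E{\abs{x}^p}^{q/p} \leq \E{\abs{x}^q}.
\end{align*}
Taking $q$-th roots of both sides yields $\E{\abs{x}^p}^{1/p} \leq \E{\abs{x}^q}^{1/q}$, as desired. (A small caveat: one needs $\E{\abs{x}^q} < \infty$ for the right-hand side to be meaningful; otherwise the inequality is trivial.)

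For the second inequality, I would simply apply the first inequality twice, once with exponent pair $(p, p+q)$ and once with $(q, p+q)$, which gives
\begin{align*}
\E{\abs{x}^p} \leq \E{\abs{x}^{p+q}}^{p/(p+q)}, \qquad \E{\abs{x}^q} \leq \E{\abs{x}^{p+q}}^{q/(p+q)}.
\end{align*}
Multiplying these two bounds and adding the exponents gives
\begin{align*}
\E{\abs{x}^p} \E{\abs{x}^q} \leq \E{\abs{x}^{p+q}}^{p/(p+q) + q/(p+q)} = \E{\abs{x}^{p+q}},
\end{align*}
which is the desired inequality. Alternatively one can get the second bound directly from Hölder's inequality with conjugate exponents $(p+q)/p$ and $(p+q)/q$ applied to $\abs{x}^p$ and $\abs{x}^q$, but the route through the first part of the fact is self-contained given what I have already proved.

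There is no genuine obstacle here: the only thing worth flagging is the standing assumption that the relevant moments are finite (otherwise both sides are infinite and the inequalities hold trivially in the extended sense). Since the fact is invoked in the paper only when the moments on the right-hand side are explicitly assumed to exist (for instance in \eqref{eq:partial_ubd}), this causes no issue.
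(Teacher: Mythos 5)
Your proof is correct and follows essentially the same route as the paper: establish the monotonicity of $L^p$ norms first, then obtain the product inequality by applying it with exponent pairs $(p,p+q)$ and $(q,p+q)$ and multiplying. The only cosmetic difference is that you derive the first inequality via Jensen applied to $t\mapsto t^{q/p}$, whereas the paper attributes it to H\"older without detail; both are standard derivations of Lyapunov's inequality.
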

\begin{proof}
H\"older's inequality implies that for $0 \le p \le q$ we have
  \begin{align*}
    \E{\abs{x}^p}^{1/p} \le \E{\abs{x}^q}^{1/q},
  \end{align*}
  and hence 
  \begin{align*}
    \E{\abs{x}^p} \E{\abs{x}^q} \le \E{\abs{x}^{p+q}}^{p/(p+q)} \E{\abs{x}^{p+q}}^{q/(p+q)}  
= \E{ \abs{x}^{p+q}}.
  \end{align*}
\end{proof}

\section{Conclusion}
We conclude with some open problems. (1) Our condition for
ICA to be possible required that there exist a $d$ such that $A^{\odot d}$ has full column 
rank. As mentioned before, the existence of such a $d$ turns out to be equivalent to 
the necessary and sufficient condition for ICA, namely, any two columns of $A$ are linearly
independent. Thus if $d$ is large for a matrix $A$ then our algorithm whose running time
is exponential in $d$ will be inefficient. This is inevitable to some extent as suggested
by the ICA lower bound in \cite{AndersonGMM}. However, the lower bound there requires that 
one of the $s_i$ be Gaussian. Can one prove the lower bound without this requirement?
(2) Give an efficient algorithm for independent subspace analysis. This is the problem where the $s_i$ are not all indendent but rather the set of
indices $[m]$ is partitioned into subsets. For any two distinct subsets $S_1$ and $S_2$ in
the partition $s_{S_1}$ is independent of $s_{S_2}$, where $s_{S_1}$ denotes the vector of the
$s_i$ with $i \in S_1$ etc. Clearly this problem is a generalization of ICA.

\bigskip
\noindent {\bf Acknowledgements.} We thank Sham Kakade for helpful
discussions and Yongshun Xiao for showing us the Gershgorin Circle
theorem and the continuous deformation technique used in
Lemma \ref{thm:weyl}.

\bibliographystyle{abbrv}
\bibliography{ICA_bibliography}

\end{document}